\let\origaddcontentsline\addcontentsline
\definecolor{darkblue}{rgb}{0.0, 0.0, 0.5}
\newcommand{\FigureSummary}[2]{%
	\BeginAccSupp{method=escape,Alt={#1},ActualText={#1}}%
	#2%
	\EndAccSupp{}%
}
\Crefname{section}{Sec.}{Secs.}
\Crefname{subsection}{Subsec.}{Subsecs.}
\Crefname{subsubsection}{Subsubsec.}{Subsubsecs.}
\Crefname{figure}{Fig.}{Figs.}
\Crefname{table}{Tab.}{Tabs.}
\Crefname{equation}{Eq.}{Eqs.}
\Crefname{appendix}{App.}{Apps.}
\Crefname{theorem}{Thm.}{Thms.}
\Crefname{proposition}{Prop.}{Props.}
\Crefname{lemma}{Lemma}{Lemmas}
\Crefname{corollary}{Cor.}{Cors.}
\Crefname{definition}{Def.}{Defs.}
\newtheoremstyle{customplain}%
{0.5em}
{0.0em}
{\itshape}
{}
{\bfseries}
{.}
{.5em}
{}
\newtheoremstyle{customremark}%
{6pt plus 2pt minus 1pt}
{6pt plus 2pt minus 1pt}
{\normalfont}
{}
{\bfseries}
{.}
{.5em}
{}
\theoremstyle{customplain}
\newtheorem{theorem}{Theorem}[section]
\newtheorem{proposition}[theorem]{Proposition}
\newtheorem{lemma}[theorem]{Lemma}
\newtheorem{corollary}[theorem]{Corollary}
\newtheorem{definition}[theorem]{Definition}
\newtheorem{assumption}[theorem]{Assumption}
\theoremstyle{customremark}
\newtheorem{remark}[theorem]{Remark}
\newtheorem{example}[theorem]{Example}
\newacronym[
	description={
			Equivariant Neural Conditional Probability: Our proposed model integrating the symmetry priors \cref{eq:main_assumtions} into the NCP deep representation learning algorithm
		}
]{encp}{eNCP}{Equivariant Neural Conditional Probability}
\newacronym[
	description={
			Contrastive Low-Rank loss from \citep{kostic2024neural,ryu2024operator} for operator and representation learning. Used in density-ratio fitting \citep{sugiyama2012density}, representation learning \citep{wang2022spectral,chen2021provable}, and mutual information estimation \citep{tsai2020neuralPMD}
		}
]{clora}{CLoRa}{contrastive low-rank}
\newacronym[
	description={
			Neural Conditional Probability: A deep representation learning framework \citep{kostic2024neural} for conditional probability estimation and regression with statistical guarantees via operator theory \citep{Baker1973-ns}. This framework is symmetry-agnostic
		}
]{ncp}{NCP}{Neural Conditional Probability}
\newacronym{ncpaug}{NCPaug}{NCP with data augmentation}
\newacronym[
	description={
			Density Ratio Fitting \citep{tsai2020neuralPMD}: A density ratio \gls{nn} architecture that parameterizes the approximated \gls{pmd} $\pmd_\nnParams: \vsX \times \vsY \to \R_{+}$ as a single \gls{nn}. Consequently, this model cannot be used for downstream conditional probability estimation and regression—it is limited to estimating the mutual information between $\rvx$ and $\rvy$ \citep{tsai2020neuralPMD}
		}
]{drf}{DRF}{Density Ratio Fitting}
\newacronym[
	description={
			Invariant Density Ratio Fitting: This is a $\G$-invariant adaptation of the DRF model \citep{tsai2020neuralPMD} that parameterizes the approximated PMD $\pmd_\nnParams$ as a $\G$-invariant \gls{nn}
		}
]{idrf}{iDRF}{Invariant Density Ratio Fitting}
\newacronym[
	description={Geometric Deep Learning: A field of machine learning that incorporates geometric priors into deep learning models \citep{bronstein2021geometric}}
]{gdl}{GDL}{Geometric Deep Learning}
\newacronym[
	description={TODO}
]{ssl}{SSL}{Self-Supervised Learning}
\newacronym[
	description={
			Pointwise Mutual Dependency \cite{tsai2020neuralPMD}: A pointwise dependency measure between random variables $\rvx$ and $\rvy$, defined as $\pmd(\vx,\vy)=\frac{d\muxy(\vx,\vy)}{d\big(\mux(\vx)\times\muy(\vy)\big)}=\exp\big(\text{MI}(\vx,\vy)\big)$
		}
]{pmd}{PMD}{Pointwise Mutual Dependency}
\newacronym[
	description={
			Conditional Quantile Regression \citep{feldman2023calibrated}: A multivariate neural network approach that regresses upper and lower quantiles at a miscoverage level $\alpha$ using the pinball loss. Although CQR typically uses post-hoc conformal calibration, we omit it here for a fair comparison, as we evaluate the quality of the parametric estimates themselves, and conformal calibration is model-agnostic and applicable to eNCP as well
		}
]{cqr}{CQR}{Conditional Quantile Regression}
\newacronym[
	description={
			Version of eCQR where the upper and lower parametric quantile functions are parameterized by $\G$-equivariant NNs
		}
]{ecqr}{eCQR}{Equivariant CQR}
\newacronym{ccdf}{cCDF}{conditional Cumulative Distribution Function}
\newacronym{mlp}{MLP}{Multi-Layer Perceptron}
\newacronym{emlp}{eMLP}{Equivariant MLP}
\newacronym[
	description={
			Conditional Gaussian Mixture Model \citep{gilardi2002conditional}: A parametric model for benchmark conditional density estimation datasets. Generates random variables $\rvx$ and $\rvy$ of arbitrary dimensions with varying mutual information. Enables analytical computation of the \gls{pmd} density ratio (see \cref{sec:background}), unavailable in real-world datasets, allowing direct quantification of approximation error for the conditional expectation operator $\CE$ and its \gls{pmd} density ratio
		}
]{cgmm}{cGMM}{Conditional Gaussian Mixture Model}
\newacronym{mse}{MSE}{Mean Squared Error}
\newacronym{com}{CoM}{Center of Mass}
\newacronym{grf}{GRF}{Ground Reaction Forces}
\newacronym{svd}{SVD}{Singular Value Decomposition}
\newacronym{cnn}{CNN}{Convolutional Neural Network}
\newacronym{gnn}{GNN}{Graph Neural Network}
\newacronym[
	shortplural = CIs,
	longplural  = Confidence Intervals,
	first={Confidence Interval (CI)},
	firstplural={Confidence Intervals (CIs)},
]{ci}{CI}{Confidence Interval}
\newacronym[
	shortplural = NNs,
	longplural  = Neural Networks,
	first={Neural Network (NN)},
	firstplural={Neural Networks (NNs)},
]{nn}{NN}{Neural Network}
\newacronym[
	shortplural = DoF,                
	longplural  = degrees of freedom,  
	first={degree of freedom (DoF)},   
	firstplural={degrees of freedom (DoF)}, 
]{dof}{DoF}{degree of freedom}
\DeclareMathOperator{\rank}{rank}
\newcommand{\norm}[1]{\left\|#1\right\|}
\newcommand{\abs}[1]{|#1|}
\DeclareRobustCommand{\symmlearning}{\href{https://danfoa.github.io/symmetric_learning/}{\texttt{symm\_learning}} }
\newcommand{\KleinFourGroup}{\mathbb{K}_{4}}                                
\newcommand{\CyclicGroup}[1][]{\mathbb{C}_{#1}}                             
\newcommand{\DihedralGroup}[1][]{\mathbb{D}_{#1}}                           
\newcommand{\UG}[1][]{\mathbb{U}(#1)}                                       
\newcommand{\GLGroup}{\mathbb{GL}}                                          
\newcommand{\SO}[1][\dimEvolution]{\mathbb{SO}_{#1}}                        
\newcommand{\G}[1][]{\mathbb{G}_{\scalebox{0.6}{$#1$}}}                     
\newcommand{\g}{g}                                                          
\newcommand{\Glact}[1][]{
	\mathrel{
		\mathsmaller{
			\triangleright_{\scalebox{0.65}{$#1$}}
		}
	}
}                                                                            
\newcommand{\Gract}[1][]{\mathrel{\mathsmaller{\triangleleft}}}              
\newcommand{\Gconj}[1][]{\mathrel{\mathsmaller{\diamond}}}                   
\newcommand{\Gcomp}[1][]{\mathrel{\mathsmaller{\circ}}}                      
\newcommand{\Endomorphism}[2][\G]{\text{End}_{#1}(#2)}
\newcommand{\EndomorphismC}[2][\G]{\text{End}^{\sC}_{#1}(#2)}
\newcommand{\homomorphism}[3][\G]{\text{Homo}_{#1}(#2,#3)}
\newcommand{\homomorphismDiag}[5]{
	\xymatrix{
		#1 \ar@{-}[r]^{#3}    \ar[d]^{#5} & #1 \ar[d]^{#5} \\
		#2 \ar@{-}[r]^{#4}                   & #2
	}
}
\newcommand{\invariantDiag}[5]{
	\xymatrix{
		#1 \ar@{-}[r]^{#3} \ar[dr]_{#4} & #1 \ar[d]^{#4} \\
		& #2 \ar@(r,d)[]^{#5}
	}
}
\newcommand{\isomorphism}[3][\G]{\text{Iso}_{#1}(#2,#3)}
\newcommand{\isomorphismDiag}[5]{
\xymatrix{
#1 \ar@{-}[r]^{#3}    \ar@<0.5ex>[d]^{#5} \ar@{<-}[d]_{#5^{-1}} & #1 \ar@<0.5ex>[d]^{#5} \ar@{<-}[d]_{#5^{-1}} \\
#2 \ar@{-}[r]^{#4}                   & #2
}
}
\newcommand{\mapping}[5]{ 
	\begin{matrix}
		#1: & #2 & \longrightarrow & #3 \\
		    & #4 & \longrightarrow & #5
	\end{matrix}
}
\newcommand{\setsym}[2]{\gamma_{#1}(#2)}
\newcommand{\rep}[1][]{\bm{\rho}_{\scalebox{0.65}{$#1$}}}
\newcommand{\irrep}[1][]{\bar{\bm{\rho}}_{\scalebox{0.65}{$#1$}}}
\newcommand{\irrepKernel}[1][]{\sN_{\scalebox{0.65}{$#1$}}}
\newcommand{\irrepC}[1][]{\bar{\bm{\phi}}_{\scalebox{0.65}{$#1$}}}
\newcommand{\Oplus}{\bm{\oplus}}
\newcommand{\CE}{\oE_{\scalebox{0.65}{$\rvy \vert \rvx$}}}      
\newcommand{\CEp}{\oE_{\scalebox{0.65}{$\nnParams$}}}      
\newcommand{\CEIso}{\CE^{\scalebox{1.0}{$\isoIdxBrace$}}}      
\newcommand{\CEpIso}{\CEp^{\scalebox{1.0}{$\isoIdxBrace$}}}      
\newcommand{\CEm}{\mE}     
\newcommand{\CEmp}{\CEm_{\scalebox{0.65}{$\nnParams$}}}     
\newcommand{\CEmpIso}[1][\isoIdxBrace]{\CEmp^{\scalebox{1.0}{#1}}}     
\newcommand{\PCE}{\CEp}           
\newcommand{\ECE}{\widehat{\oE}_{\scalebox{0.65}{$\nnParams$}}}  
\newcommand{\pmd}{\kappa}                    
\newcommand{\HS}{\text{HS}}                  
\newcommand{\lowrankOp}[2][r]{#2_{#1}}  
\newcommand{\oTiso}[1][\isoIdxBrace]{\oT^{#1}}
\newcommand{\DCE}{\oD_{\scalebox{0.65}{$\rvy\vert \rvx$}}}  
\newcommand{\isoDCE}{\DCE^{\isoIdxBrace}}  
\newcommand{\Sigmap}{\mS_{\nnParams}}
\newcommand{\SigmapIso}{\mS_{\nnParams}^{\isoIdxBrace}}
\newcommand{\PDCE}{\oD_{\scalebox{0.65}{$\nnParams$}}}
\newcommand{\PDCEIso}{\oD_{\scalebox{0.65}{$\nnParams$}}^{\isoIdxBrace}}
\newcommand{\EDCE}{{\widehat{\oD}_{\scalebox{0.65}{$\nnParams$}}}}
\newcommand{\EDCEIso}{\EDCE^{\isoIdxBrace}}
\newcommand{\mux}{{P_{\rvx}}}    
\newcommand{\muy}{{P_{\rvy}}}    
\newcommand{\muxy}{{P_{\rvx\rvy}}}    
\newcommand{\muycondx}{{P_{\rvy\vert\rvx}}}    
\newcommand{\bSet}[1]{\sI_{\scalebox{0.8}{$#1$}}}  
\newcommand{\Lp}[2][p]{\scalebox{0.85}{$\mathcal{L}^{#1}_{#2}$}}  
\newcommand{\LpxFull}{\Lp[2]{\rvx}{}}  
\newcommand{\LpyFull}{\Lp[2]{\rvy}{}} 
\newcommand{\LpxFullIso}[1][\isoIdxBrace]{\Lp[2#1]{\rvx}{}}  
\newcommand{\LpyFullIso}[1][\isoIdxBrace]{\Lp[2#1]{\rvy}{}}  
\newcommand{\Lpz}{\scalebox{0.9}{$\vsF$}}  
\newcommand{\Lpx}{\scalebox{0.85}{$\vsF$}_{\scalebox{0.65}{$\rvx$}}}  
\newcommand{\Lpy}{\scalebox{0.85}{$\vsF$}_{\scalebox{0.65}{$\rvy$}}}  
\newcommand{\Lpxp}{\scalebox{0.85}{$\vsF$}^{\scalebox{0.65}{$\nnParams$}}_{\scalebox{0.65}{$\rvx$}}}  
\newcommand{\Lpyp}{\scalebox{0.85}{$\vsF$}^{\scalebox{0.65}{$\nnParams$}}_{\scalebox{0.65}{$\rvy$}}}  
\newcommand{\LpxIso}[1][\isoIdxBrace]{\Lpx^{#1}}  
\newcommand{\LpyIso}[1][\isoIdxBrace]{\Lpy^{#1}}  
\newcommand{\LpxpIso}[1][\isoIdxBrace]{\Lpx^{\scalebox{0.65}{$\nnParams$}\scalebox{1.0}{#1}}}  
\newcommand{\LpypIso}[1][\isoIdxBrace]{\Lpy^{\scalebox{0.65}{$\nnParams$}\scalebox{1.0}{#1}}}  
\newcommand{\iso}{{\text{\rm iso}}}    
\newcommand{\inv}{{\text{inv}}}    
\newcommand{\regular}{\text{reg}}  
\newcommand{\isoNum}{{n_{\iso}}}   
\newcommand{\isoIdx}{{k}}          
\newcommand{\isoIrrepIdx}{{p}}     
\newcommand{\irrepMultiplicity}{m} 
\newcommand{\irrepDim}[1][]{d_{#1}}
\newcommand{\isoDim}{d_{\iso}}     
\newcommand{\isoIdxBrace}{{{\scalebox{0.6}{$(\isoIdx)$}}}}  
\newcommand{\obsFn}{f}                              
\newcommand{\one}{\mathds{1}}
\newcommand{\bfx}{\phi}        
\newcommand{\bfy}{\psi}        
\newcommand{\vbfx}{\bm{\phi}}  
\newcommand{\vbfy}{\bm{\psi}}  
\newcommand{\vbfxp}{\bm{\phi}_{\nnParams}}  
\newcommand{\vbfyp}{\bm{\psi}_{\nnParams}}  
\newcommand{\lsfn}{u}                
\newcommand{\rsfn}{v}                
\newcommand{\lsfnp}{u^{\nnParams}}                
\newcommand{\rsfnp}{v^{\nnParams}}                
\newcommand{\lsfnIso}{u^{\isoIdxBrace}}
\newcommand{\rsfnIso}{v^{\isoIdxBrace}}
\newcommand{\lsfnpIso}{u^{\nnParams\isoIdxBrace}}
\newcommand{\rsfnpIso}{v^{\nnParams\isoIdxBrace}}
\newcommand{\vlsfn}{\vu}             
\newcommand{\vrsfn}{\vv}             
\newcommand{\vlsfnp}{\vlsfn_{\nnParams}}  
\newcommand{\vrsfnp}{\vrsfn_{\nnParams}}  
\newcommand{\vlsfnpIso}[1][\isoIdxBrace]{{\vlsfn^{#1}_{\nnParams}}}             
\newcommand{\vrsfnpIso}[1][\isoIdxBrace]{{\vrsfn^{#1}_{\nnParams}}}             
\newcommand{\sval}{\sigma}             
\newcommand{\svalpIso}{\sigma^{\nnParams\isoIdxBrace}}
\newcommand{\bcoeff}{\valpha}   
\newcommand{\ebcoeff}{\alpha}   
\newcommand{\bcoeffx}{\valpha}  
\newcommand{\bcoeffy}{\vbeta}   
\newcommand{\ebcoeffx}{\alpha}  
\newcommand{\ebcoeffy}{\beta}   
\newcommand{\nnX}{\vf}  
\newcommand{\nnY}{\vh}  
\newcommand{\nnParams}{{\boldsymbol{\theta}}}
\newcommand{\loss}{\mathcal{L}}
\newcommand{\rpar}{\alpha}
\newcommand{\error}{\mathcal{E}_\nnParams}
\newcommand{\errorIso}{\mathcal{E}_{\nnParams}^{\isoIdxBrace}}
\def\samplesize{N}
\newcommand{\EE}{\widehat{\mathbb{E}}}
\newcommand{\EEX}{\EE_{\rvx}}
\newcommand{\EEY}{\EE_{\rvy}}  
\newcommand{\PP}{\mathbb{P}}
\newcommand{\Identity}{\mI}
\newcommand{\scalarp}[1]{{\langle #1\rangle}}   
\newcommand{\innerprod}[2][]{{\langle #2 \rangle}_{#1}}
\newcommand{\C}{\mathbb{C}} 
\newcommand{\N}{\mathbb{N}} 
\def\dimtot{{r_{m}}}
\newcommand{\SVDdim}[1]{[\![#1]\!]_{r_m}}
\def\1{\bm{1}}
\def\rr{{\textnormal{r}}}
\def\rx{{\textnormal{x}}}
\def\ry{{\textnormal{y}}}
\def\rz{{\textnormal{z}}}
\def\rva{{\mathbf{a}}}
\def\rvx{{\mathbf{x}}}
\def\rvy{{\mathbf{y}}}
\def\rvz{{\mathbf{z}}}
\def\vmu{{\bm{\mu}}}
\def\valpha{{\bm{\alpha}}}
\def\vbeta{{\bm{\beta}}}
\def\vomega{{\bm{\omega}}}
\def\vtau{{\bm{\tau}}}
\def\va{{\bm{a}}}
\def\vc{{\bm{c}}}
\def\ve{{\bm{e}}}
\def\vf{{\bm{f}}}
\def\vg{{\bm{g}}}
\def\vh{{\bm{h}}}
\def\vk{{\bm{k}}}
\def\vl{{\bm{l}}}
\def\vp{{\bm{p}}}
\def\vq{{\bm{q}}}
\def\vu{{\bm{u}}}
\def\vv{{\bm{v}}}
\def\vx{{\bm{x}}}
\def\vy{{\bm{y}}}
\def\vz{{\bm{z}}}
\def\vsA{{\mathcal{A}}}
\def\vsF{{\mathcal{F}}}
\def\vsH{{\mathcal{H}}}
\def\vsV{{\mathcal{V}}}
\def\vsX{{\mathcal{X}}}
\def\vsY{{\mathcal{Y}}}
\def\vsZ{{\mathcal{Z}}}
\def\mA{{\bm{A}}}
\def\mB{{\bm{B}}}
\def\mC{{\bm{C}}}
\def\mE{{\bm{E}}}
\def\mI{{\bm{I}}}
\def\mK{{\bm{K}}}
\def\mM{{\bm{M}}}
\def\mO{{\bm{O}}}
\def\mQ{{\bm{Q}}}
\def\mR{{\bm{R}}}
\def\mS{{\bm{S}}}
\def\mT{{\bm{T}}}
\def\mU{{\bm{U}}}
\def\mV{{\bm{V}}}
\def\mW{{\bm{W}}}
\def\mSigma{{\bm{\Sigma}}}
\DeclareMathAlphabet{\mathsfit}{\encodingdefault}{\sfdefault}{m}{sl}
\SetMathAlphabet{\mathsfit}{bold}{\encodingdefault}{\sfdefault}{bx}{n}
\newcommand{\operator}[1]{\mathsf{#1}}
\def\oA{{\operator{A}}}
\def\oB{{\operator{B}}}
\def\oC{{\operator{C}}}
\def\oD{{\operator{D}}}
\def\oE{{\operator{E}}}
\def\oH{{\operator{H}}}
\def\oK{{\operator{K}}}
\def\oQ{{\operator{Q}}}
\def\oS{{\operator{S}}}
\def\oT{{\operator{T}}}
\def\oU{{\operator{U}}}
\def\oV{{\operator{V}}}
\def\sA{{\mathbb{A}}}
\def\sB{{\mathbb{B}}}
\def\sC{{\mathbb{C}}}
\def\sD{{\mathbb{D}}}
\def\sH{{\mathbb{H}}}
\def\sI{{\mathbb{I}}}
\def\sK{{\mathbb{K}}}
\def\sN{{\mathbb{N}}}
\def\sP{{\mathbb{P}}}
\def\sR{{\mathbb{R}}}
\def\sX{{\mathbb{X}}}
\newcommand{\E}{\mathbb{E}}
\newcommand{\R}{\mathbb{R}}
\newcommand{\Var}{\mathrm{Var}}
\newcommand{\Cov}{\mathrm{Cov}}
\DeclareMathOperator*{\argmin}{arg\,min}
\newcommand{\customParagraph}[1]{{\bf #1}~~~}
\icmltitlerunning{
	Representation Learning for Equivariant Inference with Guarantees
}
\begin{document}

\twocolumn[
	\icmltitle{Representation Learning for Equivariant Inference with Guarantees}



	\icmlsetsymbol{equal}{*}

	\begin{icmlauthorlist}
		\icmlauthor{Daniel Ordoñez-Apraez}{iit,unige,equal}
		\icmlauthor{Vladimir R. Kostić}{iit,unins,equal}
		\icmlauthor{Alek Fröhlich}{iit,unige,equal} 
		\icmlauthor{Vivien Brandt}{ecole} \\
		\icmlauthor{Karim Lounici}{ecole,equal}
		\icmlauthor{Massimiliano Pontil}{iit,ucl,equal}
	\end{icmlauthorlist}

	\icmlaffiliation{iit}{CSML, Italian Institute of Technology, Genova, Italy}
	\icmlaffiliation{unige}{Università di Genova, Genova, Italy}
	\icmlaffiliation{unins}{University of Novi Sad, Novi Sad, Serbia}
	\icmlaffiliation{ecole}{CMAP, École Polytechnique, Palaiseau, France}
	\icmlaffiliation{ucl}{University College London, London, UK}

	\icmlcorrespondingauthor{Daniel Ordoñez-Apraez}{daniel.ordonez@iit.it}

	\icmlkeywords{Machine Learning, Geometric Deep Learning, Representation Learning, Equivariant Neural Networks, Conditional Probability Estimation, Uncertainty Quantification, Statistical Learning Theory}

	\vskip 0.3in
]


\printAffiliationsAndNotice{\icmlEqualContribution}

\newcommand{\fix}{\marginpar{FIX}}
\newcommand{\new}{\marginpar{NEW}}

\doparttoc 
\faketableofcontents 

\begin{abstract}
	In many real-world applications of regression, conditional probability estimation, and uncertainty quantification, exploiting symmetries rooted in physics or geometry can dramatically improve generalization and sample efficiency. While geometric deep learning has made empirical advances by incorporating symmetry and geometry priors, less attention has been given to statistical learning guarantees. In this paper, we introduce an equivariant representation learning framework that simultaneously addresses regression, conditional probability estimation, and uncertainty quantification while providing first-of-its-kind non-asymptotic statistical learning guarantees. Grounded in operator and group representation theory, our framework approximates the spectral decomposition of the conditional expectation operator, building representations that are both equivariant and disentangled along independent symmetry quotient groups. Empirical evaluations on synthetic datasets and real-world robotics applications confirm the potential of our approach, matching or outperforming existing equivariant baselines in regression while providing well-calibrated uncertainty estimates.
\end{abstract}

\section{Introduction}

A central problem in machine learning is modeling conditional probabilities—understanding how the distribution of a target variable $\rvy$ changes with an observed variable $\rvx$. This underpins robust reasoning under uncertainty in critical applications such as medicine, finance, robotics, and physics \citep{Izbicki2025, Smith2013, Wasserman2007-ek}. However, estimating conditional distributions remains challenging in high-dimensional settings without strong inductive biases \citep{SCOTT1991, Nagler2016,izbicki2017converting}.

Symmetry priors, in the form of principled assumptions about invariance or equivariance in the underlying data-generating process, offer a compelling way to reduce sample complexity and improve generalization \citep{kashinath2021physics,wang2021incorporating,bronstein2021geometric,elesedy2025symmetrygeneralisationmachinelearning}. These priors naturally arise in inference tasks in chemistry and particle physics \citep{dresselhaus2007group_theory}, set-\&-graph structured data \citep{bronstein2021geometric}, computer graphics \citep{mitra2013symmetry,kovar2002motion}, and dynamical systems with group-invariant/equivariant laws of motion, which are ubiquitous in fields like physics \citep{dresselhaus2007group_theory}, fluid dynamics \citep{olver1993applications}, and robotics \citep{ordonez2024morphological,zhu2022sample}.

\begin{figure*}[t!]
	\centering
	\FigureSummary{Two-panel overview of the paper's robotics results: sample-efficient equivariant center-of-mass momentum regression and calibrated 90 percent uncertainty intervals for work and kinetic energy under disturbances.}{\includegraphics[width=\linewidth]{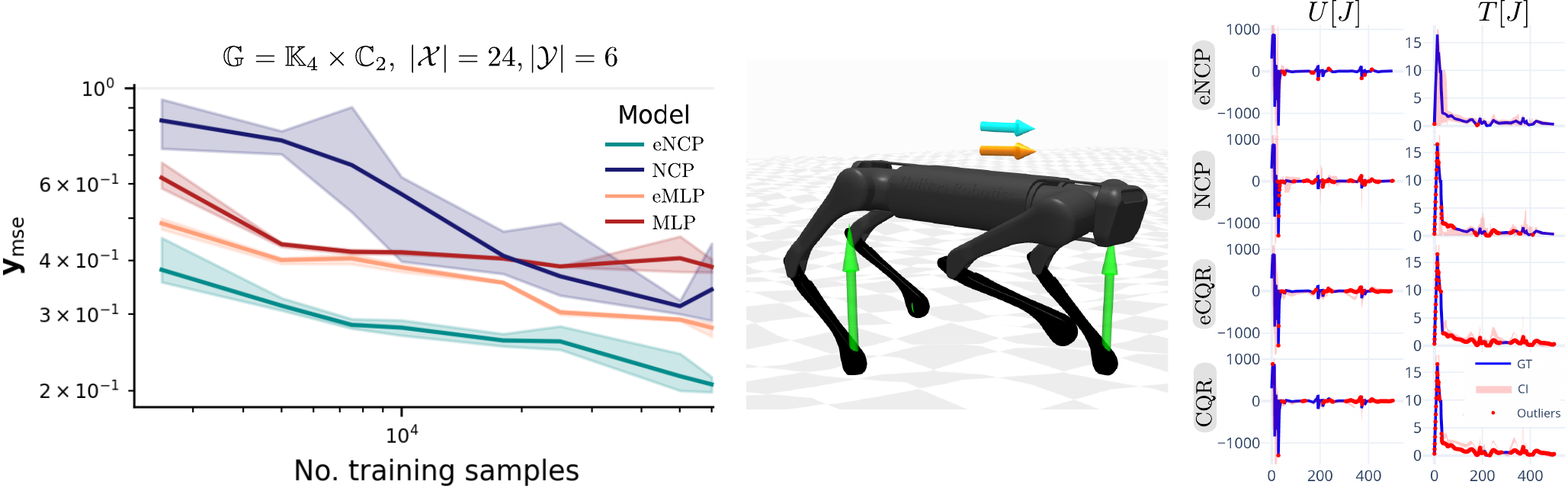}}
	\caption{
		\textbf{Left}: Test set sample efficiency for $\G$-equivariant regression (MSE vs. training samples) when predicting the $\G$-equivariant linear and angular momentum of a quadruped robot’s center of mass (\acrshort{com}) from noisy joint positions and velocities. \textbf{Right}: Uncertainty quantification via $\G$-equivariant prediction of $90\%$ \glspl{ci} (light-red area) for the robot's instantaneous work $U_t$ and kinetic energy $T_t$ during locomotion over rough terrain for our method (\acrshort{encp}) and competitors. The figure shows a trajectory with a strong initial disturbance, where blue markers denote samples within the predicted CI and red markers denote those outside. Note that only \acrshort{encp} is able to predict well-calibrated CI intervals that cover both the disturbance and recovery phases.
	}
	\label{fig:teaser}
\end{figure*}

Over the past few years, \gls{gdl} has produced a rich ecosystem of architectures that encode symmetries, achieving strong empirical performance across various supervised \citep{bronstein2021geometric,weiler2023EquivariantAndCoordinateIndependentCNNs,vanderpol2020mdp} and unsupervised tasks \citep{dangovski2022equivariant,keurti2023homomorphism,wang2024disentangled}. However, the field remains focused on application specific designs and architectural innovation, with limited understanding of how symmetry priors can be leveraged to \textit{learn representations with provable generalization guarantees}.

In this work, we take a different route: rather than proposing new architectures or solving specific inference tasks, we ask \textit{how to systematically learn symmetry-aware representations that best capture conditional structure in the data}. Specifically, how should equivariant networks be trained so that their learned features reveal conditional distributions, and how does the quality of these representations affect performance in downstream tasks such as regression, conditional probability estimation, and uncertainty quantification?

To answer these questions, we connect two fields rarely studied together: spectral contrastive learning \citep{Zou2013}, a self-supervised approach for learning deep representations via operator-theoretic models of conditional expectations \citep{tsai2021self,kostic2024learning}, and \gls{gdl} \citep{bronstein2021geometric}, which imposes symmetry and geometry priors through architectural constraints in \glspl{nn}. Our approach clarifies how symmetry constraints shape the representation space and improve generalization, opening new avenues for exchange between these fields. Concretely, we show that our method outperforms existing \gls{gdl} techniques on regression tasks and provides reliable uncertainty quantification for robot locomotion (\cref{fig:teaser}).



\customParagraph{Contributions}
{(1) Methodological framework}: We introduce \gls{encp}, the first framework to combine equivariant neural networks with operator-theoretic estimation of conditional distributions. {(2) Task-agnostic representation learning}: We show that any $\G$-equivariant architecture can be used to learn \textit{disentangled, symmetry-respecting representations} that generalize across diverse downstream inference tasks. {(3) Learning guarantees}: By linking the representation quality directly to sample complexity, we provide the \textit{first non-asymptotic statistical learning guarantees} for equivariant  regression and symmetry-aware conditional probability estimation and uncertainty quantification. {(4) Empirical results}:	\footnote{
		\label{ref:code}
		Code available at
		\href{https://github.com/Danfoa/symm_rep_learn}{\textit{github.com/Danfoa/symm\_rep\_learn}}
	} 
    On both synthetic and real-world robotics tasks, \gls{encp} consistently outperforms baselines, including contrastive methods \gls{ncp} \citep{kostic2024learning} and current equivariant models. In particular, \gls{encp} achieves state-of-the-art performance in the challenging robotics task of contact force estimation.

\customParagraph{Paper Structure}
\cref{sec:background} reviews modeling conditional probabilities with linear operators and \gls{ncp}. \cref{sec:problem_formulation} formally presents the symmetry priors we consider. \cref{sec:learning_cond_exp_operator} introduces our \gls{encp} learning framework. \cref{sec:learning_guarantees} outlines our theoretical learning guarantees. \cref{sec:experiments} showcases experiments on synthetic and real-world data. Furthermore, because the paper involves complex notation from probability, operator theory, and group theory, the appendices include a glossary of notation (\cref{sec:notation}) as well as detailed expositions on representation theory (\cref{sec:group_theory}), symmetric function spaces (\cref{sec:representation_theory_symmetric_function_spaces}), and equivariant linear operators (\cref{sec:equivariant_operators}). Finally, \cref{sec:related_works} offers an in-depth discussion of related work, contrasting our work with the literature across these rich fields.


\section{Background} \label{sec:background}
\begin{figure*}[t!]
	\centering
	\FigureSummary{Architecture comparison between the symmetry-agnostic NCP bilinear network and the equivariant ENCP bilinear network with equivariant feature maps and a block-diagonal conditional expectation operator.}{\includegraphics[width=\linewidth]{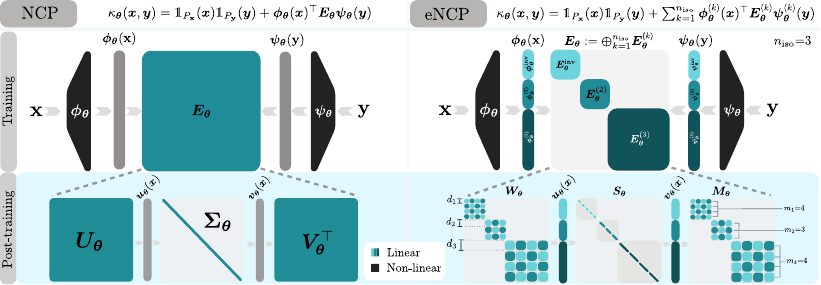}}
	\caption{
		\textbf{Left:} \gls{ncp}'s bilinear \gls{nn} architecture. \textbf{Right:} \gls{encp}'s $\G$-equivariant bilinear \gls{nn} architecture, featuring $\vbfxp$ and $\vbfyp$ as $\G$-equivariant \glspl{nn} and $\CEmp$ as a $\G$-equivariant block-diagonal matrix. Each block is equivariant to a quotient group $\G^{\isoIdxBrace} \subseteq \G$ and is constrained to have singular spaces of dimension at least $\irrepDim[\isoIdx]$—the minimal dimension for a representation of the action of $\G^{\isoIdxBrace}$.
	}
	\label{fig:diagrams}
\end{figure*}
We briefly review the operator-theoretic framework for modeling conditional probabilities, which underpins both \gls{ncp} and our proposed \gls{encp} method. We denote a random variable by $\rvx$, its realizations by $\vx \in \vsX$, its probability distribution by $\PP(\rvx)$, and measure by $\mux$. Expectations are written as $\E_{\rvx}[f(\rvx)] = \int_{\vsX} f(\vx)\,P_\rvx(d\vx)$. The same notation applies to other random variables such as $\rvy$.

\customParagraph{Modeling of Conditional Probabilities} \citet{kostic2024learning} proposed to model conditional probabilities by approximating the \textit{conditional expectation operator} \citep{Baker1973-ns,song2009hilbert,ryu2024operator}, $\CE \colon \LpyFull \to \LpxFull$, a linear integral operator acting on the Hilbert spaces  $\LpxFull:=\Lp[2]{\mux}(\vsX,\R)$ and $\LpyFull:=\Lp[2]{\muy}(\vsY,\R)$ of square-integrable functions of the random variables $\rvx$ and $\rvy$, respectively. The action of this operator on any function $h \in \LpyFull$ returns the function's conditional expectation:
\begin{equation}
	\label{eq:op_form_cond_mean_reg}
	\small
	\begin{aligned}
		[\CE h](\vx)
		 & =
		\E[h(\rvy) | \rvx \!=\! \vx]
		:=
		\int_{\vsY} h(\vy) P_{\rvy\vert\rvx}(d\vy\vert \vx)
		\\
		 & =
		\int_{\vsY} h(\vy) \tfrac{P_{\rvy\rvx}(d\vy,d\vx)}{P_{\rvx}(d\vx)}
		=
		\int_{\vsY} h(\vy) \pmd(\vx, \vy) \muy(d\vy),
	\end{aligned}
\end{equation}
where $P_{\rvy\mid\rvx}$ is the conditional probability measure, and $\pmd(\vx,\vy) {:=} \frac{\muxy(d\vx,d\vy)}{\mux(d\vx)\,\muy(d\vy)}$ is the \gls{pmd} \citep{sugiyama2012density} kernel defining $\CE$, obtained as the Radon-Nikodym derivative of the joint measure to the product of marginal measures (see \cref{fig:symmetry_priors_long,sec:conditional_expectation_operator}).

The conditional expectation operator is significant because it provides an infinite-dimensional linear model—in a nonlinear representation space—for computing conditional probabilities and expectations. To see this, note that for any $\vx \in \vsX$ and any measurable set $\sB \subset \vsY$ we have that:
\begin{equation}
	\label{eq:uses_of_conditional_expectation_operator}
	\small
	\begin{split}
		\PP(\rvy \in \sB \vert \rvx {=} \vx)
		:=
		\int_{\vsY} \one_{\sB}(\vy) \muycondx(d\vy \vert \vx)
		=
		[\CE \one_{\sB}](\vx),
		\;\; \text{and}
		\\
		\E[\rvy | \rvx\! =\! \vx]
		:=
		[\CE \rvy](\vx). \qquad
	\end{split}
\end{equation}
Therefore, to estimate conditional probabilities and expectations, \gls{ncp} seeks the best finite-dimensional approximation of $\CE$. As we explain next, this gives rise to a representation learning problem \citep{oord2018representation}, in which the optimal representations of $\rvx$ and $\rvy$ are given by the top left and right singular functions of $\CE$ \citep{kostic2024neural,ryu2024operator}.

\customParagraph{Spectral Representation Learning}
The problem of approximating the conditional expectation operator $\CE$ as a rank-$r$ operator $\CEp$ with matrix representation $\CEmp \in \R^{r \times r}$ is defined as:
\begin{equation}
	\label{eq:learning_pmd}
	\small
	\begin{aligned}
		\argmin_{\nnParams}\; &
		\|\CE {-} \CEp \|_{\HS}^2
		=
		\E_{\rvx}\E_{\rvy} (\pmd(\rvx,\rvy) - \pmd_\nnParams(\rvx,\rvy))^2,
		\\
		\text{s.t.}           & \;
		\E_{\rvx}\E_{\rvy}\pmd_\nnParams(\rvx,\rvy) \!=\!1
		\;\text{ and }\;
		\rank(\CEp)\leq r+1.
	\end{aligned}
\end{equation}
The optimal solution, denoted $\oE_{\star}$, is the $r$-truncated \gls{svd} of $\CE$ \citep{eckart1936approximation,weidmann2012linear,Baker1973-ns}, namely
\begin{equation}
	\label{eq:optimal_representations}
	\small
	\begin{aligned}
		[\oE_{\star}f](\vx)
		= &
		\textstyle{\sum_{i=0}^r} \sval_i\,\innerprod[\muy]{f,\rsfn_i} \lsfn_i(\vx),
		\quad  \text{with}                                                \\
		  & \sval_i \lsfn_i(\vx) = [\CE \rsfn_i](\vx),\; \forall i\in[r],
	\end{aligned}
\end{equation}
where $(\sval_i, \lsfn_i, \rsfn_i)$ denotes the $i^{\text{th}}$ singular value and left/right singular functions of $\CE$, with $(\sval_0{=}1, \lsfn_0{=}\one_{\mux}, \rsfn_0{=}\one_{\muy})$ being the constant functions supported on $\mux$ and $\muy$.

Consequently,
\gls{ncp} parameterizes
$\CEp$ by a bilinear model
$
	\pmd_{\nnParams}(\vx,\vy) = \one_{\mux}(\vx)\one_{\muy}(\vy) + \vbfxp(\vx)^\top \CEmp \vbfyp(\vy)
$, composed of two encoder \glspl{nn} $\vbfxp \colon \vsX \to \R^r$ and $\vbfyp \colon \vsY \to \R^r$ that aim to approximate the \textit{span} of the top $r$ (non-constant) left and right singular functions of $\CE$. See \cref{fig:diagrams}-left. 

As $\pmd$ is generally unavailable analytically, \eqref{eq:learning_pmd} is solved via the regularized \gls{clora} loss:

\begin{equation}
	\label{eq:contrastive_loss}
	\small
	\begin{split}
		\small
		\mathcal{L}_{\gamma}(\nnParams)	 &\,=\, -2\E_{\rvx\rvy} \pmd_\nnParams(\rvx,\rvy) + \E_{\rvx}\E_{\rvy} \pmd_\nnParams(\rvx,\rvy)^2
		\\
		&+2 \gamma
		\big(\| \E_{\rvx}\vbfxp(\rvx)\|_F^2 + \| \E_{\rvy}\vbfyp(\rvy)\|_F^2
		\\
		&\hspace{.5truecm} +\,\| \Cov(\vbfxp) - \Identity_r\|_F^2
		+ \| \Cov(\vbfyp) - \Identity_r\|_F^2\big),
	\end{split}
\end{equation}

where the first two regularization terms center the learned representations, ensuring that $\E_{\rvx}\E_{\rvy}\pmd_\nnParams(\rvx,\rvy){\approx} 1$ \citep{kostic2024learning}, while the last two enforce approximate orthonormality of the learned bases in $\Lpxp :=\operatorname{span}(\vbfxp)\subset\LpxFull$ and $\Lpyp := \operatorname{span}(\vbfyp) \subset\LpyFull$ \citep{izbicki2017converting}.
A key property of \gls{ncp} is that the learned representations  enables reliable regression and conditional probability estimation—and thus uncertainty quantification—via \eqref{eq:uses_of_conditional_expectation_operator}.



\section{Problem Formulation} \label{sec:problem_formulation}
\begin{figure*}[t!]
	\centering
	\FigureSummary{Illustration of a reflection-symmetric joint distribution showing invariant marginals, invariant conditionals, and an invariant probability mass density under sign flips in x and y.}{\includegraphics[width=\textwidth]{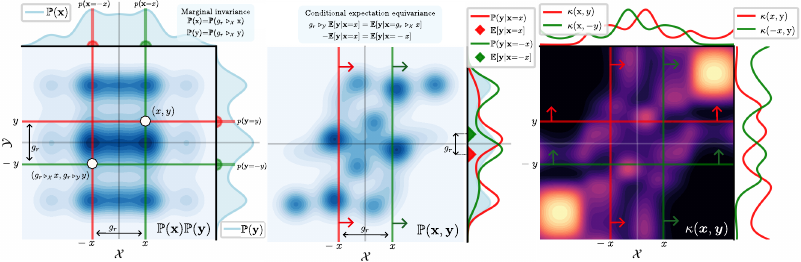}}
	\vspace*{-0.6cm}
	\caption{
	Example of symmetric random variables $(\rx, \ry)\sim \vsX\times\vsY\subset\R\times\R$, whose marginals $\PP(\rx)$ and $\PP(\ry)$; joint $\PP(\rx,\ry)$; and conditional $\PP(\ry \vert \rx)$ distributions are invariant to reflections of the data: $\g_r \Glact[\vsX] x=-x$ and $\g_r \Glact[\vsY] y=-y$, where $\g_r$ denotes the reflection element of the reflection symmetry group $\CyclicGroup[2]:=\{e,g_r \vert g_r^{\tiny2}=e\}$. Consequently, the \acrshort{pmd} $\pmd(x,y)$ 
	is $\CyclicGroup[2]$-invariant.
	}
	\vspace*{-0.4cm}
	\label{fig:symmetry_priors_long}
\end{figure*}
This paper tackles the problem of estimating the  conditional expectation $\E[\rvy\vert\rvx \,{=}\, \cdot]$, and, 
more generally, conditional distribution $ \PP(\rvy \vert \rvx) $, for random variables $ \rvx \,{\in}\, \vsX $ and $ \rvy \,{\in}\, \vsY $, under the assumption that\footnote{
	Throughout, with some abuse of notation we denote by $\PP(\rvx)$ and $\PP(\rvy \vert \rvx)$ both the probability and conditional probability, respectively, as well as the corresponding densities, when they exist.
} $ \PP(\rvy \vert \rvx) $ and $ \PP(\rvx) $ are $\G$-invariant under \emph{known} symmetry transformations of the data (as depicted in \cref{fig:symmetry_priors_long}), i.e:
%
%
\begin{equation}
	\label{eq:main_assumtions}
	\small
	\PP(\rvy \vert \rvx)
	\!=\!
	\PP(\g \Glact[\vsY] \rvy \vert \g \Glact[\vsX] \rvx),
	\quad
	\PP(\rvx) = \PP( \g \!\Glact[\vsX] \rvx),
	\quad
	\forall\; \g \in \G,
\end{equation}
where $\G$ denotes a finite symmetry group (\cref{def:group}) acting on the data spaces $\vsX$ and $\vsY$ via the group actions, $\Glact[\vsX]: \G \times \vsX \to \vsX,$ and $\Glact[\vsY]: \G \times \vsY \to \vsY$, with $\g \Glact[\vsX]\vx \in \vsX$ and $\g \Glact[\vsY]\vy \in \vsY$ denoting linear, invertible transformations of $\vx$ and $\vy$ defined by $\g \in \G$ (see \cref{fig:symmetry_priors_long,def:left_group_action}).

These priors imply the $\G$-invariance of the joint distribution $\PP(\rvx, \rvy)$ and of $\rvy$'s marginal distribution $\PP(\rvy)$, as well as the $\G$-equivariance of conditional expectations (see \cref{fig:symmetry_priors_long}-middle and \cref{prop:equivariant_conditional_expectation}):
\begin{equation}
	\label{eq:conditional_expectation_equivariance}
	\small
	\g \Glact[\vsY] \E[\rvy \vert \rvx {=} \vx]
	{=}
	\E[\rvy \vert \rvx {=} \g \Glact[\vsX] \vx]
	\quad\;
	\forall\; \g \in \G, \vx \in \vsX.
\end{equation}
Note that \eqref{eq:conditional_expectation_equivariance} implies the $\G$-equivariance of the regression function
$\vx \mapsto \E[\rvy \vert \rvx \,{=} \,\vx]$ (see \cref{fig:symmetry_priors_long}-middle).
Therefore, the symmetry priors
of \eqref{eq:main_assumtions} are satisfied whenever we approximate an equivariant/invariant function—that is, in virtually \emph{all applications of \gls{gdl}} \citep{bronstein2021geometric}.

The above symmetry priors represent a strong inductive bias for the conditional expectation operator \eqref{sec:background}, as they lead the \gls{pmd} kernel defining the operator to be $\G$-invariant (see \cref{fig:symmetry_priors_long}-right):
\begin{equation}
	\label{eq:pmd_invariance}
	\small
	\pmd(\vx, \vy)
	=
	\pmd(\g \Glact[\vsX] \vx, \g \Glact[\vsY] \vy)
	\quad
	\forall\; \g \in \G, \vx \in \vsX, \vy \in \vsY.
\end{equation}
%
%
In \cref{sec:learning_cond_exp_operator}, we extend the \gls{ncp} framework \citep{kostic2024learning} by leveraging \eqref{eq:pmd_invariance} to incorporate symmetry priors. This enables efficient estimation of $\G$-invariant conditional probabilities \eqref{eq:main_assumtions} and $\G$-equivariant regression \eqref{eq:conditional_expectation_equivariance} using \gls{gdl} architectures, via \eqref{eq:uses_of_conditional_expectation_operator}, with strong learning guarantees.


\section{ENCP Method for Equivariant Representation Learning}\label{sec:learning_cond_exp_operator}
In this section, we show how to incorporate the symmetry priors \eqref{eq:main_assumtions} into \gls{ncp}'s representation learning framework. First, we analyze the symmetry constraints on the infinite-dimensional conditional expectation operator and prove that, for symmetric random variables $\rvx$ and $\rvy$, the optimal solution of \eqref{eq:learning_pmd} yields $\G$-equivariant representations $\vbfxp$ and $\vbfyp$ and approximates the operator with a $\G$-equivariant matrix $\CEmp$. Then, we explain how to embed these structural constraints into the bilinear neural network architecture of \gls{ncp} using \emph{any} type of equivariant \glspl{nn}.


\customParagraph{Symmetric Function Spaces}
The assumption of $\G$-invariance of the marginal probabilities (\cref{sec:problem_formulation}) implies that the function spaces $\LpxFull$ and $\LpyFull$ are symmetric Hilbert spaces of $\G$-equivariant functions, as these inherit unitary group actions
$
	\Glact[{\LpxFull}]\colon \G \times \LpxFull \to \LpxFull
$ and
$
	\Glact[{\LpyFull}]\colon \G \times \LpyFull \to \LpyFull
$
defined via the push-forward of symmetry transformations of the data spaces (see details in \cref{sec:representation_theory_symmetric_function_spaces} and in \cref{fig:symmetry_action_on_f_space}):

\begin{equation}
	\small
	\label{eq:group_action_funct_space_main}
	\begin{aligned}
		\g \Glact[{{\LpxFull}}] f (\cdot) & := f(\g^{-1} \Glact[\vsX] \cdot) \in \LpxFull,
		\;\;\text{ and }
		\\
		                                  & \g \Glact[{{\LpyFull}}] h (\cdot) := h(\g^{-1} \Glact[\vsY] \cdot) \in \LpyFull, \quad \forall \g \in \G.
	\end{aligned}
\end{equation}

A fundamental property of $\G$-symmetric Hilbert spaces is their orthogonal decomposition into $\isoNum \leq |\G|$ subspaces referred to as \textit{isotypic subspaces}:
$
	\LpxFull {=} \bm{\oplus}^{\perp}_{\isoIdx\in[1,\isoNum]} \LpxFullIso
$, and
$
	\LpyFull {=} \bm{\oplus}^{\perp}_{\isoIdx\in[1,\isoNum]} \LpyFullIso
$ (see \cref{thm:isotypic_decomposition}).
Where each $\LpxFullIso$ and $\LpyFullIso$ denote the spaces of $\G^{\isoIdxBrace}$-equivariant functions of $\rvx$ and $\rvy$, with $\G^{\isoIdxBrace} := \G/\irrepKernel[\isoIdx]$ being a \textbf{quotient subgroup}, generated by a normal subgroup $\irrepKernel[\isoIdx]$ defined by the kernel of the group's $\isoIdx^{\text{th}}$ irreducible representation (see \cref{sec:isotypic_decomposition_and_disentangled_representations}).

This standard result from harmonic analysis \citep{mackey1980harmonic} enables us to express any $\G$-equivariant function as a sum of its projections onto the isotypic subspaces:

\begin{equation}
	\small
	\label{eq:decomposition_lpx_lpy_isotypic_subspaces}
	\begin{aligned}
		f(\cdot) = & f^{\inv}(\cdot) + \sum_{\isoIdx=2}^{\isoNum} f^{\isoIdxBrace}(\cdot),
		\quad
		h(\cdot) = h^{\inv}(\cdot) + \sum_{\isoIdx=2}^{\isoNum} h^{\isoIdxBrace}(\cdot),
		\\
		           & \text{with }\;
		f^{\isoIdxBrace} {\in} \LpxFullIso, h^{\isoIdxBrace} {\in} \LpyFullIso, \forall\; \isoIdx\in[\isoNum],
	\end{aligned}
\end{equation}

where $f^{\isoIdxBrace}$ and $h^{\isoIdxBrace}$ denote the $\G^{\isoIdxBrace}$-equivariant components of $f$ and $h$. Moreover, by convention, we associate the first subspace ($\isoIdx=1$) with the space of $\G$-invariant functions, i.e., $\G^{(1)}{=}\G^{\inv}{=}\{e\}$ (see \cref{ex:finite_dimensional_symmetric_function_space} in the Appendix).
%


\customParagraph{Equivariant Conditional Expectation Operator}
The $\G$-invariance of the \gls{pmd} kernel \eqref{eq:pmd_invariance}, implies that $\CE$ is a $\G$-equivariant linear operator (see \cref{def:equiv_operator}). This means that $\CE$ commutes with the group action on the function spaces, and consequently due to Shur's lemma (\cref{lemma:schursLemma}), can be decomposed (disentangled) into a direct sum of operators acting on the corresponding isotypic subspaces (see details in \cref{sec:equivariant_operators}), that is:
\begin{equation}
	\label{eq:G_equiv_cond_exp_operator}
	\small
	\begin{aligned}
		\g \Glact[{{\LpxFull}}] [\CE h](\cdot)
		=\, &
		\CE [\g \Glact[{{\LpyFull}}] h](\cdot)
		\quad \forall\;
		h \in \LpyFull, \g \in \G,
		\;\;\text{ gives }
		\\
		    & [\CE h](\cdot)
		=
		\sum_{\isoIdx=1}^{\isoNum} [\CEIso h^{\isoIdxBrace}](\cdot)
	\end{aligned}
\end{equation}
where each $\CEIso \colon \LpyFullIso \to \LpxFullIso$ models the conditional expectation for $\G^{\isoIdxBrace}$-equivariant functions.

\customParagraph{Equivariant Representation Learning}
The $\G$-equivariant structure of $\CE$ and its disentanglement \eqref{eq:G_equiv_cond_exp_operator} into isotypic components suggests that computing the conditional expectation of a $\G$-equivariant function is equivalent to summing the conditional expectations of its $\G^{\isoIdxBrace}$-equivariant components for all $\isoIdx\in[\isoNum]$. Therefore, the loss function of problem \eqref{eq:learning_pmd}, where $\CE$ is approximated in finite dimensional spaces $\Lpxp$ and $\Lpyp$, decouples into $\isoNum$ independent (disentangled) components:
\begin{equation}
	\label{eq:learning_pmd_symm}
	\small
	\begin{aligned}
		\begin{aligned}
			\argmin_{\nnParams}\;
			\|\CE {-} \CEp \|_{\HS}^2
			 & =
			\textstyle{\sum_{\isoIdx=1}^{\isoNum}}
			\|\CEIso {-} \CEpIso \|_{\HS}^2
			\\
			 & =
			\E_{\rvx}\E_{\rvy}
			\textstyle{\sum_{\isoIdx=1}^{\isoNum}}
			(\pmd^\isoIdxBrace(\rvx,\rvy) {-} \pmd^\isoIdxBrace_\nnParams(\rvx,\rvy))^2,
		\end{aligned}
		\\
		\begin{aligned}
			\text{s.t. } \;
			 & \pmd_\nnParams(\g \Glact[\vsX] \vx, \g \Glact[\vsY] \vy) = \pmd_\nnParams(\vx,\vy),
			\\
			 & \E_{\rvx}\E_{\rvy}\pmd_\nnParams(\rvx,\rvy) = 1,
			\;\;
			\forall \g \in \G, (\vx,\vy) \in \vsX\times\vsY.
		\end{aligned}
	\end{aligned}
\end{equation}
Importantly, to satisfy the $\G$-invariance constraint on $\pmd_\nnParams$, the truncated operator $\CEp\colon \Lpyp {\to} \Lpxp$ must act on symmetric finite-dimensional Hilbert spaces that are stable under $\G$, i.e., $g \Glact[\LpxFull] f \in \Lpxp$ and $g \Glact[\LpyFull] h \in \Lpyp$ for all $f \in \Lpxp$ and $h \in \Lpyp$, and have a $\G$-equivariant matrix representation $\CEmp$ (see \cref{remark:invariant_kernel_implies_stable_basis}). Thus, as in the infinite-dimensional case, these finite-dimensional spaces decompose into $\isoNum$ isotypic subspaces
$\Lpxp {=} \Oplus_{\isoIdx=1}^{\isoNum} \LpxpIso$
and
$\Lpyp {=} \Oplus_{\isoIdx=1}^{\isoNum} \LpypIso$. Accordingly, the truncated operator matrix decomposes block-diagonally into $\isoNum$ blocks, i.e., $\CEmp = \Oplus_{\isoIdx=1}^{\isoNum} \CEmpIso$, where each $\isoIdx$-th block needs to be a $\G^{\isoIdxBrace}$-equivariant matrix approximating the restriction of the conditional expectation operator on its corresponding isotypic subspace  \citep{salova2019koopman,ordonez2024dynamics}, see \cref{fig:diagrams}-right.

Analogous to \eqref{eq:optimal_representations}, the optimal truncation of $\CEIso$ is given by its truncated \gls{svd} \citep{weidmann2012linear}. However, the $\G^{\isoIdxBrace}$-equivariance of each disentangled component imposes additional structure on this \gls{svd}: any symmetry-transformed singular function remains a singular function in the same singular space \citep{ordonez2024dynamics}. Consequently, each singular space must have dimension at least $\irrepDim[\isoIdx]$—the smallest real vector-space dimension in which $\G^{\isoIdxBrace}$ can be faithfully represented, that is, the dimension of the irreducible representation of type $\isoIdx$; see \cref{fig:diagrams}-right and \cref{prop:op_isospaces_singular_space_dimensionality}.

\customParagraph{Equivariant \gls{nn} Parametrization}
To solve $\G$-equivariant regression and estimate $\G$-invariant conditional probabilities via \eqref{eq:learning_pmd_symm}, we propose the disentangled bilinear \gls{nn} setup in \cref{fig:diagrams}-right. This approach supports any \gls{gdl} $\G$-equivariant backbone (e.g., \acrshort{mlp}, \acrshort{cnn}, \acrshort{gnn}, Transformer), adapting to diverse data modalities and tasks. Our framework can handle \emph{continuous} compact groups via group discretization, and \emph{non-compact} groups by selecting appropriate backbones, as it is done with $\G$-steerable \acrshortpl{cnn} for image/audio processing with \emph{non-compact translation} equivariance  (see details in \cref{app:continuous_noncompact_symmetry_groups} and \citet{cesa2022program}).

The representation functions $\vbfxp:\vsX \to \mathbb{R}^r$ and $\vbfyp:\vsY \to\mathbb{R}^r$ can be parameterized by \emph{any} $\G$-equivariant \gls{nn} architecture, with an additional non-learnable linear output layer that applies a change of basis to the isotypic basis, i.e.,
\begin{equation*}
	\small
	\begin{split}
		\vbfxp(\cdot)
		=
		\begin{bsmallmatrix}
			\vbfxp^\inv(\cdot) \\
			\vdots \\
			\vbfxp^{(\isoNum)}(\cdot)
		\end{bsmallmatrix}
		= \mQ_{\rvx}^{\top}
		(
		\nnX_{\nnParams}(\cdot) -  \EE_\rvx [\nnX_{\nnParams}(\rvx)]
		)
		\quad \text{and}
		\\
		\vbfyp(\cdot)
		=
		\begin{bsmallmatrix}
			\vbfyp^\inv(\cdot) \\
			\vdots \\
			\vbfyp^{(\isoNum)}(\cdot)
		\end{bsmallmatrix}
		= \mQ_{\rvy}^{\top}
		(
		\nnY_{\nnParams}(\cdot) -  \EE_\rvy [\nnY_{\nnParams}(\rvy)]
		).
	\end{split}
\end{equation*}

Here, $\nnX_{\nnParams} : \vsX \to \R^r$ and $\nnY_{\nnParams} : \vsY \to \R^r$ denote any $\G$-equivariant backbone architectures that output $r$-dimensional representations of $\rvx$ and $\rvy$. These representations are then centered to satisfy the centering constraint in \eqref{eq:learning_pmd_symm}. Moreover, $\mQ_{\rvx}$ and $\mQ_{\rvy}$ are orthogonal, non-learnable change-of-basis matrices that expose the isotypic subspaces of $\Lpxp$ and $\Lpyp$ (see details in \cref{remark:isotypic_decomposition}).

Such an orthogonal decomposition of the learned equivariant representations is referred to in the representation learning literature as a \textit{disentangled} representation \citep{higgins2018towards} (see \cref{def:disentangled_representation}). In our setting, disentanglement means that the representation decomposes into orthogonal components, each associated with a distinct group of symmetry transformations. Specifically, each component $\vbfxp^\isoIdxBrace: \vsX \to \R^{r_\isoIdx}$ and $\vbfyp^\isoIdxBrace: \vsY \to \R^{r_\isoIdx}$ spans the approximated isotypic subspace of $\G^{\isoIdxBrace}$-equivariant functions, i.e.,

\begin{equation*}
	\small
	\begin{split}
		\Lpxp = \bm{\oplus}^{\perp}_{\isoIdx\in[1,\isoNum]} \LpxpIso, \quad \text{with } \LpxpIso := \operatorname{span}(\vbfxp^{\isoIdxBrace}) \subset \LpxFullIso, \quad \text{and}
		\\
		\Lpyp = \bm{\oplus}^{\perp}_{\isoIdx\in[1,\isoNum]} \LpypIso, \quad \text{with } \LpypIso := \operatorname{span}(\vbfyp^{\isoIdxBrace}) \subset \LpyFullIso.
	\end{split}
\end{equation*}

Furthermore, the truncated operator's matrix is parameterized in block-diagonal form
$\CEmp = \Oplus_{\isoIdx=1}^{\isoNum} \CEmpIso$,
with each block an $r_{\isoIdx} {\times} r_{\isoIdx}$ $\G^{\isoIdxBrace}$-equivariant matrix \footnote{We chose  square matrices for notational convenience. Dimensions for $\rvy$ and $\rvx$ spaces need not match}. The corresponding approximated $\G$-invariant \gls{pmd} kernel is given by:
\begin{equation}
	\label{eq:pmd_model}
	\small
	\begin{aligned}
		\pmd_\nnParams(\vx,\vy)
		 & =
		\one_{\mux}(\vx) \one_{\muy}(\vy) +
		\textstyle{\sum_{\isoIdx=1}^{\isoNum}}\pmd_\nnParams^\isoIdxBrace(\vx,\vy),
		\;\;\; \text{ with }
		\\
		 & \pmd_\nnParams^\isoIdxBrace(\vx,\vy) := \vbfx_{\nnParams}^\isoIdxBrace(\vx)^\top \CEmpIso \vbfy_{\nnParams}^\isoIdxBrace(\vy), \quad \forall \isoIdx \in [1, \isoNum]
	\end{aligned}
\end{equation}
where $\one_{\mux}(\vx)\one_{\muy}(\vy)$ arises since the first singular functions of $\CE$ are constant, similarly as in \eqref{eq:optimal_representations}.

This parameterization inherently guarantees that the learned truncated operator $\CEmp$ satisfies the $\G$-equivariance constraint and that the learned representation spaces satisfy the symmetry constraints imposed on the singular spaces of each $\G^{\isoIdxBrace}$ isotypic subspace. These structural constraints enable sharper symmetry-aware statistical learning guarantees in theory and improved empirical performance in practice; see \cref{sec:learning_guarantees,sec:equiv_op_finite_rank_approximation,sec:experiments}.

\customParagraph{Disentangled Training Loss}
Having introduced the equivariance constraints on the truncated operator matrix, we decompose the contrastive loss \eqref{eq:contrastive_loss} to reflect the separability of the optimization arising from the operator's isotypic decomposition \eqref{eq:G_equiv_cond_exp_operator}:
\begin{equation}
	\label{eq:disentangled_contrastive_loss}
	\small
	\begin{aligned}
		\mathcal{L}_{\gamma}(\nnParams)
		 & =
		\textstyle{\sum_{\isoIdx=1}^{\isoNum}}
		{-}2\E_{{\rvx\rvy}} \pmd_\nnParams^\isoIdxBrace(\rvx,\rvy)
		+
		\E_{\rvx}\E_{\rvy} \pmd_\nnParams^\isoIdxBrace(\rvx,\rvy)^2
		+
		\\
		 &
		\gamma\Omega^\isoIdxBrace(\nnParams)
		+
		2 \gamma
		\big(\| \E_{\rvx}\vbfxp^\inv(\rvx)\|_F^2 {+} \| \E_{\rvy}\vbfyp^\inv(\rvy)\|_F^2 \big).
	\end{aligned}
\end{equation}
This decomposes learning $\G$-equivariant representations of $\rvx$ and $\rvy$ into learning $\isoNum$ less constrained $\G^{\isoIdxBrace}$-equivariant representations for distinct quotient groups of $\G$.

\begin{table*}[!tb]
	\centering
	\resizebox{\textwidth}{!}{%
		\begin{tabular*}{\textwidth}{p{2.5cm}c|c}
			\textbf{Task}
			&
			$ \vf(\vx) := \E_{\rvy}[\rvy\vert\rvx{=}\vx] \approx \hat{\vf}_{\nnParams}(\vx)$
			&
			$\PP[\rvy {\in} \sB \vert \rvx\in\sA] \approx \widehat{\PP}_{\nnParams}[\rvy \!\in\! \sB \vert \rvx \!\in\! \sA]$
			\\ \midrule
			\textbf{Estimate}
			&
			$\EE_{\rvy}[\rvy] {+} \vbfxp(\vx)^\top \CEmp \EE_{\rvy}[\vbfyp(\rvy) \otimes \rvy]$
			&
			$
				\EEY[\one_\sB] {+} \frac{\EEX[\one_\sA(\rvx)\otimes \vbfxp(\rvx)]^\top\CEmp \EEY[\one_\sB(\rvy) \otimes \vbfyp(\rvy)]}{\EEX[\one_\sA(\rvx)]}
			$
			\\ \midrule
			\textbf{Guarantees}
			&
			$\|\vf {-} \hat{\vf}_{\nnParams}\|_{\LpxFull}
				{\lesssim}
				\scalebox{0.7}{$\sqrt{\Var[\norm{\rvy}]}$}
				\left(
				\error^{r}
					{+}
				\tfrac{\ln (\sfrac{\color{blue}\isoDim}{\delta})}
					{
						({\color{blue}\isoDim}\samplesize)^{\frac{\rpar}{1+2\rpar}}
					}
				\right)
			$
			&
			$
				|\PP {-} \widehat{\PP}_{\nnParams}| {\lesssim}
				\sqrt{\tfrac{\sP[\rvy\in \sB]}{\color{blue}\sP[\rvx\in\G\Glact[\scalebox{0.5}{$\vsX$}]\sA]}}
				\left(
				\error^{r}{+}
				\tfrac{\ln(\sfrac{\color{blue}\isoDim}{\delta})}
					{
						({\color{blue}\isoDim}\samplesize)^{\frac{\rpar}{1+2\rpar}}
					}
				\right)
			$
		\end{tabular*}%
	}
	\vspace{0.2em}
	\caption{
	Statistical guarantees for \gls{encp}. The error bounds are governed by three factors: (i) the estimation error, impacted by the structure of the symmetry group $\G$---specifically, the sum of the minimum dimensionalities of the singular subspaces associated with each isotypic component, ${\color{blue}\isoDim = \sum_{\isoIdx=1}^{\isoNum}\irrepDim[\isoIdx]}$, equivalently the sum of the dimensions of the irreducible representations of the group (see \cref{fig:diagrams}), which increases the effective sample size; (ii) the representation learning error, measuring the quality of the learned representations, quantified by $\error^{r} = \|\CE - \CEp \|_{\text{op}} \leq \sqrt{\sfrac{\sigma_r}{\sigma_r - \sigma_{r+1 }}}\sqrt{\loss_{\gamma}(\nnParams)-\loss_{\gamma}(\star)}$; and (iii) the singular-value decay rate $\alpha>0$ of the operator. Here, $\G\Glact[\vsX]\sA := \cup_{g\in\G} \; g \Glact[\vsX] \sA$ denotes the group orbit of $\sA$ (\cref{def:group_orbit}).
	}
	\vspace{-0.5cm}
	\label{tab:encp_guarantees}
\end{table*}

Moreover, we improve the estimates of the regularization terms in \eqref{eq:contrastive_loss} by leveraging our symmetry priors to: (i) tighten the centering regularization \eqref{eq:disentangled_contrastive_loss} given that functions in $\LpxIso$ and $\LpyIso$ are centered by construction for $\isoIdx\neq\inv$ (see \cref{cor:functions_without_invariant_component_centered})—and (ii) exploit the orthogonality between isotypic subspaces \eqref{eq:decomposition_lpx_lpy_isotypic_subspaces} to independently regularize orthonormality for each isotypic subspace (see example in \cref{fig:basis_functions}), leading to better covariance estimates \citep{shah2012group_cov_estimation}:
\begin{equation}
	\label{eq:disentangled_orthonormal_reg}
	\small
	\Omega^\isoIdxBrace(\nnParams) \!:=\!
	\textstyle{\sum_{\isoIdx=1}^{\isoNum}}
	\|\Cov(\vbfx^\isoIdxBrace) \!-\! \Identity_{r_\isoIdx}\|^2_F \!+\! \|\Cov(\vbfy^\isoIdxBrace) \!-\! \Identity_{r_\isoIdx}\|^2_F.
\end{equation}
Given a batch $\{(\vx_n,\vy_n)\}_{n=1}^{N}$ and their corresponding embeddings $\{(\vbfxp(\vx_n),\vbfyp(\vy_n))\}_{n=1}^{N}$, the empirical unregularized loss is estimated via U-statistics, yielding an unbiased estimate with an effective sample size of $N^2$ \citep{wang2022spectral,tsai2020neuralPMD}.
\begin{equation}
	\label{eq:empirical_loss}
	\small
	\begin{split}
		\widehat{\mathcal{L}}_{0}(\nnParams)
		\!=\!
		\sum_{\isoIdx\in[\isoNum]}
		\Bigg[
			\tfrac{1}{N}
			\sum_{n\in[N]} \pmd_\nnParams^\isoIdxBrace(\vx_n,\vy_n)
			+ \\
			\tfrac{1}{N(N-1)}
			\sum_{a\in[N]} \sum_{b\in[N]\setminus\{a\}}\pmd_\nnParams^\isoIdxBrace(\vx_a,\vy_b)^2\Bigg].
	\end{split}
\end{equation}
Similarly, we use U-statistics to obtain unbiased estimates for orthonormal regularization in \eqref{eq:disentangled_orthonormal_reg}, achieving an effective sample size of $\irrepDim[\isoIdx]N^2$ per isotypic subspace (see \cref{app:symmetric_orthonormality}). Consequently, standard \gls{nn} optimization methods can be employed to learn equivariant representations via the approximate model of $\CE$, enabling downstream inference tasks described in the next section.

\section{Inference and Learning Guarantees}  \label{sec:learning_guarantees}

Once training is complete, the learned $\G$-invariant \gls{pmd} from \eqref{eq:pmd_model} can be used, via \eqref{eq:uses_of_conditional_expectation_operator}, for $\G$-equivariant regression and $\G$-invariant conditional probability estimation. These estimates are obtained using a \gls{nn} architecture composed of $\vbfxp$, $\CEmp$, and a final linear layer that delivers the basis expansion coefficients of the target variable in the $\rvy$ representation space $\Lpyp = \operatorname{span}(\vbfyp)$. For a summary of the estimates and their learning guarantees refer to \cref{tab:encp_guarantees}.

Both estimates are derived from the general problem of vector-valued regression of a target function $\vz\colon \vsX \to \vsZ$ defined by the conditional expectation of an observable $\vh\in\Lp[2]{\muy}{(\vsY,\vsZ)}$, that is, $\vz(\vx) := \E_{\rvy}[\vh(\rvy)\vert\rvx=\vx] = [\CE \vh](\vx)$, where $\vsZ$ is a symmetric vector space. Using the learned model, we estimate $\vz(\vx) := [\CE \vh](\vx)$ by:
\begin{equation}
	\label{eq:regression_estimate}
	\small
	\begin{aligned}
		\widehat{z}_{\nnParams}(\vx)
		:=
		\EEY[\vh(\rvy)]
		+
		\vbfxp(\vx)^\top \CEmp \,\EEY[\vbfyp(\rvy) \otimes \vh(\rvy)],
	\end{aligned}
\end{equation}
where $\EEY[\vbfyp(\rvy) \otimes \vh(\rvy)]$ represents the basis expansion coefficients of $\vh$ in the learned basis of $\Lpyp \subset \LpyFull$. Here, $\EE_{\rvx} \colon \LpxFull \to \R$ and $\EE_{\rvy} \colon  \LpyFull \to \R$ are the $\G$-invariant empirical expectations defined by:
\begin{equation}
	\label{eq:equiv_empirical_mean_main}
	\small
	\begin{split}
		\EE_{\rva}[f(\rva)] \!=\! \frac{1}{|\G|\samplesize} \sum_{g\in \G} \sum_{n{=}1}^{\samplesize} f(g\Glact[\vsA]\!\va_{n})
		\quad \text{for } \va \in \{\vx,\vy\}
		%
	\end{split}
\end{equation}
Hence, our method learns representations of $\rvx$ and $\rvy$ that transform \textit{nonlinear regression of observables} into \textit{linear regression} in the learned representation space. For example, assuming $\rvy$ has bounded variance and setting $\vh(\vy)=\vy$, we recover the standard ($\G$-equivariant) regression solution (see \cref{tab:encp_guarantees}-left). Equally important, by letting $\vh=\one_{\sB}$—the indicator of a measurable set $\sB\subseteq\vsY$—the model estimates conditional probabilities (see \cref{tab:encp_guarantees}-right), thereby supporting both regression and uncertainty quantification (e.g., conditional quantiles, see \cref{sec:experiments} and \citet{kostic2024neural}). The following learning bounds cover this general setting.
\begin{theorem}\label{thm:main}
	Let the symmetry priors in \eqref{eq:main_assumtions} hold, $\CE$ be a $\sfrac{1}{\rpar}$-Schatten-class $\G$-equivariant conditional expectation operator, as in \eqref{eq:G_equiv_cond_exp_operator}, and $\CEp$ be a truncated approximation, defined in \eqref{eq:pmd_model} and trained via \eqref{eq:disentangled_contrastive_loss}, with rank $r$ and parameters $\nnParams\in\Theta$.
	Then, given an appropriate truncation dimension $r \asymp (\sfrac{\samplesize}{\isoDim^{\rpar}})^{\sfrac{1}{(1+2\rpar)}}$, the following results hold for any $\G$-equivariant/invariant $\vh\!\in\!\Lp[2]{\muy}{(\vsY,\vsZ)}$, measurable set $\sA\subset\vsX$, and $\G'\leq\G$; with probability at least $1-\delta$ w.r.t. an iid draw of $\sD_\samplesize=\{(\rvx_n,\rvy_n) \sim \muxy\}_{n=1}^{\samplesize}$:
	\begin{subequations}
		\begin{equation}\label{eq:main_thm_regression}
			\small
			\Vert \vz-\hat{\vz}_{\nnParams}\Vert_{\Lp[2]{\mux}(\vsX,\vsZ)}
			\!\lesssim\!  \sqrt{\Var[\norm{\vh(\rvy)}_{\vsZ}]}
			\;\cdot\; {\color{blue}\xi_{\nnParams}}
			\quad \text{and}
		\end{equation}
		\begin{equation}\label{eq:main_thm_regression_set}
			\small
			\begin{split}
				\Vert \vz(\sA)-\hat{\vz}_{\nnParams}(\sA) \Vert_{\vsZ}
				\!\lesssim\!
				\tfrac{
				\sqrt{1+ {\color{blue}{(|\G'|\,{-}\,1)\gamma_{\G'}(\sA)}}}\,
				\sqrt{\Var[\norm{\vh(\rvy)}_{\vsZ}]}
				}{
				{\color{blue}\sqrt{|\G'|\PP(\rvx\in\sA)}}
				}
				\;\cdot\;  {\color{blue}\xi_{\nnParams}}
			\end{split}
		\end{equation}
		where the modeling/approximation and finite-sample-estimation error are given by:
		\begin{equation*}
			\small
			{\color{blue}\xi_{\nnParams}} :=
			\error^{r}
			+
			\frac{\ln(\sfrac{{\color{blue}\isoDim}}{\delta})}{({\color{blue}\isoDim}\samplesize)^{\frac{\rpar}{1+2\rpar}}}
			.
		\end{equation*}
	\end{subequations}
	Here, $\error^r = \|\CE {-} \CEp\|_{\text{op}}$ denotes representation learning modeling error \eqref{eq:learning_pmd_symm}, $r = \sum_{\isoIdx=1}^{\isoNum} \irrepDim[\isoIdx] \irrepMultiplicity_\isoIdx$ defines the representation space dimension, with $(\irrepDim[\isoIdx], \irrepMultiplicity_\isoIdx)$ denoting dimension and multiplicity of the group's irreducible representation of type $\isoIdx$, and $\isoDim := \sum_{\isoIdx=1}^{\isoNum} \irrepDim[\isoIdx] \geq \isoNum$ (see \cref{fig:diagrams}).
\end{theorem}
\begin{proof}
	$\G$-invariance of $\mux$ and $\muy$ allows us to control both bias (\cref{thm:approx_error}) and variance (\cref{prop:empirical_inner_product}) of $\hat{\vz}_{\nnParams}$. A simple balancing of $r$ yields the final bound on the error.
\end{proof}
%
We conclude by highlighting key theoretical and practical implications of \cref{thm:main}. Since pointwise guarantees are essential in many applications, \cref{eq:main_thm_regression_set} provides a learning bound conditioned on measurable sets $\sA \subseteq \vsX$, yielding the estimate
$
	\vz(\sA)
	:=
	\E_{\rvy}[\vh(\rvy)\vert\rvx\in\sA]\approx
	\sfrac{\EEX[\widehat{\vz}_{\nnParams}(\vx)]}{\EEX[\one_{\sA}(\rvx)]}
$. In this setting, exploiting symmetry helps mitigate the bottlenecks of rare-event estimation. To capture this effect, we introduce the \textit{symmetry index} of $\sA$ w.r.t. $\mux$, which quantifies the degree of symmetry of $\sA$ and appears in \eqref{eq:main_thm_regression_set}:

\begin{equation}
	\label{eq:setsym_bis_main}
	\small
	\begin{split}
		\setsym{\G}{\sA}
		=
		\frac{1}{|\G|-1}{
			\sum_{g\in\G\setminus\{e\}}}
		\frac{\PP(\rvx\in \sA \cap \; g \Glact[\scalebox{0.8}{$\vsX$}] \sA)}{\PP(\rvx\in \sA)},
		\\
	\end{split}
\end{equation}

Observe that $\setsym{\G}{\sA} \in [0,1]$ for any $\sA\subseteq\vsX$, with
$\setsym{\G}{\sA}=1$ if $\sA$ is $\G$‑invariant (e.g., the vertical and horizontal reflection planes in \cref{fig:symmetry_priors_long}), while $\setsym{\G}{\sA}=0$ if $\sA$ is a $\G$‑asymmetric set, that is, if $g\Glact_{\vsX}\sA\cap\sA=\emptyset$ for all $g\in\G$ (e.g., any set disjoint from the reflection planes in \cref{fig:symmetry_priors_long}). In particular, the effective rarity of $\rvx\in\sA$ is captured by $\setsym{\G'}{\sA}$, yielding a maximal gain of $|\G|\PP[\rvx\in\sA]\gg\PP[\rvx\in\sA]$ when $\sA$ is asymmetric.

Equivariant disentangled representations increase the effective sample size according to
$\samplesize \ll \isoNum\samplesize \leq {\color{blue}\isoDim \samplesize} \leq |\G|\samplesize$.
Thus, they provide not only the expected $\isoNum$-fold gain from disentanglement, but also the stronger boost induced by
$\isoDim = \sum_{\isoIdx=1}^{\isoNum} \irrepDim[\isoIdx]$ (see \cref{fig:diagrams}, right). This improvement in the estimation-error term of the bound is achieved by exploiting the structural constraints of the singular spaces associated with the irreducible representations of the group that appear in the chosen representation space. Moreover, any non-trivial symmetry group sharpens the bound. In the absence of a symmetry prior, namely when $\G=\{e\}$ and $|\G|=\isoNum=\isoDim=1$, our framework recovers the symmetry-agnostic baseline results of \citep{kostic2024neural}.

The remaining term in the bound is the representation-learning error $\error^{r}$ inherited from \gls{ncp} \citep{kostic2024neural}. We do not provide a full statistical characterization of this term, as doing so would require additional architecture- and regularity-dependent assumptions without changing the main qualitative conclusions. Instead, we complement the theory with synthetic experiments in which the true operator is known, allowing us to empirically assess the impact of exploiting symmetry on $\error^{r}$ (see \cref{fig:synthetic_regression_results,fig:G_invariance_kappa}).

\begin{figure*}[!t]
	\centering
	\FigureSummary{Sample-efficiency plots across four symmetry groups comparing test-set probability-mass-density mean squared error versus training set size for ENCP and baseline methods on synthetic conditional Gaussian mixture models.}{\includegraphics[width=\textwidth]{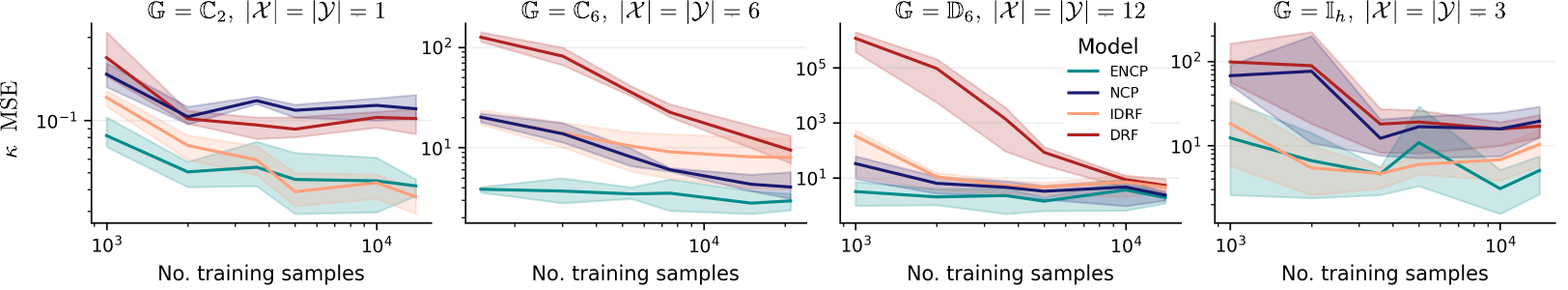}}
	\vspace*{-0.4cm}
	\caption{
		Sample-efficiency plots for the test-set \gls{pmd} \acrshort{mse}
		$
			\pmd_{\text{mse}} := \E_\rvx \E_\rvy (\pmd(\rvx, \rvy) - \pmd_{\nnParams}(\rvx, \rvy))^2
		$
		versus the number of training samples, on logarithmic scales.
		Each column corresponds to a symmetric \acrshort{cgmm} with a different symmetry group and $(\rvx,\rvy)$ dimensionality. The tested groups are the cyclic groups $\CyclicGroup[2]$ and $\CyclicGroup[6]$, the dihedral group $\DihedralGroup[6]$, and the icosahedral group $\sI_h$ (order 60).
	}
	\vspace*{-0.3cm}
	\label{fig:ablation_results}
\end{figure*}

Finally, the parameter $\rpar$ quantifies problem regularity through the decay rate of the operator's singular values, $\sum_{i\in\sN}\sval_{i}^{\sfrac{1}{\rpar}} < \infty$, with special cases including finite-rank operators ($\rpar=\infty$), compact operators ($\rpar=0$), trace class operators ($\rpar=1$), and Hilbert-Schmidt operators ($\rpar=1/2$, equivalent to $\pmd \in \Lp[2]{\mux\times\muy}(\vsX\!\times\!\vsY)$; see \cref{sec:statistical_learning_theory}).
Thus, our results cover learning rates ranging from arbitrarily slow rates as $\rpar\to0$ to fast rates $[\isoDim \samplesize]^{-1/2}$ as $\rpar\to\infty$.


%
\section{Experiments} \label{sec:experiments}

We present three experiments evaluating our method in (i) approximating the conditional expectation operator and the use of the learned operator for (ii) $\G$-equivariant regression and (iii) symmetry-aware uncertainty quantification. For further details and additional experiments on synthetic regression, uncertainty quantification, symmetry misspecification and train/inference computational costs refer to \cref{app:experimental_setup} and the code repository\footref{ref:code}

\customParagraph{Conditional Expectation Operator Learning}
This experiment quantifies the \gls{mse} of approximating $\CE$, i.e., $\pmd_{\text{mse}} := \E_\rvx \E_\rvy (\pmd(\rvx, \rvy) - \pmd_{\nnParams}(\rvx, \rvy))^2$. To achieve this, we extend the \gls{cgmm} of \citet{gilardi2002conditional} to parametrically construct symmetric vector-valued random variables $\rvx \in \vsX$ and $\rvy \in \vsY$ that satisfy the symmetry priors in \eqref{eq:main_assumtions} for arbitrary finite symmetry groups (see example in \cref{fig:symmetry_priors_long}). Each \gls{cgmm} possess an analytical \gls{pmd}, enabling direct $\pmd_{\text{mse}}$ estimation, usually impossible for real-world datasets.

%
The results in \cref{fig:ablation_results} compare our \gls{encp} against \gls{ncp} \citep{kostic2024neural}, the baseline \gls{drf} \citep{tsai2020neuralPMD}, and our \gls{idrf} adaptation. Note that the baselines approximate $\pmd$ as a single \gls{nn}, $\pmd_{\nnParams}^{\text{drf}}\colon \vsX \times \vsY \to \R^+$, trained via the contrastive loss in \eqref{eq:contrastive_loss}. Since they do not enforce the separable structure in \eqref{eq:pmd_model}, these methods cannot be used for regression or conditional probability estimation.

We evaluate performance across \glspl{cgmm} with diverse symmetry groups and varying $(\rvx,\rvy)$ dimensions. Across all settings, \gls{encp} achieves lower \gls{mse} and better sample efficiency than \gls{idrf} and the symmetry-agnostic models \gls{ncp} and \gls{drf} (see \cref{fig:ablation_results}). Moreover, \Cref{fig:G_invariance_kappa} shows that the symmetry-agnostic models (\gls{ncp} and \gls{drf}) struggle to recover the $\G$-invariance of the true \gls{pmd} from data alone. In contrast, \gls{encp} and \gls{idrf} explicitly enforce the invariance constraint in \cref{eq:learning_pmd_symm} through architectural \gls{nn} constraints, preserving the desired $\G$-invariance throughout training up to numerical precision. These results show that \gls{encp} accurately approximates the conditional expectation operator, consistently achieving lower empirical representation-learning errors $\error^{r}$ than all other models. This underscores the critical role of symmetry in improving both accuracy and sample efficiency.

\customParagraph{$\G$-Equivariant Regression}
To test our model's potential for performing $\G$-equivariant regression, we address the robot's \gls{com} momenta regression task of \citep{ordonez2024morphological}. The goal is to predict a quadruped robot's \gls{com} linear $\vl \in \R^3$ and angular momenta $\vk \in \R^3$ given the noisy observations of the robot's generalized positions $\vq \in \R^{12}$ and velocity coordinates $\dot{\vq} \in \R^{12}$, i.e.,
$[\vl^\top, \vk^\top]^\top = h_{\text{CoM}}(\vq + \epsilon_{\vq}, \dot{\vq} + \epsilon_{\dot{\vq}})$
(see details in \cref{app:exp_results_regression,fig:example_group_action} ). We compare \gls{encp} against \gls{ncp} and two baselines—a standard \gls{mlp} and an \gls{emlp}---all with equivalent architectural footprint.  The \gls{ncp} and \gls{encp} are trained using \eqref{eq:contrastive_loss} and \eqref{eq:disentangled_contrastive_loss}, respectively, while \gls{mlp} and \gls{emlp} are trained using standard \gls{mse}.

\begin{figure*}[!t]
	\centering
	\FigureSummary{Time-series uncertainty plots for front-leg ground-reaction forces of a quadruped robot, comparing predicted 90 percent confidence intervals from ENCP, NCP, ECQR, and CQR across x, y, and z force components.}{\includegraphics[width=\textwidth]{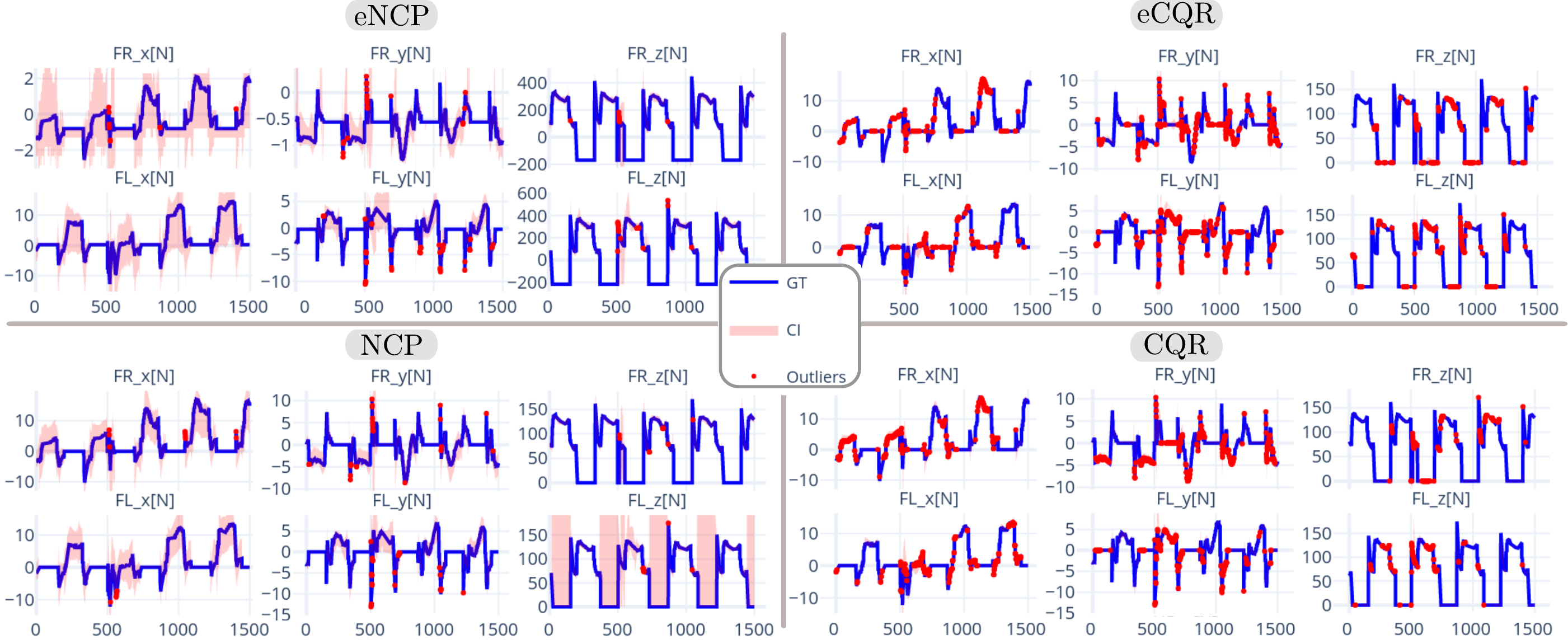}}
	\caption{
		Prediction of 90\% \glspl{ci} (light red areas) for ground-reaction forces $\vtau_{\text{grf}} \in \R^{12}$ of a quadruped robot on rough terrain with varying friction. We compare \acrshort{encp}, \acrshort{ncp}, \acrshort{ecqr}, and \acrshort{cqr} models. CIs are computed along $x$, $y$, $z$ compoents of the forces on the front-right (FR) and front-left (FL) legs (see all legs predictions in \cref{fig:quadruped_grf_uc}). The trajectory of ground reaction forces in time is shown in blue lines, while sensed values outside of the predicted \glspl{ci} are plotted in red markers.
	}
	\label{fig:quadruped_grf_uc_main}
\end{figure*}

The results in \cref{fig:teaser} demonstrate that our \gls{encp} model outperforms all other baselines in both performance and sample complexity. Consistent with \citep{kostic2024neural}, the \gls{ncp} model shows poorer sample complexity than \gls{mlp} and \gls{emlp} due to its indirect approach to regression, via approximation of $\CE$. However, by incorporating symmetry priors, \gls{encp} appear to mitigate this limitation.

\customParagraph{Symmetry-Aware Uncertainty Quantification}
Finally, we demonstrate the practical impact of our approach on a core robotics problem: providing robust uncertainty quantification for unavailable yet crucial state observables for robot control and state estimation \citep{bledt2018contact,maravgakis2023probabilistic}. Specifically, we use proprioceptive sensor readings to provide $90\%$ \glspl{ci} for the robot's \gls{grf} $\vtau_{\text{grf}} \in \R^{12}$, the instantaneous work exerted or subtracted to the robot $U(\vq,\dot{\vq},\vtau) \in \R$, and the kinetic energy $T(\vq,\dot{\vq}) \in \R$, while the robot traverses rough terrain (see \cref{app:exp_results_uncertainty_quantification_robot}). Reliable probabilistic estimates of these quantities are of crucial relevance for optimal control \citep{bledt2018contact}, contact detection \citep{maravgakis2023probabilistic}, state estimation \citep{nistico2025muse}, and system identification \citep{gautier1997dynamic}.

%

\begin{table}[!b]
	\centering
	\small
	\vspace{-0.5cm}
	\begin{tabular}{l|ccc}
		\small
		                & r-Coverage $\uparrow$                & Coverage $\uparrow$                  & Set-size $\downarrow$                           \\
		\midrule
		\acrshort{encp} & $\textcolor{blue}{99.5 {\pm} 0.1\%}$ & $\textcolor{blue}{95.0 {\pm} 0.4\%}$ & $4.3 {\pm} 3.6{\times}10^{9}$                   \\
		\acrshort{ncp}  & $\textcolor{blue}{99.5 {\pm} 0.0\%}$ & $56.9 {\pm} 0.3\%$                   & $2.6 {\pm} 1.4{\times}10^{10}$                  \\
		\acrshort{ecqr} & $84.2 {\pm} 0.7\%$                   & $6.7 {\pm} 1.2\%$                    & $\textcolor{blue}{1.7 {\pm} 1.7{\times}10^{7}}$ \\
		\acrshort{cqr}  & $80.5 {\pm} 3.7\%$                   & $8.5 {\pm} 0.9\%$                    & $1.4 {\pm} 0.1{\times}10^{8}$                   \\
	\end{tabular}
	\caption{
	Relaxed coverage \eqref{eq:relaxed_coverage}, coverage \eqref{eq:coverage}, and set-size \eqref{eq:setsize} for predicted \glspl{ci} on the test set of the quadruped locomotion uncertainty-estimation task with $\rvy = [\vtau_{\text{grf}}^\top,\, U,\, T]^\top$. The target coverage is $90\%$. See all metrics in \cref{tab:uc_robotics_coverage_set_size_full}.
	}
	\label{tab:uc_robotics_coverage_set_size}
	\vspace{-0.2cm}
\end{table}
%
This task tests our model's ability to learn conditional distributions from high-dimensional data, considering that for the \gls{encp} and \gls{ncp} models, quantile estimation is done by regressing the \gls{ccdf} for each dimension of $\rvy=[\ry_1,\dots]$ and then applying a linear search to extract quantiles (see \cref{fig:ccdf_regression,fig:synthetic_regression_results}). This is achieved by discretizing the range of each $\ry_i$ into $N_b$ bins and estimating
$
	\PP(\ry_i \in \sA_{i,n} \vert \rvx = \cdot) := [\CE \one_{\sA_{i,n}}](\cdot)
$
for all $n \in [N_b]$ (see \cref{sec:learning_guarantees}), where $\sA_{i,n}$ consists of the first $n$ bins. In practice, this means regressing $|\vsY|\times N_b$ conditional probabilities corresponding to sets of varying sizes in a \emph{single forward pass} (see details in \cref{app:exp_results_regression_synthetic}). By contrast, the baseline \gls{cqr} \citep{feldman2023calibrated} and its equivariant adaptation \gls{ecqr} directly regress quantiles for a fixed coverage level (i.e., the probability that an event lies within the predicted confidence interval) and need retraining for different coverage values.

The results, shown in \cref{tab:uc_robotics_coverage_set_size,fig:teaser} for work $U$ and kinetic energy $T$, and in \cref{fig:quadruped_grf_uc_main} for the ground reaction forces $\vtau_{\text{grf}}$, identify eNCP as the only model that provides robust uncertainty quantification under both transient disturbances and nominal locomotion conditions. It is also the only model whose empirical test-set coverage remains close to the target value, rendering the other models unreliable in practice. These results further highlight eNCP's potential for conditional probability estimation.

\section{Conclusions} \label{sec:conclusions}
We introduce a novel framework for equivariant contrastive representation learning that enables equivariant regression, symmetry-aware uncertainty quantification, and conditional probability estimation with non-asymptotic statistical learning guarantees. Building on a recent contrastive representation learning method for approximating the spectral decomposition of the conditional expectation operator, our approach incorporates symmetry priors to impose additional structural constraints on the operator approximation, yielding a disentangled representation that admits stronger symmetry-aware statistical guarantees. We demonstrate the framework's benefits through theoretical learning bounds and empirical evaluations on robotics applications. Notably, we provide the first learning guarantees for equivariant regression and uncertainty quantification using neural network features, bridging spectral representation learning and geometric deep learning.

\paragraph{Limitations and Future Work} Our method relies on fully specified symmetry priors, a natural assumption in \gls{gdl}; however, some applications may have only partial or misspecified symmetries. Future work could accommodate partial or uncertain symmetry information and statistically test for its presence in data, as explored in \cref{app:exp_results_symmetry_misspecification}.

\newpage

\section*{Impact Statement}
This paper presents work whose goal is to advance the field
of Machine Learning. There are many potential societal
consequences of our work, none which we feel must be
specifically highlighted here.

\section*{Acknowledgements}
The work of DOA, VRK, AF and MP was supported by the EU
Project ELIAS (grant No. 101120237), and by the European Union – NextGenerationEU and the Italian National
Recovery and Resilience Plan through the Ministry of University and Research (MUR), under Project PE0000013
CUP J53C22003010006. KL acknowledges support from ELIAS and
the French National Research Agency for the DECATTLON
project (ANR-24-CE40-3341). 



\bibliographystyle{icml2026}
\bibliography{references}




\newpage
\appendix
\let\addcontentsline\origaddcontentsline
\onecolumn

\renewcommand{\partname}{}
\renewcommand{\thepart}{}
\part{Appendix} 

\setcounter{parttocdepth}{1}
\noptcrule
\parttoc 

The appendix is organized as follows.
\begin{itemize}[left=1em]
	\item \cref{sec:notation} summarizes the notations used, while \cref{sec:acronyms} provides a glossary.
	\item \cref{sec:related_works} offers a detailed discussion of related work on contrastive learning, equivariant representations, and statistical learning theory with symmetry priors.
	\item \cref{sec:aux_results}-\cref{app:experimental_setup} detail our methodological contributions. In particular, \cref{app:equivariant_nn_architecture} and \cref{sec:loss_function} explain how our method leverages equivariant neural networks, and \cref{app:experimental_setup} complements \cref{sec:experiments} with additional experimental details and studies.
	\item We provide self-contained sections with unified notation covering essential preliminaries: group theory (\cref{sec:group_theory}), representation theory in function spaces (\cref{sec:representation_theory_symmetric_function_spaces}), equivariant linear operators (\cref{sec:equivariant_operators}), and the symmetries of covariance operators central to our work (\cref{sec:statistical_properties_symmetric_function_spaces}).
	\item Finally, \cref{sec:statistical_learning_theory} presents our theoretical contributions, summarized in Theorem \cref{thm:main}. We first establish approximation error bounds using operator theory and equivariant representations, then combine group theory with concentration inequalities to derive estimation error bounds that fully expose the benefits of symmetry priors.

\end{itemize}

\newpage

\section{Symbols and Notation}
\label{sec:notation}

\begin{tabular*}{\textwidth}{@{\extracolsep{\fill}} p{1.5in}p{\dimexpr\textwidth-1.5in-3\tabcolsep\relax}}
	\multicolumn{2}{c}{\textbf{Numbers and Arrays}} \\
	$x$ & A scalar, or scalar function $x(\cdot) $              \\
	${\vx}$ & A vector, or vector-valued function $\vx(\cdot)$  \\
	$\vx_1 \oplus \vx_2$ & Direct sum (stacking) of vectors, such that $\vx_1 \oplus \vx_2 := \begin{bsmallmatrix} \vx_1 \\ \vx_2 \end{bsmallmatrix}$\\
	$\mK$ & A matrix\\
	$\mA \oplus \mB$ & Direct sum of matrices, such that $\mA \oplus \mB := \begin{bsmallmatrix} \mA & \mO \\ \mO & \mB \end{bsmallmatrix}$\\
	$\oK$ & A linear operator \\
	$\mI$ & Identity matrix\\
	$\delta_{i,j}$ & The Kronecker function, equal to $1$ when $i = j$, and $0$ when $i \neq j$ \\
	\multicolumn{2}{c}{} \\ 
	\multicolumn{2}{c}{\textbf{Sets, Vector Spaces, and Function Spaces}} \\
	$\vsX, \vsZ, \vsH, \vsF$ & A vector or Hilbert space \\
	$\bar{\vsX}, \bar{\vsZ}, \bar{\vsV}$ & An irreducible $\G$-stable space (\cref{def:G_stable_subspace})\\
	$\R, \C$ & The set of real and complex numbers \\
	$\vsX \oplus \vsY$ & Direct sum of vector spaces $\vsX$ and $\vsY$, such that if $\vx \in \vsX$ and $\vy \in \vsY$, then $\vx \oplus \vy \in \vsX \oplus \vsY$\\
	$\LpxFull:=\Lp[2]{P_\rvx}(\vsX,\R)$ & The space of square-integrable functions on $\vsX$. That is $\{ f \;\; \vert \int_{\vsX} |f(\vx)|^2 P_\rvx(d\vx) < \infty,\; f:\vsX \to \R \}$ \\
	$\innerprod[P_\rvx]{f, f'}$ & Inner product $\innerprod[\mux]{f, f'} := \int_{\vsX} f(\vx) f'(\vx) P_\rvx(d\vx)$ \\
	\multicolumn{2}{c}{} \\ 
	\multicolumn{2}{c}{\textbf{Group and Representation theory}} \\
	$\G$ & A symmetry group\\
	$\g, \g_1, \g_a$ & A symmetry group element\\
	$\g \Glact \vx$ & The (left) group action of $\g$ on $\vx$ defined by $\g \Glact \vx := \rep[\vsX](\g)\vsX$\\
	$\rep[\vsX]{} $ & A representation of the group $\G$ on the vector space $\vsX$, defined for a chosen basis of $\vsX$ \\
	$\irrep[\isoIdx]{} $ & An irreducible representation (\cref{def:irreducible_representation}) of the group $\G$ \\
	$\irrepDim[\isoIdx] := |\irrep[\isoIdx]{}| $ & Dimensionality of the irreducible representation $\irrep[\isoIdx]{}$ (see \cref{fig:diagrams}) \\
	$\irrepMultiplicity_{\isoIdx} $ & Multiplicity of the irreducible representation $\irrep[\isoIdx]{}$ in a given larger representation (see \cref{fig:diagrams})\\ 
	$\isoNum $ & Number of distinct irreps present in a given larger representation.\\
	$\rep[\vsX](\g)$ & Representation of the group element $\g$ on the vector space $\vsX$\\
	$\rep[\vsX]{} \oplus \rep[\vsY]{} $ & Direct sum of group representations,
	such that $\rep[\vsX](\g) \oplus \rep[\vsY](\g) := \begin{bsmallmatrix} \rep[\vsX](\g) &  \\  & \rep[\vsY](\g) \end{bsmallmatrix}$\\
	$\G \vx$ & The group orbit of $\vx$, defined as $\G\vx := \{ \g \Glact \vx \;|\; \g \in \G\}$\\
	$\setsym{\G'}{A}$ & The symmetry index of a set $A\subseteq\vsX$ w.r.t. probability distribution on $\vsX$ and group elements $\G'\subseteq\G$\\
	$\G[a] \times \G[b]$ & Direct product of groups $\G[a]$ and $\G[b]$\\
	$\UG[\vsX]$ & Unitary group on the vector space $\vsX$\\
	$\GLGroup(\vsX)$ & General Linear group on the vector space $\vsX$, a.k.a the space of invertible matrices in $\R^{|\vsX|\times |\vsX|}$\\
	$\CyclicGroup[n]$ & Cyclic group of order $n$\\
	$\KleinFourGroup$ & Klein four-group\\
	\multicolumn{2}{c}{} \\ 
	\multicolumn{2}{c}{\textbf{Probability Theory}} \\
	%
	$\rvx \sim \PP(\rvx)$ & Random vector $\rvx \in \vsX$ has distribution $\PP(\rvx)$\\
	$\mux$ & A probability measure on the space $\vsX$\\
	$\E_\rvx [ f(\rvx) ]$ & Expectation of $f(\rvx)$ with respect to $P_\rvx$ \\
	$\Cov(f(\rvx))$ & Variance of $f(\rvx)$ with respect to $P_\rvx$, define as $\E_{\rvx} (f(\rvx) - \E_{\rvx} f(\rvx))^2$ \\
	$\Cov(f(\rvx),h(\rvy))$ & Covariance of $f(\rvx)$ and $h(\rvy)$ with respect to the joint distribution $P_{\rvx\rvy}$, defined as $\E_{\rvx\rvy} (f(\rvx) - \E_{\rvx} f(\rvx))(h(\rvy) - \E_{\rvy} h(\rvy))$\\
	$\mathcal{N} ( \vx ; \vmu , \mSigma)$ & Gaussian distribution over $\vx$ with mean $\vmu$ and covariance $\mSigma$ \\
\end{tabular*}

\newpage

\setacronymstyle{long-short}
\glsadd{ncpaug}
\printglossary[type=\acronymtype, title={Acronyms}]
\label{sec:acronyms}

\section{Related Work}\label{sec:related_works}

\subsection{Contrastive Representation Learning}
Contrastive representation learning obtains high-dimensional representations from unlabeled data by contrasting positive and negative sample pairs via a noise contrastive loss (similar to \cref{eq:contrastive_loss}) \citep{le2020contrastive,waida2023towards,bao2022surrogate}. Most works in this field aim to learn representations in a self-supervised fashion that transfer well to downstream classification tasks \cite{johnson2022contrastive,Cole_2022_CVPR,tosh2021contrastive,oord2018representation,tsai2021self,chen2021provable}. In contrast, our approach targets representations that effectively transfer to (equivariant) regression and uncertainty quantification, as in \cite{kostic2024learning}. Given a dataset $\sD = \{(\vx_n, \vy_n)\}_{n=1}^N$ from a target (stochastic) function $\rvy=\vf(\rvx)$, we treat positive pairs as drawn from the joint distribution $(\vx,\vy)\sim\PP(\rvx,\rvy)$ and negative pairs as drawn from the product of the marginals $(\vx,\vy)\sim\PP(\rvx)\PP(\rvy)$. In this setting, our contrastive loss aims to learn representations that approximate the \gls{pmd} ratio
$
	\pmd(\vx,\vy)=\frac{\PP(\vx,\vy)}{\PP(\vx)\PP(\vy)},
$ \cite{kostic2024learning}
or equivalently, the pointwise mutual information $\ln(\pmd(\vx,\vy))$ \citep{oord2018representation,lin2024mutual_skeleton,henaff2020data,ozair2019wasserstein}. Crucially, our work is the first study this problem when there is prior knowledge of the invariance of $\pmd$ under the action of a compact symmetry group, which occurs in most applications of \gls{gdl}.


\customParagraph{Linear Transferability} The goal of contrastive representation learning is to acquire representations that transfer to diverse downstream inference tasks \cite{bengio2013representation,waida2023towards}. While empirical studies demonstrate that contrastive learning can outperform supervised methods \cite{Cole_2022_CVPR,oord2018representation,henaff2020data}, theoretical works aim to establish \textit{linear separability/transferability} guarantees \citep{haochen2022beyond} \footnote{Also refeered to as linear evaluation protocol by \citet{chen2020simple}}. That is, showing that linear functionals of the (frozen) learned representations suffice for regression/classification inference.
%

In the context of \textbf{classification}, \cite{waida2023towards,bao2022surrogate,johnson2022contrastive} show that contrastive learning losses serve as surrogates for standard supervised classification losses (e.g., the cross-entropy). Where the gap between the surrogate and supervised loss diminishes with the number of negative samples \cite{bao2022surrogate} ($N^2$ for the loss in \cref{eq:empirical_loss}).
To provide these transferability guarantees, these work assume $\vsX = \vsY$, so that the \gls{pmd} ratio $\pmd$ becomes a positive definite kernel. Consequently, kernel method guarantees can be transferred to the classification task, even when the representations are parameterized by \glspl{nn} \citep{johnson2022contrastive,bao2022surrogate,haochen2022beyond}.

Considerably fewer works have studied contrastive representation learning in the context of downstream \textbf{regression} tasks \citep{yerxa2024contrastive,kostic2024learning}. Crucially, \citet{kostic2024learning} show that a contrastive learning loss serves as surrogate to the \gls{mse} regression loss (A summary of this method appears in \cref{sec:background} and in \cref{tab:ncp_guarantees}). While, to the best of our knowledge, \citep{yerxa2024contrastive} is the only work empirically studying contrastive learning for regression in the presence of symmetries.

\begin{table*}[h!]
	\centering
	\resizebox{\textwidth}{!}{%
		\begin{tabular*}{\textwidth}{@{\extracolsep{\fill}}p{1cm}c|c@{}}
			\textbf{Task}
			&
			$ \vf(\vx) := \E_{\rvy}[\rvy\vert\rvx{=}\vx] \approx \hat{\vf}_{\nnParams}(\vx)$
			&
			$\PP[\rvy {\in} \sB \vert \rvx\in\sA] \approx \widehat{\PP}_{\nnParams}[\rvy \!\in\! \sB \vert \rvx \!\in\! \sA]$
			\\ \midrule
			\textbf{Estimate}
			&
			$\EE_{\rvy}[\rvy] + \vbfxp(\vx)^\top \CEmp \EE_{\rvy}[\vbfyp(\rvy) \otimes \rvy]$
			&
			$
				\EEY[\one_\sB] + \frac{
					\EEX[\one_\sA(\rvx) \otimes \vbfxp(\rvx)]^\top\CEmp \EEY[\one_\sB(\rvy) \otimes \vbfyp(\rvy)]
				}{
					\EEX[\one_\sA(\rvx)]
				}
			$
			\\ \midrule
			\textbf{Guarantees}
			&
			$\|\vf {-} \hat{\vf}_{\nnParams}\|_{\LpxFull}
				{\lesssim}
				\sqrt{\Var[\norm{\rvy}]}
				\left(
				\error^{r}
				{+}
				\frac{\ln (\sfrac{1}{\delta})}
				{
					\samplesize^{\frac{\rpar}{1+2\rpar}}
				}
				\right)
			$
			&
			$
				|\PP {-} \widehat{\PP}_{\nnParams}| {\lesssim}
				\sqrt{\frac{\sP[\rvy\in \sB]}{\sP[\rvx\in \sA]}}
				\left(
				\error^{r}{+}
				\frac{\ln (\sfrac{1}{\delta})}
				{
					\samplesize^{\frac{\rpar}{1+2\rpar}}
				}
				\right)
			$
		\end{tabular*}%
	}
	\vspace*{0.2cm}
	\caption{Statistical learning guarantees of \gls{ncp} \citep{kostic2024learning} for regression and conditional probability estimation. The bounds are shaped by
	the quality of the learned representations $\error^{r} = \|\CE - \CEp \|_{\text{op}} \leq \sqrt{\loss_{\gamma}(\nnParams)-\loss_{\gamma}(\star)}$ (see \eqref{eq:contrastive_loss}), the sample size $N$, and the decay rate of $\CE$ singular-values $\alpha>0$, which quantifies the difficulty of the problem.
	}
	\label{tab:ncp_guarantees}
\end{table*}



\subsection{Equivariant Representation Learning}
Equivariant contrastive representation learning \cite{dangovski2022equivariant,wang2024understanding} aims to learn representations that are equivariant—instead of invariant—to data transformations. For example, \citet{marchetti2023equivariant,gupta2023structuring,lin2024mutual_skeleton} provide empirical evidence that representations of 3D scenes, images, and human body poses that are equivariant to translations, rotations, or reflections yield improved performance in \textit{classification} tasks. Additionally, \citet{yerxa2024contrastive} show that rotation- and reflection-aware image representations enhance the \textit{regression} of neural responses in the macaque inferior temporal cortex, while also providing theoretical justification that such equivariant representations mirror the known structure of animal visual perception. By introducing these transformations via data-augmentation of the training set, these methods inherently enforce symmetries in the data distributions, which are the fundamental priors assumed in \cref{sec:problem_formulation}.

\customParagraph{Disentangled Representations} In equivariant representation learning, disentangled representations have been extensively studied \citep{wang2024disentangled}. Initially, \citep{bengio2013representation} defined disentanglement as decomposing representations into components that capture distinct, independently varying factors. Later, using group theory, \citet{higgins2018towards} formalized that a representation is disentangled if its space decomposes into orthogonal subspaces reflecting a symmetry group decomposition, with each subspace influenced exclusively by one quotient group (see \cref{def:disentangled_representation}). As discussed in \cref{sec:group_theory}, this aligns with the isotypic decomposition of a Hilbert space \citep{mackey1980harmonic}:
$
	\vsH = \Oplus_{\isoIdx=1}^{\perp} \vsH^\isoIdxBrace
$---known in dynamical systems \citep{golubitsky2012singularities_groups_bifurcation}---when the symmetry group decomposes as
$
	\G = \prod_{\isoIdx=1}^{\isoNum} \G^\isoIdxBrace.
$
Orthogonality between subspaces follows from Schur's orthogonality relations via Cartan's and Peter-Weyl's theorems \citep{cartan1952theorie}. This symmetric structure is the cause of the achitectural constraints imposed in the \gls{encp} architecture \cref{fig:diagrams}.

Several empirical works have explored disentanglement in representation learning. For instance, \citet{keurti2023homomorphism} proposed an autoencoder-based method to learn disentangled equivariant representations by using loss regularization to enforce latent space equivariance and sparsity for separating latent group actions. Unlike our approach, their method does not assume prior knowledge of the symmetry group and relies entirely on loss regularization rather than architectural constraints. Similarly, works such as \citet{yang2023latent,dangovski2022equivariant} have investigated various symmetry priors in latent space by examining the emergence of disentangled structures and enforcing algebraic constraints. Notably, in fields like molecular dynamics, physics, computer graphics, and robotics, symmetry priors are intrinsic to the task or system \citep{ordonez2024morphological,lin2024mutual_skeleton,marchetti2023equivariant}, making them natural assumptions. In a similar spirit to our work, \citet{marchetti2023equivariant} leverage the known $\SO[3]$ symmetries of the 3D world to learn $\SO[3]$-disentangled equivariant representations using contrastive learning, thereby demonstrating the empirical advantages of symmetry-aware, disentangled representations for object classification.

\subsection{Symmetry-Aware Statistical Learning Theory} Existing literature on symmetry-aware learning focuses on group-invariant regression via kernel methods  \citep{mroueh2015learning,tahmasebi2023exact,elesedy2021provablystrict,elesedy2021provablystrict2,elesedy2025symmetrygeneralisationmachinelearning,pal2017maxmargininvariantfeatures,mei2021learning,bietti2021sample,donhauser2021rotational}. Most of these methods cannot be directly transferred to modern \gls{gdl} architectures.

In contrast, in deep learning and \gls{gdl}, while many works offer a group-theoretical analysis and empirical evidence of the benefits of incorporating symmetry priors \citep{kashinath2021physics,wang2020incorporating,wang2022approximately,brandstetter2022clifford}, none, to our knowledge, provide statistical learning guarantees for equivariant \textbf{regression}. The only exception is \citep{behboodi2022pac}, which derives generalization bounds for a \gls{mlp} architecture in the context of  $n$-class \textbf{classification} task using a margin loss. In contrast, our work provides statistical learning guarantees for equivariant \textbf{regression} and symmetry-aware \textbf{uncertainty quantification}, both as corollaries of \cref{thm:main}.


\section{Symmetry Constraints on Conditional Expectations}\label{sec:aux_results}




Under the assumed symmetry priors in \eqref{eq:main_assumtions} the conditional expectation of $\rvy$ is a $\G$-equivariant function/map. This property is depicted in \cref{fig:symmetry_priors_long}-center and proved in the following proposition.

\begin{proposition}[$\G$-equivariant conditional expectations]
	\label{prop:equivariant_conditional_expectation}
	Let $\rvx \in \vsX$ and $\rvy \in \vsY$ be two vector valued random variables satisfying the symmetry priors of \cref{eq:main_assumtions}. Then, the conditional expectation of $\rvy$ given $\rvx$ is $\G$-equivariant, since, for every $\g \in \G, \vx \in \vsX$,
	\begin{align*}
		\E[\rvy \vert \rvx = \g \Glact[\vsX] \vx]
		 & =  \g \Glact[\vsY] \E[\rvy \vert \rvx = \vx]                                                               \\
		 & = \int_{\vsY} \g \Glact[\vsY] \vy \; \muycondx(d\vy \vert \vx)
		= \int_{\g^{-1}\Glact[\vsY]\vsY} \vy \; \muycondx\Bigl(\g^{-1}\Glact[\vsY]d\vy \vert \vx\Bigr)                \\
		 & = \int_{\vsY} \vy \; \muycondx(d\vy \vert \g \Glact[\vsX] \vx) \quad \text{(by \cref{eq:main_assumtions})} \\
		 & = \E[\rvy \vert \rvx = \g \Glact[\vsX] \vx].
	\end{align*}
\end{proposition}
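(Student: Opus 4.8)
The plan is to unwind $\E[\rvy\vert\rvx{=}\cdot]$ as the integral of $\vy$ against the conditional measure $\muycondx(\cdot\,\vert\,\vx)$, push the linear action $\g\Glact[\vsY]\cdot$ through that integral, and then absorb the induced change of variables into the conditioning point by means of the symmetry prior \eqref{eq:main_assumtions}.

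First I would make precise the measure-theoretic content of \eqref{eq:main_assumtions}. Since the actions $\g\Glact[\vsX]\cdot$ and $\g\Glact[\vsY]\cdot$ are linear and invertible and $\mux,\muxy$ are $\G$-invariant, essential uniqueness of the disintegration of $\muxy$ over its first marginal $\mux$ yields the covariance law
\[
\muycondx(\cdot\,\vert\,\g\Glact[\vsX]\vx)=(\g\Glact[\vsY]\cdot)_{*}\,\muycondx(\cdot\,\vert\,\vx)
\qquad\text{for }\mux\text{-a.e. }\vx,\ \g\in\G,
\]
equivalently $\muycondx(\sB\,\vert\,\g\Glact[\vsX]\vx)=\muycondx(\g^{-1}\Glact[\vsY]\sB\,\vert\,\vx)$ for every measurable $\sB\subseteq\vsY$. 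If one reads \eqref{eq:main_assumtions} directly as this identity of conditional measures, then this step is free.

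Next I would compute, for fixed $\g\in\G$ and $\vx\in\vsX$,
\begin{align*}
\E[\rvy\vert\rvx{=}\g\Glact[\vsX]\vx]
&=\int_{\vsY}\vy\ \muycondx(d\vy\,\vert\,\g\Glact[\vsX]\vx)
=\int_{\vsY}\g\Glact[\vsY]\vy\ \muycondx(d\vy\,\vert\,\vx)\\
&=\g\Glact[\vsY]\!\int_{\vsY}\vy\ \muycondx(d\vy\,\vert\,\vx)
=\g\Glact[\vsY]\,\E[\rvy\vert\rvx{=}\vx],
\end{align*}
where the second equality is the change of variables $\vy\mapsto\g\Glact[\vsY]\vy$ against the pushforward identity above, and the third uses linearity of $\g\Glact[\vsY]\cdot$ to commute it with the vector-valued integral. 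This is exactly the claimed equivariance; since the covariance law holds only $\mux$-a.e., it is understood as an identity of maps up to $\mux$-null sets, which is all that is required. The same two lines apply verbatim to $[\CE\vh]$ for any $\G$-equivariant observable $\vh$, which is the form used in \cref{sec:learning_guarantees}.

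The only genuine obstacle is the first step: justifying the passage from ``$\mux$ and $\muxy$ are $\G$-invariant'' to ``the conditional kernel transforms covariantly,'' which requires invoking (essential) uniqueness of disintegrations — or, in the density picture, a.e.\ uniqueness of Radon--Nikodym derivatives together with the chain rule relating the joint, $\rvx$-marginal, and conditional densities. Once that is in hand, everything else is routine, and integrability of $\rvy$ — guaranteed under the bounded-variance hypothesis used in \cref{thm:main} — suffices for the integrals to be well defined.
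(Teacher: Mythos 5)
Your proof is correct and follows essentially the same route as the paper's: pull the linear action $\g\Glact[\vsY]\cdot$ through the vector-valued integral, change variables, and absorb the resulting transformation of the conditional kernel into the conditioning point via \eqref{eq:main_assumtions}. The only difference is that you run the chain of equalities in the opposite direction and add a (welcome but not strictly necessary here, since the paper reads \eqref{eq:main_assumtions} directly as an identity of conditional measures) justification of the covariance law for $\muycondx$ via essential uniqueness of disintegrations.
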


\section[G-Equivariant Bilinear NN Architecture]{$\G$-Equivariant Bilinear \gls{nn} Architecture}
\label{app:equivariant_nn_architecture}

This section outlines how to construct a $\G$-equivariant disentangled representation for the random variables $\rvx$ and $\rvy$ using \textbf{any} type of $\G$-equivariant \gls{nn} architecture backbone, such as \gls{mlp}, CNNs, Transformers, and others.

Let $\nnX_{\nnParams}:\vsX \mapsto \mathbb{R}^r$ and $\nnY_{\nnParams}:\vsY \mapsto \mathbb{R}^r$ be two $\G$-equivariant NNs, whose outputs will be interpreted as the basis functions of the truncated symmetric function spaces $\Lpx \subset \LpxFull$ and $\Lpy \subset \LpyFull$. Assume, the group representations on $\Lpx$ and $\Lpy$ are constructed from multiplicities of the group's regular representation, $\rep[\Lpx] = \bigoplus_{n=1}^{\sfrac{r}{|\G|}} \rep[\regular]$ and $\rep[\Lpy] = \bigoplus_{n=1}^{\sfrac{r}{|\G|}} \rep[\regular]$---as done usually in practice \citep{weiler2023EquivariantAndCoordinateIndependentCNNs,cesa2022program,kondor2018clebschgordannets}. Since for (most) finite groups,  the decomposition of $\rep[\regular]$ into \textsl{irreps} is known or can be computed \citep{babai1990computing,unger2006computing}, we can safely assume access to the analytical change of basis $\mQ_{\rvx}: \Lpx \mapsto \Lpx$ and $\mQ_{\rvy}: \Lpy \mapsto \Lpy$ to transition to the isotypic basis. Consequently, we can directly parameterize the representations of the random variables in disentangled form as:
\begin{equation}
	\label{eq:disentangled_representations}
	\small
	\begin{aligned}
		\vbfx_{\nnParams}(\cdot) = \mQ_{\rvx}^{\top}
		(
		\nnX_{\nnParams}(\cdot) -  \E_\rvx [\nnX_{\nnParams}(\rvx)]
		),
		\quad
		\vbfy_{\nnParams}(\cdot) = \mQ_{\rvy}^{\top}(
		\nnY_{\nnParams}(\cdot) - \E_\rvy [\nnY_{\nnParams}(\rvy)]
		).
	\end{aligned}
\end{equation}
Given that during training these representations are not orthogonal, the truncated operator is parameterized as the trainable $\G$-equivariant matrix $\CEmp = \Oplus_{\isoIdx}^{\isoNum} \CEmpIso = \Oplus_{\isoIdx}^{\isoNum} \sum_{b \in \sB} \bm{\Theta}_{b}^{\isoIdxBrace} \otimes \Psi(b)$, where $\sB$ is the endomorphism's basis of the irreducible representation $\irrep[\isoIdx]$, $\Psi: \sB \to \R^{\irrepDim[\isoIdx]\times \irrepDim[\isoIdx]}$ is a mapping from the endomorphism's basis to the endomorphism space, and $\bm{\Theta}_{b}^{\isoIdxBrace} \in \R^{\irrepMultiplicity^{y}_{\isoIdx} \times \irrepMultiplicity^{x}_{\isoIdx}}$ represent the block's trainable parameters for each $b \in \sB$, see details in \cref{def:equivLinearMapsLong}.

Note that after training, the \gls{svd} of the learned operator can be computed by exploiting the constraints imposed by the operator's $\G$-equivariance (see \cref{thm:iso_svd,fig:diagrams}).

\subsection{Continuous and Non-Compact Symmetry Groups}
\label{app:continuous_noncompact_symmetry_groups}

\paragraph{Compatibility with Continuous Compact Symmetry Groups}
Formally, Peter–Weyl/Cartan’s theorem—the engine behind our isotypic decomposition—applies to every compact symmetry group, covering both compact Lie groups and finite groups. The challenge with compact continuous groups is that they possess infinitely many irreducible representations (i.e., $\isoNum \to \infty$), so a finite dimensional space cannot be partitioned into infinitely many isotypic subspaces.

Traditional \gls{gdl} techniques overcome this by discretising the group. For $\SO[3]$‑equivariance, for instance, architectures are often restricted to a finite subgroup (e.g., icosahedral or octahedral); see \citep{cesa2022program,weiler2023EquivariantAndCoordinateIndependentCNNs} for details. Crucially, such discretization produces a finite symmetry subgroup, to which our formalism applies unchanged.

\paragraph{Compatibility with Non-Compact Symmetry Groups}
The proposed representation disentanglement applies to \textbf{compact} continuous/finite symmetry groups, given that compactness is a fundamental requirement of the isotypic decomposition. However, equivariance to non-compact symmetry groups can still be achieved by choosing an appropriate \gls{nn} architecture for parameterizing the embedding functions $\vbfxp: \vsX \to \R^r$ and $\vbfyp: \vsY \to \R^r$.

For instance, as is commonly done for image processing in \gls{gdl}---where images are assumed to possess 2D rotational \emph{and translational symmetries} $\G = \SO[2] \rtimes \mathbb{T}_2$---equivariance to the non-compact translation group $\mathbb{T}_2$ is naturally encoded via a \gls{cnn} architecture, which is by construction $\mathbb{T}_2$-equivariant. Constraining the filters of the \gls{cnn} then ensures the compact $\SO[2]$-equivariance. These architectures are usually referred to as $\G$-steerable \glspl{cnn} ; see \citet{cesa2022program,weiler2023EquivariantAndCoordinateIndependentCNNs} for details.

Given that our method only assumes the embedding functions $\vbfxp: \vsX \to \R^r$ and $\vbfyp: \vsY \to \R^r$ are equivariant to a compact symmetry group, we can use \emph{any} \gls{nn} backbone architecture which could encode equivariance to non-compact groups, like $\mathbb{T}_2$ in image/video processing or $\mathbb{T}_1$ in time series data. That is, our method can also be used to train $\G$-steerable \glspl{cnn} representations.

\section{Symmetry-Aware Orthonormalization of Disentangled Representations}
\label{sec:loss_function}

This section covers how to compute unbiased empirical estimates of the orthonormalization and centering regularization terms in \cref{eq:disentangled_contrastive_loss} in the presence of symmetries.

Let $\oE_{\rvy\mid\rvx} : \LpyFull \mapsto \LpxFull$ be the conditional expectation operator and $\CEp: \Lpy \mapsto \Lpx$ be its $r$-rank approximation on the spaces $\Lpx = \text{span}(\{\bfx_i\}_{i=1}^r)$ and $\Lpy = \text{span}(\{\bfy_i\}_{i=1}^r)$.
Denote by $
	\pmd(\vx,\vy):= \frac{P_{\rvx\rvy}(\vx,\vy)}{\mux(\vx)\muy(\vy)}
$ and $
	\pmd_{\nnParams} (\vx,\vy):= \sum_{i,j=1}^r [\CEmp]_{i,j} \bfx_i(\vx) \bfy_j(\vy) = \vbfx(\vx)^\top \CEmp \vbfy(\vy)
$ the kernel functions of the full and restricted operator, respectively.
Then we have that:
\begin{subequations}
	\label{eq:theoretical_loss_fn_long}
	\small
	\begin{align}
		\small
		\|\oE_{\rvy\mid\rvx} - \CEp\|_{\HS}^2
		 & \leq
		-2\innerprod[\HS]{\oE_{\rvy\mid\rvx},\CEp}
		+
		\|\CEp\|_{\HS}^2,
		\label{eq:theoretical_loss_fn_long_a}
		\\
		 & \leq
		-2
		\int_{\vsX\times\vsY} \pmd(\vx,\vy) \pmd_{\nnParams} (\vx,\vy) \mux(d\vx)\muy(d\vy)
		+
		\int_{\vsX\times\vsY} \pmd_{\nnParams} (\vx,\vy)^2 \mux(d\vx)\muy(d\vy)
		\nonumber
		\\
		 & \leq
		-2
		\int_{\vsX\times\vsY} \pmd_{\nnParams} (\vx,\vy) \muxy(d\vx,d\vy)
		+
		\int_{\vsX\times\vsY} \pmd_{\nnParams} (\vx,\vy)^2 \mux(d\vx)\muy(d\vy)
		\nonumber
		\\
		 &
		\leq -2\E_{\rvx\rvy} \pmd_{\nnParams} (\rvx,\rvy) + \E_{\rvx}\E_{\rvy} \pmd_{\nnParams} (\rvx,\rvy)^2.
		\label{eq:theoretical_loss_fn_long_d}
	\end{align}
\end{subequations}
For the purpose of our representation learning problem, we consider the scenario in which the chosen basis sets include the constant function, and all other basis functions are centered by construction. That is, $\bSet{\Lpx} = \{\one_{\mux}\} \cup \{\bfx_i \mid \innerprod[\rvx]{\bfx_i, \one_{\mux}} = 0\}_{i=1}^{r}$ and $\bSet{\Lpy} = \{\one_{\muy}\} \cup \{\bfy_i \mid \innerprod[\rvy]{\bfy_i, \one_{\muy}} = 0\}_{i=1}^{r}$. This results in the $(r+1)$-dimensional matrices:

\begin{equation}
	\mV_\rvx :=
	\begin{bsmallmatrix}
		1 & \bm{0} \\
		\bm{0} & \mC_{\rvx}
	\end{bsmallmatrix},
	\quad
	\mV_\rvy :=
	\begin{bsmallmatrix}
		1 & \bm{0} \\
		\bm{0} & \mC_{\rvy}
	\end{bsmallmatrix},
\end{equation}

where $\mC_{\rvx} = \Cov(\vbfx(\rvx), \vbfx(\rvx)) \in \R^{r\times r}$, $\mC_{\rvy} = \Cov(\vbfy(\rvy), \vbfy(\rvy)) \in \R^{r\times r}$ denote the matrix forms of the truncated covariance operators $\oC_{\rvx} : \Lpx \mapsto \Lpx$ and $\oC_{\rvy} : \Lpy \mapsto \Lpy$ (see \cref{def:cross_covariance_operator}), respectively. Then the orthonormality regularization of \cref{eq:contrastive_loss} becomes:

\begin{equation}
	\small
	\| \mV_\rvx - \mI \|_{\text{F}}^2 = \| \mC_\rvx - \mI_r \|_{\text{F}}^2 + 2\|\E_{\mux} \vbfx(\rvx)\|^2
	\quad
	\| \mV_\rvy - \mI \|_{\text{F}}^2  = \| \mC_\rvy - \mI_r \|_{\text{F}}^2 + 2\|\E_{\muy} \vbfy(\rvy)\|^2.
\end{equation}

Since $\|\mC_{\rvx}\|_{\text{F}}^2 = \text{tr}(\mC_{\rvx\rvy}^2)$ involves products of covariance matrices, we compute its empirical value using unbiased estimators. For generality, we present the unbiased estimator for the cross-covariance.

\paragraph{Unbiased Estimation of Frobenious Norm of Cross-Covariance Operators}
Since $\|\mC_{\rvx\rvy}\|_{F}^2 = \text{tr}(\mC_{\rvx\rvy}^2)$ involves products of covariance matrices, we obtain unbiased estimates from finite samples by computing the metric using two independent sampling sets from $\muxy$, by:

\begin{equation}
	\label{eq:unbiased_estimation_Cxy}
	\small
	\begin{aligned}
		\|\mC_{\rvx\rvy}\|_{\text{F }}^2 & = \text{tr}(\mC_{\rvx\rvy}^2)
		= \sum_{i=1}^r [\mC_{\rvx\rvy}^2]_{i,i}
		= \sum_{i=1}^r \sum_{j=1}^r [\mC_{\rvx\rvy}]_{i,j} [\mC_{\rvx\rvy}]_{j,i}
		\\
		                                 & = \sum_{i=1}^r \sum_{j=1}^r
		\E_{(\rvx,\rvy)\sim P_{\rvx\rvy}} [\bfx_{c,i}(\rvx) \bfy_{c,j}(\rvy)] \E_{(\rvx',\rvy') \sim P_{\rvx\rvy}} [ \bfx_{c,j}(\rvx')\bfy_{c,i}(\rvy')]
		\\
		                                 & = \E_{(\rvx,\rvy,\rvx',\rvy') \sim P_{\rvx\rvy}}
		[
			\sum_{i=1}^r
			\bfx_{c,i}(\rvx) \bfy_{c,i}(\rvy')  \sum_{j=1}^r \bfx_{c,j}(\rvx') \bfy_{c,j}(\rvy)
		]
		\\
		                                 & = \E_{(\rvx,\rvy,\rvx',\rvy') \sim P_{\rvx\rvy}}
		[
			(\vbfx_c(\rvx)^\top \vbfy_c(\rvy'))(\vbfx_c(\rvx')^\top \vbfy_c(\rvy))
		]
		\\
		                                 & \approx \frac{1}{N^2} \sum_{n=1}^N \sum_{m=1}^N (\vbfx_c(\vx_n)^\top \vbfy_c(\vy'_m))(\vbfx_c(\vx'_m)^\top \vbfy_c(\vy_n)),
	\end{aligned}
\end{equation}

where
$\vbfx_c(\rvx)=\vbfx(\rvx)-\E_{\mux}\vbfx(\rvx)$
denotes the centered basis functions, and
$((\vx,\vy),(\vx',\vy'))\sim \muxy$
indicates two independent sampling sets from $P_{\rvx\rvy}$ used for the unbiased estimation of $\|\mC_{\rvx}\|_{F}^2$. The final equation then provides the unbiased empirical estimator computed on a dataset
$\sD=\{(\vx_n,\vy_n)\sim P_{\rvx\rvy}\}_{n=1}^N$
and any random permutation of it, denoted as
$\sD'=\{(\vx'_n,\vy'_n)\sim P_{\rvx\rvy}\}_{n=1}^N$.

\subsection{Unbiased Estimation of Orthonormal Regularization}

The regularization term for optimizing the loss \eqref{eq:contrastive_loss} involves encouraging the basis sets to be orthonormal. The metric quantifying the orthogonality of the basis sets is defined by:
\begin{equation}
	\label{sec:regularization_term}
	\small
	\begin{aligned}
		\| \mV_\rvx - \mI \|_{\text{F}}^2
		 & =
		\| \mC_\rvx - \mI_r \|_{\text{F}}^2 + 2\|\E_{\mux} \vbfx(\rvx)\|^2
		=
		\text{tr}(\mC_\rvx^2) - 2\text{tr}(\mC_\rvx) + r + 2\|\E_{\mux} \vbfx(\rvx)\|^2,
		\\
		\| \mV_\rvy - \mI \|_{\text{F}}^2
		 & =
		\| \mC_\rvy - \mI_r \|_{\text{F}}^2 + 2\|\E_{\muy} \vbfy(\rvy)\|^2
		=
		\text{tr}(\mC_\rvy^2) - 2\text{tr}(\mC_\rvy) + r + 2\|\E_{\muy} \vbfy(\rvy)\|^2.
	\end{aligned}
\end{equation}
%
Hence given a dataset of samples $\sD = \{(\vx_n, \vy_n) \sim P_{\rvx\rvy} \}_{n=1}^N$, and any random permutation of the dataset order $\sD' = \{(\vx'_n, \vy'_n) \sim P_{\rvx\rvy} \}_{n=1}^N$  we can derive unbiased empirical estimates of \eqref{sec:regularization_term} as:
\begin{equation}
	\label{eq:unbiased_estimation_Cx_F_empirical}
	\small
	\begin{aligned}
		\| \mV_\rvx - \mI \|_{\text{F}}^2
		 & \approx
		\widehat{\E}_{(\rvx,\rvx') \sim \mux}
		[(\vbfx_c(\rvx)^\top \vbfx_c(\rvx'))^2]
		- 2 \widehat{\E}_{\mux} [\vbfx_c(\rvx)^\top \vbfx_c(\rvx)]
		+ r
		+ 2\|\widehat{E}_{\mux} \vbfx(\rvx)\|^2
		\\
		 & \approx
		\frac{1}{N^2} \sum_{n=1}^N \sum_{m=1}^N (\vbfx_c(\vx_n)^\top \vbfx_c(\vx'_m))^2
		- 2 \frac{1}{N} \sum_{n=1}^N \vbfx_c(\vx_n)^\top \vbfx_c(\vx_n)
		+ r
		+ 2 \| \frac{1}{N} \sum_{n=1}^N \vbfx(\vx_n) \|^2,
		\\
		\| \mV_\rvy - \mI \|_{\text{F}}^2
		 & \approx
		\widehat{\E}_{(\rvy,\rvy') \sim \muy}
		[(\vbfy_c(\rvy)^\top \vbfy_c(\rvy'))^2]
		- 2 \widehat{\E}_{\muy} [\vbfy_c(\rvy)^\top \vbfy_c(\rvy)]
		+ r
		+ 2\|\widehat{E}_{\muy} \vbfx(\rvy)\|^2
		\\
		 & \approx
		\frac{1}{N^2} \sum_{n=1}^N \sum_{m=1}^N (\vbfy_c(\vy_n)^\top \vbfy_c(\vy'_m))^2
		- 2 \frac{1}{N} \sum_{n=1}^N \vbfy_c(\vy_n)^\top \vbfy_c(\vy_n)
		+ r
		+ 2 \| \frac{1}{N} \sum_{n=1}^N \vbfy(\vy_n) \|^2.
	\end{aligned}
\end{equation}
\subsection{Orthonormal Regularization of Symmetric Hilbert Spaces} \label{app:symmetric_orthonormality}
Since the covariance operators $\oC_\rvx: \LpxFull \mapsto \LpxFull$ and $\oC_\rvy: \LpyFull \mapsto \LpyFull$ are $\G$-equivariant (see \cref{prop:G_equivariant_cross_covariance_operator}), their matrix representations in the isotypic basis are constrained to be block-diagonal, with each block being composed of irrep endomorphisms (\cref{def:equivLinearMapsLong}). Hence \eqref{sec:regularization_term} becomes:

\begin{equation*}
	\label{sec:regularization_term_iso}
	\small
	\begin{aligned}
		\| \mV_\rvx - \mI \|_{\text{F}}^2 & = \| \mC_\rvx - \mI_r \|_{\text{F}}^2 + 2\|\E_{\mux} \vbfx(\rvx)\|^2                                                                                                                                                                                                                                                        \\
		                                  & = \| \Oplus_{\isoIdx=1}^{\isoNum} \mC_\rvx^{\isoIdxBrace} - \mI_r \|_{\text{F}}^2 + 2\|\E_{\mux} \vbfx^{\inv}(\rvx)\|^2,
		\\
		                                  & = \sum_{\isoIdx=1}^{\isoNum} \| \mC_\rvx^{\isoIdxBrace} - \mI_r^{\isoIdxBrace} \|_{\text{F}}^2 + 2\|\E_{\mux} \vbfx^{\inv}(\rvx)\|^2
		\\
		                                  & = \sum_{\isoIdx=1}^{\isoNum} \left(\|\mC_\rvx^\isoIdxBrace\|_{F}^2 - 2\text{tr}(\mC_\rvx^\isoIdxBrace) + r_\isoIdx \right) + 2\|\E_{\mux} \vbfx(\rvx)\|^2
		\\
		                                  & = \sum_{\isoIdx=1}^{\isoNum} \left(\| \sum_{b \in \sB} \Theta_b^{\isoIdxBrace} \otimes \Psi_\isoIdx(b) \|_{F}^2 - 2\text{tr}(\sum_{b \in \sB} \Theta_b^{\isoIdxBrace} \otimes \Psi_\isoIdx(b)) + r_\isoIdx \right) + 2\|\E_{\mux} \vbfx(\rvx)\|^2, \quad \text{by \eqref{eq:interwiner_endomorphism_blocks_tensor_product}} \\
		                                  & =
		2\|\E_{\mux} \vbfx(\rvx)\|^2 +
		\sum_{\isoIdx=1}^{\isoNum} \| \sum_{b \in \sB} \Theta_b^{\isoIdxBrace} \otimes \Psi_\isoIdx(b) \|_{F}^2
		-
		2\sum_{b \in \sB} \text{tr}(\Theta_b^{\isoIdxBrace})\text{tr}(\Psi_\isoIdx(b)) + r_\isoIdx
		\\
		                                  & =
		2\|\E_{\mux} \vbfx(\rvx)\|^2 +
		\sum_{\isoIdx=1}^{\isoNum} \| \sum_{b \in \sB} \Theta_b^{\isoIdxBrace} \otimes \Psi_\isoIdx(b) \|_{F}^2
		-
		2 \irrepDim[\isoIdx] \sum_{b \in \sB} \text{tr}(\Theta_b^{\isoIdxBrace}) + r_\isoIdx
		\qquad
		\text{by \eqref{eq:group_representation_endomorphism_basis}}
		\\
		                                  & =
		2\|\E_{\mux} \vbfx(\rvx)\|^2 +
		\sum_{\isoIdx=1}^{\isoNum}
		\sum_{b \in \sB} \| \Theta_b^{\isoIdxBrace}\|_{F}^2  \|\Psi_\isoIdx(b)\|_{F}^2
		-
		2 \irrepDim[\isoIdx] \sum_{b \in \sB} \text{tr}(\Theta_b^{\isoIdxBrace}) + r_\isoIdx
		\qquad
		\text{by \eqref{eq:group_representation_endomorphism_basis}}
		\\
		                                  & =
		2\|\E_{\mux} \vbfx(\rvx)\|^2 +
		\sum_{\isoIdx=1}^{\isoNum} \irrepDim[\isoIdx]
		\sum_{b \in \sB} \| \Theta_b^{\isoIdxBrace}\|_{F}^2
		-
		2 \irrepDim[\isoIdx] \sum_{b \in \sB} \text{tr}(\Theta_b^{\isoIdxBrace}) + r_\isoIdx,
		\qquad
		\text{s.t }
		\Psi_\isoIdx(b)\Psi_\isoIdx(b)^\top = \Identity_{\irrepDim[\isoIdx]}
		\\
		                                  & =
		2\|\E_{\mux} \vbfx(\rvx)\|^2
		+
		\irrepDim[\isoIdx]
		\sum_{\isoIdx=1}^{\isoNum}
		\sum_{b \in \sB} \left( \| \Theta_b^{\isoIdxBrace}\|_{F}^2
		-
		2 \text{tr}(\Theta_b^{\isoIdxBrace})\right) + r_\isoIdx,
	\end{aligned}
\end{equation*}

Where the set of matrices $\{\Theta_b^{\isoIdxBrace} \in \R^{m_\isoIdx \times m_\isoIdx}\}_{b \in \sB}$ define the free degrees of freedom of the covariance operator (see \cref{def:equivLinearMapsLong}), and $\{\Psi_\isoIdx(b)\}_{b \in \sB}$ denote the basis of endomorphisms of the irreducible representation $\irrep[\isoIdx]$ (see \eqref{eq:group_representation_endomorphism_basis}). Crucially, $\| \Theta_b^{\isoIdxBrace}\|_{F}^2$ features an unbiased U-statistic estimator as in \eqref{eq:unbiased_estimation_Cxy}.

\section{Experimental Setup}
\label{app:experimental_setup}

In this section we provide details on the experimental setup. We first describe general design choices and hyperparameters and then provide details for each experiment.

\paragraph{Experiments Overview}
\begin{itemize}[left=1em]
	\item \cref{app:exp_results_conditional_expectation_operator}: conditional expectation operator approximation on symmetric \glspl{cgmm}.
	\item \cref{app:exp_results_regression}: $\G$-equivariant regression of robot \gls{com} momenta from noisy proprioception.
	\item \cref{app:exp_results_uncertainty_quantification}: synthetic conditional quantile regression for symmetry-aware uncertainty quantification.
	\item \cref{app:exp_results_regression_synthetic}: one-dimensional synthetic regression where the conditional mean is insufficient for uncertainty quantification.
	\item \cref{app:exp_results_symmetry_misspecification}: robustness of \gls{encp} under extrinsic and incorrect symmetry misspecification.
	\item \cref{app:exp_results_uncertainty_quantification_robot}: uncertainty quantification for quadruped locomotion observables.
	\item \cref{sec:train_inference_compt_cost}: training and inference costs of \gls{encp} relative to \gls{ncp}.
\end{itemize}

\paragraph{Sample Efficiency Experiments}
For both the conditional expectation operator approximation and the $\G$-equivariant regression experiments, we evaluate model performance by measuring sample efficiency/complexity. To do so, we partition the dataset
$
	\sD = \{(\vx_n, \vy_n)\}_{n=1}^{N}
$
into training, validation, and testing splits in proportions of 70\%, 15\%, and 15\%, respectively. With fixed validation and testing sets, we iteratively train the models on increasing portions of the training set and report the test performance for each size, for $3$ to $5$ different random seeds.

For each training set size, we select the model checkpoint with the best validation loss to compute the test performance. Thus, these experiments quantify the generalization error (or true risk) and its evolution as a function of the training set size.

\paragraph{\glspl{nn} Architectures and Hyperparameters}
To compare our equivariant representation learning framework with other contrastive and supervised methods, all (inference) models share a similar fixed architectural footprint. For the baseline models, the only hyperparameter tuned is the learning rate, whereas for the \gls{ncp} and \gls{encp} models we additionally tune the regularization weight $\gamma$ in \cref{eq:contrastive_loss,eq:disentangled_contrastive_loss}. Further details for each experiment are provided in the corresponding sections below.

\paragraph{Code Reproducibility}
All experiments, plots and examples are provided in the open-access repository and python package 
\href{https://github.com/Danfoa/symm_rep_learn}{\textit{github.com/Danfoa/symm\_rep\_learn}}.

\subsection{Conditional Expectation Operator Approximation}
\label{app:exp_results_conditional_expectation_operator}

\begin{figure}[!htbp]
	\includegraphics[width=\textwidth]{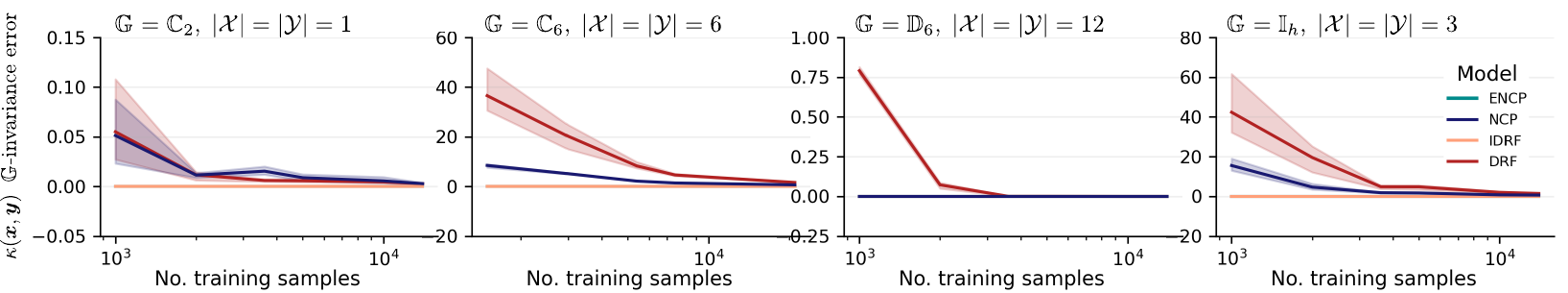}
	\caption{
		Sample-efficiency plots comparing the test-set \gls{pmd} $\G$-invariance constraint \eqref{eq:learning_pmd_symm} violation
		$
			\pmd_{\text{inv-err}} := \E_\rvx \E_\rvy \sum_{g \in \G} (\pmd_{\nnParams}(\rvx, \rvy) - \pmd_{\nnParams}(\g \Glact[\vsX] \rvx, \g \Glact[\vsY] \rvy))^2
		$
		versus the number of training samples, in log scales.
		Each column corresponds to a symmetric \gls{cgmm} with distinct symmetry groups and $(\rvx,\rvy)$ dimensionality. The tested groups are the cyclic groups $\CyclicGroup[2]$ and $\CyclicGroup[6]$, the Dihedral group $\DihedralGroup[6]$ (order 12), and the Icosahedral group $\sI_h$ (order 60).
	}
	\label{fig:G_invariance_kappa}
\end{figure}

In this experiment, we extend the conditional Gaussian Mixture Model (GMM) proposed by \citet{gilardi2002conditional} to parametrically construct symmetric random variables taking values in arbitrary data spaces $\vsX$ and $\vsY$ and with arbitrary finite symmetry groups $\G$. The GMM is defined by
\begin{equation*}
	\label{eq:symmGMM}
	\small
	\rvz := \rvx \oplus \rvy \sim \sum_{\g \in \G} \sum_{c=1}^{n_g} \mathcal{N}(
	\rep[\vsZ](g)\mu_{\rvz,c}
	\;,\;
	\rep[\vsZ](g)\Sigma_{\rvz,c}\rep[\vsZ](g)^\top
	),
\end{equation*}
where $\rep[\vsZ](g) := \rep[\vsX](g) \oplus \rep[\vsY](g)$ are arbitrary group representations of $\G$ and $n_g$ is the number of unique Gaussians with randomly sampled means $\mu_{\rvz} := \mu_{\rvx} \oplus \mu_{\rvy}$ and block-diagonal covariances $\Sigma_{\rvz} := \Sigma_{\rvx} \oplus \Sigma_{\rvy}$. Since every Gaussian appears in group orbits, this symmetric GMM has $\G$-invariant marginal distributions and an analytical expression for the conditional expectation operator kernel $\pmd(\vx, \vy) = \sfrac{p_{\rvx\rvy}(\vx,\vy)}{\mux(\vx)\muy(\vy)}$ (see 2D example in \cref{fig:symmetry_priors_long}). Consequently, we can directly estimate the approximation of the conditional expectation operator (\cref{eq:contrastive_loss}) as the mean squared error between the true and learned density ratios, i.e., $\pmd_{\text{mse}} := \E_\rvx \E_\rvy \left\| \pmd(\rvx, \rvy) - \pmd_{\nnParams}(\rvx, \rvy) \right\|^2$.

To the best of our knowledge, this is the first synthetic experiment that directly estimates the truncation error of the conditional expectation operator in an inference task-agnostic setting, serving as a benchmark for future work.

\cref{fig:ablation_results} compares sample efficiency using $\pmd_{\text{mse}}$, while \cref{fig:G_invariance_kappa} shows the error in the $\G$-invariant of the learned $\pmd$ ratio versus sample size, highlighting that symmetry-aware methods encode this property as an architectural constraint, ensuring a strictly $\G$-invariant learned ratio.

\subsection[G-Equivariant Regression of Robot's CoM Momenta]{$\G$-Equivariant Regression of Robot's \gls{com} Momenta}
\label{app:exp_results_regression}

In this experiment, we evaluate the quality of the learned representations using the contrastive loss \cref{eq:contrastive_loss,eq:disentangled_contrastive_loss} alongside supervised learning baselines trained with the standard \gls{mse} loss. The task is a $\G$-equivariant benchmark in robotics presented in \citep{ordonez2024morphological}, with the goal of predicting a quadruped robot’s \gls{com} linear $\vl \in \R^3$ and angular momenta $\vk \in \R^3$ from noisy observations of the robot's generalized positions $\vq \in \R^{12}$ and velocity coordinates $\dot{\vq} \in \R^{12}$. Consequently, the random variables are defined as $\vx = \vq + \epsilon_{\vq} \oplus \dot{\vq} + \epsilon_{\dot{\vq}}$ and $\vy = \vl \oplus \vk$, where $\epsilon_{\vq} \in \R^{12}$ and $\epsilon_{\dot{\vq}} \in \R^{12}$ are independent Gaussian noise terms that model sensor noise. The function computing the \gls{com} momenta from these proprioceptive observations is highly non-linear and $\G$-equivariant whenever $\G$ is a morphological symmetry group of the robot (see \cref{fig:example_group_action} and \citet{ordonez2024morphological} for details).

The robot considered is the quadruped robot Solo (\cref{fig:example_group_action}-right), which possesses a symmetry group of order 8: $\G = \KleinFourGroup \times \CyclicGroup[2]$, as depicted in this \href{https://danfoa.github.io/MorphoSymm/static/animations/solo-C2xC2xC2-symmetries_anim_static.gif}{animation} showing 8 symmetric robot configurations along with their corresponding linear and angular momenta vectors.

\begin{figure}[H]
	\includegraphics[width=\textwidth]{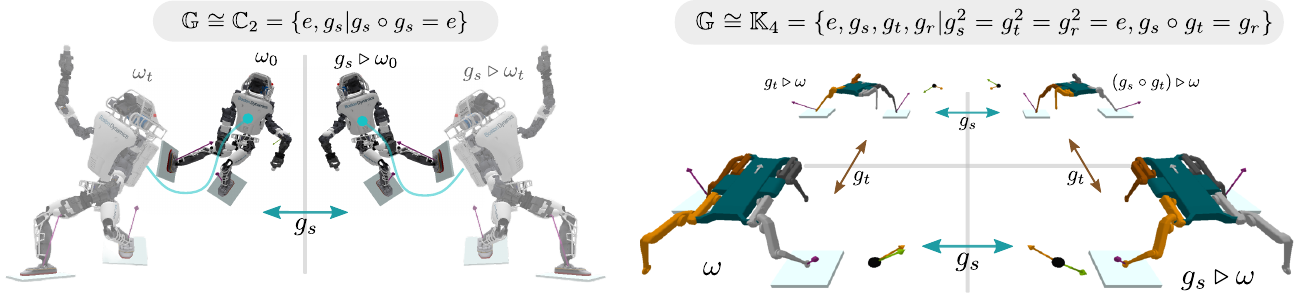}
	\caption{
		Example of morphological finite symmetry in robotics. \textbf{Left}: A humanoid robot with the reflectional symmetry group $\G \equiv \CyclicGroup[2]$. \textbf{Right}: The quadruped robot Solo with the symmetry group $\G = \KleinFourGroup \times \CyclicGroup[2]$ (only $\KleinFourGroup$ is shown for clarity). The robot's center of mass linear $\vl \in \R^3$ and angular $\vk \in \R^3$ momentum are depicted as orange and green vectors, respectively, for each symmetric configuration. Images adapted from \citet{ordonez2024morphological} with author approval.
	}
	\label{fig:example_group_action}
\end{figure}

\paragraph{NN Architectures}
We configure all models under consideration (\gls{encp}, \gls{ncp}, \gls{emlp}, and \gls{mlp}) to have an inference-time \gls{nn} architecture with a similar footprint. In particular, the encoder network for $\rvx$ in \gls{ncp} and \gls{encp} is designed similarly to the \gls{nn} used in \gls{mlp}/\gls{emlp}. The idea is to test how a model with the same capacity performs on the downstream task of regression when trained using either the representation learning loss or a supervised learning loss. The backbone of all architectures is a standard multilayer perceptron consisting of three hidden layers, each with 512 units, followed by a final hidden layer containing 128 units. This final layer encodes the feature vector $r$ for the \gls{ncp} and \gls{encp} models. Crucially, since $\G$-equivariance enforces weight sharing in the \gls{nn} architecture, the encoder \gls{nn} for \gls{encp} and \gls{emlp} comprises $\times 8$ fewer parameters than their symmetry-agnostic counterparts.

\subsection{Uncertainty Quantification via Conditional Quantile Regression}
\label{app:exp_results_uncertainty_quantification}

\begin{figure}[!htbp]
	\includegraphics[width=\textwidth]{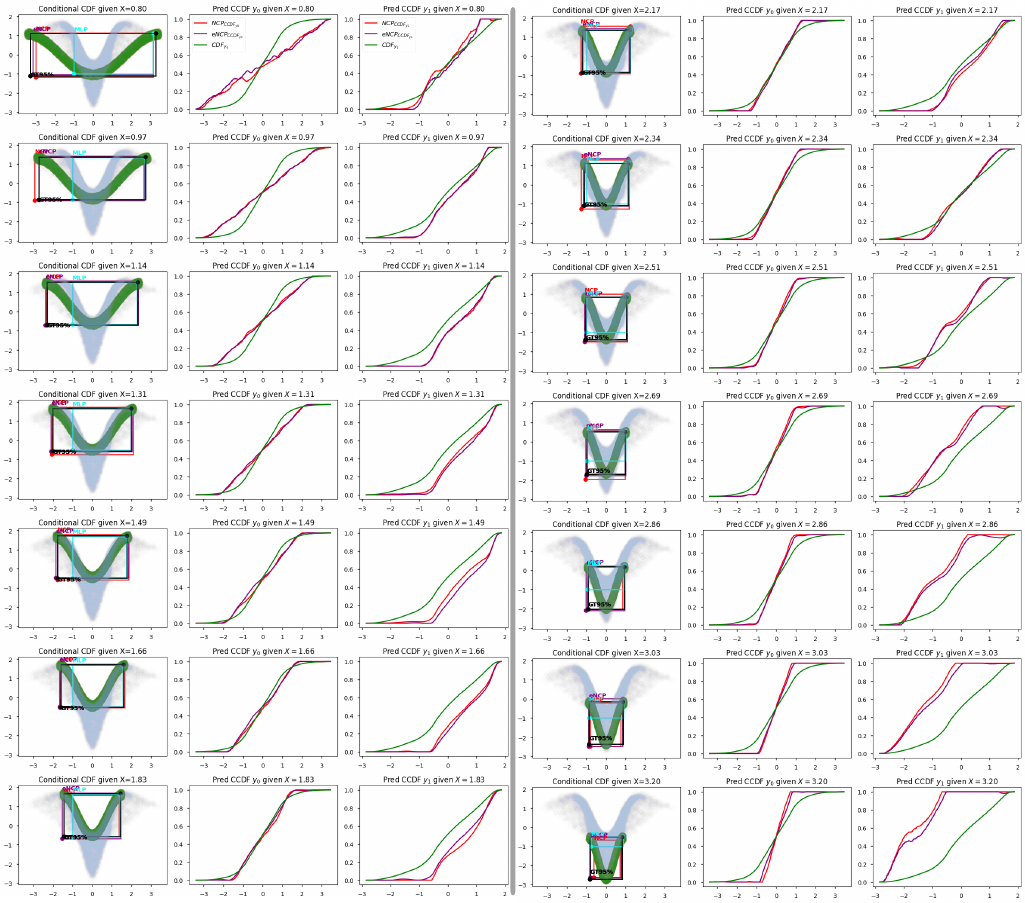}
	\caption{
		Results of a synthetic uncertainty quantification experiment comparing \gls{cqr}, \gls{ncp}, and \gls{encp} models for predicting $95\%$ \glspl{ci} of $\rvy \in \R^2$ given $\rvx \in \R$. Left and fourth columns show conditional distributions $\PP(\rvy \vert \rvx = \cdot)$ for different conditioning values. Second-third and fifth-sixth columns display \gls{ccdf} predictions by \gls{encp} and \gls{ncp} models. While \gls{cqr} directly regresses quantiles and requires retraining for different confidence levels, \gls{ncp} and \gls{encp} estimate the full \gls{ccdf}, enabling adaptation to any confidence interval without retraining.
	}
	\label{fig:synthetic_uc_results}
\end{figure}

The goal of these experiments  benchmark is to learn the family of conditional distributions $\PP(\rvy\mid\rvx=\cdot)$ for a bivariate random variable $\rvy=[\ry_0,\ry_1]\in\R^2$ given a scalar covariate $\rvx\in\R$.  Once $\PP(\rvy\mid\rvx)$ is recovered, the practitioner can estimate \emph{conditional} $(1-\alpha)$--confidence regions by regressing the lower and upper conditional quantiles
\(
q_{\alpha/2}(\rvx),\,q_{1-\alpha/2}(\rvx)
\)
for any desired miscoverage level $\alpha\in(0,1)$.  In particular, a $95\%$ confidence region corresponds to $\alpha=0.05$, so the two quantiles of interest are $q_{0.025}(\rvx)$ and $q_{0.975}(\rvx)$. See \cref{fig:synthetic_uc_depiction} for a visual representation of the problem.

\paragraph{Conditional Quantile Regression Models}
\begin{figure}[!htbp]
	\centering
	\includegraphics[width=\textwidth]{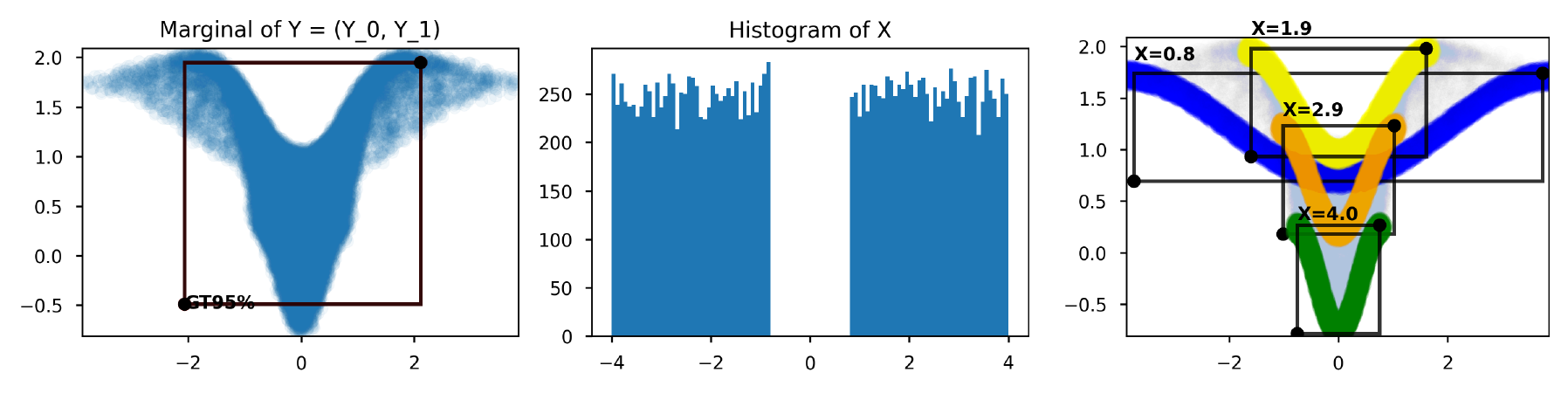}
	\caption{
		Uncertainty quantification experiment adapted from \citet{feldman2023calibrated}. Task: predict $95\%$ \glspl{ci} (black boxes) of $\rvy \in \R^2$ given $\rvx \in \R$.
		\textbf{Left:} Marginal $\PP(\rvy)$.
		\textbf{Middle:} Marginal $\PP(\rvx)$.
		\textbf{Right:} Conditional distributions $\PP(\rvy \vert \rvx = \cdot)$ for different values.
	}
	\label{fig:synthetic_uc_depiction}
\end{figure}
We compare the \gls{ncp} and proposed \gls{encp} models to a standard baseline for parametric \gls{nn} conditional quantile regression, namely \gls{cqr} \citet{feldman2023calibrated}, which uses two separate \glspl{nn} to predict lower and upper quantiles, trained with pinball loss. Both models use \gls{mlp} backbones with similar parameter counts, ensuring improvements are solely due to loss functions.

Furthermore, \gls{cqr} can only be trained for specific confidence intervals, requiring retraining for different quantiles. In contrast, the \gls{ncp} and \gls{encp} models, trained using the deep representation learning approach of \cref{sec:background,sec:learning_cond_exp_operator}, regress the \gls{ccdf} of each dimension of $\rvy$ given $\rvx$. Thus, they can estimate conditional quantiles for any confidence interval via the quantile estimation algorithm from the \gls{ccdf} described in \citet{kostic2024learning} without retraining. See details in \cref{fig:ccdf_regression}.

\begin{figure}[!htbp]
	\centering
	\includegraphics[width=\textwidth]{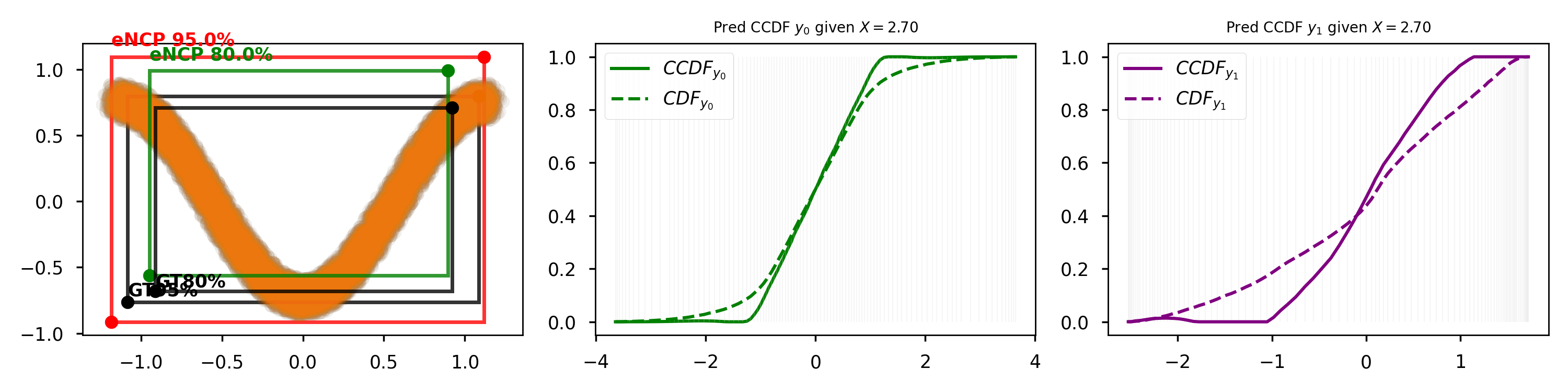}
	\caption{
	Prediction of the $80\%$ and $95\%$ \glspl{ci} for the random variable $\rvy$ in experiment \cref{app:exp_results_uncertainty_quantification} using the proposed \gls{encp} model. The model estimates the \gls{ccdf} by discretizing each dimension of $\rvy = [\ry_1, \ry_2]$ into $100$ bins and computing the conditional probabilities $\PP(\ry_i \in \sA_{n} \vert \rvx = \cdot) := [\CE \one_{\sA_{n}}](\cdot)$ for all $n \in [100]$ based on the learned conditional expectation operator $\pmd_{\nnParams}(\rvx, \rvy)$ (see \cref{sec:learning_guarantees}). Here, $\sA_{n}$ comprises the bins from the $0$-th to the $n$-th. This yields the estimated \gls{ccdf} for $\ry_1$ (center) and $\ry_2$ (right) at $\rvx = 2.7$. The \glspl{ccdf} can then be used to estimate upper and lower quantiles for any confidence interval \citep{kostic2024learning}.
	In practice, the \gls{encp} model regresses $2\times 100$ variables in a single forward pass. Thus, the final layer of the conditional quantile regression model is a linear layer of size $r \times (2 \times 100)$, where $r$ is the number of features in the $\rvy$ representation (see \cref{sec:background}). Note that the gray vertical lines in the \gls{ccdf} plots indicate the discretization bins, estimated using a `quantile\_transformer`, from \texttt{sklearn} \citep{pedregosa2011scikit}, fitted on the marginal distribution of each dimension of $\rvy$. This discretization procedure ensures high-resolution bins in high-density regions of $\rvy$ and coarser bins in low-density regions, while being able to cover the entire empirical support of $\rvy$ from the training set.
	}
	\label{fig:ccdf_regression}
\end{figure}

\paragraph{Evaluation Metrics: Coverage and Set Size}
\label{app:metrics_uc}

Let $\sC_{1-\alpha}(\rvx)\subseteq\R^d$ denote a \emph{prediction set} of nominal level $(1-\alpha)$ produced by a conditional quantile regression model for the response $\rvy\in\R^d$ given the covariate $\rvx\in\R^p$.  In all experiments we assess two complementary metrics.

\begin{itemize}
	\item \textbf{Coverage.}
	      The conditional \emph{coverage} of $\sC_{1-\alpha}$ is the probability that the true response is captured by the predicted region,
	      \begin{equation}
		      \label{eq:coverage}
			  \small
		      \mathrm{c}_{1-\alpha}(\rvx)
		      :=
		      \PP\bigl(\rvy\in\sC_{1-\alpha}(\rvx)\mid \rvx\bigr),
		      \qquad
		      \text{with the target }\;
		      \mathrm{c}_{1-\alpha}(\rvx)\approx 1-\alpha
		      \;\;\forall\,\rvx.
	      \end{equation}
	      In practice we report the \emph{marginal} coverage $\EE_{\rvx}[\mathrm{c}_{1-\alpha}(\rvx)]$, estimated on a large held-out sample; values above (resp.\ below) $1-\alpha$ indicate over- (resp.\ under-) coverage.

	\item \textbf{Relaxed Coverage (r-Coverage).}
	      The conditional \emph{relaxed coverage} of $\sC_{1-\alpha}$ is defined as the probability that each scalar component of the response lies within its corresponding predicted confidence interval. Formally, if $\rvy=[\ry_1,\ldots,\ry_d]$ and $\sC_{1-\alpha}(\rvx)$ has corresponding marginal intervals $\sC_{1-\alpha}^{(i)}(\rvx)$ for $i\in\{1,\ldots,d\}$, then
	      \begin{equation}
		      \label{eq:relaxed_coverage}
			  \small
		      \mathrm{rc}_{1-\alpha}(\rvx)
		      :=
		      \prod_{i=1}^d \PP\Bigl(\ry_i\in \sC_{1-\alpha}^{(i)}(\rvx) \,\Big|\, \rvx\Bigr),
	      \end{equation}
	      with the target $\mathrm{rc}_{1-\alpha}(\rvx)\approx 1-\alpha$ for all $\rvx$. As with coverage, we report the \emph{marginal} relaxed coverage $\EE_{\rvx}[\mathrm{rc}_{1-\alpha}(\rvx)]$.

	\item \textbf{Set size.}
	      To quantify how informative the region is, we measure its \emph{size} (volume) under the Lebesgue measure~$\lambda^d$:
	      \begin{equation}
		      \label{eq:setsize}
			  \small
		      \mathrm{Size}_{1-\alpha}(\rvx)
		      :=
		      \operatorname{vol}\bigl(\sC_{1-\alpha}(\rvx)\bigr).
	      \end{equation}
	      Smaller sets correspond to sharper uncertainty estimates, provided the required coverage is met. For multidimensional responses the volume is expressed in the natural units of $\R^d$; for $d=1$ it reduces to the interval length. As with coverage, we report the marginal expectation $\EE_{\rvx}[\mathrm{Size}_{1-\alpha}(\rvx)]$ so that models can be compared fairly across the entire input distribution.
\end{itemize}

\paragraph{Data Generation}
The data is generated from a conditional distribution
$
	\PP(\rvy\mid\rvx=\cdot)
$
for a bivariate random variable \(\rvy=[\ry_0,\ry_1]\in\R^2\) given a scalar covariate \(\rvx\in\R\).
Adapted from \citet{feldman2023calibrated}, the covariate is sampled uniformly:
$
	\rvx \sim \mathrm{Unif}\bigl( [0.8,\,4.0] \cup [-4.0, -0.8]\bigr),
$
and the response variable \(\rvy\) is produced by a non-linear transformation of auxiliary latent variables (see \cref{fig:synthetic_uc_depiction}):
\begin{equation*}\small
	\begin{aligned}
		\ry_0 & = \frac{\rz}{\beta\,|\rvx|}\;+\;\rr\cos\phi,                           \\
		\ry_1 & = \tfrac12\!\bigl(-\cos \rz + 1\bigr)\;+\; \rr\sin\phi\;+\;\sin|\rvx|,
	\end{aligned}
	\qquad
	\begin{aligned}
		\rz  & \sim \mathrm{Unif}(-\pi,\pi), \\
		\phi & \sim \mathrm{Unif}(0,2\pi),   \\
		\rr  & \sim \mathrm{Unif}(-0.1,0.1).
	\end{aligned}
\end{equation*}
Here, $\beta>0$ is a scaling constant. Both marginal distributions (see \cref{fig:synthetic_uc_depiction}) and the conditional probability distribution are invariant under the reflection symmetry group $\G=\CyclicGroup[2] = \{e, g_r \mid g_r \Gcomp g_r =e  \}$, acting on $\rvy$ as $g_r\Glact[\vsY]\rvy = [-\ry_0,\ry_1]$ and on $\rx$ as $g_r\Glact[\vsX]\rx = -\rx$.

\paragraph{Results}
The experimental results are shown in \cref{fig:synthetic_uc_results}, where the \gls{ncp} and \gls{encp} models outperform the baseline \gls{cqr} model in terms of both coverage and set size, for a diverse range of conditioning values. 

Moreover, \cref{fig:basis_functions} illustrates the basis functions learned by the \gls{ncp} and \gls{encp} models for the random variable $\rvy = [\ry_0, \ry_1]$. In contrast to the standard \gls{ncp} model, \gls{encp} incorporates symmetry priors, enabling a clean separation of its latent representation into two orthogonal subspaces: one corresponding to $\CyclicGroup[2]$-invariant functions and the other to functions that change sign under reflection. These correspond to the two isotypic subspaces of the representation space. Intuitively, by leveraging the orthogonality constraints between functions in these subspaces, \gls{encp} performs independent representation learning within each subspace, leading to improved performance and satisfaction of the $\G$-equivariance of the learned conditional expectation operator, as shown in \cref{fig:G_invariance_kappa}.

\begin{figure}[!htbp]
	\centering
	\includegraphics[width=.99\textwidth]{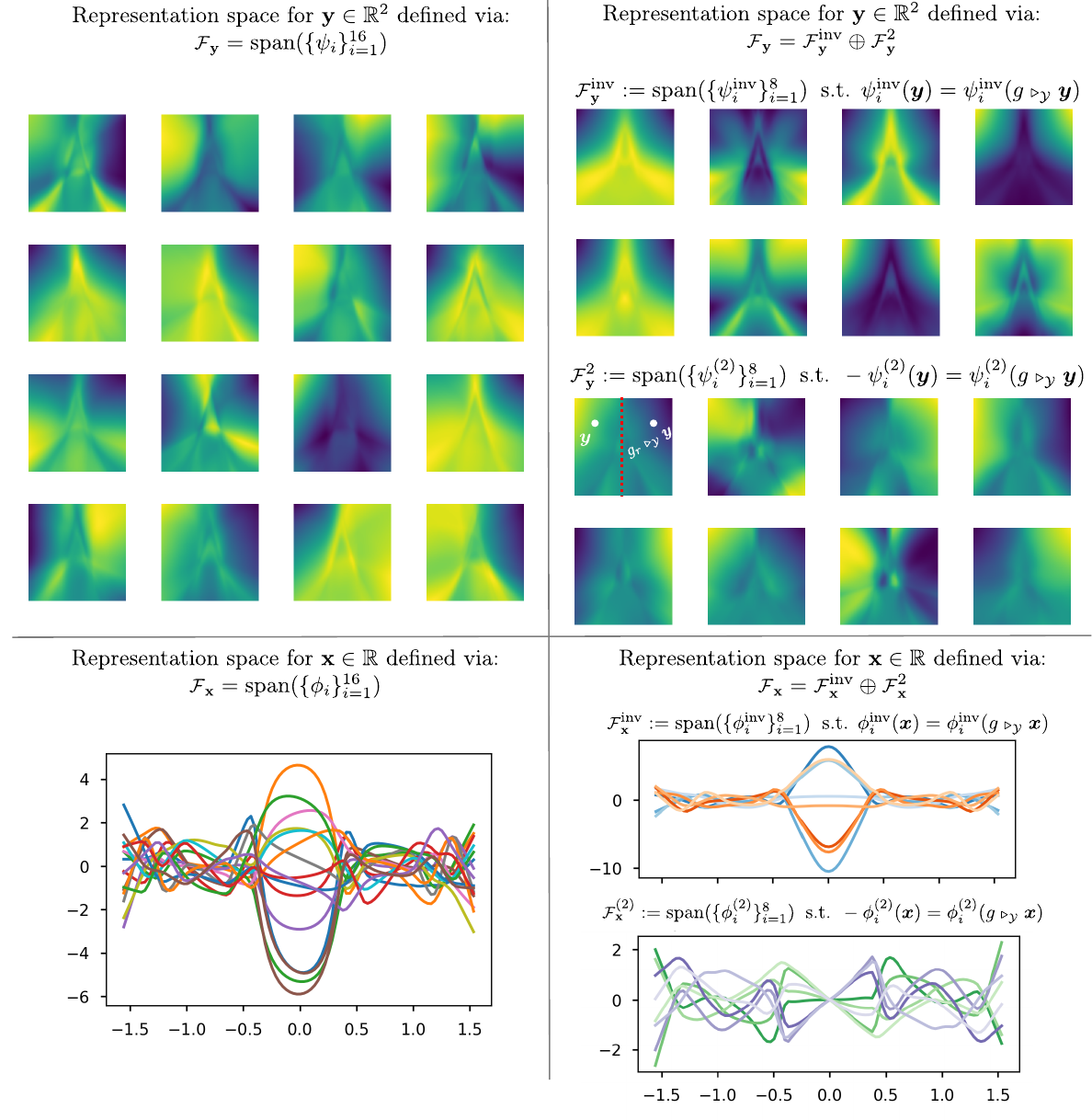}
	\caption{
	\textbf{Left:} Learned basis functions from the \gls{ncp} model for $\rvy=[\ry_0,\ry_1]$ and $\rvx$.
	\textbf{Right:} Learned basis functions from the \gls{encp} model for $\rvy$ and $\rvx$.
	The marginal distributions of $\rvy$ and $\rvx$ exhibit reflection symmetry (see \cref{fig:synthetic_uc_depiction}) defined by the symmetry group $\G=\CyclicGroup[2]\{e,\g_r \mid \g_r^2=e\}$, with group actions $\g_r\Glact[\vsY]\rvy = [-\ry_0,\ry_1]$ and $\g_r\Glact[\vsX]\rx = -\rx$. By incorporating this prior, the \gls{encp} model learns nonlinear representation spaces for $\rvx$ and $\rvy$ that decompose into two isotypic subspaces, $\Lpy = \LpyIso[\inv]\oplus\LpyIso[(2)]$ and $\Lpx = \LpxIso[\inv]\oplus\LpxIso[(2)]$. The invariant subspaces $(\inv)$ consist of $\CyclicGroup[2]$-invariant functions, while the complementary subspaces consist of functions that change sign under reflection.
	}
	\label{fig:basis_functions}
\end{figure}

\subsection[G-Equivariant Synthetic Regression with Uncertainty Quantification]{$\G$-Equivariant Synthetic Regression with Uncertainty Quantification}
\label{app:exp_results_regression_synthetic}

\begin{wrapfigure}{r}{0.25\textwidth}
	\vspace{-10pt}
	\centering
	\resizebox{\linewidth}{!}{%
		\begin{tikzpicture}[x=0.5cm, y=1.0cm]
			\def\rowspacing{1.1cm}

			\draw[gray!50] (-3.2,0) -- (3.2,0);
			\draw[blue, thick] (-1.5,0) -- (-1.5,0.9) -- (1.5,0.9) -- (1.5,0);
			\draw[blue!60, dashed] (0,0) -- (0,0.9);
			\node[anchor=west, scale=0.7] at (0.05,0.5) {$\E[\rvy \vert \vx]$};
			\node[anchor=west, scale=0.7] at (3.25,0) {$\rvy$};

			\begin{scope}[yshift=-\rowspacing]
				\draw[gray!50] (-3.2,0) -- (3.2,0);
				\draw[blue, thick, domain=-3:3, samples=300]
				plot (\x, {0.54*exp(-((\x+1.4)^2)/(2*0.35^2)) + 0.36*exp(-((\x-1.6)^2)/(2*0.50^2))});
				\draw[blue!60, dashed] (-0.2,0) -- (-0.2,0.8);
				\node[anchor=west, scale=0.7] at (-0.1,0.45) {$\E[\rvy\vert \vx]$};
				\node[anchor=west, scale=0.7] at (3.25,0) {$\rvy$};
			\end{scope}

			\begin{scope}[yshift=-2*\rowspacing]
				\draw[gray!50] (-3.2,0) -- (3.2,0);
				\draw[blue, thick, domain=-3.2:3.2, samples=200]
				plot (\x, {0.9*exp(-0.8*(\x+3.2))});
				\draw[blue!60, dashed] (-1.95,0) -- (-1.95,0.95);
				\node[anchor=west, scale=0.7] at (-1.9,0.8) {$\E[\rvy\vert \vx]$};
				\node[anchor=west, scale=0.7] at (3.25,0) {$\rvy$};
			\end{scope}

			\begin{scope}
				\draw[blue, thick] (1.8,0.8) -- (2.3,0.8);
				\node[anchor=west, scale=0.7] at (2.4,0.8) {$\sP(\rvy \vert \vx)$};
			\end{scope}
		\end{tikzpicture}
	}
	\vspace{-15pt}
	\caption{
		\small
		Example uninformative expected values.
	}
	\label{fig:expectation-illus}
	\vspace{-4em}
\end{wrapfigure}
This synthetic experiment demonstrates how the \gls{ncp} and \gls{encp} frameworks can train \glspl{nn} for regression tasks under diverse noise conditions, particularly in applications where the conditional expectation, i.e., the regression function, may be uninformative due to skewed and asymmetric conditional distributions (see \cref{fig:expectation-illus}), and uncertainty quantification is therefore required. Crucially, in such settings, the representations learned by \gls{ncp} and \gls{encp} can be reused without retraining to predict conditional quantiles at arbitrary coverage levels (see \cref{eq:coverage}), which is not possible with standard supervised \gls{mse} regression training.

The dataset is composed of a $1$-dimensional samples $\rvx\in\R$ and $\rvy\in\R$ drawn from a piecewise conditional distribution with three distinct regimes (see \cref{fig:data_with_zones_and_true_expectation}):
\begin{itemize}
	\item Skewed distribution ($|x|\le 1$): This regime features a skewed conditional distribution with an exponential tail, with heteroscedastic noise.
	\item Symmetric distribution ($1<|x|\le 2$): This regime has a symmetric conditional distribution, with heteroscedastic noise.
	\item Bimodal distribution ($|x|>2$): This regime exhibits a bimodal conditional distribution, with heteroscedastic noise.
\end{itemize}

Formally the piecewise conditional distribution is defined as follows:
\begin{equation*}
	\PP(\rvy \mid \rvx) =
	\begin{cases}
		f_c(x) + \varepsilon_{\exp},    & |x|\le 1,   \\[0.2em]
		f_c(x) + \sigma_{h}(|x|) Z,     & 1<|x|\le 2, \\[0.2em]
		S\,a(|x|) + \sigma_{p}(|x|) Z', & |x|>2,
	\end{cases}
	,\quad f_c(x) = \tfrac{1}{2}\cos\!\bigl(\tfrac{2\pi}{3}x\bigr) + \tfrac{1}{5}\cos\!\bigl(\tfrac{8\pi}{3}x\bigr) + \tfrac{1}{4},
\end{equation*}
Where $f_c$ is a deterministic function  $\varepsilon_{\exp}\sim \mathrm{Exp}(\text{scale}=s_{\exp}(|x|))$, $Z,Z' \sim \mathcal{N}(0,1)$, $S\in\{-1,+1\}$ is equiprobable. Furthemore $s_{\exp}$, $\sigma_{p}$ and $\sigma_{h}$ introduce heteroscedasticity (variance is conditioned on values of $\rvx$). The function is design such that the conditional distribution is $\G$-invariant under the relfection group $\G=\CyclicGroup[2]$ acting as $g_r \Glact[\vsX] = -x$ and $g_r \Glact[\vsY] = y$. Consequently, the target $\G$-invariant function defined by the conditional expectation is given as (see \cref{fig:data_with_zones_and_true_expectation}):
\begin{equation*}
	z(x) = \E[\rvy\mid \rvx=x] =
	\begin{cases}
		f_c(x) + \E[\varepsilon_{\exp}], & |x|\le 1,   \\
		f_c(x),                          & 1<|x|\le 2, \\
		0,                               & |x|>2,
	\end{cases}
\end{equation*}

\begin{figure}[!htbp]
	\centering
	\includegraphics[width=.8\textwidth]{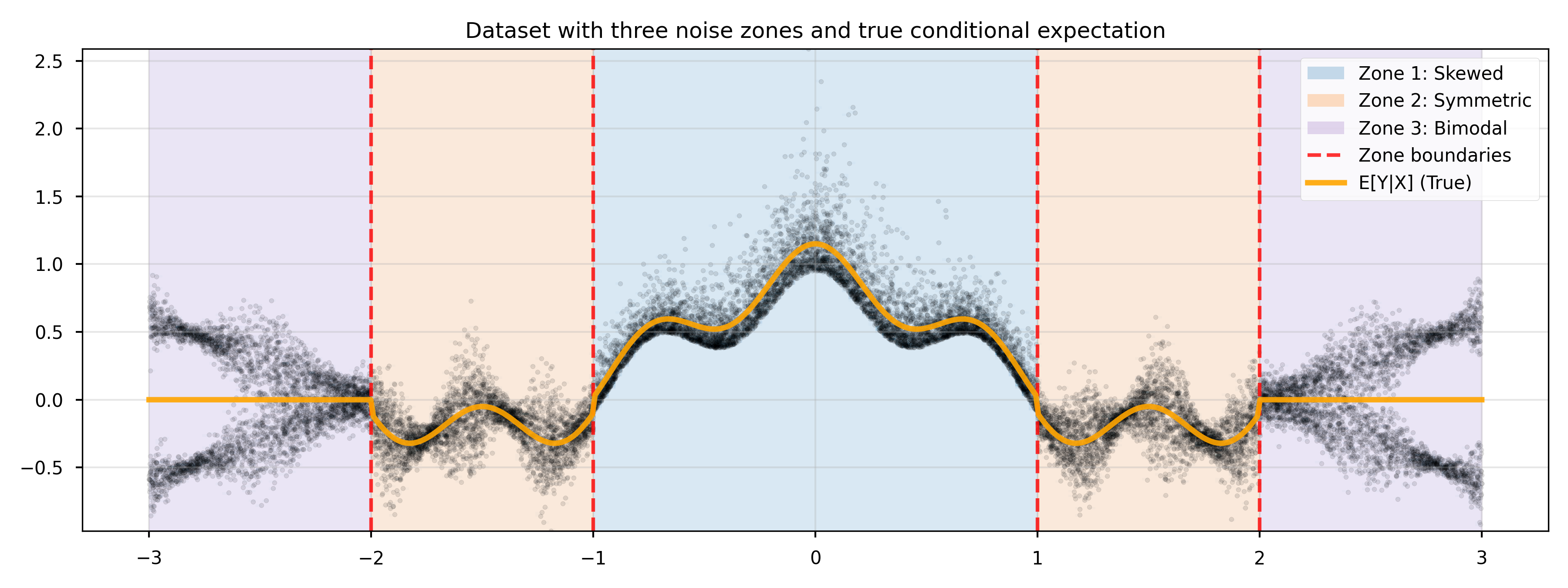}
	\caption{
		Synthetic regression experiment featuring three distinct zones with different families of conditional distributions for which the expected value is uninformative. Zone~1 ($|x|\le 1$) exhibits a exponential skewed distribution, Zone~2 ($1<|x|\le 2$) has symmetric, and Zone~3 ($|x|>2$) features a bimodal distributon. All zones are affected by heteroscedastic noise.
	}
	\label{fig:data_with_zones_and_true_expectation}
\end{figure}

All models in the comparison share the same standard three-layer \gls{mlp} backbone, with $64$ hidden neurons per layer and ELU activations. The first baseline is the \gls{mlp} architecture trained with the standard \gls{mse} regression loss. The \gls{ncp} and \gls{ncpaug} models train the same architecture using the contrastive loss in \cref{eq:contrastive_loss}, without and with data augmentation, respectively. Finally, the proposed \gls{encp} model trains the same architecture using the disentangled contrastive loss in \cref{eq:disentangled_contrastive_loss}, which incorporates symmetry priors and orthogonality regularization to enforce the $\G$-equivariance of the learned conditional expectation operator.

All \gls{ncp}-based models use orthogonality regularization with Lagrange multiplier $\lambda=10^{-2}$ and momentum $0.995$ to compute the orthonormality statistics through exponential moving-average estimates. All models are trained with early stopping on the validation objective, and the checkpoint with the best validation loss is used to compute test performance. The dataset is generated with $N=20{,}000$ sample pairs.

After training, for both contrastive models, we fit two linear decoders from the learned representations to predict the conditonal expectation $\hat{z}_{\nnParams}$, as described in \eqref{eq:regression_estimate}, and to recover the \gls{ccdf} with $200$ support points for quantile extraction, as described in \cref{fig:ccdf_regression}.

\paragraph{Results}
The results, shown in \cref{fig:data_with_zones_and_true_expectation}, illustrate that all models perform similarly in predicting the conditional expectation $\E[\rvy\mid \rvx]$. However, the key advantage of \gls{ncp} and \gls{encp} lies in their ability to model the conditional distribution, thereby enabling uncertainty quantification. Here, this is shown through the prediction of conditional lower and upper quantiles with coverage levels of $90\%$ and $70\%$. Recovering these quantiles is not possible when training the \gls{nn} backbone using standard supervised \gls{mse} regression.

While both \gls{ncp} and \gls{encp} perform comparatively well and predict well-calibrated confidence intervals, \gls{encp} achieves nearly perfect empirical coverage tracking for coverage levels ranging from $10\%$ to $90\%$, as shown in \cref{fig:synthetic_regression_results} (middle). Furthermore, \gls{encp} consistently predicts smaller \glspl{ci} (bottom-middle), indicating accurate and non-conservative uncertainty estimates.

Finally, \cref{fig:synthetic_regression_results_aug_training_curves} shows the validation-loss dynamics during training for the \gls{ncp}, \gls{ncpaug}, and \gls{encp} models. The results show that \gls{encp} converges faster and to a better optimum than the \gls{ncp}-based models, requiring approximately $2.5\times$ fewer epochs to reach the final loss values attained by \gls{ncp}. Although this difference in sample efficiency is mildly mitigated by data augmentation, \gls{encp} exhibits a clear optimality gap with respect to both \gls{ncp} and \gls{ncpaug}, achieving overall better validation performance. 

\begin{figure}[!htbp]
	\centering
	\includegraphics[width=\textwidth]{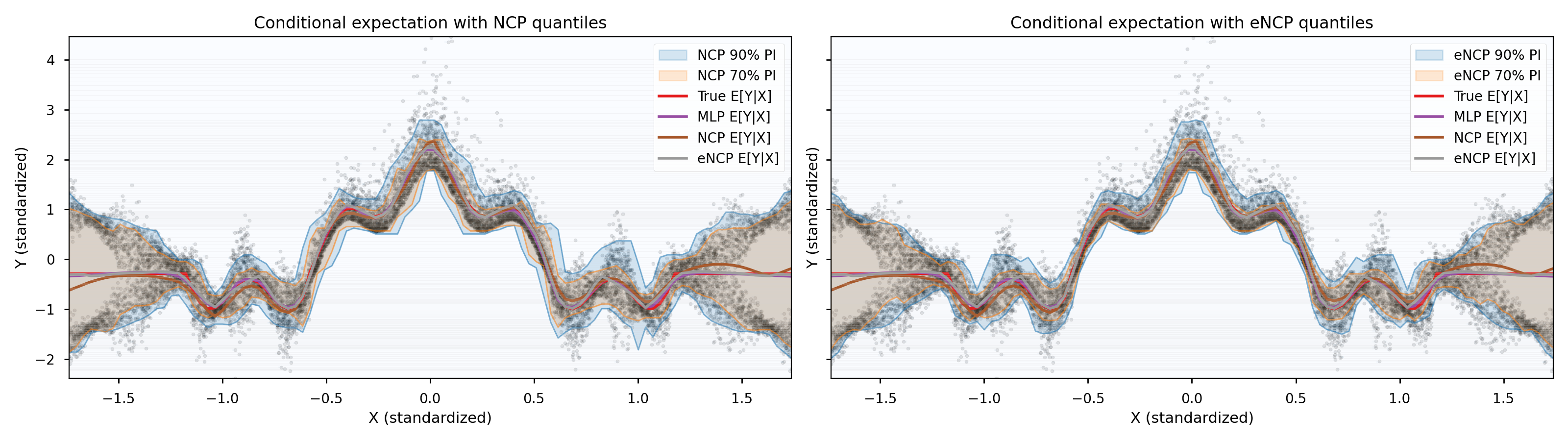}
	\includegraphics[width=\textwidth]{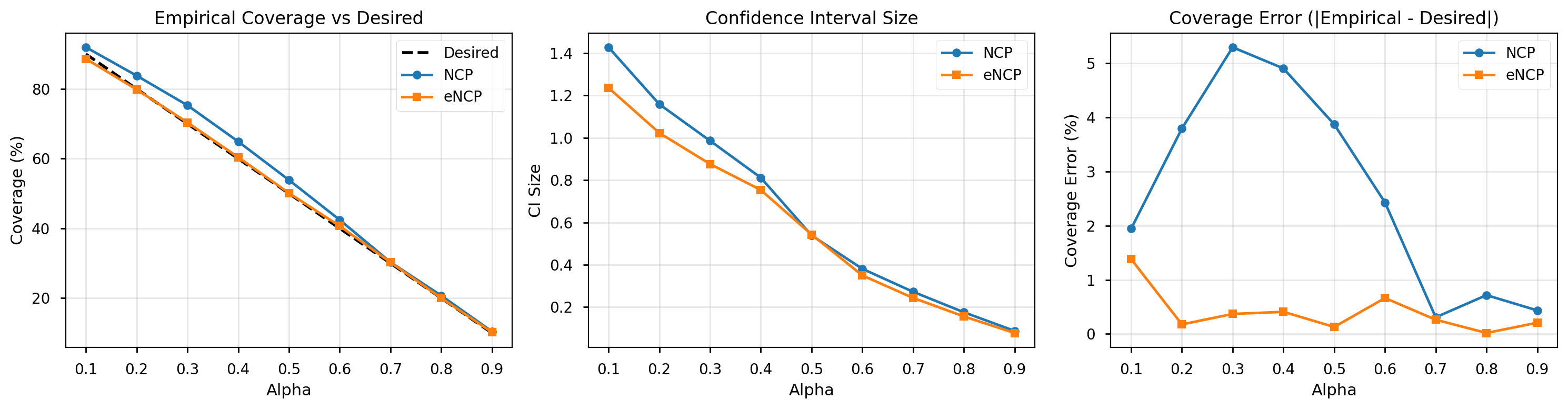}
	\includegraphics[width=.85\textwidth]{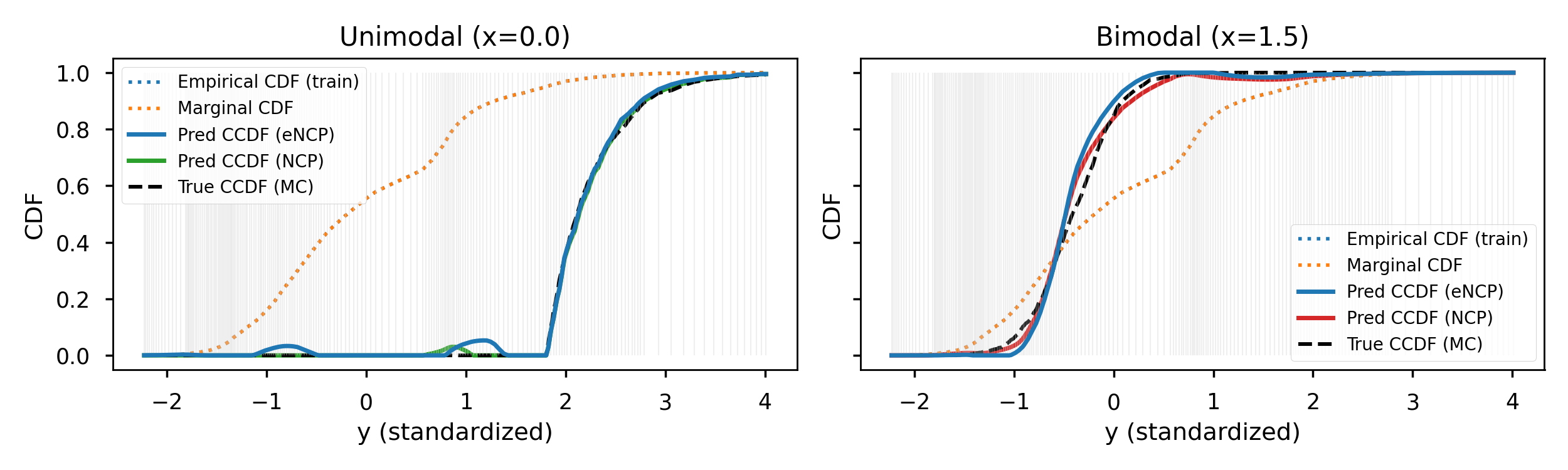}
	\vspace*{-0.3cm}
	\caption{
		\textbf{Top:} Comparison of conditional expectation predictions $\E[\rvy\mid \rvx]$ and uncertainty quantification via $90\%$ and $70\%$ \glspl{ci} for baseline \gls{mlp}, \gls{ncp}, and proposed \gls{encp} models.
		\textbf{Middle:} Empirical coverage \eqref{eq:coverage} tracking and \glspl{ci} set size \eqref{eq:setsize} for coverage levels from $10\%$ to $90\%$.
		\textbf{Bottom:} Example \glspl{ccdf} predicted by \gls{ncp} and \gls{encp} on the unimodal skewed distribution regime and the bimodal regime (see details in \cref{fig:ccdf_regression}).
	}
	\vspace*{-.5cm}
	\label{fig:synthetic_regression_results}
\end{figure}

\begin{figure}[!htbp]
	\includegraphics[width=\textwidth]{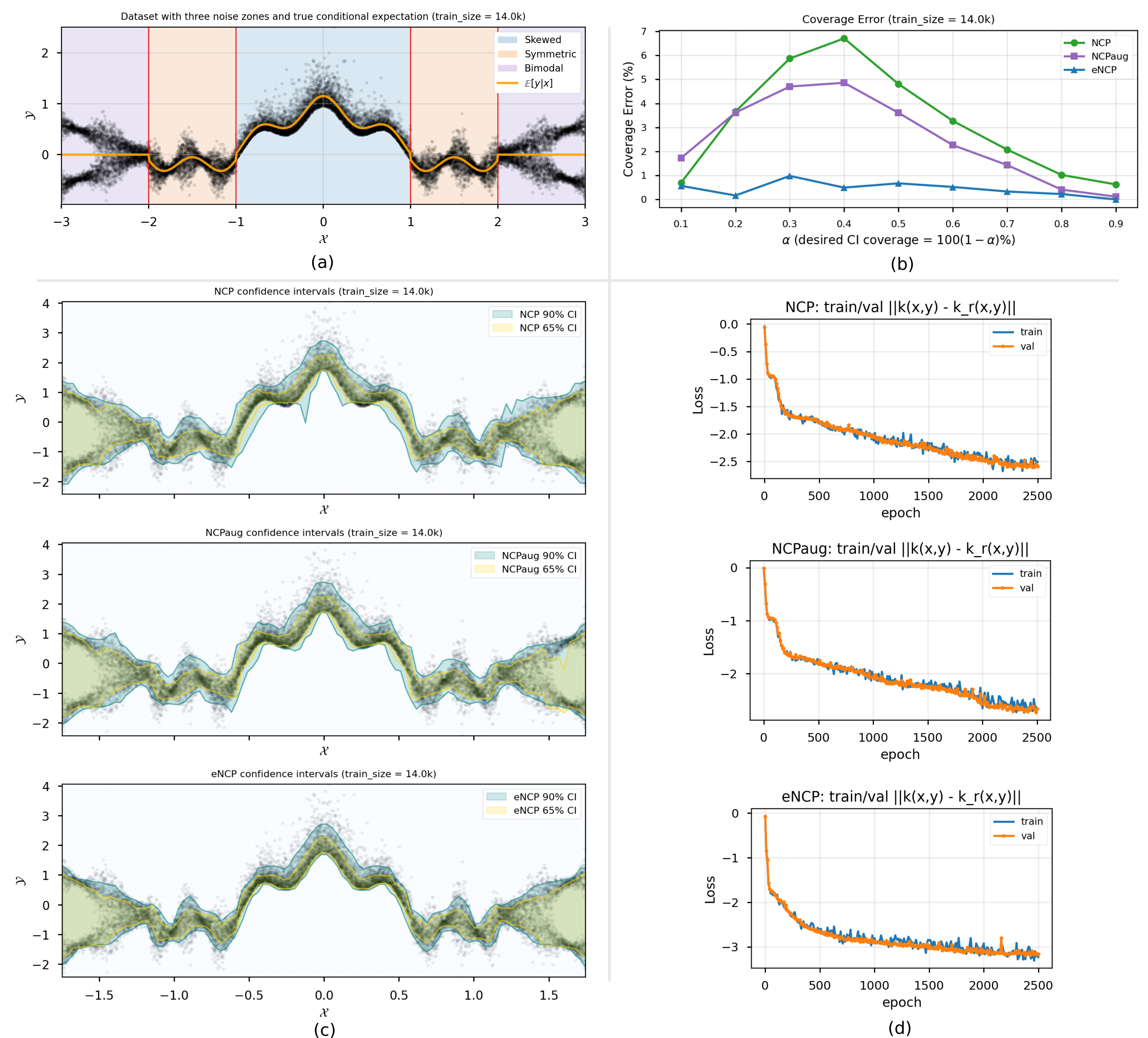}
	\caption{
		(a) Synthetic conditional distribution $\muycondx$ with three distinct zones whose conditional-distribution families share an uninformative conditional mean. Zone~1 ($|x|\le 1$) is exponentially skewed, zone~2 ($1<|x|\le 2$) is symmetric, and zone~3 ($|x|>2$) is bimodal. All zones include heteroscedastic noise.
		(b) Predicted confidence-interval coverage error for \gls{encp}, \gls{ncp}, and \gls{ncpaug} across desired coverage levels, defined as $100\% \cdot (1 - \alpha)$.
		(c) Predicted confidence intervals at desired coverages of $90\%$ and $65\%$ for \gls{ncp}, \gls{ncpaug}, and \gls{encp}. (d) Training curves of train and validation losses for \gls{ncp}, \gls{ncpaug}, and \gls{encp}. 
	}
	\label{fig:synthetic_regression_results_aug_training_curves}
\end{figure}

\subsection{Robustness to Symmetry Misspecification}
\label{app:exp_results_symmetry_misspecification}

The previous synthetic regression and uncertainty-quantification experiment assumes a correctly specified $\CyclicGroup[2]$ symmetry prior: the marginal distribution $\mux$ is invariant under $g_r\Glact[\vsX]x=-x$, and the conditional law $\muycondx$ is invariant under the same action on $\vsX$ with the trivial action on $\vsY$. Correctly specified symmetry priors is the core assumption of \gls{gdl}, and the main motivation for incorporating them is to improve generalization by leveraging the inductive bias of symmetry. 

However, symmetries can be used in practice also as \emph{local} or \emph{approximate} priors, used to regularize learning methods. In such cases, symmetry priors may hold only approximately or only on parts of the domain. Following the taxonomy of \citet{wang2023general}, we therefore test \gls{encp} under extrinsic and incorrect symmetry misspecification.

\begin{figure}[!htbp]
	\includegraphics[width=\textwidth]{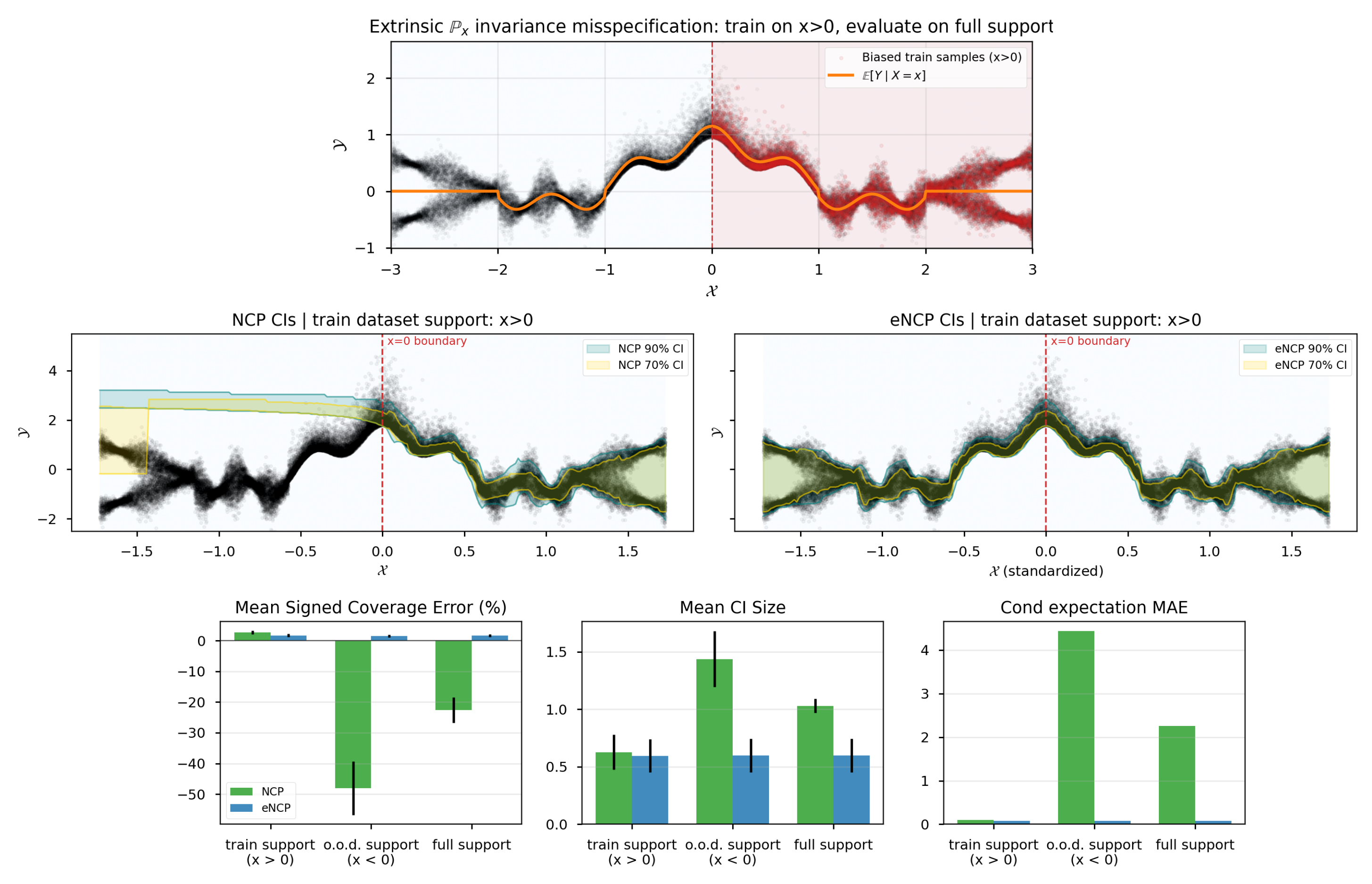}
	\caption[Extrinsic Symmetry Misspecification Summary]{
		Top: Illustration of extrinsic symmetry misspecification, where the train/validation/test support is biased toward the coset subspace $\rvx > 0$, violating the assumed $\CyclicGroup[2]$-invariance of $\mux$.
		Middle: Predicted confidence intervals at $90\%$ and $70\%$ coverage for \gls{ncp} and \gls{encp} under extrinsic symmetry misspecification.
		Bottom: Mean coverage error, confidence-interval volume, and conditional-expectation Maximum Absolute Error (MAE) for \gls{ncp} and \gls{encp}, evaluated on the biased support $\rvx > 0$, the out-of-distribution (o.o.d.) support $\rvx < 0$, and the full support $\vsX \equiv \R$.
	}
	\label{fig:misspec_summary}
\end{figure}

\paragraph{Misspecification Types}
We use the same one-dimensional regression and uncertainty-quantification setup as in \cref{app:exp_results_regression_synthetic}, and perturb only the symmetry assumptions. In the extrinsic case, the support of the train, validation, and test samples is biased toward the right coset subspace $\rvx>0$. The conditional law remains symmetric, but the marginal $\mux$ no longer satisfies the assumed $\CyclicGroup[2]$-invariance. In the incorrect case, $\mux$ remains symmetric, but $\muycondx$ violates the assumed symmetry locally on $|\rvx|>1$. We consider two variants: an unbiased violation that increases the heteroscedastic noise scale by a factor $C\in[1.0,6.0]$ only on $\rvx>1$, and a biased violation that introduces a linear conditional-mean shift with slope $b\in[0.0,0.75]$ only on $\rvx>1$. The former preserves the equivariance of the conditional expectation $\E[\rvy\mid\rvx=\cdot]$, whereas the latter violates it.

\begin{figure}[!htbp]
	\includegraphics[width=.95\textwidth]{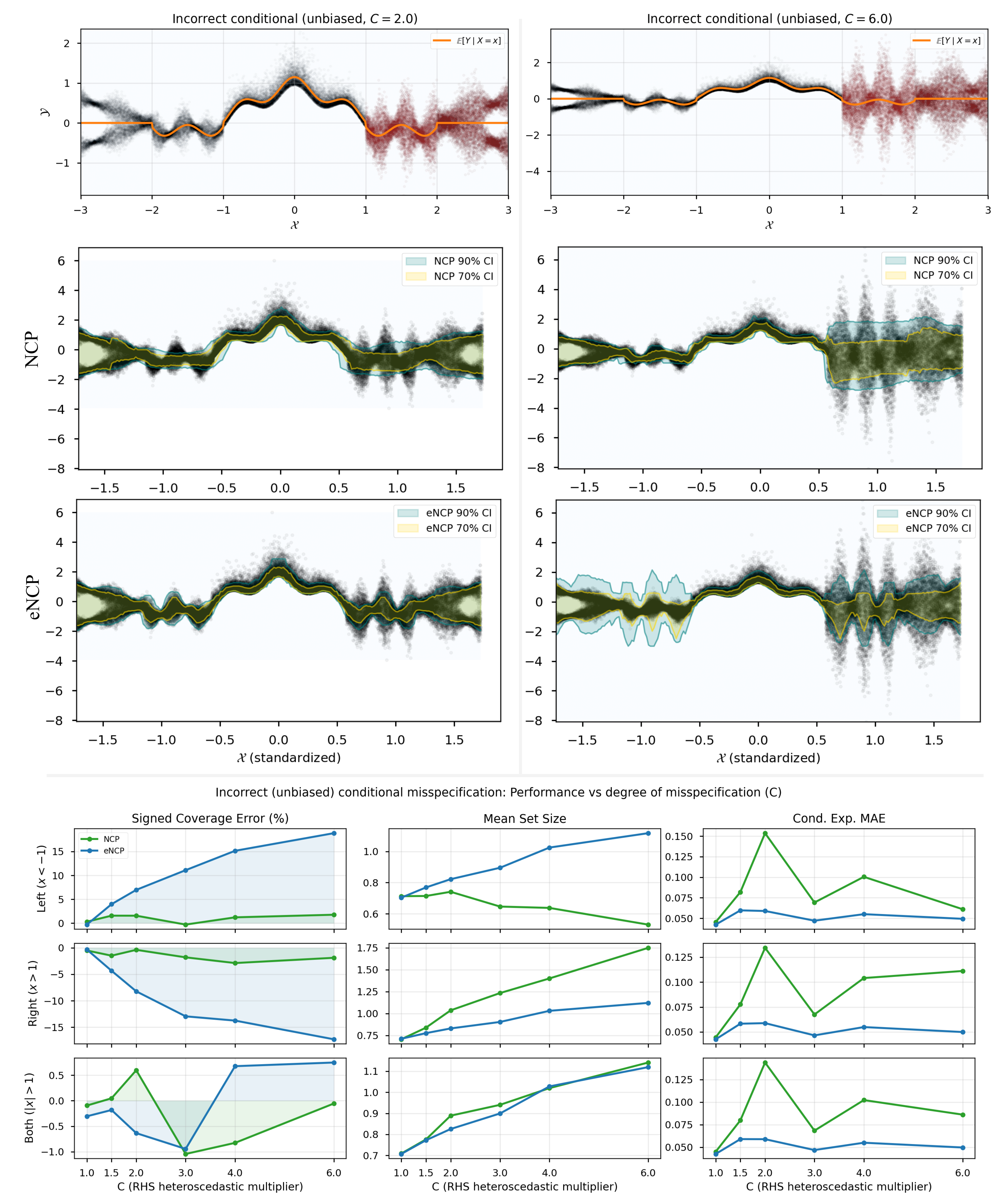}
	\caption[Incorrect Unbiased Conditional Misspecification Summary]{
		First row: Illustration of incorrect (unbiased) conditional misspecification. The $\CyclicGroup[2]$-invariance of $\muycondx$ is violated on $|\rvx| > 1$ by increasing the heteroscedastic noise scale with $C \in [1.0, 6.0]$ only on the right-coset subspace $\rvx > 1$. This misspecification is termed unbiased because the conditional expectation $\E[\rvy \mid \rvx = \cdot]$ remains $\CyclicGroup[2]$-equivariant.
		Second and third rows: Predicted confidence intervals at $90\%$ and $70\%$ coverage for \gls{ncp} and \gls{encp} with $C=2.0$ and $C=6.0$ under this incorrect unbiased conditional misspecification.
		Bottom: Mean (signed) coverage error, confidence-interval volume, and conditional-expectation Maximum Absolute Error (MAE) for \gls{ncp} and \gls{encp}, evaluated on the misspecified supports $\rvx > 1$ and $\rvx < -1$, and on the full support $\vsX \equiv \R$, across noise scales $C \in [1.0, 6.0]$.
	}
	\label{fig:incorrect_unbiased_summary}
\end{figure}

\paragraph{Metrics}
We compare \gls{ncp} and \gls{encp} using the same learned representations and conditional-quantile extraction procedure described in \cref{app:exp_results_regression_synthetic}. For each model, we report predicted $90\%$ and $70\%$ \glspl{ci}, mean signed coverage error, \gls{ci} volume, and the maximum absolute error of the conditional-expectation estimate. For extrinsic misspecification, the metrics are reported on the biased support $\rvx>0$, the out-of-distribution support $\rvx<0$, and the full support $\vsX\equiv\R$. For incorrect misspecification, the metrics are reported on the perturbed support $\rvx>1$, the symmetric counterpart $\rvx<-1$, and the full support.

\begin{figure}[!htbp]
	\includegraphics[width=.95\textwidth]{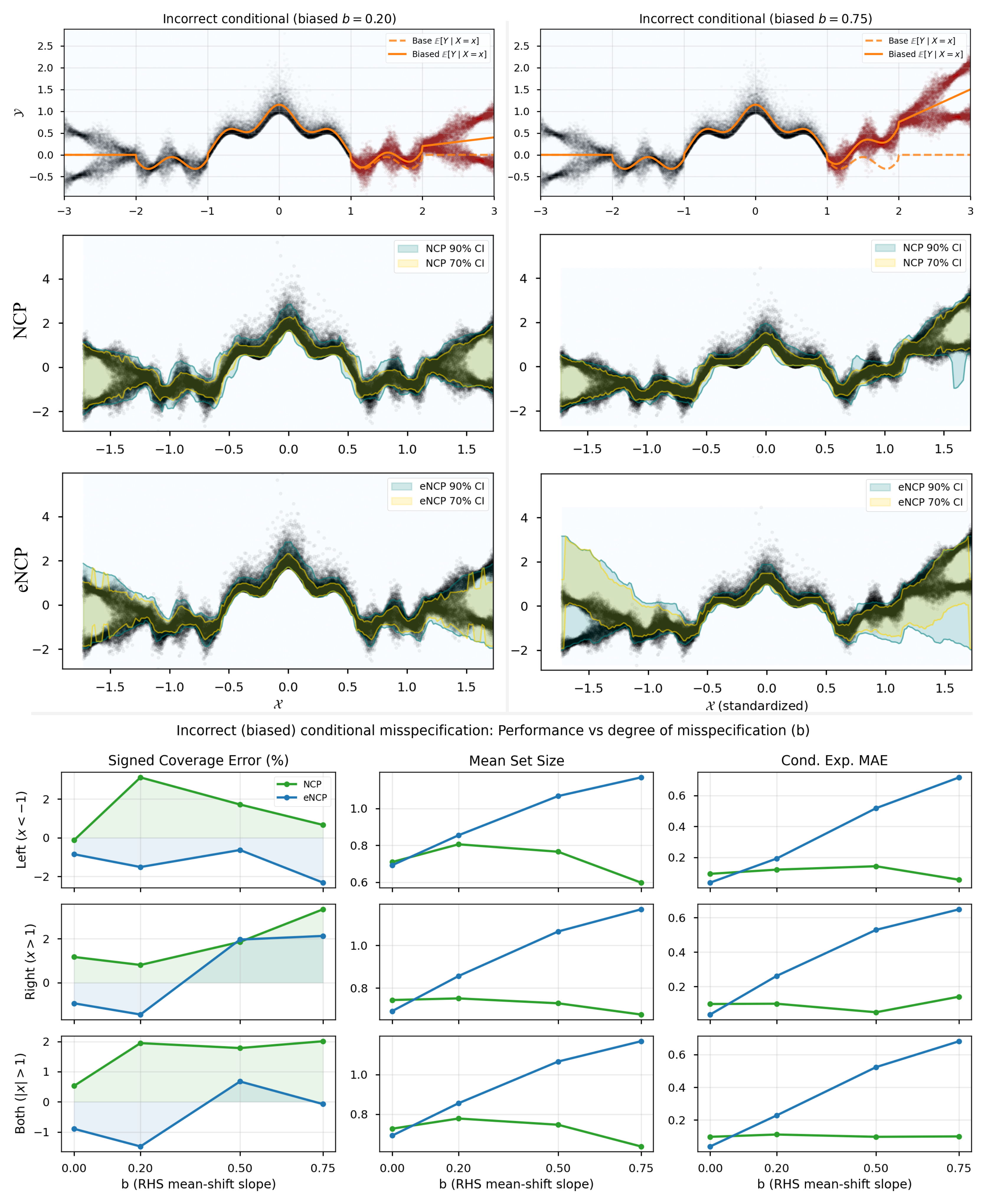}
	\caption[Incorrect Biased Conditional Misspecification Summary]{
		First row: Illustration of incorrect (biased) conditional misspecification. The $\CyclicGroup[2]$-invariance of $\muycondx$ is violated on $|\rvx| > 1$ by introducing a linear bias with slope $b \in [0.0, 0.75]$ on the right-coset subspace $\rvx > 1$. This misspecification is termed biased because the conditional expectation $\E[\rvy \mid \rvx = \cdot]$ is no longer $\CyclicGroup[2]$-equivariant.
		Second and third rows: Predicted confidence intervals at $90\%$ and $70\%$ coverage for \gls{ncp} and \gls{encp} with $b=0.2$ and $b=0.75$ under this incorrect biased conditional misspecification.
		Bottom: Mean (signed) coverage error, confidence-interval volume, and conditional-expectation Maximum Absolute Error (MAE) for \gls{ncp} and \gls{encp}, evaluated on the misspecified supports $\rvx > 1$ and $\rvx < -1$, and on the full support $\vsX \equiv \R$, across bias slopes $b \in [0.0, 0.75]$.
	}
	\label{fig:incorrect_biased_summary}
\end{figure}

\paragraph{Results}
\Cref{fig:misspec_summary} shows that, under extrinsic misspecification, enforcing the $\CyclicGroup[2]$ prior acts as a useful regularizer: \gls{encp} improves out-of-distribution generalization on $\rvx<0$ without degrading the in-support estimates on $\rvx>0$. This is the benign regime of misspecification in which the conditional relation remains symmetry-consistent and the violation comes from sampling support.

\Cref{fig:incorrect_unbiased_summary,fig:incorrect_biased_summary} show the two incorrect conditional-misspecification regimes. As the violation strength increases, performance degrades continuously rather than catastrophically, and the degradation remains concentrated on the regions where the symmetry assumption is violated. In the unbiased heteroscedastic case, the conditional expectation remains $\CyclicGroup[2]$-equivariant, so the main effect appears in coverage error and interval volume on the perturbed support. In the biased case, the conditional expectation itself is no longer equivariant, and the conditional-expectation error increases with the bias slope $b$. In both cases, the comparison between the misspecified and symmetric supports helps localize where the assumed symmetry is no longer compatible with the data.

\subsection{Uncertainty Quantification in Quadruped Legged Locomotion}
\label{app:exp_results_uncertainty_quantification_robot}
We test how well conditional-quantile models can recover the conditional $95\%$ confidence regions of three physically meaningful observables produced by a simulated AlienGo quadruped walking over rough terrain (see \cref{fig:teaser}) under varying friction coefficients. The dataset was collected using the \href{https://github.com/iit-DLSLab/Quadruped-PyMPC}{Quadruped-PyMPC} simulation framework and model predictive controller from \cite{turrisi2024sampling}.

The observables for which state‐dependent uncertainty estimates are desired are
$
	\vy_t \;=\; [U_t,\; T_t,\; \vtau^{\mathrm{grf}}_t]^{\!\top},
$
with each component defined as follows:
\begin{itemize}[leftmargin=*]
	\item \textbf{$\G$-invariant Kinetic Energy.}
	      $
		      T(\vq,\dot{\vq}) = \tfrac12\,\dot{\vq}^{\!\top}\,M(\vq)\,\dot{\vq} \in \R,
	      $
	      where $M(\vq)$ is the configuration-dependent inertia matrix. Noise is introduced through sensor measurement errors on the robot's \gls{dof} position $\vq \in \R^{12}$ and velocity $\dot{\vq} \in \R^{12}$.
	\item \textbf{$\G$-invariant Instantaneous Mechanical Work.}
	      $
		      U(\vq,\dot{\vq},\vtau) \in \R,
	      $
	      representing the instantaneous mechanical work exerted or absorbed by the robot. This quantity depends on the actuator torques (typically measured with noisy, biased sensors) as well as the external forces (e.g. gravity, contact forces) that are not reliably measurable due to unobserved terrain parameters.
	\item \textbf{$\G$-equivariant Ground-Reaction Forces}
	      $
		      \vtau_{\mathrm{grf}} \in \R^{12},
	      $
	      a fundamental quantity in quadruped control, whose reliable estimation and uncertainty quantification are critical for downstream tasks in robotics \citep{nistico2025muse,liu1994hierarchical}.
\end{itemize}

The observables of interest are predicted using a suit of onboard proprioceptive sensory signals available at time~$t$:
\[
	\vx_t \;=\;
	\Bigl[
	\vq_t,\;
	\dot{\vq}_t,\;
	\va_t,\;
	\vv_t,\;
	\vv_{t,\text{err}},\;
	\vomega_{t},\;
	\vomega_{t,\text{err}},\;
	\vg_t,\;
	\dot{\vp}_{t,\text{feet}},\;
	\vtau^{\mathrm{cmd}}_t
	\Bigr]^{\!\top}.
	\qquad
\]

Specifically, $\vq_t\in\R^{n_q}$ and $\dot{\vq}_t\in\R^{n_q}$ are the joint positions and velocities, respectively; $\va_t\in\R^{3}$ is the linear acceleration of the robot’s base frame measured by the IMU; $\vv_t\in\R^{3}$ is the base linear velocity, while $\vv_{t,\text{err}}\in\R^{3}$ the command error base linear velocity; $\vomega_t\in\R^{3}$ and $\vomega_{t,\text{err}}\in\R^{3}$ are the base angular velocity and its command error; $\vg_t\in\R^{3}$ is the gravity vector expressed in the base frame; $\dot{\vp}_{t,\text{feet}}\in\R^{12}$ stacks the linear velocities of the four feet (three components each); and $\vtau^{\mathrm{cmd}}_t\in\R^{n_q}$ contains the commanded joint torques.

Hence we design the experiments to compare models of similar footprint in number of parameters, while the loss used for training differs between the \gls{ncp} and \gls{encp} models w.r.t to the \gls{cqr} and \gls{ecqr} models.


\begin{figure}[!htbp]
	\centering
	\includegraphics[width=\textwidth]{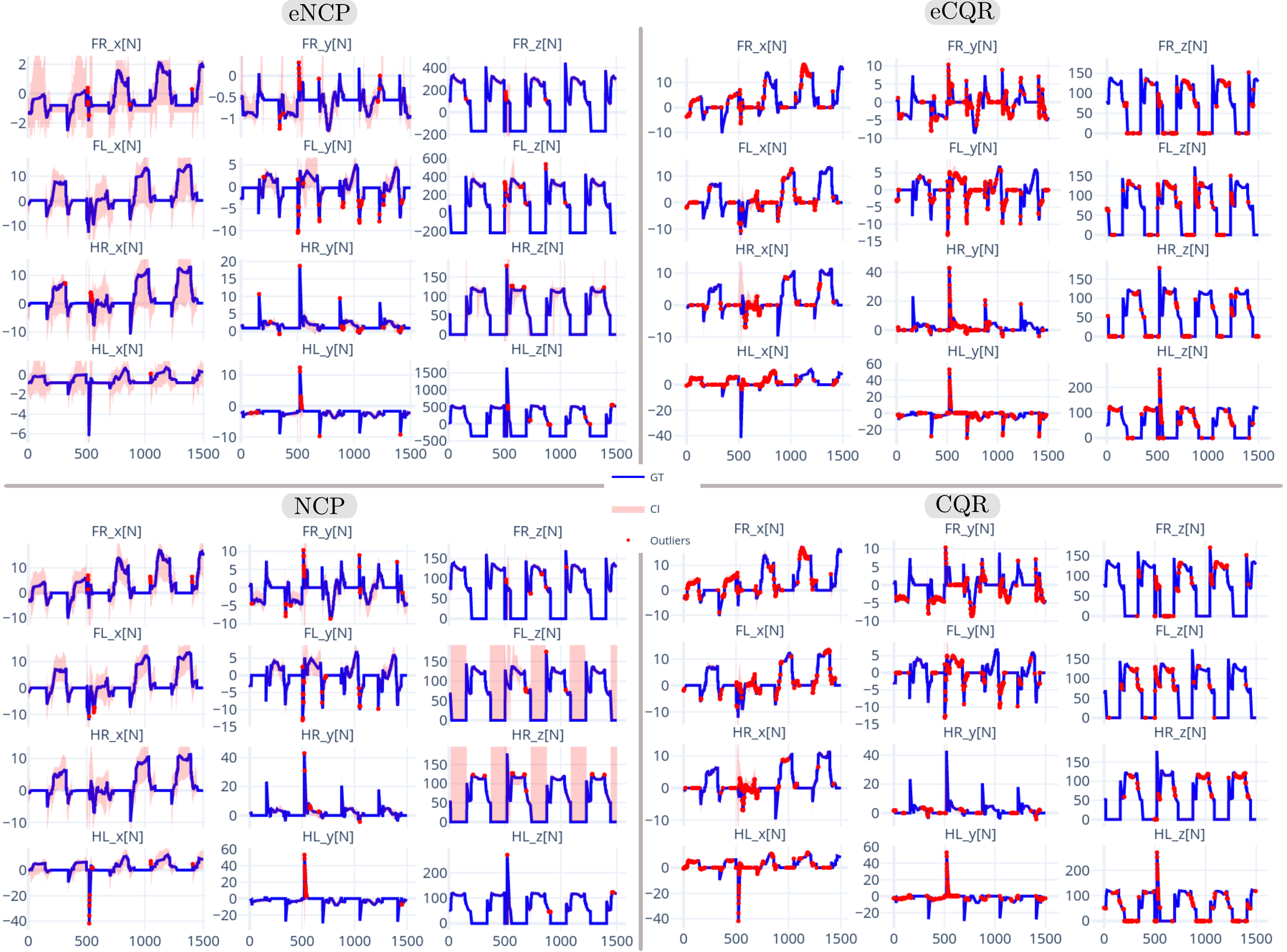}
	\vspace*{-0.5cm}
	\caption{
		Prediction of 90\% \glspl{ci} for ground-reaction forces $\vtau_{\text{grf}} \in \R^{12}$ of a quadruped robot on rough terrain with varying friction. We compare \gls{encp}, \gls{ncp}, \gls{ecqr}, and \gls{cqr} models. CIs are computed for each leg—front-right (FR), front-left (FL), hind-right (HR), hind-left (HL)—along $x$, $y$, $z$ axes. Forces outside CIs are red, within CIs are blue. Terrain variations cause significant variability in $x$/$y$ components due to surface orientation and friction differences, while $z$ components are influenced by local height changes affecting contact timing and producing short-duration high-impact forces.
	}
	\label{fig:quadruped_grf_uc}
\end{figure}

\paragraph{NN Architectures}
We configure all models (\gls{encp}, \gls{ncp}, \gls{ecqr}, and \gls{cqr}) with similar inference-time \gls{nn} architectures. The backbone consists of three hidden layers with 512 units each, followed by a final 128-unit layer that encodes the feature vector $r$ for \gls{ncp} and \gls{encp} models. Due to $\G$-equivariance weight sharing, \gls{encp} and \gls{ecqr} have $\times 2$ fewer trainbale parameters than their symmetry-agnostic counterparts.

\paragraph{Results.}
Given sensory input $\rvx$, the model predicts a set $\sC_{0.95}(\rvx)\subseteq\R^{14}$ satisfying $\PP(\rvy\in\sC_{0.95}(\rvx)\mid\rvx)\approx0.95$, while minimizing its volume $\EE_{\rvx}[\operatorname{vol}(\sC_{0.95}(\rvx))]$. High coverage implies that the true $\G$-invariant kinetic energy, instantaneous mechanical work, and $\G$-equivariant 12-dimensional ground-reaction forces lie within the predicted confidence set. Relaxed coverage (r-Coverage) quantifies the reliability of estimates on a per-dimension basis. \Cref{tab:uc_robotics_coverage_set_size_full} summarizes validation and test results for all models, and \cref{fig:quadruped_grf_uc} illustrates a trajectory of \gls{grf} with respective $90\%$ confidence intervals. Both \gls{cqr} and \gls{ecqr} produce smaller \glspl{ci} but fail to achieve desired coverage on the test set, implying unreliable \glspl{ci} requiring further calibration through retraining or conformal calibration \citep{feldman2023calibrated}. In contrast, the \gls{encp} model achieves desired coverage on the test set while producing larger confidence intervals, hence yielding reliable uncertainty estimates.

Note that we omitted the use of conformal calibration on the \gls{cqr} estimates to ensure a fair comparison between the parametric uncertainty estimates, considering that conformal calibration is model-agnostic and applicable to \gls{encp} as well.

\begin{table}[!htbp]
	\centering
	\small
		\begin{tabular}{lcccccc}
			\small
			           & \multicolumn{3}{c}{Validation}       & \multicolumn{3}{c}{Test}                                                                                                                                                                                               \\
			\cmidrule(lr){2-4} \cmidrule(lr){5-7}
			           & r-Coverage $\uparrow$                & Coverage $\uparrow$                  & Set Size $\downarrow$                           & r-Coverage $\uparrow$                & Coverage $\uparrow$                  & Set Size $\downarrow$                           \\
			\midrule
			\gls{encp} & $\textcolor{blue}{99.3 {\pm} 0.0\%}$ & $\textcolor{blue}{94.1 {\pm} 0.4\%}$ & $2.4 {\pm} 0.4{\times}10^{10}$                  & $\textcolor{blue}{99.5 {\pm} 0.1\%}$ & $\textcolor{blue}{95.0 {\pm} 0.4\%}$ & $4.3 {\pm} 3.6{\times}10^{9}$                   \\
			\gls{ncp}  & $96.4 {\pm} 0.0\%$                   & $56.9 {\pm} 0.1\%$                   & $3.9 {\pm} 4.5{\times}10^{10}$                  & $\textcolor{blue}{99.5 {\pm} 0.0\%}$ & $56.9 {\pm} 0.3\%$                   & $2.6 {\pm} 1.4{\times}10^{10}$                  \\
			\gls{ecqr} & $70.7 {\pm} 0.6\%$                   & $7.3 {\pm} 1.7\%$                    & $\textcolor{blue}{3.7 {\pm} 2.6{\times}10^{8}}$ & $84.2 {\pm} 0.7\%$                   & $6.7 {\pm} 1.2\%$                    & $\textcolor{blue}{1.7 {\pm} 1.7{\times}10^{7}}$ \\
			\gls{cqr}  & $67.6 {\pm} 1.8\%$                   & $7.6 {\pm} 0.4\%$                    & $2.5 {\pm} 2.4{\times}10^{9}$                   & $80.5 {\pm} 3.7\%$                   & $8.5 {\pm} 0.9\%$                    & $1.4 {\pm} 0.1{\times}10^{8}$                   \\
		\end{tabular}
	\caption{
		Validation and test metrics for $95\%$ \glspl{ci} on quadruped robot observables traversing rough terrain (see \cref{app:exp_results_uncertainty_quantification_robot}). Metrics: \textbf{(i)} relaxed coverage (r-Coverage) \eqref{eq:relaxed_coverage}, \textbf{(ii)} coverage \eqref{eq:coverage}, and \textbf{(iii)} set size \eqref{eq:setsize}. Best results in blue. While \gls{ecqr} and \gls{cqr} produce smaller confidence intervals, they fail to achieve expected $95\%$ coverage on validation and test sets. The \gls{encp} model achieves best overall coverage for reliable uncertainty quantification. Importantly, \gls{encp} and \gls{ncp} models can provide \glspl{ci} at any coverage level \textbf{without retraining}, whereas \gls{cqr} and \gls{ecqr} require retraining for each new level.
	}
	\label{tab:uc_robotics_coverage_set_size_full}
\end{table}

\subsection{Training and Inference Computational Costs}
\label{sec:train_inference_compt_cost}

We compare the training and inference cost of \gls{encp} and \gls{ncp} across symmetry groups of increasing complexity. All timings are averaged over $200$ passes after warmup, using a fixed batch size, fixed data dimensionality $(\dim\vsX,\dim\vsY)$, and encoder networks $\vbfxp$ and $\vbfyp$ parameterized by three-hidden-layer \gls{mlp}/\gls{emlp} architectures.

\begin{table}[!htbp]
	\centering
	\small
	\resizebox{\linewidth}{!}{
		\begin{tabular}{lcccccc}
			\small
			Group [Order] & Iso. subspaces & Model & No. trainable params (k) $\downarrow$ & Train forward (ms) $\downarrow$ & Train backward (ms) $\downarrow$ & Val. forward (ms) $\downarrow$ \\
			\midrule
			$\sI_h$ [$60$]              & $5$ & \gls{encp} & $9.2~(60\times\ \text{fewer})$  & $3.067 \pm 0.105~(3.7\times)$ & $2.237 \pm 0.084~(2.5\times)$ & $0.741 \pm 0.091~(1.1\times)$ \\
			$\sI_h$ [$60$]              & $5$ & \gls{ncp}  & $550.8$                          & $0.825 \pm 0.032$             & $0.896 \pm 0.051$             & $0.653 \pm 0.080$             \\
			$\DihedralGroup[10]$ [$20$] & $8$ & \gls{encp} & $16.7~(20\times\ \text{fewer})$ & $3.761 \pm 0.128~(5.4\times)$ & $3.185 \pm 0.109~(4.1\times)$ & $0.487 \pm 0.030~(1.2\times)$ \\
			$\DihedralGroup[10]$ [$20$] & $8$ & \gls{ncp}  & $333.2$                          & $0.692 \pm 0.063$             & $0.773 \pm 0.046$             & $0.415 \pm 0.025$             \\
			$\CyclicGroup[10]$ [$10$]   & $6$ & \gls{encp} & $28.7~(10\times\ \text{fewer})$ & $4.250 \pm 0.492~(5.6\times)$ & $2.707 \pm 0.269~(3.4\times)$ & $0.450 \pm 0.033~(1.2\times)$ \\
			$\CyclicGroup[10]$ [$10$]   & $6$ & \gls{ncp}  & $287.3$                          & $0.760 \pm 0.298$             & $0.797 \pm 0.161$             & $0.382 \pm 0.028$             \\
			$\CyclicGroup[2]$ [$2$]     & $2$ & \gls{encp} & $139.3~(2\times\ \text{fewer})$ & $2.315 \pm 0.705~(3.2\times)$ & $1.540 \pm 0.552~(2.2\times)$ & $0.353 \pm 0.014~(1.2\times)$ \\
			$\CyclicGroup[2]$ [$2$]     & $2$ & \gls{ncp}  & $278.5$                          & $0.721 \pm 0.167$             & $0.710 \pm 0.114$             & $0.301 \pm 0.011$             \\
		\end{tabular}
	}
	\caption{
		Training and inference runtime comparison between \gls{encp} and \gls{ncp} across symmetry groups with different orders and isotypic structures. Values report the mean $\pm$ standard deviation over $200$ passes after warmup. Parenthesized factors in the \gls{encp} rows are measured relative to the corresponding \gls{ncp} runtime or parameter count; for example, $(3.7 x)$ indicates that the metric is $3.7$ times larger for \gls{encp} than for \gls{ncp}.
	}
	\label{tab:train_inference_computational_costs}
\end{table}

\paragraph{Training Time Overhead}
\Cref{tab:train_inference_computational_costs} shows that \gls{encp} incurs a per-batch training overhead of roughly $3\times$--$6\times$ in the forward pass and $2.2\times$--$4.1\times$ in the backward pass. This overhead arises from the basis expansions in the equivariant layers (common to all equivariant \gls{nn} implementations) and from the symmetry-aware estimation of means, covariances, and cross-covariances in the loss \eqref{eq:disentangled_contrastive_loss}. Importantly, the computational overhead is governed by the number of isotypic subspaces rather than the order of the group, given that several algebraic operations are performed sequentially per each isotypic subspace. 

\paragraph{Inference Time Overhead}
At inference time, the equivariant \gls{nn} can be run as a standard \gls{nn} resulting in a minimal inference overhead \citep{ordonez2026symmlearning}, around $1.1\times$--$1.2\times$ across all groups. This mild increase is due to the additional change-of-basis layer used to expose the isotypic decomposition in the equivariant encoder (see \cref{remark:isotypic_decomposition}). Despite the per-batch overhead, \gls{encp} converges substantially faster: as shown in \cref{fig:synthetic_regression_results_aug_training_curves}, it reaches a lower validation loss before epoch $500$ than \gls{ncp} achieves after $2500$ epochs, corresponding to roughly $5\times$ fewer training batches to reach a better solution. Combined with the $2\times$--$60\times$ reduction in trainable parameters, this yields a favorable overall computational trade-off.

\FloatBarrier

\section{Conditional Probability Modeling via the Conditional Expectation Operator}
\label{sec:conditional_expectation_operator}
This section introduces the modelling of conditional probabilities for two random variables via the \textbf{conditional expectation operator}. Our goal is to understand conditional expectation from an operator‑theoretic perspective. We begin by describing the marginal, joint, and conditional probabilities of the random variables within a measure‑theoretic framework. This discussion extends the exposition of \citet{kostic2024neural}.

Given two random variables $(\rvx, \rvy)$ taking values in the measure spaces $(\vsX, \Sigma_{\vsX}, \mux)$ and $(\vsY, \Sigma_{\vsY}, \muy)$, we have that the marginal probability of any set $\sA \in \Sigma_\vsX$ and $\sB \in \Sigma_\vsY$ are given by
\begin{equation}
	\small
	\PP(\rvx \in \sA) = \int_{\vsX} \one_{\sA}(\vx) \mux(d\vx) = \int_{\sA} \mux(d\vx)
	\quad \text{and} \quad
	\PP(\rvy \in \sB) = \int_{\vsY} \one_{\sB}(\vy) \muy(d\vy) = \int_{\sB} \muy(d\vy),
\end{equation}
where $\one_{\sA} \in \LpxFull$ and $\one_{\sB} \in \LpyFull$ denote the characteristic functions of sets $\sA$ and $\sB$, respectively.

Furthermore, under the reasonable assumption that the joint probability measure is absolutely continuous w.r.t to the product of the marginals $\muxy \ll \mux \times \muy$, we have that there exist a Radon-Nikodym derivative $\pmd:\vsX\times\vsY \to \R_+$ such that $\muxy(d\vx,d\vy) = \pmd(\vx, \vy) \mux(d\vx) \muy(d\vy)$. Note that $\pmd$ is a kernel function that pointwise deforms the product of the marginals to produce the joint distribution \cite{sugiyama2012density} (see \cref{fig:symmetry_priors_long}). This kernel function enable us to express the joint probability by:
\begin{equation}
	\label{eq:joint_probability_measure}
	\small
	\PP( \rvx \in \sA, \rvy \in \sB)
	{=}
	\int_{\vsX\times\vsY} \one_{\sA}(\vx) \one_{\sB}(\vy)
	\underbrace{\pmd(\vx,\vy) \muy(d\vy) \mux(d\vx)}_{\muxy(d\vx,d\vy)}
	{=}
	\int_{\sA\times\sB} \pmd(\vx,\vy) \mux(d\vx) \muy(d\vy) .
\end{equation}
Furthermore, given that $\PP(\rvy {\in} \sB \vert \rvx {\in} \sA) = \sfrac{\PP(\rvx {\in} \sA, \rvy {\in} \sB)}{\PP(\rvx \in \sA)}$, the conditional probability of any set $\sB \in \Sigma_{\vsY}$ given a value of the random variable $\rvx {=} \vx$ is given by:
\begin{equation}
	\label{eq:conditional_probability_measure}
	\small
	\PP(\rvy {\in} \sB \vert \rvx {=} \vx)
	{=}  \int_{\vsY} \one_{\sB}(\vy) \muycondx(d\vy \vert \vx)
	{=} \int_{\vsY} \one_{\sB}(\vy) \pmd(\vx, \vy) \muy(d\vy) = \int_{\sB} \pmd(\vx, \vy) \muy(d\vy),
\end{equation}
where $\muycondx: \Sigma_\vsY \times \vsX \mapsto [0,1]$ denotes the \textbf{conditional probability measure}. This is a well-defined probability measure considering that:
\begin{equation*}
	\small
	\PP(\rvx \in \sA) := \PP(\rvx \in \sA, \rvy \in \vsY)
	=
	\int_{\sA} \
	\underbrace{
		\bigg(
		\int_{\vsY} \pmd(\vx,\vy) \muy(d\vy)
		\bigg)
	}_{\E \muycondx(d\rvy\vert\rvx{=}\vx) = 1 \;\; \forall \vx \in \vsX}
	\mux(d\vx)
	=
	\int_{\sA} 
	\mux(d\vx).
\end{equation*}

\paragraph{The Operator Perspective} Every measurable function $h \in \LpyFull$ can be approximated by simple functions—that is, as a combination of characteristic functions on measurable sets: $h(\cdot) \approx \sum_{i\in\N} \ebcoeffy_i \one_{\sA_i}(\cdot)$. Thus, \cref{eq:conditional_probability_measure} is a special case of the more general problem of approximating the conditional expectation of any function $h \in \LpyFull$ given $\rvx$. This conditional expectation is captured by the action of a linear integral operator:

\begin{definition}[Conditional expectation operator]
	\label{def:conditional_expectation_operator}
	Let $(\rvx,\rvy)$ be two random variables defined on the measure spaces $(\vsX,\Sigma_{\vsX},\mux)$ and $(\vsY,\Sigma_{\vsY},\muy)$, respectively, and let $\LpxFull$ and $\LpyFull$ denote the corresponding spaces of square-integrable functions. The conditional expectation operator $\oE_{\rvy|\rvx}: \LpyFull \to \LpxFull$ is the linear integral operator---defined via the \gls{pmd} Radon–Nikodym derivative
	$
		\pmd(\vx,\vy)=\sfrac{\muxy(d\vx,d\vy)}{\mux(d\vx)\,\muy(d\vy)}
	$
	---which acts on any function $h\in\LpyFull$ by computing its conditional expectation:
	\begin{equation}
		\nonumber
		{\small
		[\oE_{\rvy|\rvx} h](\vx)
		{=}
		\E[h(\rvy) \vert \rvx {=} \vx]
		{:=}
		\int_{\vsY} h(\vy) \muycondx(d\vy\vert\vx)
		{=}
		\int_{\vsY} h(\vy) \tfrac{\muxy(d\vy,\vx)}{\mux(d\vx)}
		{=}
		\int_{\vsY} h(\vy) \pmd(\vx, \vy) \muy(d\vy).
		}
	\end{equation}
\end{definition}

From a learning perspective, approximating the conditional expectation operator sufficiently well for a relevant set of functions in $\LpyFull$ implies that we can approximate the conditional probability measure of any set $\sA \in \Sigma_\vsY$. This enables both regression \textit{and} uncertainty quantification applications with a single model (see \cref{eq:uses_of_conditional_expectation_operator}).

\section{Background on Group and Representation Theory}
\label{sec:group_theory}

\paragraph{Group Actions and Representations}
This section provides a concise overview of the fundamental concepts in group and representation theory, which are used to define the symmetries of the random variables we consider in this work. For a comprehensive background on these topics in finite-dimensional vector spaces, see \citet{weiler2023EquivariantAndCoordinateIndependentCNNs}; for the infinite-dimensional case, consult \citet{Knapp1986}. These concepts will be referenced as needed in the main text. To begin, we define a group as an abstract mathematical object.
\begin{definition}[Group] \label{def:group}
	A group is a set $\G$, endowed with a binary composition operator defined as:
	\begin{subequations}
		\begin{equation}
			\mapping
			{(\Gcomp)}
			{\G \times \G}{\G}
			{(\g_1, \g_2)}
			{\g_1 \Gcomp \g_2,}
			\label{eq:group_operation}
		\end{equation}
		such that the following axioms hold:
		\begin{align}
			\text{Associativity:} & \quad (\g_1 \Gcomp \g_2) \Gcomp \g_3 = \g_1 \Gcomp (\g_2 \Gcomp \g_3), \quad \forall\; \g_1,\g_2,\g_3 \in \G, \label{eq:group_associativity}      \\
			\text{Identity:}      & \quad \exists\; e \in \G \; \text{such that} \; e \Gcomp \g = \g = \g \Gcomp e, \quad \forall\; \g \in \G, \label{eq:group_identity}              \\
			\text{Inverses:}      & \quad \forall\; \g \in \G, \; \exists\; \g^{-1} \in \G \; \text{such that} \; \g \Gcomp \g^{-1} = e = \g^{-1} \Gcomp \g. \label{eq:group_inverse}
		\end{align}
	\end{subequations}
\end{definition}
We are primarily interested in symmetry groups, i.e., groups of transformations acting on a set $\vsX$. Each transformation is a bijection that leaves a fundamental property of the element of the set invariant. For example, if $\vsX$ represents states of a dynamical system, the invariant property is the state energy (see \cref{fig:example_group_action}); if $\vsX$ is a data space, the preserved quantity is typically the probability density/distribution (see \cref{fig:symmetry_priors_long}).
\begin{definition}[Group action on a set \citep{weiler2023EquivariantAndCoordinateIndependentCNNs}]
	\label[definition]{def:left_group_action}
	Let $\vsX$ be a set endowed with symmetry group $\G$. The (left) group action of the group $\G$ on the set $\vsX$ is a map:
	\begin{subequations}
		\begin{align}
			\mapping
			{(\Glact)}
			{ \G \times \vsX}{\vsX}
			{(\g,\vx)}
			{\g \Glact \vx}
		\end{align}
		that is compatible with the group composition and identity element $e \in \G$, in the sense that:
		\begin{align}
			\text{Identity:}      & \quad e \Glact \vx = \vx,                                            & \quad \forall\; \vx \in \vsX \label{eq:symmetry_action_on_set_identity}                                    \\
			\text{Associativity:} & \quad (\g_1 \Gcomp \g_2) \Glact \vx = \g_1 \Glact (\g_2 \Glact \vx), & \quad \forall\; \g_1, \g_2 \in \G, \forall\; \vx \in \vsX. \label{eq:symmetry_action_on_set_associativity}
		\end{align}
	\end{subequations}
\end{definition}

These sets of bijections describe structural properties of the set $\vsX$. To study these properties, we will frequently refer to the group of symmetry related elements of to a given element $\vx \in \vsX$ as its \emph{group orbit} of $\vx$:

\begin{definition}[Group orbit]
	\label{def:group_orbit}
	Let $\vx$ be an element of the set $\vsX$ endowed with symmetry group $\G$. The group orbit of $\vx$ is the the set of all symmetry related set elements, denoted by:
	\[
		\G \vx := \{ \g \Glact[\vsX] \vx \;|\; \g \in \G\} \equiv \{ \vx \Gract[\vsX] \g \;|\; \g \in \G\}.
	\]
\end{definition}

For our practical purposes, we focus on symmetry transformations acting on sets with a vector space structure. In most cases, the group action on a vector space is linear, which allows us to express symmetry transformations as (orthogonal) matrix-vector  operations, once a basis for the space is chosen.

\begin{definition}[Linear group representation]
	\label{def:group_representation}
	Let $\vsX$ be a vector space endowed with symmetry group $\G$. A \emph{linear representation} of  $\G$ on $\vsX$ is a map, denoted by $\rep[\vsX]$, between symmetry transformation and invertible linear maps on $\vsX$ (i.e., elements of the general linear group $\GLGroup(\vsX)$):
	\begin{subequations}
		\begin{equation}
			\mapping{\rep[\vsX]}{\G}{\GLGroup(\vsX)}{\g}{\rep[\vsX](\g),}
		\end{equation}
		such that the following properties hold:
		\begin{align}
			\text{composition} & : \rep[\vsX](\g_1 \Gcomp \g_2) = \rep[\vsX](\g_1) \rep[\vsX](\g_2),
			                   & \quad \forall\; \g_1, \g_2 \in \G,
			\label{eq:group_representation_composition}                                                                                                                   \\
			\text{inversion}   & : \rep[\vsX](\g^{-1}) = \rep[\vsX](\g)^{-1}, \quad                  & \forall\; \g \in \G. \label{eq:group_representation_invertibility} \\
			\text{identity}    & :  \rep[\vsX](g \Gcomp \g^{-1}) = \rep[\vsX](e) = \mI,              &
			\label{eq:group_representation_identity}
		\end{align}\label{eq:group_representation_properties}
		Whenever the vector space is of finite dimension $n < \infty$, linear maps admit a matrix form  $\rep[\vsX](\g) \in \R^{n \times n}$, once a basis set $\bSet{\vsX}$ for the vector space $\vsX$ is chosen. In this case, \cref{eq:group_representation_composition,eq:group_representation_invertibility,eq:group_representation_identity} show how the composition and inversion of symmetry transformations translate to matrix multiplication and inversion, respectively. Moreover, $\rep[\vsX]$ allows to express a (linear) group action (\cref{def:left_group_action}) as a matrix-vector multiplication:
		\begin{align}
			\mapping
			{(\Glact)}
			{ \G \times \vsX}{\vsX}
			{(\g,\vx)}
			{\g \Glact \vx := \rep[\vsX](\g) \vx.}
		\end{align}
	\end{subequations}
\end{definition}

We will often study linear maps that preserve a vector space's symmetry structure by commuting with the group action. These maps are known as \textbf{endomorphisms} and include all change of basis  and affine transformations that do not break the symmetry.

\begin{definition}[Endomorphism]
	\label{def:endomorphism}
	Let $\vsX$ be a vector space endowed with symmetry group $\G$, with the group action $\Glact[\vsX]: \G \times \vsX \mapsto \vsX$. A linear map $\mA: \vsX \mapsto \vsX$ is said to be an endomorphism if it commutes with the group action, such that:
	\begin{equation*}
		\vcenter{
			\hbox{
				$\rep[\vsX](\g) \mA = \mA \rep[\vsX](\g), \quad \forall\; \g \in \G$
			}
		} \qquad \iff \qquad
		\vcenter{
			\hbox{\homomorphismDiag
				{\vsX}
				{\vsX}
				{\Glact[\vsX]}
				{\Glact[\vsX]}
				{\mA}
			}}
	\end{equation*}
	We will denote the space of all endomorphisms of $\vsX$ as $\Endomorphism[\G]{\vsX}$, such that any $\mA \in \Endomorphism[\G]{\vsX}$ satisfies the above commutation property.
\end{definition}

So far, we have studied symmetric vector spaces $\vsX$ endowed with a group action $\Glact[\vsX]$, which can be represented in matrix form via a group representation $\rep[\vsX]$ once a basis for $\vsX$ is chosen. However, while the choice of basis alters the group representation $\rep[\vsX]$, the underlying group action $\Glact[\vsX]$ remains invariant. This observation leads us to the concept of equivalent group representations.

\begin{definition}[Equivalent group representations]
	\label{def:equivalent_group_representations}
	Let $\vsX$ be a vector space endowed with symmetry group $\G$, and let $\rep[\vsX]'$ and $\rep[\vsX]$ be two group representations of $\G$ on $\vsX$. They are said to be \emph{equivalent}, denoted by $\rep[\vsX]' \sim \rep[\vsX]$, if there exists an invertible change of basis $\mQ \in \Endomorphism[\G]{\vsX}$ such that
	\begin{equation}
		\rep[\vsX]'(\g) = \mQ \rep[\vsX](\g) \mQ^{-1}, \quad \forall\; \g \in \G.
	\end{equation}
	Equivalent representations arise when the same group action $(\Glact): \G \times \vsX \to \vsX$ is expressed in different coordinate frames or bases. For instance, let $\sA_{\vsX}$ and $\sB_{\vsX}$ be two bases for $\vsX = \text{span}(\sA_{\vsX}) = \text{span}(\sB_{\vsX})$, and let $\mQ_{\sA}^{\sB}:\vsX \to \vsX$ denote the change of basis from $\sA_{\vsX}$ to $\sB_{\vsX}$, so that $\vx^\sB = \mQ_{\sA}^{\sB} \vx^{\sA}$ for all $\vx^{\sA} \in \vsX$. Then the group action admits equivalent representations, $\rep[\vsX]^\sA \sim \rep[\vsX]^\sB$, since
	\begin{equation}
		\begin{aligned}
			\g \Glact \vx^\sB            & := \mQ_{\sA}^\sB (\g \Glact \vx^\sA), \qquad \forall \g\in\G,                                                                         \\
			\rep[\vsX]^{\sB}(\g) \vx^\sB & = \mQ_{\sA}^\sB \bigl(\rep[\vsX]^{\sA}(\g) \vx^\sA\bigr) = \Bigl(\mQ_{\sA}^\sB \rep[\vsX]^{\sA}(\g){\mQ_{\sA}^\sB}^{-1}\Bigr)\vx^\sB, \\
			\rep[\vsX]^{\sB}(\g)         & = \mQ_{\sA}^\sB \rep[\vsX]^{\sA}(\g){\mQ_{\sA}^\sB}^{-1}.
		\end{aligned}
	\end{equation}
\end{definition}

To reveal the modular structure of symmetric vector spaces, we often change bases to decompose them into subspaces stable under the action of the group $\G$, termed $\G$-stable subspaces. This decomposition mirrors how a symmetry group can be broken down into products and direct products of smaller groups and is essential for analyzing and simplifying group representations. We introduce the following definition.

\begin{definition}[$\G$-stable and irreducible subspaces]
	\label{def:G_stable_subspace}
	Let $\vsX$ be a vector space endowed with a group action $(\Glact)$ of the symmetry group $\G$. A subspace $\vsX' \subseteq \vsX$ is said to be \emph{$\G$-stable} if the action of any group element on any vector in the subspace remains within the subspace, that is,
	\[
		\g \Glact \vx \in \vsX', \quad \forall\; \vx \in \vsX' \subseteq \vsX , \forall\; \g \in \G.
	\]
	If the only $\G$-stable subspaces of $\vsX$ are $\{\bm{0}\}$ and $\vsX$ itself, then $\vsX$ is a \emph{irreducible $\G$-stable space}. We will denote irreducible $\G$-stable spaces with an over bar, e.g., $\bar{\vsV}$.
\end{definition}

\paragraph{Building Blocks of Symmetric Vector Spaces}
A recurrent theme in this work and in geometric deep learning is to study and process symmetric vector spaces by decomposing them in terms of irreducible $\G$-stable spaces. This process directly corresponds to the decomposition of the associated group representation $\rep[\vsX]$ into smaller representations acting on these $\G$-stable subspaces.

\begin{definition}[Decomposable representation]
	\label{def:decomposable_representation}
	Let $\vsX$ be a vector space with a group action $(\Glact)$ defined by the representation $\rep[\vsX]$ in a chosen basis $\sA_{\vsX}$. The representation is \emph{decomposable} if it is equivalent to a direct sum of two lower-dimensional representations, $\rep[\vsX] \sim \rep[\vsX_1] \oplus \rep[\vsX_2]$, where $\vsX_1$ and $\vsX_2$ are $\G$-stable subspaces of $\vsX$. Equivalently, there exists a change of basis $\mQ_{\sA}^{\sB}: \vsX \to \vsX$ such that
	\begin{equation}
		\nonumber
		\small
		\rep[\vsX]^{\sB} =
		\begin{bsmallmatrix}
			\rep[\vsX_1] & \bm{0} \\
			\bm{0}       & \rep[\vsX_2]
		\end{bsmallmatrix}
		=
		\mQ_{\sA}^{\sB} \rep[\vsX] {\mQ_{\sA}^{\sB}}^{-1},
		\; \text{and} \quad
		\g \Glact \vx^{\sB} := \rep[\vsX]^{\sB}(\g) \vx^{\sB} =
		\begin{bsmallmatrix}
			\rep[\vsX_1](\g) \vx^{\sB}_1 \\
			\rep[\vsX_2](\g) \vx^{\sB}_2
		\end{bsmallmatrix},
		\; \text{where } \mQ_{\sA}^{\sB} \vx =
		\begin{bsmallmatrix}
			\vx^{\sB}_1 \in \vsX_1 \\
			\vx^{\sB}_2 \in \vsX_2
		\end{bsmallmatrix}
	\end{equation}
	Hence, the representation's decomposition $\rep[\vsX] \sim \rep[\vsX_1] \oplus \rep[\vsX_2]$ corresponds to \textbf{decomposing the vector space} into $\G$-stable subspaces, $\vsX = \vsX_1 \oplus \vsX_2$.
\end{definition}
When iteratively applying the decomposition process, we eventually reach representations that cannot be further decomposed. These are known as irreducible representations, or \emph{irreps}, and they serve as the fundamental building blocks for all representations of a compact symmetry group $\G$. From a vector space perspective, the irreducible $\G$-stable subspaces (\cref{def:G_stable_subspace}) associated with these irreps are the elementary subspaces that comprise any symmetric vector space, analogous to how one-dimensional subspaces are the fundamental components of standard vector spaces.

\begin{definition}[Irreducible representation]
	\label{def:irreducible_representation}
	Let $\vsX$ be a vector space endowed with a group action $(\Glact)$ of a symmetry group $\G$. A representation $\rep[\vsX]$ of $\G$ on $\vsX$ is said to be \emph{irreducible} if it cannot be decomposed into smaller representations acting on proper $\G$-stable subspaces (\cref{def:G_stable_subspace}). That is, the only $\G$-stable subspaces $\vsX' \subseteq \vsX$ are $\vsX' = \{\bm{0}\}$ and $\vsX' = \vsX$ itself.

	To differentiate irreps from decomposable representations we will denote the formers and their associated irreducible $\G$-stable spaces with an over bar: $\irrep[\bar{\vsV}] : \G \to \GLGroup(\bar{\vsV})$.
\end{definition}

Crucially, any compact symmetry group $\G$ has a unique set of countably many irreps, denoted by $\{\irrep[\isoIdx]\}_{\isoIdx \in [1, \isoNum]}$, where $\isoIdx$ denotes the irrep type and $\isoNum \leq |\G|$ denotes the number of unique irreps of $\G$. A fundamental property of these irreps is that any two non-equivalent irreps act on vector spaces that are mutually \textbf{orthogonal}. This implies that whenever we decompose symmetric vector spaces into their irreducible subspaces we are inherently decomposing the space into orthogonal $\G$-stable subspaces, which will greatly simplify numerical and theoretical analyses. Formally, these orthogonality relations are a consequence of Schur's lemma, which we state below in its original form for the case of complex irreps and discuss its adaptation to real irreps.

\begin{lemma}[Schur's Lemma for unitary (complex) representations {\citep[Prop 1.5]{Knapp1986}}]
	\label{lemma:schursLemma}
	Consider two \emph{complex} Hilbert spaces, $\vsH$ and $\vsH'$, endowed with the (\emph{complex}) irreducible unitary representations $\irrepC{} : \G \mapsto \UG[\vsH]$ and $\irrepC'{} : \G \mapsto \UG[\vsH']$, respectively. Let $\mT: \vsH \mapsto \vsH'$ be a linear map commuting with the group actions, such that $\mT \in \homomorphism[\G]{\vsH}{\vsH'}$. Then, if the irreducible representations are not equivalent, i.e., $\irrepC{} \nsim \irrepC'{}$, then $\mT$ is the trivial (or zero) map. Conversely, if $\irrepC{} \sim \irrepC'{}$, then $\mT$ is a constant multiple of an isomorphism. Denoting $\mI$ as the identity operator, this can be expressed as:
	\begin{subequations}
		\begin{align}
			\irrepC{} \nsim \irrepC'{} & \iff &  & \bm{0}_{\vsH'} = \mT \vh \mid\forall \; \vh \in \vsH
			\label{eq:schursLemma_non_equivalent}
			\\
			\irrepC{} \sim \irrepC'{}  & \iff &  & \mT = \alpha \mU
			,
			\alpha \in \C,
			\mU \cdot \mU^{H} = \mI
			\label{eq:schursLemma_equivalent}
			\\
			\irrepC{} = \irrepC'{}     & \iff &  & \mT = \alpha \mI, \; \alpha \in \C
			\label{eq:schursLemma_identical_basis}
		\end{align}
	\end{subequations}
\end{lemma}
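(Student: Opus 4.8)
The plan is to reproduce the classical argument, built on two observations: (i) kernels and closures of images of $\G$-equivariant maps are $\G$-stable subspaces, and (ii) on an irreducible unitary representation the only bounded equivariant self-maps are scalars. I would split the proof into the non-equivalent case, a reduction for the equivalent case, and the underlying scalar lemma.

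First, the non-equivalent case. Suppose $\oT \neq \bm{0}$. Since $\irrep[\vsH'](g)\,\oT = \oT\,\irrep[\vsH](g)$ for all $g$, the kernel $\ker\oT$ is a closed $\G$-stable subspace of $\vsH$ and $\overline{\operatorname{im}\oT}$ is a closed $\G$-stable subspace of $\vsH'$ (unitarity of $\irrep[\vsH']$ keeps the closure stable). Irreducibility of $\irrep[\vsH]$ forces $\ker\oT = \{\bm{0}\}$, and irreducibility of $\irrep[\vsH']$ forces $\overline{\operatorname{im}\oT} = \vsH'$. In finite dimensions $\oT$ is then a $\G$-equivariant bijection, so $\irrep[\vsH]\sim\irrep[\vsH']$, contradicting the hypothesis; hence $\oT = \bm{0}$. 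In infinite dimensions I would instead use the polar decomposition $\oT = \oU|\oT|$ with $|\oT| = (\oT^*\oT)^{1/2}$: the positive operator $\oT^*\oT$ commutes with $\irrep[\vsH]$ and is therefore a nonnegative scalar by the scalar lemma below; if that scalar is $0$ then $\oT=\bm{0}$, and if it is positive then a rescaling of $\oT$ is a unitary intertwiner, again giving $\irrep[\vsH]\sim\irrep[\vsH']$ and a contradiction. This also yields the reverse implication of \eqref{eq:schursLemma_non_equivalent}.

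Second, the equivalent case. I would fix a unitary intertwiner $\oU_0$: any invertible intertwiner $\oV$ witnessing $\irrep[\vsH]\sim\irrep[\vsH']$ has $\oV^*\oV$ commuting with $\irrep[\vsH]$, hence equal to $\lambda\oI$ with $\lambda>0$ by the scalar lemma, so $\oU_0 := \lambda^{-1/2}\oV$ is unitary and intertwines. For an arbitrary intertwiner $\oT$, the operator $\oS := \oU_0^{-1}\oT$ is bounded on $\vsH$ and commutes with every $\irrep[\vsH](g)$; the scalar lemma gives $\oS = \alpha\oI$, so $\oT = \alpha\oU_0$, which is \eqref{eq:schursLemma_equivalent}, and when $\irrep[\vsH]=\irrep[\vsH']$ one may take $\oU_0=\oI$, giving \eqref{eq:schursLemma_identical_basis}; the converse directions are immediate, since an operator $\alpha\oU$ with $\alpha\neq 0$ and $\oU$ a unitary intertwiner exhibits the equivalence. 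The scalar lemma itself — a bounded $\G$-equivariant self-map of an irreducible unitary representation is a scalar — I would prove by writing $\oS = \oS_1 + i\oS_2$ with $\oS_1,\oS_2$ self-adjoint and equivariant, then for self-adjoint $\oS$ invoking the spectral theorem $\oS = \int\lambda\,dE(\lambda)$: every spectral projection $E(B)$ lies in the von Neumann algebra generated by $\oS$, hence commutes with the representation, hence has $\G$-stable range, hence range $\{\bm{0}\}$ or $\vsH$; the spectral measure is therefore a point mass and $\oS = \lambda\oI$. In finite dimensions this collapses to picking a complex eigenvalue $\lambda$ of $\oS$ and noting $\ker(\oS-\lambda\oI)$ is a nonzero $\G$-stable subspace, so $\oS = \lambda\oI$.

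The main obstacle is the infinite-dimensional scalar lemma, since a self-adjoint operator on a Hilbert space need not possess eigenvalues; the spectral-theorem argument — equivalently, that the commutant of an irreducible unitary representation is a von Neumann algebra with only trivial projections — is the clean way around this. A secondary, routine care point is to argue throughout with closed images and bounded operators, both automatic in the finite-dimensional setting that covers every example used in the paper, where the elementary eigenvalue version applies directly.
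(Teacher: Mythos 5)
Your proof is correct and is the standard argument for Schur's lemma in the unitary setting (kernel and closure-of-image stability, polar decomposition to reduce the non-equivalent case to the scalar lemma, and the spectral-projection/commutant argument to handle the absence of eigenvalues in infinite dimensions); the paper itself supplies no proof, deferring entirely to \citet{Knapp1986}, so there is nothing for your route to diverge from. Your reading of the biconditionals as quantified over all intertwiners --- so that, e.g., the zero map between equivalent representations does not falsify \eqref{eq:schursLemma_non_equivalent}, and the converse of \eqref{eq:schursLemma_equivalent} requires $\alpha\neq 0$ --- is the right way to make the statement as written literally true.
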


The most common interpretation of Schur's lemma is that whenever the irreps are equivalent, $\irrepC{} \sim \irrepC'{}$, their associated spaces are isomorphic, $\vsH \sim \vsH'$ and $\mT$ is an element of the endomorphism space $\EndomorphismC[\G]{\bar{\vsH}}$, with $\bar{\vsH} \sim \vsH \sim \vsH'$ (see \cref{def:endomorphism}). Consequently, \eqref{eq:schursLemma_equivalent} implies that the endomorphism space is one-dimensional, i.e., $\dim(\EndomorphismC[\G]{\bar{\vsH}}) = 1$, with $\alpha \in \C$ denoting the only degree of freedom, and \eqref{eq:schursLemma_identical_basis} denotes the scenario in which the basis sets for the two spaces are identical.

However, this result holds only for \emph{complex} irreducible representations and requires adaptation for the case of real irreducible representations. The main difference stems from the fact that if $\irrep: \G \to \GLGroup(\bar{\vsV})$ is a real irrep, then the space of (real) endomorphisms, $\Endomorphism[\G]{\bar{\vsV}}$, is no longer one-dimensional, but rather it can be $1$, $2$, or $4$ dimensional, depending on whether $\Endomorphism[\G]{\vsV}$ is isomorphic to the real algebra ($\R = \text{span}\{1\}$), complex algebra ($\sC = \text{span}\{1, i \mid i^2 = -1\}$), or quaternionic algebra ($\sH = \text{span}\{1, i, j, k \mid i^2=j^2=k^2=-1\}$), respectively. Denoting by $\Psi: \sK \to \Endomorphism[\G]{\bar{\vsV}}$ the isomorphism of basis elements of $\sK \in \{\R, \sC, \sH\}$ with the basis elements of $\Endomorphism[\G]{\bar{\vsV}}$, we can summarize the basis sets of the three cases as follows (see \citet[Appendix C]{cesa2022program} for details):

\begin{equation}
	\label{eq:group_representation_endomorphism_basis}
	\begin{aligned}
		\R  & \sim \Endomorphism[\G]{\vsV} = \text{span}\{ \Psi(1) = \Identity_{\dim{\irrepC}} \}
		\\
		\sC & \sim \Endomorphism[\G]{\vsV}
		=
		\text{span}\{
		\Psi(1) =
		\begin{bsmallmatrix}
			\Identity_{n} & \bm{0} \\
			\bm{0} & \Identity_{n}
		\end{bsmallmatrix}
		,
		\Psi(i) =
		\begin{bsmallmatrix}
			\bm{0} & -\Identity_{n}  \\
			\Identity_{n} & \bm{0}
		\end{bsmallmatrix} \}
		\\
		\sH & \sim \Endomorphism[\G]{\vsV}
		=
		\text{span}\left\{
		\begin{aligned}
			\Psi(1) & =
			\begin{bsmallmatrix}
				\Identity_{n} & \bm{0} & \bm{0} & \bm{0} \\
				\bm{0} & \Identity_{n} & \bm{0} & \bm{0} \\
				\bm{0} & \bm{0} & \Identity_{n} & \bm{0} \\
				\bm{0} & \bm{0} & \bm{0} & \Identity_{n}
			\end{bsmallmatrix},
			\Psi(i)  =
			\begin{bsmallmatrix}
				\bm{0} & \bm{0} & -\Identity_{n} & \bm{0}  \\
				\bm{0} & \bm{0} & \bm{0} & -\Identity_{n}  \\
				\Identity_{n} & \bm{0} & \bm{0} & \bm{0}  \\
				\bm{0} & \Identity_{n} & \bm{0} & \bm{0}
			\end{bsmallmatrix}
			\\
			\Psi(j) & =
			\begin{bsmallmatrix}
				\bm{0} & -\Identity_{n} & \bm{0} & \bm{0}  \\
				\Identity_{n} & \bm{0} & \bm{0} & \bm{0}  \\
				\bm{0} & \bm{0} & \bm{0} & \Identity_{n}  \\
				\bm{0} & \bm{0} & -\Identity_{n} & \bm{0}
			\end{bsmallmatrix},
			\Psi(k)  =
			\begin{bsmallmatrix}
				\bm{0} & \bm{0} & \bm{0} & -\Identity_{n}  \\
				\bm{0} & \bm{0} & \Identity_{n} & \bm{0}  \\
				\bm{0} & -\Identity_{n} & \bm{0} & \bm{0}  \\
				\Identity_{n} & \bm{0} & \bm{0} & \bm{0}
			\end{bsmallmatrix}
		\end{aligned}
		\right\}
	\end{aligned}
\end{equation}

While this result might appear complex, its interpretation is straightforward: given a $\G$-stable irreducible space $\bar{\vsV}$, the space of linear maps from the space to itself that preserve the symmetry structure consists of \emph{linear transformations that scale all dimensions of $\bar{\vsV}$ uniformly, and possibly rotate or reflect the space}. Algebraically, this implies that any element of the algebra has a unique singular space, with a single singular value determined by the element's coefficients in the basis of \eqref{eq:group_representation_endomorphism_basis}. We summarize this result in the following proposition.

\usetikzlibrary{calc}
\begin{figure}[tb!]
	\centering
	\resizebox{0.65\linewidth}{!}{%
		\begin{tikzpicture}[x=1.0cm, y=1.0cm]
			\begin{scope}[shift={(-3,0)}]
				\draw[gray!40] (-1.5,0) -- (1.5,0);
				\draw[gray!40] (0,-1.5) -- (0,1.5);
				\draw[gray!20] (0,0) circle (0.5);

				\coordinate (p0) at (0.5,0);
				\coordinate (p1) at (-0.25,0.433013);
				\coordinate (p2) at (-0.25,-0.433013);

				\fill[blue!20, draw=blue, thick] (p0) circle (2.5pt);
				\fill[green!20, draw=green!70!black, thick] (p1) circle (2.5pt);
				\fill[orange!20, draw=orange!80!black, thick] (p2) circle (2.5pt);

			\end{scope}

			\draw[->, thick, black] (-1.2,0) -- (1.2,0);
			\node[above, scale=0.9] at (0,0.3) {\small$\mA = a\,\Psi(1) + b\,\Psi(i)$};

			\begin{scope}[shift={(3,0)}]
				\draw[gray!40] (-1.8,0) -- (1.8,0);
				\draw[gray!40] (0,-1.8) -- (0,1.8);

				\draw[gray!10, opacity=0.7] (0,0) circle (0.5);
				\fill[gray!10, draw=gray!30, opacity=0.7] (0.5,0) circle (2.5pt);
				\fill[gray!10, draw=gray!30, opacity=0.7] (-0.25,0.433013) circle (2.5pt);
				\fill[gray!10, draw=gray!30, opacity=0.7] (-0.25,-0.433013) circle (2.5pt);

				\def\phi{30}   
				\def\s{2.5}    
				\draw[gray!20] (0,0) circle ({0.5*\s});

				\coordinate (q0) at ({0.5*\s*cos(\phi)},{0.5*\s*sin(\phi)});
				\coordinate (q1) at ({(-0.25*cos(\phi) - 0.433013*sin(\phi))*\s},{(-0.25*sin(\phi) + 0.433013*cos(\phi))*\s});
				\coordinate (q2) at ({(-0.25*cos(\phi) + 0.433013*sin(\phi))*\s},{(-0.25*sin(\phi) - 0.433013*cos(\phi))*\s});

				\fill[blue!20, draw=blue, thick] (q0) circle (2.5pt);
				\fill[green!20, draw=green!70!black, thick] (q1) circle (2.5pt);
				\fill[orange!20, draw=orange!80!black, thick] (q2) circle (2.5pt);
			\end{scope}
		\end{tikzpicture}
	}
	\caption{
		Example of an endomorphism acting on a $\CyclicGroup[3]$-stable irreducible $2$D space. The irreducible representation is of complex type, with endomorphism space $\Endomorphism[{\CyclicGroup[3]}]{\R^2} \sim \mathbb{C} = \text{span}\{1, i\}$, comprising all transformations that uniformly scale and rotate/reflect the plane.
	}
	\label{fig:ex_c3-endos}
\end{figure}

\begin{proposition}[A real endomorphism has a single singular space]
	\label{prop:endomorphism_uniform_singular_values}
	Let $\G$ be a compact symmetry group,  $(\irrep, \bar{\vsV})$ be an irreducible representation and its associated $\G$-stable space, and let
	$\Endomorphism[\G]{\bar{\vsV}}$ denote the space endomorphism algebra.
	Then every $\mA \in \Endomorphism[\G]{\bar{\vsV}}$ admits an \gls{svd}
	\[
		\mA = \mU \,\gamma\,\Identity_{\irrepDim[]} \mV^\top,
	\]
	where $\gamma \in \R_{\ge 0}$ is the \emph{single} singular value, repeated with multiplicity
	$\irrepDim[] = |\irrep[]| = |\vsV_{\isoIdx}|$.
	The right singular basis $\mV$ can, without loss of generality, be taken as the canonical
	orthonormal basis of $\bar{\vsV}$, while the left singular basis is then
	$\mU = \gamma^{-1}\mA \mV$, which is an orthogonal rotation/reflection of~$\mV$.

	\begin{enumerate}
		\item[\textbf{($\R$)}] \textbf{Real case.}
			Any $\mA \in \Endomorphism[\G]{\bar{\vsV}}$ is of the form
			\[
				\mA = a\,\Psi(1) = a\,\Identity_{\irrepDim[]},
				\qquad a\in\R.
			\]
			Hence
			\[
				\mA^\top \mA = a^2 \Identity_{\irrepDim[]}, \qquad
				\sigma(\mA) = \bigl\{ |a| \bigr\}^{\times \irrepDim[]}.
			\]

		\item[\textbf{($\C$)}] \textbf{Complex case.}
			Every element can be written as
			\[
				\mA = a\,\Psi(1) + b\,\Psi(i)
				=
				\begin{bsmallmatrix}
					a\Identity_{n} & -b\Identity_{n}\\
					b\Identity_{n} &  \;\,a\Identity_{n}
				\end{bsmallmatrix},
				\qquad a,b\in\R.
			\]
			Using $\Psi(i)^\top = -\Psi(i)$ and $\Psi(i)^\top\Psi(i) = \Identity$,
			\[
				\mA^\top \mA = (a^2 + b^2)\,\Identity_{\irrepDim[]}, \qquad
				\sigma(\mA) = \bigl\{ \sqrt{a^2 + b^2} \bigr\}^{\times \irrepDim[]}.
			\]

		\item[\textbf{($\sH$)}] \textbf{Quaternionic case.}
			Each element admits the expansion
			\[
				\mA = a\,\Psi(1) + b\,\Psi(i) + c\,\Psi(j) + d\,\Psi(k),
				\qquad a,b,c,d\in\R,
			\]
			where $\Psi(i),\Psi(j),\Psi(k)$ are the quaternionic structure matrices from
			\eqref{eq:group_representation_endomorphism_basis},
			satisfying $\Psi(\alpha)^\top = -\Psi(\alpha)$,
			$\Psi(\alpha)^2 = -\Identity$, and $\Psi(\alpha)^\top \Psi(\alpha) = \Identity$,
			with the usual anti-commutation rules.
			Consequently,
			\[
				\mA^\top \mA = (a^2 + b^2 + c^2 + d^2)\,\Identity_{\irrepDim[]}, \qquad
				\sigma(\mA) = \bigl\{ \sqrt{a^2 + b^2 + c^2 + d^2} \bigr\}^{\times \irrepDim[]}.
			\]
	\end{enumerate}
\end{proposition}

As an intuitive low-dimensional example, we can consider the case of a 2D rotational irrep of the cyclic group $\CyclicGroup[3]$. The dimension of the irreducible $\G$-stable subspace is $\|bar{vsV}| = 2$, and the irreducible representation is of complex type, $\irrep: \CyclicGroup[3] \to \GLGroup(\bar{\vsV})$, with $\Endomorphism[\G]{\bar{\vsV}} \sim \sC = \text{span}\{1, i\}$ denoting the space of all rotations/reflections and uniform scaling of the plane, see \cref{fig:ex_c3-endos}.

\subsection{Maps between Symmetric Vector Spaces} \label{sec:group_equivariant_maps}

We will frequently study and use linear and non-linear maps between symmetric vector spaces. Our focus is on maps that preserve entirely or partially the group structure of the vector spaces. These types of maps can be classified as $\G$-equivariant, $\G$-invariant maps:

\begin{definition}[$\G$-equivariant and $\G$-invariant maps]
	\label{def:equivariantMaps}
	Let $\vsX$ and $\vsY$ be two vector spaces endowed with the same symmetry group $\G$, with the respective group actions $\Glact_\vsX$ and $\Glact[\vsY]$. A map $f: \vsX \mapsto \vsY$ is said to be $\G$-equivariant if it commutes with the group action, such that:
	\begin{subequations}
		\begin{equation}
			\small
			\begin{split}
				\small
				\g \Glact[\vsY] \vy = \g \Glact[\vsY] f(\vx) &= f (\g \Glact[\vsX] \vx), \quad \forall \vx \in \vsX, \g \in \G. \\
				\rep[\vsY](\g) f(\vx) &= f(\rep[\vsX](\g) \vx)
			\end{split}
			\qquad \vcenter{\hbox{$\iff$}}
			\qquad
			\vcenter{\hbox{
					\homomorphismDiag
					{\vsX}
					{\vsY}
					{\Glact[\vsX]}
					{\Glact[\vsY]}
					{f}
				}}
		\end{equation}

		A specific case of $\G$-equivariant maps are the $\G$-invariant ones, which are maps that commute with the group action and have trivial output group actions $\Glact[\vsY]$ such that $\rep[\vsY](\g) = \mI$ for all $\g \in \G$. That is:
		\begin{equation}
			\label{eq:invariant_maps}
			\small
			\begin{split}
				\small
				\vy = \g \Glact[\vsY] f(\vx) &= f(\g \Glact[\vsX] \vx), \quad \forall \vx \in \vsX, \g \in \G. \\
				\vy = \rep[\vsY](\g) f(\vx) & = f(\rep[\vsX](\g) \vx)
			\end{split}
			\qquad \vcenter{\hbox{$\iff$}}
			\qquad
			\vcenter{\hbox{
					\invariantDiag
					{\vsX}
					{\vsY}
					{\Glact[\vsX]}
					{f}
					{\Glact[\vsY]}
				}}
		\end{equation}
	\end{subequations}
\end{definition}

\paragraph{Structure of $\G$-Equivariant Linear Maps} \label{sec:group_equivariant_linear_maps}

\begin{definition}[Homomorphism and Isomorphism]
		\label{def:homomorphism_isomorphism}
	Let $\vsX$ and $\vsY$ be two vector spaces endowed with the same symmetry group $\G$, with the respective group actions $\Glact[\vsX]: \G \times \vsX \mapsto \vsX$ and $\Glact[\vsY]: \G \times \vsY \mapsto \vsY$. The spaces are said to be $\G$-homomorphic if there exists a linear map $\sA: \vsX \mapsto \vsY$ that commutes with the group action, such that
	$\g \Glact[\vsY] (\mA \vx) = \mA (\g \Glact[\vsX] \vx)$
	for all $\vx \in \vsX$. They are said to be $\G$-isomorphic if the linear map is invertible. Graphically, $\vsX$ and $\vsY$ are $\G$-homomorphic or $\G$-isomorphic if the following diagrams commute:
	\begin{equation}
		\underbrace{
			\homomorphismDiag
			{\vsX}
			{\vsY}
			{\Glact[\vsX]}
			{\Glact[\vsY]}
			{\mA}
		}_{\text{Homomorphism}}
		\quad \mA \in \homomorphism[\G]{\vsX}{\vsY}
		\qquad \text{or} \qquad
		\underbrace{
			\isomorphismDiag
			{\vsX}
			{\vsY}
			{\Glact[\vsX]}
			{\Glact[\vsY]}
			{\mA}
		}_{\text{Isomorphism}} \quad \mA \in \isomorphism[\G]{\vsX}{\vsY}.
	\end{equation}
	Here, $\homomorphism[\G]{\vsX}{\vsY}$ denotes the space of $\G$-equivariant linear maps between $\vsX$ and $\vsY$, and $\isomorphism[\G]{\vsX}{\vsY}$ denotes the space of $\G$-equivariant invertible linear maps between $\vsX$ and $\vsY$.
\end{definition}

\begin{proposition}[Structure of $\G$-homomorphisms / interwiners / $\G$-equivariant linear maps]
		\label{def:equivLinearMapsLong}
	Let $\G$ be a compact group and $\mA \in \homomorphism[\G]{\vsX}{\vsY}$ be a $\G$-equivariant linear map between two (real) $\G$-symmetric vector spaces $\vsX$ and $\vsY$, with isotypic decompostions:
	\[
		\vsX = \Oplus_{\isoIdx=1}^{\isoNum} \vsX^{\isoIdxBrace} = \Oplus_{\isoIdx=1}^{\isoNum} \Oplus_{i=1}^{\irrepMultiplicity^{x}_{\isoIdx}} \vsX^{\isoIdxBrace}_i
		\quad \text{and} \quad
		\vsY = \Oplus_{\isoIdx=1}^{\isoNum} \vsY^{\isoIdxBrace} = \Oplus_{\isoIdx=1}^{\isoNum} \Oplus_{j=1}^{\irrepMultiplicity^{y}_{\isoIdx}} \vsY^{\isoIdxBrace}_j,
	\]
	where $\isoNum$ denotes the number of isotypic subspaces, and $\irrepMultiplicity^{x}_{\isoIdx}$ and $\irrepMultiplicity^{y}_{\isoIdx}$ denote the multiplicities of the irreducible representation $\irrep[\isoIdx]: \G \to \GLGroup(\bar{\vsV}_{\isoIdx})$ in $\vsX$ and $\vsY$, respectively. Each $\vsX^{\isoIdxBrace}_i$ and $\vsY^{\isoIdxBrace}_j$ is isometrically isomorphic to $\bar{\vsV}_{\isoIdx}$ (see \cref{thm:isotypic_decomposition}). Hence, in the isotypic bases, the map $\mA$ decomposes block-diagonally into $\isoNum$ blocks corresponding to homomorphisms between isotypic subspaces of the same type, that is:
	\begin{equation*}
			\begin{aligned}
			\mA & = \Oplus_{\isoIdx=1}^{\isoNum} \mA^{\isoIdxBrace} \qquad \text{where} \quad
			\mA^{\isoIdxBrace} \in \homomorphism[\G]{\vsX^{\isoIdxBrace}}{\vsY^{\isoIdxBrace}}.
		\end{aligned}
	\end{equation*}
	Furthermore, the map $\mA^{\isoIdxBrace}$ decomposes into $\irrepMultiplicity^{x}_{\isoIdx} \times \irrepMultiplicity^{y}_{\isoIdx}$ blocks of endomorphisms of the irreducible subspace $\bar{\vsV}_{\isoIdx}$. That is:
	\begin{equation*}
			\begin{aligned}
			\mA^{\isoIdxBrace} =
			\begin{bsmallmatrix}
				\mA^{\isoIdxBrace}_{1,1} & \cdots & \mA^{\isoIdxBrace}_{1,\irrepMultiplicity^{x}_{\isoIdx}} \\
				\vdots              & \ddots & \vdots                                    \\
				\mA^{\isoIdxBrace}_{\irrepMultiplicity^{y}_{\isoIdx},1} & \cdots & \mA^{\isoIdxBrace}_{\irrepMultiplicity^{y}_{\isoIdx},\irrepMultiplicity^{x}_{\isoIdx}}
			\end{bsmallmatrix}
			\quad \text{where} \quad
			\mA^{\isoIdxBrace}_{i,j} \in \Endomorphism[\G]{\bar{\vsV}_{\isoIdx}}, \forall\; i \in [1, \irrepMultiplicity^{y}_{\isoIdx}], j \in [1, \irrepMultiplicity^{x}_{\isoIdx}].
		\end{aligned}
	\end{equation*}
	Consequently, depending of the type of irreducible representation $\sK \in \{\sR, \sC, \sH\}$, each sub-block is constrained to be in the span of the corresponding basis elements in \cref{eq:group_representation_endomorphism_basis}. Consequenly, if we denote by $\sB$ the basis set of $\sK$, we have that the map $\mA^{\isoIdxBrace}$ can be expressed in tensor product form as:
	\begin{equation}
		\label{eq:interwiner_endomorphism_blocks_tensor_product}
		\begin{aligned}
			\mA^{\isoIdxBrace} = \sum_{b \in \sB} \bm{\Theta}_{b}^{\isoIdxBrace} \otimes \Psi_\isoIdx(b)
			, \quad \text{where} \quad
			\bm{\Theta}_{b}^{\isoIdxBrace} \in \R^{\irrepMultiplicity^{y}_{\isoIdx} \times \irrepMultiplicity^{x}_{\isoIdx}},
			\Psi_\isoIdx: \sB \to \Endomorphism[\G]{\bar{\vsV}_{\isoIdx}}
		\end{aligned}
	\end{equation}
	With $[\bm{\Theta}_{b}^{\isoIdxBrace}]_{i,j} = \innerprod{\mA^{\isoIdxBrace}_{i,j}, \Psi_\isoIdx(b)}$ denoting the basis expansion coefficient of the $i$-th, $j$-th endomorphism sub-block with the basis element $\Psi_\isoIdx(b)$.
\end{proposition}

\subsection{Isotypic Decomposition and Disentangled Representations}
\label{sec:isotypic_decomposition_and_disentangled_representations}

We now have all the necessary tools to decompose symmetric vector spaces into their fundamental building blocks: irreducible $\G$-stable subspaces (\cref{def:G_stable_subspace}). This decomposition forms the theoretical foundation for disentangled representations, a concept introduced by \citet{higgins2018towards} in the context of group theory and generalized here within the framework of representation theory.

By Maschke's theorem \citep{Knapp1986}, we have that irreducible representations are the fundamental building blocks of the representations of a compact symmetry group $\G$, given that any group representation $\rep[\vsX]: \G \to \GLGroup(\vsX)$ can be decomposed into a direct sum of irreducible representations, $\rep[\vsX] \sim \Oplus_{i=1}^{n} \rep[\vsX_i]$, where each $\rep[\vsX_i]$ is isomorphic to one of the group's $\isoNum \leq |\G|$ irreducible representations (\cref{def:irreducible_representation,def:decomposable_representation}).

This decomposition will play a crucial role in facilitating numerical and theoretical analysis of operations on symmetric vector spaces. Therefore, we will frequenlt choose a convenient basis of the symmetric vector space which readily exposes this decomposition, termed isotypic basis. For the sake of generality we consider below the more general case of (finite and infinite dimensional) separable Hilbert spaces, which will enable us to extend this results to function spaces.

\begin{definition}[Isotypic Basis]
	\label{def:isotypic_basis}
	Let $\rep[\vsH]: \G \to \UG[\vsH]$ be a unitary group representation of a compact group $\G$ on a separable Hilbert Space $\vsH$. The representation is said to be defined in an \emph{isotypic basis} if it is defined by a direct sum of irreducible representations grouped by their type, that is, if:
	\begin{equation}
		\rep[\vsH]
		=\Oplus_{\isoIdx=1}^{\isoNum} \Oplus_{\isoIrrepIdx=1}^{\irrepMultiplicity_{\isoIdx}} \irrep[\isoIdx],
	\end{equation}
	where $\{\irrep[\isoIdx]: \G \to \UG[\bar{\vsH}_\isoIdx]\}_{\isoIdx=1}^{\isoNum}$ are the $\isoNum \leq |\G|$ irreducible representations of $\G$, and $\irrepMultiplicity_{\isoIdx} \leq \infty$ is the multiplicity (i.e., number of copies) of the irrep type $\isoIdx$ in the representation $\rep[\vsH]$.
	\begin{remark}
		Note that multiple isotypic bases exist for a given representation $\rep[\vsH]$, as both the irrep ordering and each irrep's multiplicity ordering can be arbitrarily permuted.
	\end{remark}
\end{definition}

The utility of an isotypic basis stems from Schur's orthogonality relations (\cref{lemma:schursLemma}), which ensure that any symmetric vector space decomposes into at most $\isoNum$ orthogonal subspaces.

\begin{theorem}[Isotypic decomposition of symmetric Hilbert spaces \citep{Knapp1986}]
	\label{thm:isotypic_decomposition}
	Let $\G$ be a compact group and $\vsH$ a separable Hilbert space with a unitary group representation $\rep[\vsH]: \G \rightarrow \UG[\vsH]$. Then we can identify $\isoNum \leq \vert \G \vert$ irreducible representations $\irrep[\isoIdx]: \G \rightarrow \UG[\bar{\vsH}_\isoIdx]$ that allow us to decompose $\vsH$ into a sum of orthogonal subspaces, denoted \emph{isotypic subspaces}: $\vsH=\bigoplus_{1\leq \isoIdx\leq \isoNum}^{\perp} \vsH^{\isoIdxBrace}$ where each $\vsH^{\isoIdxBrace}=\bigoplus_{\isoIrrepIdx=1}^{m_\isoIdx}\vsH^{\isoIdxBrace}_\isoIrrepIdx$ is the sum of at most $\irrepMultiplicity_{\isoIdx} \leq \infty$ countably many subspaces isometrically isomorphic to $\bar{\vsH}_\isoIdx$.

	\begin{remark}
		\label{remark:isotypic_decomposition}
		In practice, the isotypic decomposition of any finite-dimensional symmetric Hilbert space is obtained through a unitary or orthogonal change of basis. This change of basis can be computed numerically using Dixon's reduction method for finite-dimensional unitary (complex) representations \citep{dixon1970computing}, with minor additional logic for the orthogonal (real) case. This method is implemented in \symmlearning \citep{ordonez2026symmlearning} through the \href{https://danfoa.github.io/symmetric_learning/representation_theory.html}{\texttt{cplx\_isotypic\_decomposition}} function.

		For $\G$-equivariant \gls{nn} backbone architecture, the torch module \href{https://danfoa.github.io/symmetric_learning/nn.html}{\texttt{symm\_learning.nn.Change2DisentangledBasis}} computes the required change of basis to an isotypic basis and applies it to the backbone output representations, thereby producing the disentangled representations used in this work and explain in detail next.
	\end{remark}
\end{theorem}

\paragraph{Disentangled Representations} The concept of isotypic decomposition is intricately linked to the idea of disentangled representations, introduced by \citet{higgins2018towards} in the representation learning literature,  restated below for completeness.

\begin{definition}[Disentangled representation (\citet{higgins2018towards})]
	\label{def:disentangled_representation}
	A vector representation is called a disentangled representation with respect to a particular decomposition of a symmetry group into subgroups, if it decomposes into independent subspaces, where each subspace is affected by the action of a single subgroup, and the actions of all other subgroups leave the subspace unaffected.
\end{definition}

Note that the independent subspaces of \cref{def:disentangled_representation} refer to the orthogonal isotypic subspaces $\{\vsH^{\isoIdxBrace}\}_{\isoIdx=1}^{\isoNum}$, each of which is acted upon by a unique \emph{quotient group}\footnote{The original definition of disentangled representations refers to subgroups, but in general the quotient grup $\G^\isoIdxBrace$ need not be a \textbf{subgroup} of $\G$.} defined by the kernel of the irrep acting on that isotypic subspace:
\begin{equation}
	\label{eq:quotient_group_irrep}
	\G^\isoIdxBrace = \G / \sN_{\isoIdx}, \quad \text{where} \quad \sN_{\isoIdx} := \text{ker}(\irrep[\isoIdx]) = \{\g \in \G \mid \irrep[\isoIdx](\g) = \Identity_{\irrepDim[\isoIdx]}\}.
\end{equation}
Where each $\G^\isoIdxBrace$ is a well defined group of cosets generated by the normal subgroup $\sN_{\isoIdx}$. In practice, each $\G^\isoIdxBrace$ is isomorphic to the effective (matrix) group encoded by each irreducible representation $\irrep[\isoIdx]: \G \mapsto \UG[\bar{\vsH}_\isoIdx]$.

\section{Representation Theory of Symmetric Function Spaces}
\label{sec:representation_theory_symmetric_function_spaces}

\begin{figure}[!bt]
	\centering
	\includegraphics[width=\linewidth]{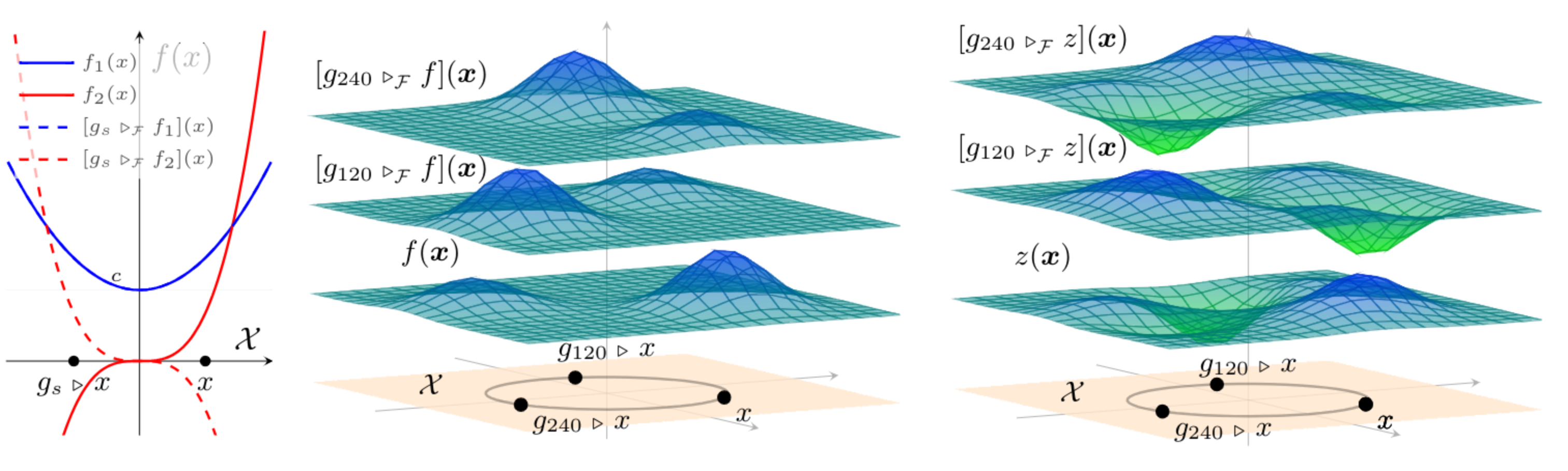}
	\caption{
		\textbf{Left:} Diagram of the group action $\Glact[\vsF]$ on functions $f_1(x) = x^2 + c$ and $f_2(x) = x^3$ defined on the domain $\vsX := \mathbb{R}$ endowed with the reflectional symmetry group $\G := \CyclicGroup[2] = \{e, \g_s\}$, with the reflection action acting on the domain by $\g_s \Glact x = -x$ and on the function space $\vsF := \{f \mid f: \vsX \mapsto \R\}$ by $[\g \Glact[\vsF] f](\vx) = f(\g \Glact[\vsX] \vx) = f(-\vx)$. Hence we have that $f_1$ is a $\G$-invariant function, $\g_s \Glact[\vsF] f_1(x) = f_1(x)$ and $f_2$ a $\G$-equivariant function $\g_s \Glact[\vsF] f_2(x) = -x^3$.
		\textbf{Center:} Diagram representing the action $\Glact[\vsF]$ on the (arbitrarily chosen) function $f(\vx) = \mathcal{N}( \vx ; \vc_1 , 2) + \mathcal{N}( \vx ; \vc_2 , 1)$ defined over the symmetric domain $\vsX = \mathbb{R}^2$ with the cyclic symmetry group $\G = \CyclicGroup[3] = \{e, \g_{120}, \g_{240}\}$ and group action $\g \Glact \vx = \rep[\vsX](\g) \vx = \mR_{\g} \vx$, where $\mR_\g$ is a rotation matrix in 2D. Here, $\g_{120} \Glact[\vsF] f$ is equivalent to evaluating $f$ on a domain rotated by $-120^\circ$. The same holds for $\g_{240} \Glact[\vsF] f$. Note that the $z$-offsets are added for visualization purposes.
		\textbf{Right:} Diagram representing the action $\Glact[\vsF]$ on the function $z \in \widehat{\vsF}$, defined to be a member of the finite-dimensional symmetric function space $\widehat{\vsF} := \text{span}(\bSet{\widehat{\vsF}})$, constructed from a basis set composed of the group orbit of the (arbitrarily chosen) function $f \in \vsF$, that is $\bSet{\widehat{\vsF}} := \G f = \{f, \g_{120} \Glact[\vsF] f, \g_{240} \Glact[\vsF] f\}$. This function space is $\G$-stable by construction, since $\G \bSet{\widehat{\vsF}} = \bSet{\widehat{\vsF}}$. Note that the $z$-offsets are added for visualization purposes.
	}
	\label{fig:symmetry_action_on_f_space}
\end{figure}

In this section, we study symmetry group actions on infinite-dimensional function spaces and specify the conditions needed to approximate these spaces in finite dimensions. Specifically, given a set $\vsX$ with a compact symmetry group $\G$ acting via $(\Glact)$ (\cref{def:left_group_action}), the space of scalar-valued functions on $\vsX$, $\vsF = \{ f \mid f: \vsX \mapsto \mathbb{R}\}$, becomes a symmetric function space. The action of a symmetry transformation on a function is defined as:
\begin{definition}[Group action on a function space]
	\label{def:action_funct_space}
	Let $\vsX$ be a set endowed with the symmetry group $\G$, and let $\vsF$ be the space of scalar-valued functions on $\vsX$. The (left) action of $\G$ on a function $\obsFn \in \vsF$ is defined as the composition of $\obsFn$ with the inverse of the group element $\g^{-1}$:
	\begin{subequations}
		\begin{align}
			\mapping
			{(\Glact[\vsF])}
			{ \G \times \vsF}{\vsF}
			{(\g,\obsFn)}
			{[\g \Glact[\vsF] \obsFn](\vx) := [\obsFn \Gcomp \g^{-1}](\vx) = \obsFn(\g^{-1} \Glact \vx),
			\quad \forall\; \vx \in \vsX.}
			\label{eq:symmetry_action_on_f_space}
		\end{align}
		In other words, the point-wise evaluation of $\obsFn$ on a $\g^{-1}$-transformed set $\vsX$ is equivalent to the evaluation of the transformed function $\g \Glact[\vsF] \obsFn \in \vsF$ on the original set $\vsX$ (see simple examples in \cref{fig:symmetry_action_on_f_space}). Any function space that is stable under the group action \cref{eq:symmetry_action_on_f_space} is refereed to as a \textit{symmetric function space}. Note that this action is compatible with the group composition and identity element $e \in \G$, such that the following properties hold:
		\begin{align}
			\text{Identity:}      & \quad e \Glact[\vsF] \obsFn(\cdot) = \obsFn(\cdot),                                                            &
			\label{eq:symmetry_action_on_f_space_identity}
			\\
			\text{Associativity:} & \quad [(\g_2 \Gcomp \g_1) \Glact[\vsF] \obsFn](\cdot) = [\g_2 \Glact[\vsF] [\g_1 \Glact[\vsF] \obsFn]](\cdot), & \quad \forall\; \g_1, \g_2 \in \G.
			\label{eq:symmetry_action_on_f_space_associativity}
		\end{align}
	\end{subequations}
\end{definition}

\begin{remark}
	From an algebraic perspective, the inversion $g^{-1}$ (\href{https://math.stackexchange.com/questions/387266/group-action-on-vector-space-of-all-functions-g-to-mathbbc}{\textit{contragredient representation}}) emerges to ensure that the associativity property of the group action (\cref{eq:symmetry_action_on_f_space_associativity}) holds:
	\begin{align}
		\small
		[(\g_2 \Gcomp \g_1) \Glact[\vsF] \obsFn](\vx) & = [\g_2 \Glact[\vsF] [\g_1 \Glact[\vsF] \obsFn]](\vx), \qquad \forall\; \vx \in \vsX
		\nonumber                                                                                                                                            \\
		\obsFn((\g_2 \Gcomp \g_1)^{-1} \Glact \vx)    & = [\g_1 \Glact[\vsF] \obsFn](\g_2^{-1} \Glact \vx) = \obsFn(\g_1^{-1} \Glact (\g_2^{-1} \Glact \vx))
		\nonumber                                                                                                                                            \\
		\obsFn((\g_2 \Gcomp \g_1)^{-1} \Glact \vx)    & = \obsFn((\g_1 \Gcomp \g_2)^{-1} \Glact \vx). \nonumber
	\end{align}
\end{remark}

In the context of this work, we will study the scenario where the function space $\vsF$ is a separable Hilbert space and the group action of $\G$ on $\vsF$ is unitary, i.e., it preserves the inner product of the function space. This setup is crucial to enable us to approximate $\vsF$ and the group action on $\vsF$ in finite dimensions.

\subsection{Unitary Group Representation on Function Spaces}
\label{subsec:unitary_group_representation_on_f_space}

Assume our symmetric set $\vsX$ is endowed with a measure space structure $(\vsX, \Sigma_{\vsX}, P_{\rvx})$, where $P_{\rvx} : \Sigma_{\vsX} \mapsto \R$ is the space measure. Then, consider a function space with a separable Hilbert space structure $\vsF := \Lp[2]{P_{\rvx}}{\vsX, \R}$, and inner product $\innerprod[P_{\rvx}]{\obsFn_1,\obsFn_2} = \int_{\vsX} \obsFn_1(\vx) \obsFn_2(\vx) P_{\rvx}(d\vx)$ for all $\obsFn_1, \obsFn_2 \in \vsF$. Then, the action $\Glact[\vsF]$ of the group $\G$ on the function space $\vsF$ is termed unitary if it preserves the inner product of the function space:
\begin{equation}
\small
	\begin{aligned}
		\innerprod[P_{\rvx}]{\obsFn_1,\obsFn_2}             & = \innerprod[P_{\rvx}]{\g \Glact[\vsF] \obsFn_1,\g \Glact[\vsF] \obsFn_2} \qquad \forall\; \obsFn_1, \obsFn_2 \in \vsF, \g \in \G
		\\
		\int_\vsX \obsFn_1(\vx) \obsFn_2(\vx) P_{\rvx}(d\vx) & = \int_\vsX (\g \Glact[\vsF] \obsFn_1)(\vx) (\g \Glact[\vsF] \obsFn_2)(\vx) P_{\rvx}(d\vx)
		\\
		                                                & = \int_\vsX \obsFn_1(\g^{-1} \Glact \vx) \obsFn_2(\g^{-1} \Glact \vx) P_{\rvx}(d\vx)
		\\
		                                                & = \int_{\g \Glact \vsX = \vsX} \obsFn_1(\vx) \obsFn_2(\vx) P_{\rvx}(\g \Glact d\vx).
	\end{aligned}
	\label{eq:symmetry_action_on_f_space_unitarity}
\end{equation}
That is, the group action is unitary if $P_{\rvx}$ is a $\G$-invariant measure $P_{\rvx}(\g \Glact d\vx) = P_{\rvx}(d\vx),\; \forall \;\g \in \G, d\vx \subseteq \vsX$. Note that an $\G$-invariant measure (and inner product) exists whenever $\G$ is finite, because for any measure $\eta : \Sigma_{\Omega} \mapsto \R$, we can use the group-average trick to obtain one, given by $P_{\rvx}(\sX) = \Sigma_{\g \in \G} \eta(\g \Glact \sX)$.\footnote{Such a $\G$-invariant measure exists for any (finite or continuous) compact group. See \href{https://terrytao.wordpress.com/2011/01/23/the-peter-weyl-theorem-and-non-abelian-fourier-analysis-on-compact-groups/}{discussion}.}

The importance of the Hilbert space structure is that it enables the definition of a unitary group representation. Unitary representations have a well-studied modular structure that allows their decomposition (\cref{thm:isotypic_decomposition}) into $\G$-stable subspaces (\cref{def:G_stable_subspace}), which is crucial for approximating symmetric function spaces using a finite set of basis elements. Let $\bSet{\vsF} = \{ \bfx_i \mid \bfx_i \in \LpxFull \}_{i \in \N}$ be an orthogonal basis for the function space $\vsF = \text{span}(\bSet{\vsF})$, so that any function $\obsFn \in \vsF$ can be represented by its basis expansion coefficients $\bcoeff = [\innerprod[P_{\rvx}]{\bfx_i}{\obsFn}]_{i\in\N}$, since $\obsFn_{\bcoeff}(\vx) = \sum_{i\in \N} \innerprod[P_{\rvx}]{\bfx_i, \obsFn} \bfx_i(\vx)$. In this basis, the group action of $\G$ on $\vsF$ defines a unitary group representation mapping group elements to unitary linear integral operators on $\vsF$, which can be expressed in matrix form.

\begin{definition}[Unitary group representation on a function space]
	\label{def:group_representation_on_f_space}
	Let $\vsF = \Lp[2]{P_{\rvx}}{\vsX, \R}$ be a separable Hilbert space of scalar-valued functions on a set $\vsX$ endowed with the symmetry group $\G$. Let $\bSet{\vsF}$ be an orthogonal basis set spanning $\vsF$. Then, the group action of $\G$ on $\vsF$ (\cref{def:action_funct_space}) defines a unitary group representation mapping group elements to unitary linear integral operators on $\vsF$:
	\begin{equation}
    \small
		\mapping
		{\rep[\vsF]}
		{\G}{\UG[\vsF]}
		{\g}
		{\rep[\vsF](\g)}
		, \qquad \qquad \text{s.t. }\quad  \rep[\vsF](\g)^* = \rep[\vsF](\g^{-1}).
	\end{equation}
\end{definition}
Each unitary operator $\rep[\vsF](\g): \vsF \mapsto \vsF$ admits an infinite-dimensional matrix representation with entries $[\rep[\vsF](\g)]_{i,j} := \innerprod[P_{\rvx}]{\hat{f}_i, \g \Glact[\vsF] \hat{f}_j}$, which characterize how the group action transforms the chosen basis functions. Consequently, once the group representation for a chosen basis set is defined, the group action on a function $\obsFn_{\bcoeff} \in \vsF$ can be expressed as an (infinite-dimensional) matrix transformation of its basis expansion coefficients $\bcoeff$, given by:
\begin{equation}
	\small
	\begin{alignedat}{3}
		[\g \Glact[\vsF] \obsFn_{\bcoeff}](\cdot) &:= \sum_{i\in \N} \innerprod[P_{\rvx}]{\hat{f}_i, \g \Glact[\vsF] \obsFn_\bcoeff} \hat{f}_i(\cdot)
		=
		\sum_{i\in\N} \bigg(\sum_{j \in \N} \innerprod[P_{\rvx}]{\hat{f}_i, \g \Glact[\vsF] \hat{f}_j} \underbrace{\innerprod[P_{\rvx}]{\hat{f}_j, \obsFn}}_{\ebcoeff_j}\bigg) \hat{f}_i(\cdot).
	\end{alignedat}
	\label{eq:unitaty_group_representation_algebraic_form}
\end{equation}%

\begin{example}[Isotypic decomposition of symmetric function space]
	\label{ex:finite_dimensional_symmetric_function_space}
Let $(\vsX, \Sigma_\vsX, P_{\rvx})$ be a symmetric 2D measure space with domain $\vsX \sim \R^2$ and cyclic symmetry group $\G := \CyclicGroup[3] = \{e, \g_{120}, \g_{240}\}$, acting on the 2D plane by $120^\circ$ rotations (\cref{fig:iso_decomposition_example}). Define the finite-dimensional function space $\Lpx \subset \LpxFull$ with basis $\bSet{\Lpx} = \{ \bfx, \g_{120} \Glact \bfx, \g_{240} \Glact \bfx \}$, where $\bfx \in \Lpx$ is an arbitrary measurable function (\cref{fig:iso_decomposition_example}-left). In this basis, the group action $\Glact[\Lpx]$ for any function $z_\bcoeffx \in \Lpx$ is given by the regular representation $\rep[\Lpx] = \rep[\regular]$ acting on the coefficient vector $\bcoeffx \in \R^3$ (\cref{fig:example_group_action}-right).
	\begin{equation}
		\small
		[\g \Glact[\Lpx] z_\bcoeffx](\cdot) = \sum_{i=1}^3 \innerprod[P_{\rvx}]{\bfx_i, \g \Glact[\Lpx] z_\bcoeffx} \bfx_i(\cdot) \equiv (\rep[\regular](\g) \bcoeff)^\top
		\begin{bsmallmatrix}
			\bfx(\cdot) \\
			\g_{120} \Glact \bfx(\cdot) \\
			\g_{240} \Glact \bfx(\cdot)
		\end{bsmallmatrix}
		, \quad
		\rep[\regular](\g) =
		\begin{cases}
			\Identity_3,         & \text{if } \g = e       \\
			\begin{bsmallmatrix}
				0 & 1 & 0 \\
				0 & 0 & 1 \\
				1 & 0 & 0
			\end{bsmallmatrix}, & \text{if } \g = \g_{120} \\
			\begin{bsmallmatrix}
				0 & 0 & 1 \\
				1 & 0 & 0 \\
				0 & 1 & 0
			\end{bsmallmatrix}, & \text{if } \g = \g_{240}
		\end{cases}
	\end{equation}
	The group $\CyclicGroup[3]$ possesses two types of (real-valued) irreducible representations, $\isoNum = 2$: the trivial irreducible representation $\irrep[\inv]$ and a 2D rotation representation $\irrep[\sfrac{2\pi}{3}]$, defined by:
	\begin{equation}
		\small
		\irrep[\inv](\g) = \Identity_1, \forall\; \g \in \CyclicGroup[3],
		\quad \text{and} \quad
		\irrep[\sfrac{2\pi}{3}](\g) =
		\begin{bsmallmatrix}
			\cos(\theta) & -\sin(\theta) \\
			\sin(\theta) & \cos(\theta)
		\end{bsmallmatrix},
		\quad \text{s.t.  } \theta =
		\begin{cases}
			0^\circ,  & \text{if } \g = e        \\
			120^\circ & \text{if } \g = \g_{120} \\
			240^\circ & \text{if } \g = \g_{240}
		\end{cases}
	\end{equation}
Applying the appropriate change of basis, we decompose the regular representation into a direct sum of the group's irreducible representations: 
$
\rep[\regular] = \mQ (\irrep[\inv] \oplus \irrep[\sfrac{2\pi}{3}]) \mQ^{-1},
$
where $\mQ$ transitions from the regular basis to the isotypic basis of $\Lpx$. Since $\CyclicGroup[3]$ is abelian, $\mQ$ corresponds to the linear map defining the Fourier transform.

By \cref{thm:isotypic_decomposition}, this results in the orthogonal decomposition of the finite-dimensional function space into two orthogonal subspaces; $\Lpx = \Lpx^\inv \oplus^\perp \LpxIso[(2)]$, where $\Lpx^\inv$ denotes the 1-dimensional subspace of $\G$-invariant functions ($\G^{(0)} = \{e\}$), and $\LpxIso[(2)]$ is the 2-dimensional subspace with group actions defined by the 2D irreducible representation $\irrep[\sfrac{2\pi}{3}]$ ($\G^{(1)} = \CyclicGroup[3]$). We can construct the basis set in the isotypic basis given:
	\begin{equation}
		\label{eq:ex_change_to_isotypic_basis}
		\small
		\bSet{\Lpx}^\iso = \mQ \begin{bsmallmatrix}
			\bfx(\cdot) \\
			\g_{120} \Glact \bfx(\cdot) \\
			\g_{240} \Glact \bfx(\cdot)
		\end{bsmallmatrix}
		=
		\begin{bsmallmatrix}
			\lsfn^\inv(\cdot) \\
			\lsfn^{(2)}_1(\cdot) \\
			\lsfn^{(2)}_2(\cdot)
		\end{bsmallmatrix}
		\qquad \text{s.t. } \mQ =
		\begin{bsmallmatrix}
			\sfrac{1}{\sqrt{3}} & \sfrac{1}{\sqrt{3}} & \sfrac{1}{\sqrt{3}} \\
			\sfrac{2}{\sqrt{6}} & -\sfrac{1}{\sqrt{6}} & -\sfrac{1}{\sqrt{6}} \\
			0 & \sfrac{1}{\sqrt{2}} & -\sfrac{1}{\sqrt{2}}
		\end{bsmallmatrix}
	\end{equation}
	The new basis functions in the isotypic basis are depicted in \cref{fig:iso_decomposition_example}-right, and elucidate that the symmetry constraints on this $3$-dimensional function space, result in $m = 2$ unique functions, each associated with a unique irreducible representation.

Assuming $P_{\rvx}$ is a $\G$-invariant probability measure, we compute the expected value of each basis function. In the regular basis, functions related by a symmetry transformation share the same expected value, i.e., $\E_{\rvx} \bfx = \E_{\rvx} \g \Glact \bfx$ for all $\g \in \CyclicGroup[3]$. In the isotypic basis, functions lacking a $\G$-invariant component (i.e., $\lsfn^{(2)}_1, \lsfn^{(2)}_2$) are centered: $\E_{\rvx} \lsfn^{(2)}_1 = \E_{\rvx} \lsfn^{(2)}_2 = 0$. In our example this constraint becomes clear from the nature of the change of basis $\mQ$. 
\cref{eq:ex_change_to_isotypic_basis}.

\end{example}
	\begin{figure}[!bt]
		\centering
	       \includegraphics[width=.8\linewidth]{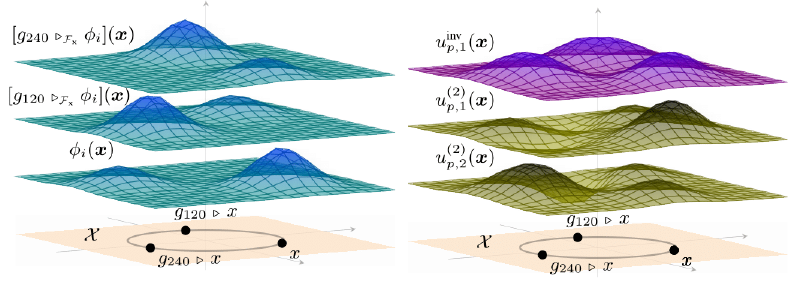}	
		\caption{
			Visualization of the basis functions in the finite-dimensional symmetric function space $\Lpx$ from \cref{ex:finite_dimensional_symmetric_function_space}.
			\textbf{Left}: Depiction of the basis functions in the regular basis $\bSet{\Lpx} = \{\bfx, \g_{120} \Glact \bfx, \g_{240} \Glact \bfx\}$, generated by the action of the cyclic group $\CyclicGroup[3]$ on an arbitrary function $\bfx \in \Lpx$.
			\textbf{Right}: Depiction of the basis functions in the isotypic basis $\bSet{\Lpx}^\iso = \{\lsfn^\inv, \lsfn^{(2)}_1, \lsfn^{(2)}_2\}$, obtained via the change of basis matrix $\mQ$.
			The first basis function $\lsfn^\inv$ corresponds to the $\G$-invariant subspace $\Lpx^\inv$ and is visually invariant under the action of $\CyclicGroup[3]$ on $\vsX$.
			The other two basis functions $\lsfn^{(2)}_1$, $\lsfn^{(2)}_2$ are constrained to span a $\G$-stable subspace of $\LpxFull$, denoted by $\LpxIso[(2)]$ that transform according to the irreducible representation $\irrep[\sfrac{2\pi}{3}]$. Meaning for any function $f \in \LpxIso[(2)]$, the group action $\g \Glact[\Lpx] f$ can be computed by a linear transformation of its basis expansion coefficients.
		}
		\label{fig:iso_decomposition_example}
	\end{figure}

\section[G-Equivariant Linear Integral Operators]{$\G$-Equivariant Linear Integral Operators}
\label{sec:equivariant_operators}

This section gives an overview of $\G$-equivariant linear integral operators between symmetric function spaces. We define these operators, discuss their properties, and specify conditions under which they commute with group actions. In \cref{sec:equiv_op_infinite_matrix_representation} we examine their infinite-dimensional matrix form and the resulting algebraic constraints from $\G$-equivariance. In \cref{sec:equiv_op_finite_rank_approximation} we then show how to exploit these constraints in a finite-rank approximation.

Let $\G$ be a compact group acting on two measure spaces $(\vsX, \Sigma_{\vsX}, \mux)$ and $(\vsY, \Sigma_{\vsY}, \muy)$ via the group actions $\Glact[{\vsX}]$ and $\Glact[{\vsY}]$ (see \cref{def:left_group_action}). Assume that the measures $\mux$ and $\muy$ are $\G$-invariant, i.e., $\mux(g \Glact[{\vsX}] \sB) = \mux(\sB)$ and $\muy(g \Glact[{\vsY}] \sA) = \muy(\sA)$ for all $g \in \G$, $\sB \in \Sigma_{\vsX}$, and $\sA \in \Sigma_{\vsY}$ (see \cref{def:equivariantMaps}).

Let
$
	\LpxFull = \{ f : \vsX \mapsto \R \mid \norm{f}_{\mux} < +\infty \} \quad \text{and} \quad \LpyFull = \{ h : \vsY \mapsto \R \mid \norm{h}_{\muy} < +\infty \}
$
be the Hilbert spaces of square-integrable functions with respect to $\mux$ and $\muy$, respectively. Since $\vsX$ and $\vsY$ have a $\G$-action, the spaces $\LpxFull$ and $\LpyFull$ inherit group actions defined by
$
	[\g \Glact[{\LpxFull}] f](\vx) = f(g^{-1} \Glact[{\vsX}] \vx), \quad [\g \Glact[{\LpyFull}] h](\vy) = h(g^{-1} \Glact[{\vsY}] \vy),
$
for all $f \in \LpxFull$ and $h \in \LpyFull$ (see \cref{def:action_funct_space}).

We consider linear integral operators $\oT: \LpxFull \mapsto \LpyFull$ defined by
\begin{equation}
	\small
	h(\vy) = [\oT f](\vy) = \int_{\vsX} \pmd(\vx, \vy) f(\vx) \, \mux(d\vx),
	\label{eq:linear_integral_operator}
\end{equation}
where $k: \vsX \times \vsY \mapsto \R$ is the kernel function of $\oT$. In this work we focus on those operators whose kernels are $\G$-invariant such operators are called $\G$-equivariant.

\begin{definition}[$\G$-equivariant linear intergral operators]
	\label{def:equiv_operator}
	Let $(\vsX, \Sigma_{\vsX}, \mux)$ and $(\vsY, \Sigma_{\vsY}, \muy)$ be two measure spaces endowed with group actions $\Glact[\vsX]$ and $\Glact[\vsY]$ and $\G$-invariant measures $\mux$ and $\muy$ for a given compact symmetry group $\G$. Let $\oT: \LpxFull \mapsto \LpyFull$ be a linear integral operator between the spaces of square-integrable functions defined on the two measure spaces. The operator $\oT$ is said to be $\G$-equivariant if it commutes with the group action, that is $\forall\; f \in \LpxFull, \g \in \G$ and $\vy \in \vsY$:
	\begin{subequations}
		\small
		\begin{align}
			[\oT [\g \Glact[{\LpxFull}] f]](\vy) & = [\g \Glact[{\LpyFull}] [\oT f]](\vy)
			\label{eq:equivariant_operator}
			\\
			\int_{\vsX} \pmd(\vx, \vy) f(g^{-1} \Glact[{\vsX}] \vx) \, \mux(d\vx)
			                                     & = \g \Glact[{\LpyFull}] \left(\int_{\vsX} \pmd(\vx, \vy) f(\vx) \, \mux(d\vx)\right)
			\nonumber
			\\
			\int_{\vsX} \pmd(\g \Glact[{\vsX}] \vx, \vy) f(\vx) \, \mux(\g \Glact[{\vsX}]d\vx)
			                                     & =
			\int_{\vsX} \pmd(\vx, \g^{-1} \Glact[{\vsY}] \vy) f(\vx) \, \mux(d\vx)
			\qquad
			\text{s.t. }
			\g \Glact[{\vsX}] \vsX := \vsX
			\nonumber
			\\
			\int_{\vsX} \pmd(\g \Glact[{\vsX}] \vx, \vy) f(\vx) \, \mux(d\vx)
			                                     & =
			\int_{\vsX} \pmd(\vx, \g^{-1} \Glact[{\vsY}] \vy) f(\vx) \, \mux(d\vx)
			\qquad
			\text{s.t. }
			\mux(\g \Glact[{\vsX}]d\vx) = \mux(d\vx)
			\nonumber                                                                                                                   \\
			k(\g \Glact[{\vsX}] \vx, \vy)        & = \pmd(\vx, \g^{-1} \Glact[{\vsY}] \vy)
			\iff
			k(\g \Glact[{\vsX}] \vx, \g \Glact[{\vsY}] \vy) = \pmd(\vx, \vy).
			\label{eq:op_invariant_kernel}
		\end{align}
	\end{subequations}

	Notice that the $\G$-equivariance of the operator $\oT$ is linked to the $\G$-invariance of its kernel function, which is required to satisfy \cref{eq:op_invariant_kernel}.
\end{definition}

Multiple approaches exist to parameterize and approximate linear integral operators with finite
resources \citep[sec. 4]{kovachki2023neural}. Here, we assume that both the input and output function
spaces are separable Hilbert spaces, so that the operator can be represented as an infinite-dimensional
matrix once appropriate basis sets are chosen. Its finite-dimensional (truncated or finite-rank)
approximation is then obtained by selecting a finite number of basis functions in each space.

\subsection{Infinite-Dimensional Matrix Form of the Operator}
\label{sec:equiv_op_infinite_matrix_representation}

Since $\LpxFull$ and $\LpyFull$ are Hilbert spaces with inner products $\innerprod[\mux]{\cdot, \cdot}$ and $\innerprod[\muy]{\cdot, \cdot}$ respectively, we can choose orthogonal bases for both spaces: $\bSet{\LpxFull} = \{\bfx_i \mid \bfx_i \in \LpxFull\}_{i\in\N}$ and $\bSet{\LpyFull} = \{\bfy_j \mid \bfy_j \in \LpyFull\}_{j\in\N}$. This choice allows any function $f\in\LpxFull$ and $h\in\LpyFull$ to be represented by their infinite-dimensional coefficient vectors $\bcoeffx=[\innerprod[\mux]{\bfx_i, f}]_{i\in\N}$ and $\bcoeffy=[\innerprod[\muy]{\bfy_j, h}]_{j\in\N}$, so that:
\begin{equation}
	\small
	f(\vx) := f_{\bcoeffx}(\vx)
	= \sum_{i=1}^{\infty} \innerprod[\mux]{\bfx_i, f}\,\bfx_i(\vx)
	\equiv \bcoeffx^T \vbfx(\vx)
	\qquad
	h(\vy) := h_{\bcoeffy}(\vy)
	= \sum_{j=1}^{\infty} \innerprod[\muy]{\bfy_j,h}\,\bfy_j(\vy)
	\equiv \bcoeffy^T \vbfy(\vy)
	\label{eq:fn_basis_expansion}
\end{equation}
Here, $\bcoeffx^T \vbfx(\vx)$ and $\bcoeffy^T \vbfy(\vy)$ represent the function as the dot product of its expansion coefficients with the basis evaluations $\vbfx(\vx)=[\bfx_i(\vx)]_{i\in\N}$ and $\vbfy(\vy)=[\bfy_j(\vy)]_{j\in\N}$. This notation is useful when we later select a finite number of basis functions to form a finite-dimensional approximation of $\oT$.

With the chosen bases, the action of a linear integral operator $\oT:\LpyFull\to\LpxFull$ on any $f\in\LpxFull$ is determined by its action on the basis functions:
\begin{equation}
	\small
	[\oT f_{\bcoeffx}](\vy)
	=\int_{\vsX}\pmd(\vx,\vy)\Bigl(\sum_{i\in\N}\ebcoeffx_i\, \bfx_i(\vx)\Bigr)\mux(d\vx)
	=\sum_{i\in\N}\ebcoeffx_i\int_{\vsX}\pmd(\vx,\vy)\,\bfx_i(\vx)\mux(d\vx)
	=\sum_{i\in\N}\ebcoeffx_i\,[\oT\,\bfx_i](\vy)
	\label{eq:lin_op_basis_expansion}
\end{equation}
Since $[\oT\bfx_i]\in \LpyFull$, each $[\oT\,\bfx_i](\vy)$ can be expanded using the output basis as
$
	[\oT\,\bfx_i](\vy)=\sum_{j\in\N}\innerprod[\muy]{\bfy_j,\,\oT\,\bfx_i}\,\bfy_j(\vy).
$
Thus, the operator $\oT$ can be represented by the infinite-dimensional matrix $\mT$ with entries
$
	\mT_{ij}=\innerprod[\muy]{\bfy_i,\,\oT\,\bfx_j}.
$
Therefore, the action of $\oT$ on any $f_{\bcoeffx}\in\LpxFull$ is given by the matrix–vector product $\bcoeffy=\mT\,\bcoeffx$, i.e.,
\begin{equation}
	\small
	\begin{split}
		[\oT f_{\bcoeffx}](\vy)
		&=\sum_{j\in\N}\ebcoeffx_j\,[\oT\,\bfx_j](\vy)
		=\sum_{j\in\N}\ebcoeffx_j\sum_{i\in\N}\innerprod[\muy]{\bfy_i,\,\oT\,\bfx_j}\,\bfy_i(\vy)\\[1mm]
		&=\sum_{i\in\N}\sum_{j\in\N}\mT_{ij}\,\ebcoeffx_j\,\bfy_i(\vy)
		\equiv (\mT\,\bcoeffx)^T\,\vbfy(\vy)
	\end{split}
	\label{eq:matrix_representation_operator}
\end{equation}
\Cref{eq:matrix_representation_operator} shows that knowing the action of $\oT$ on the bases $\bSet{\LpxFull}$ and $\bSet{\LpyFull}$ determines its action on any function in $\LpxFull$. In the sections that follow, we describe how symmetry constrains this action by requiring the bases to be $\G$-stable and by imposing $\G$-equivariance on $\mT$, thereby introducing exploitable algebraic constraints for improved finite-rank approximations.

\subsubsection[G-Equivariant Matrix Form of the Operator]{$\G$-Equivariant Matrix Form of the Operator}
\label{sec:op_equivariant_matrix_representation}
Whenever the function spaces carry a symmetry group $\G$, the group action on their bases $\bSet{\LpxFull}$ and $\bSet{\LpyFull}$ is defined by the unitary representations
$
	\rep[\LpxFull]: \G \to \UG{\LpxFull}
$ and $
	\rep[\LpyFull]: \G \to \UG{\LpyFull}
$
(see \cref{def:group_representation_on_f_space}). As in \cref{eq:matrix_representation_operator}, these representations can be expressed in (infinite-dimensional) matrix form so that the group action is given by a matrix-vector product:
\begin{equation}
	\label{eq:group_action_basis_xy}
	\small
	\begin{aligned}
		[\g \Glact[\LpxFull] f_\bcoeffx](\cdot ) & \equiv (\rep[\LpxFull](\g) \bcoeffx)^T \vbfx(\cdot), \quad \forall\; f_\bcoeffx \in \LpxFull, \g \in \G
		\\
		[\g \Glact[\LpyFull] h_\bcoeffy](\cdot ) & \equiv (\rep[\LpyFull](\g) \bcoeffy)^T \vbfy(\cdot), \quad \forall\; h_\bcoeffy \in \LpyFull, \g \in \G
	\end{aligned}
\end{equation}
Since the operator $\oT$ is $\G$-equivariant by construction (\cref{eq:equivariant_operator}), the matrix form $\mT$ of the operator must also be $\G$-equivariant with respect to the group representations $\rep[\LpxFull]$ and $\rep[\LpyFull]$:
\begin{equation}
	\label{eq:op_equivariance_matrix_representation_xy}
	\small
	\begin{aligned}
		[\oT [\g \Glact[{\LpxFull}] f_\bcoeffx]](\vy)
		                       & = [\g \Glact[{\LpyFull}] [\oT f_\bcoeffx]](\vy)
		                       &                                                     & \quad  \forall\; f_\bcoeffx \in \LpxFull, \g \in \G, \vy \in \vsY
		\\
		(\mT \rep[\LpxFull](\g) \bcoeffx)^\top \vbfy(\vy)
		                       & = (\rep[\LpyFull](\g) \mT \bcoeffx)^\top \vbfy(\vy)
		                       &                                                     & \quad  \text{s.t. \cref{eq:matrix_representation_operator,eq:group_action_basis_xy}}
		\\
		\mT \rep[\LpxFull](\g) & = \rep[\LpyFull](\g) \mT
	\end{aligned}
\end{equation}
With bases $\bSet{\LpxFull}$ and $\bSet{\LpyFull}$ for $\LpxFull$ and $\LpyFull$, the kernel (\cref{def:equiv_operator}) can be written as $\pmd(\vx,\vy)=\sum_{i,j\in\N}\mT_{i,j}\,\bfx_j(\vx)\,\bfy_i(\vy)$. Hence, the $\G$-invariance condition (\cref{eq:op_invariant_kernel}) on the kernel directly implies that the matrix $\mT$ is $\G$-equivariant, as stated in the following proposition:

\begin{proposition}[$\G$-invariant kernel implies $\G$-equivariant matrix form]
	\label{remark:invariant_kernel_implies_stable_basis}
	Let $\oT: \LpxFull \mapsto \LpyFull$ be a $\G$-equivariant operator between symmetric function spaces endowed with the group actions $\Glact[\LpxFull]$ and $\Glact[\LpyFull]$ of a compact symmetry group $\G$. Let $\rep[\LpxFull]$ and $\rep[\LpyFull]$ be the group representation of the on the input/output function spaces on the chosen basis sets $\bSet{\LpxFull}$ and $\bSet{\LpyFull}$. Then the $G$-invariance of the operator's kernel function (\cref{eq:op_invariant_kernel}) implies that the matrix form of the operator, in the chosen basis sets, is $\G$-equivariant w.r.t the group representations $\rep[\LpxFull]$ and $\rep[\LpyFull]$ (\cref{eq:op_equivariance_matrix_representation_xy}).

	\begin{proof}
		The proof follows by choosing appropriate $\G$-stable basis sets $\{\bfx_i\}\subset\LpxFull$ and $\{\bfy_j\}\subset\LpyFull$, so that for all $\g\in\G$ we have $\g\Glact[\LpxFull]\bfx_i=\bfx_{\g\Glact i}$ and $\g\Glact[\LpyFull]\bfy_j=\bfy_{\g\Glact j}$ with $\g\Glact i,\g\Glact j\in\N$ \footnote{By Cayley’s theorem \citep{cayley1854vii}, every finite symmetry group is isomorphic to a permutation group. From a representation-theoretic perspective, this guarantees the existence of a permutation (regular) basis for any finite group.}. This basis sets the $\G$-invariance of the kernel translates into algebraic constraints on the matrix form $\mT$.
		\begin{equation}
			\label{eq:invariant_kernel_expansion_perm_basis}
			\small
			\begin{aligned}
				k(\vx,\vy) & = \pmd(\g^{-1} \Glact[{\vsX}] \vx, \g^{-1} \Glact[{\vsY}] \vy)
				\qquad \forall\; \g \in \G, \vx \in \vsX, \vy \in \vsY
				\\
				\sum_{i\in\N} \sum_{j\in\N} \mT_{i,j} \bfx_i(\vx) \bfy_j(\vy)
				           & = \sum_{i\in\N} \sum_{j\in\N} \mT_{i,j} [\g \Glact[{\LpxFull}] \bfx_i](\vx) [\g \Glact[{\vsY}] \bfy_j](\vy)
				= \sum_{i\in\N} \sum_{j\in\N} \mT_{i,j} \bfx_{\g \Glact i}(\vx) \bfy_{\g \Glact j}(\vy)                                  \\
			\end{aligned}
		\end{equation}
		That is, the kernel is $\G$-equivariant if the operator's matrix satisfies
		$
			\mT_{i,j} = \mT_{\g \Glact i,\, \g \Glact j}
		$
		for all
		$\g\in\G,\; i,j\in\N.$
		This condition exactly characterizes the $\G$-equivariance of the matrix form.
		\begin{equation}
			\label{eq:invariant_kernel_implies_equivariant_matrix}
			\small
			\begin{aligned}
				\mT_{i,j}
				= \innerprod[\muy]{\bfy_i, \oT \bfx_j}
				 & = \innerprod[\muy]{\bfy_{\g \Glact i}, \oT \bfx_{\g \Glact j}}
				= \mT_{\g \Glact i, \g \Glact j}
				 &                                                                                      & \qquad \forall \g \in \G, i,j \in \N
				\\
				 & = \innerprod[\muy]{\g \Glact[{\LpyFull}] \bfy_i, \oT[\g \Glact[{\LpxFull}] \bfx_j]}
				\\
				 & = \innerprod[\muy]{\g \Glact[{\LpyFull}] \bfy_i, \g \Glact[{\LpyFull}] [\oT \bfx_j]}
				 &                                                                                      & \qquad \text{s.t. \cref{eq:equivariant_operator}}
				\\
				 & = \innerprod[\muy]{\bfy_i, \oT \bfx_j} = \mT_{i,j}
				 &                                                                                      & \qquad \text{s.t. \cref{eq:symmetry_action_on_f_space_unitarity}}
			\end{aligned}
		\end{equation}
	\end{proof}

\end{proposition}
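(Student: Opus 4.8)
The statement is \cref{remark:invariant_kernel_implies_stable_basis} (labelled a proposition): if $\oT\colon\LpxFull\to\LpyFull$ is a $\G$-equivariant integral operator whose kernel satisfies the invariance condition \eqref{eq:op_invariant_kernel}, then in any pair of chosen bases the operator's matrix $\mT$ is $\G$-equivariant with respect to the induced representations $\rep[\LpxFull],\rep[\LpyFull]$, i.e.\ $\mT\rep[\LpxFull](\g)=\rep[\LpyFull](\g)\mT$ for all $\g\in\G$. The plan is to prove this purely algebraically, as already sketched in the excerpt, by first reducing to a convenient choice of basis and then reading off the matrix constraint from the kernel expansion.

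First I would observe that it suffices to verify the identity $\mT_{i,j}=\innerprod[\muy]{\bfy_i,\oT\bfx_j}$ transforms correctly under $\G$. The cleanest route, and the one indicated in the excerpt, is to pick \emph{$\G$-stable} (permutation) bases $\bSet{\LpxFull}=\{\bfx_i\}$ and $\bSet{\LpyFull}=\{\bfy_j\}$, so that $\g\Glact[\LpxFull]\bfx_i=\bfx_{\g\Glact i}$ and $\g\Glact[\LpyFull]\bfy_j=\bfy_{\g\Glact j}$ for index permutations induced by $\g$; such bases exist because one can build each from group orbits of seed functions (cf.\ \cref{ex:finite_dimensional_symmetric_function_space}). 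In this basis the representation matrices $\rep[\LpxFull](\g),\rep[\LpyFull](\g)$ are permutation matrices, and the claimed matrix identity $\mT\rep[\LpxFull](\g)=\rep[\LpyFull](\g)\mT$ is equivalent to the entrywise condition $\mT_{i,j}=\mT_{\g\Glact i,\g\Glact j}$ for all $\g,i,j$.

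Next I would establish that entrywise condition in two short steps. (i) Expand the kernel in the chosen bases, $\pmd(\vx,\vy)=\sum_{i,j}\mT_{i,j}\bfx_j(\vx)\bfy_i(\vy)$, substitute into the kernel-invariance relation $\pmd(\vx,\vy)=\pmd(\g^{-1}\Glact[\vsX]\vx,\g^{-1}\Glact[\vsY]\vy)$ from \eqref{eq:op_invariant_kernel}, and use $\bfx_i(\g^{-1}\Glact[\vsX]\vx)=[\g\Glact[\LpxFull]\bfx_i](\vx)=\bfx_{\g\Glact i}(\vx)$ (and similarly for $\bfy$) to obtain $\sum_{i,j}\mT_{i,j}\bfx_j\bfy_i=\sum_{i,j}\mT_{i,j}\bfx_{\g\Glact j}\bfy_{\g\Glact i}$; reindexing the right side and invoking linear independence of the products $\{\bfx_j\bfy_i\}$ (or simply pairing against $\bfx_j\otimes\bfy_i$ in $\LpxFull\otimes\LpyFull$) yields $\mT_{i,j}=\mT_{\g\Glact i,\g\Glact j}$. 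Alternatively, and more robustly if one does not want to assume linear independence of products, (ii) argue directly from the inner-product definition: $\mT_{\g\Glact i,\g\Glact j}=\innerprod[\muy]{\bfy_{\g\Glact i},\oT\bfx_{\g\Glact j}}=\innerprod[\muy]{\g\Glact[\LpyFull]\bfy_i,\oT[\g\Glact[\LpxFull]\bfx_j]}=\innerprod[\muy]{\g\Glact[\LpyFull]\bfy_i,\g\Glact[\LpyFull][\oT\bfx_j]}=\innerprod[\muy]{\bfy_i,\oT\bfx_j}=\mT_{i,j}$, where the third equality is the operator's $\G$-equivariance \eqref{eq:equivariant_operator} and the fourth is unitarity of the group action \eqref{eq:symmetry_action_on_f_space_unitarity}. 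This is exactly the chain already displayed in \eqref{eq:invariant_kernel_implies_equivariant_matrix}, and I would present route (ii) as the main argument since it needs no independence hypothesis. Finally, I would note that the conclusion extends from $\G$-stable bases to arbitrary bases by the change-of-basis formula (\cref{def:equivalent_group_representations}): conjugating $\mT$ and both $\rep[\LpxFull](\g),\rep[\LpyFull](\g)$ by the same change-of-basis matrices on each side preserves the intertwining relation $\mT\rep[\LpxFull](\g)=\rep[\LpyFull](\g)\mT$.

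The main obstacle is essentially bookkeeping rather than conceptual: one must be careful that the index maps $i\mapsto\g\Glact i$ are genuine bijections of $\N$ (true when the bases are orbit-generated and the orbits are permuted by $\G$), and that in the infinite-dimensional setting the manipulations of the (formally infinite) matrix $\mT$ and the kernel expansion are justified — this is handled by working coordinatewise with the well-defined inner products $\innerprod[\muy]{\bfy_i,\oT\bfx_j}$, which is why I prefer route (ii). Everything else is a direct unwinding of the definitions of the group action on function spaces (\cref{def:action_funct_space}), $\G$-equivariance of operators (\cref{def:equiv_operator}), and unitarity of the action on $\LpyFull$.
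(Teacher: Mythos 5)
Your proposal is correct and follows essentially the same argument as the paper: choose $\G$-stable permutation bases, reduce the matrix intertwining relation to the entrywise condition $\mT_{i,j}=\mT_{\g\Glact i,\g\Glact j}$, and verify it via the inner-product chain using operator equivariance \eqref{eq:equivariant_operator} and unitarity \eqref{eq:symmetry_action_on_f_space_unitarity}, which is exactly the computation in \eqref{eq:invariant_kernel_implies_equivariant_matrix}. Your two additions — preferring the inner-product route to avoid assuming linear independence of the products $\bfx_j\bfy_i$, and remarking that the conclusion transfers to arbitrary bases by conjugation — are sensible refinements but do not change the substance of the argument.
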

\subsubsection{Block-Diagonal Structure of the Operator Matrix Form}
\label{sec:op_block_diagonal_structure}

According to \Cref{thm:isotypic_decomposition}, a Hilbert space with a compact symmetry group $\G$ decomposes into $\isoNum$ orthogonal subspaces—one for each irreducible representation type—yielding an orthogonal decomposition of the operator's input and output spaces:
\begin{equation}
	\label{eq:isotypic_decomposition_xy}
	\small
	\LpxFull := \Oplus^{\perp}_{1 \leq \isoIdx \leq \isoNum} \LpxFullIso,
	\qquad \text{and} \qquad
	\LpyFull := \Oplus^{\perp}_{1 \leq \isoIdx \leq \isoNum} \LpyFullIso,
\end{equation}
where $\LpxFullIso$ and $\LpyFullIso$ denote the $\isoIdx$-th isotypic subspaces of $\LpxFull$ and $\LpyFull$, respectively. Such that any function in these spaces can be decomposed into a sum of its  projections onto the isotypic subspaces:
\begin{equation}
	\small
	f(\vx) = \sum_{\isoIdx=1}^{\isoNum} f^{\isoIdxBrace}(\vx), \quad h(\vy) = \sum_{\isoIdx=1}^{\isoNum} h^{\isoIdxBrace}(\vy)
	\quad \text{with } \quad
	f^{\isoIdxBrace} \in \LpxFullIso, h^{\isoIdxBrace} \in \LpyFullIso
	\label{eq:fn_harmonic_analysis_xy}.
\end{equation}
The orthogonal decomposition of the function spaces implies there exist unitary operators $\oA: \LpxFull \rightarrow \LpxFull$ and $\oB: \LpyFull \rightarrow \LpyFull$ (with matrix forms $\mA$ and $\mB$), that describe a change of basis from the canonical basis to an \textit{isotypic basis}, $\bSet{\LpxFull}^{\iso} = \cup_{\isoIdx=1}^{\isoNum} \bSet{\LpxFullIso} = \oA \bSet{\LpxFull}$
and
$\bSet{\LpyFull}^{\iso} = \cup_{\isoIdx=1}^{\isoNum} \bSet{\LpyFullIso} = \oB \bSet{\LpyFull}$, where the group's representations decompose into a direct sum of representations per isotypic subspace (see \cref{def:equivalent_group_representations}):
\begin{equation}
	\label{eq:group_representations_isotypic_xy}
	\small
	\rep[\LpxFull]^{\iso}(\cdot)
	:= \mA \rep[\LpxFull](\cdot) \mA^*
	=  \Oplus_{\isoIdx=1}^{\isoNum} \rep[\LpxFullIso](\cdot)
	\quad \text{and} \quad
	\rep[\LpyFull]^{\iso}(\cdot)
	:= \mB \rep[\LpyFull](\cdot) \mB^*
	= \Oplus_{\isoIdx=1}^{\isoNum} \rep[\LpyFullIso](\cdot).
\end{equation}
Then, denoting the matrix form of $\oT$ in the isotypic basis by $\mT^{\iso} = \mB^* \mT \mA$, the $\G$-equivariance of $\oT$ results in the matrix form of the operator in the isotypic basis being block-diagonal, with each block being $\G$-equivariant with respect to the group representations on the isotypic subspaces:
\begin{equation}
	\label{eq:op_equivariant_matrix_representation_block_structure}
	\small
	\begin{aligned}
		\small
		\mT^{\iso}         & = \rep[\LpyFull]^{\iso}(\g) \mT^{\iso} \rep[\LpxFull]^{\iso}(\g^{-1})
		\\     & = \Oplus_{\isoIdx=1}^{\isoNum} \rep[\LpyFullIso](\g) \mT^{\iso} \Oplus_{\isoIdx=1}^{\isoNum} \rep[\LpxFullIso](\g^{-1})
		                   &                                                                        & \text{s.t. \cref{eq:op_equivariance_matrix_representation_xy,eq:group_representations_isotypic_xy}}
		\\
		\mT^{\isoIdxBrace} & = \rep[\LpyFullIso](\g) \mT^{\isoIdxBrace} \rep[\LpxFullIso](\g^{-1}),
		\quad \forall\; \isoIdx = 1, \ldots, \isoNum
		                   &                                                                        &
		\mT^{\iso}
		= \Oplus_{\isoIdx=1}^{\isoNum} \mT^{\isoIdxBrace}
		= \begin{bsmallmatrix}
			  \mT^{(1)} &  & \\
			  & \ddots & \\
			  &  & \mT^{(\isoNum)}
		  \end{bsmallmatrix}.
	\end{aligned}
\end{equation}
Each $\mT^{\isoIdxBrace}$ represents the matrix form of the operator $\oTiso: \LpxFullIso \mapsto \LpyFullIso$ in the isotypic basis, acting on the isotypic subspaces of type $\isoIdx$ in the input and output spaces. This shows that $\G$-equivariant operators preserve the structure of isotypic subspaces without mixing functions from different types.

This property is crucial for the finite-rank approximation of the operator $\oT$, as it reduces the problem to approximating lower-rank operators $\oTiso: \LpxFullIso \mapsto \LpyFullIso$, for $\isoIdx \in [1, \isoNum]$. Moreover, the block diagonal structure of $\mT^\iso$ allows us to rewrite \cref{eq:matrix_representation_operator} in the isotypic basis in terms of the action of each $\oTiso$ on the projection $f^{\isoIdxBrace}$ of the function onto the $\isoIdx^{\text{th}}$ isotypic subspace, see  \eqref{eq:fn_harmonic_analysis_xy}, such that:
\begin{equation}
	\small
	\begin{aligned}
		[\oT f_{\bcoeffx}](\vy)
		= \sum_{\isoIdx=1}^{\isoNum}[\oTiso f^{\isoIdxBrace}](\vy)
		\equiv \sum_{\isoIdx=1}^{\isoNum} (\mT^{\isoIdxBrace} \bcoeffx^{\isoIdxBrace})^\top \vbfy^{\isoIdxBrace}(\vy).
		\qquad \vbfy^{\isoIdxBrace}(\cdot) = [\bfy_j^{\isoIdxBrace}(\cdot)]_{j \in \N}, \forall\; \bfy_j^{\isoIdxBrace} \in \bSet{\LpyFullIso}.
	\end{aligned}
\end{equation}
In the isotypic basis $\bSet{\LpxFull}^{\iso} = \cup_{\isoIdx=1}^{\isoNum} \bSet{\LpxFullIso}$, the expansion coefficient vector $\bcoeffx = \Oplus_{\isoIdx=1}^{\isoNum} \bcoeffx^{\isoIdxBrace}$ is formed from the projections of $f$ onto each isotypic subspace: $\bcoeffx^{\isoIdxBrace} = [\innerprod[\mux]{\bfx_i^{\isoIdxBrace}, f}]_{i \in \N}$. The block-diagonal structure of $\mT^{\iso}$ is only one of the algebraic constraints imposed on the matrix form of $\oT$ by the $\G$-equivariance condition. The next section describes the further structural constraints on each block.

\subsubsection{Structure of Operators between Isotypic Subspaces}
\label{sec:structure_of_operators_between_isotypic_subspaces}
In this section, we shift the focus from the input and output function spaces, $\LpxFull$ and $\LpyFull$; and the operator $\oT:\LpxFull \mapsto \LpyFull$, to their individual isotypic subspaces, $\LpxFullIso$ and $\LpyFullIso$ for $\isoIdx \in [1, \isoNum]$, and the operators $\oTiso: \LpxFullIso \mapsto \LpyFullIso$ (\cref{eq:isotypic_decomposition_xy}).

Recall from \cref{thm:isotypic_decomposition}, that each isotypic subspace possesses unitary group representations that decompose into direct sums of (infinitely many) multiplicities of the irreducible representation of type $\isoIdx$; that is:
\begin{equation}
	\small
	\rep[\LpxFullIso](\g) \sim \Oplus_{\isoIrrepIdx=1}^{\infty} \irrep[\isoIdx](\g)
	\qquad \text{and} \qquad
	\rep[\LpyFullIso](\g) \sim \Oplus_{\isoIrrepIdx=1}^{\infty} \irrep[\isoIdx](\g).
\end{equation}
This implies that each isotypic subspace further decomposes into (infinitely many) finite-dimensional $\G$-stable subspaces:
$\LpxFullIso := \Oplus_{\isoIrrepIdx=1}^{\infty} \LpxFullIso[\isoIdx,\isoIrrepIdx]$ and $\LpyFullIso := \Oplus_{\isoIrrepIdx=1}^{\infty} \LpyFullIso[\isoIdx,\isoIrrepIdx]$. Each subspace $\LpxFullIso[\isoIdx,\isoIrrepIdx]$ (and similarly $\LpyFullIso[\isoIdx,\isoIrrepIdx]$) has finite dimension ${\irrepDim[\isoIdx]}\leq \infty$ and its elements transform according to the irreducible representation $\irrep[\isoIdx]$ of the group $\G$.

The modular structure of the isotypic subspaces results in a similar structure of $\oTiso$ as it decomposes into $\G$-equivariant maps between the multiplicities of finite-dimensional, irreducible, $\G$-stable subspaces
$
	\oT^{\isoIdxBrace}_{i,j}: \LpxFullIso[\isoIdx,i] \mapsto \LpyFullIso[\isoIdx,j]
$
for $i,j \in \N$, such that, in such basis the infinite-dimensional matrix form of $\oTiso$ can be expressed as:
\begin{equation}
	\label{eq:op_kronecker_prod_structure_iso_subspace_structure_xy}
	\small
	\mT^{\isoIdxBrace} =
	\begin{bsmallmatrix}
		\mT^{\isoIdxBrace}_{1,1} & \mT^{\isoIdxBrace}_{1,2} & \cdots  \\
		\mT^{\isoIdxBrace}_{2,1} & \ddots &  \cdots \\
		\vdots & \vdots & \ddots
	\end{bsmallmatrix}
	\quad
	\text{ s.t.}\quad
	\mT^{\isoIdxBrace}_{i,j} \in \Endomorphism[\G]{\bar{\vsV}_{\isoIdx}}, \forall\; i,j \in \N,
\end{equation}
Where each $\mT^{\isoIdxBrace}_{i,j}$ is constrained to be an (real) endomorphism of the vector space $\bar{\vsV}_{\isoIdx}$ associated to the irreducible representation $\irrep[\isoIdx]: \G \to \UG[\bar{\vsV}_{\isoIdx}]$.

In practical terms, this implies that each operator $\oTiso$ is constructed from a combination of infinitetly many elements of the endomorphism space $\Endomorphism[\G]{\bar{\vsV}_{\isoIdx}}$ (\cref{def:endomorphism}). Crucially, for compact symmetry groups and real irreducible representations, the space of endomorphisms is $1$, $2$ or $4$ dimensional, and possess an analytical basis set, defined by the map $\Psi_{\isoIdx}: \sK \to \Endomorphism[\G]{\bar{\vsV}_{\isoIdx}}$, where $\sK \in \{\R,\C,\sH \}$ denotes the basis algebra associated to the irreducible representation $\irrep[\isoIdx]$ (see details in \cref{def:equivLinearMapsLong}).

Such a constraint implies that each $\mT^{\isoIdxBrace}_{i,j}$ can be expressed as a linear combination of endomorphism basis elements: $\mT^{\isoIdxBrace}_{i,j} = \sum_{b \in \sK} \alpha_{b,i,j} \Psi_{\isoIdx}(b)$. Since every element of the endormorphism basis $\{ \Psi_{\isoIdx}(b) \}_{b \in \sK}$ is a finite-dimensional orthogonal matrix, this structure can be interpreted as a constraint on the dimensionality of the singular spaces of the operator $\oTiso$ to be of dimension larger than ${\irrepDim[\isoIdx]} = |\irrep[\isoIdx]|$, as summarized in the following proposition:

\begin{proposition}[Minimum dimensionality of singular space of $\G$-equivariant operators between isotypic subspaces]
	\label{prop:op_isospaces_singular_space_dimensionality}
	Let $\oTiso: \LpxFullIso \mapsto \LpyFullIso$ be a $\G$-equivariant operator between isotypic subspaces $\LpxFullIso$ and $\LpyFullIso$ of type $\isoIdx$. Then,
	the minimum dimension of a singular space of the operator is ${\irrepDim[\isoIdx]}$.
	\begin{proof}
		The proof follows naturally from \cref{prop:endomorphism_uniform_singular_values}, which characterizes the singular spaces of each irrep endomorphism.
	\end{proof}
\end{proposition}

\subsection[Finite-Rank Approximation of G-Equivariant Operators]{Finite-Rank Approximation of $\G$-Equivariant Operators}
\label{sec:equiv_op_finite_rank_approximation}
In practical applications, infinite-dimensional operators are approximated by finite-dimensional ones to enable computation. For any linear integral operator $\oT: \LpxFull \mapsto \LpyFull$, the optimal rank-$r$ approximation in the Hilbert-Schmidt norm is obtained by truncating its \gls{svd} to the top $r$ singular values and associated left/right singular functions. Let $\{\sval_i\}_{i=1}^\infty$ be the singular values of $\oT$ in decreasing order and let $\{\lsfn_i\}_{i=1}^\infty \subset \LpxFull$, $\{\rsfn_i\}_{i=1}^\infty \subset \LpyFull$ be the corresponding singular functions satisfying $\innerprod[\muy]{\rsfn_i, \oT \lsfn_i} = \sval_i$ for each $i\in\N$ and $\innerprod[\muy]{\rsfn_i,\oT \lsfn_j} = 0$ when $i\neq j$. The best rank-$r$ approximation of $\oT$ is then given by \citep{eckart1936approximation}:
\begin{equation}
	\label{eq:best_rank_r_approximation}
	\small
	\lowrankOp{\oT} f = \sum_{i=1}^r \sval_i \innerprod[\mux]{\lsfn_i,f} \rsfn_i, \quad \forall f \in \LpxFull, \qquad \iff \qquad \pmd(\vx, \vy) \approx \sum_{i=1}^r \sval_i \lsfn_i(\vx) \rsfn_i(\vy).
\end{equation}
Since the left and right singular functions form orthonormal bases for $\LpyFull$ and $\LpxFull$, a rank-$r$ approximation reduces these infinite-dimensional spaces to the $r$-dimensional subspaces $\Lpx = \text{span}(\{\lsfn_i\}_{i=1}^r)$ and $\Lpy = \text{span}(\{\rsfn_i\}_{i=1}^r)$.

When $\LpxFull$ and $\LpyFull$ are symmetric function spaces with group actions $\Glact[\LpxFull]$ and $\Glact[\LpyFull]$ of a compact group $\G$, and $\oT$ is $\G$-equivariant, the finite-rank approximation $\lowrankOp{\oT}:\Lpx \to \Lpy$ must satisfy that for all $f \in \Lpx$, $h \in \Lpy$, and $\g \in \G$, both $\g \Glact[\LpxFull] f \in \Lpx$ and $\g \Glact[\LpyFull] h \in \Lpy$. This ensures that $\g \Glact[\LpyFull] [\lowrankOp{\oT} f] = \lowrankOp{\oT} [\g \Glact[\LpxFull] f]$ (see \cref{sec:representation_theory_symmetric_function_spaces}).

Moreover, since $\LpxFull$ and $\LpyFull$ decompose orthogonally into isotypic subspaces, $\LpxFull = \Oplus^{\perp}_{1 \leq \isoIdx \leq \isoNum} \LpxFullIso$ and $\LpyFull = \Oplus^{\perp}_{1 \leq \isoIdx \leq \isoNum} \LpyFullIso$, the operator $\oT$ is completely determined by the $\isoNum$ operators $\oTiso: \LpxFullIso \to \LpyFullIso$ (see \cref{sec:op_block_diagonal_structure}). Thus, the $\G$-equivariance of $\lowrankOp{\oT}$ depends on that of each finite-rank operator $\lowrankOp[r_{\isoIdx}]{\oTiso}: \LpxIso \to \LpyIso$, which requires the approximated subspaces $\LpxIso$ and $\LpyIso$ to be $\G$-stable. For simplicity, we assume $|\LpxIso| = |\LpyIso| = r_{\isoIdx}$, although this equality need not hold in general.

\paragraph{Constraints on the Dimensionality of Truncation}
Each approximation of an isotypic subspace $\LpxFullIso$ (and similarly $\LpyFullIso$) is $\G$-stable if the group representation is defined using a truncated multiplicity $\irrepMultiplicity_{\isoIdx} < \infty$ for the $\isoIdx^{\text{th}}$ irreducible representation, i.e. $\rep[\LpxIso] \sim \Oplus_{\isoIrrepIdx=1}^{\irrepMultiplicity_{\isoIdx}} \irrep[\isoIdx]$ and $\rep[\LpyFullIso] \sim \Oplus_{\isoIrrepIdx=1}^{\irrepMultiplicity_{\isoIdx}} \irrep[\isoIdx]$. Consequently, the dimension of the approximated subspaces is multiple of the irreducible representation's dimension: $r_{\isoIdx} = {\irrepDim[\isoIdx]}\,\irrepMultiplicity_{\isoIdx}$ (see \cref{sec:structure_of_operators_between_isotypic_subspaces}).

\paragraph{Spectral Decomposition}
\label{sec:low_rank_approximation_full_operator}
Given the block-diagonal structure of the operator $\oT$ in the isotypic basis (\cref{eq:op_equivariant_matrix_representation_block_structure}), the truncated \gls{svd} of $\oT$ reduces to performing the truncated \gls{svd} of each per-isotypic operator $\oTiso$. Let $\Lpx \subset \LpxFull$ and $\Lpy \subset \LpyFull$ be the $\G$-stable finite-dimensional approximations of the input/output spaces of $\oT$, endowed with group representations
$\rep[\Lpx] = \Oplus_{\isoIdx=1}^{\isoNum} \rep[\LpxIso] = \Oplus_{\isoIdx=1}^{\isoNum} \Oplus_{\isoIrrepIdx=1}^{\irrepMultiplicity_{\isoIdx}} \irrep[\isoIdx]$
and
$\rep[\Lpy] = \Oplus_{\isoIdx=1}^{\isoNum} \rep[\LpyIso] = \Oplus_{\isoIdx=1}^{\isoNum} \Oplus_{\isoIrrepIdx=1}^{\irrepMultiplicity_{\isoIdx}} \irrep[\isoIdx]$. Here, $\irrepMultiplicity_{\isoIdx} \in \N$ denotes the multiplicity of the irreducible representation of type $\isoIdx$, and $\irrepDim[\isoIdx] := |\irrep[\isoIdx]|$ is its dimension. Then, the structural constraints on the \gls{svd} of the restriction of $\oT$ to these spaces are summarized in the following theorem:

\begin{theorem}[Isotypic-spectral basis]
	\label{thm:iso_svd}
	Let $\oT$ be a $\G$-equivariant operator and let $\oT_\star \colon \Lpy \to \Lpx$ be its $\G$-equivariant restriction in finite dimensions. Then, the singular value decomposition of the restricted operator matrix representation $\mT_{\star}$ reduces to:
	\begin{equation*}
		\label{eq:cond_exp_op_svd_isosingular_basis}
		\small
		\begin{aligned}
			\small
			\mT_{\star} = \Oplus_{\isoIdx=1}^{\isoNum} \mT^{\isoIdxBrace}_{\star}
			 & =
			\Oplus_{\isoIdx=1}^{\isoNum} \mW^\isoIdxBrace_{\star} \mS^\isoIdxBrace_{\star} \mM^{\isoIdxBrace\top}_{\star}
			=
			\Oplus_{\isoIdx=1}^{\isoNum} \mU^{\isoIdxBrace}_{\star} (\mSigma^{\isoIdxBrace}_{\star} \otimes \Identity_{\irrepDim[\isoIdx]}) \mV_{\star}^{\isoIdxBrace\top}
		\end{aligned}
	\end{equation*}
	Where $\Identity_{\irrepDim[\isoIdx]}$ denotes the identity matrix in $\irrepDim[\isoIdx]$-dimensions and
	$\mSigma^{\isoIdxBrace}_{\star}$ denotes the infinite-dimensional diagonal matrix singular values per irreducible $\G$-stable subspace.
\end{theorem}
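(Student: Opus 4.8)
The plan is to assemble the statement from three structural facts already established in the excerpt, none of which requires new computation. First, I would invoke \Cref{remark:invariant_kernel_implies_stable_basis} together with the block-diagonal analysis of \Cref{sec:op_block_diagonal_structure}: since $\oT$ is $\G$-equivariant and $\oT_\star$ is its $\G$-equivariant restriction to the $\G$-stable finite-dimensional subspaces $\Lpx$ and $\Lpy$, in the isotypic basis its matrix $\mT_\star$ is block-diagonal, $\mT_\star = \Oplus_{\isoIdx=1}^{\isoNum} \mT_\star^{\isoIdxBrace}$, where each block $\mT_\star^{\isoIdxBrace}$ is the matrix of the restricted operator $\lowrankOp[r_\isoIdx]{\oTiso}\colon \LpxIso \to \LpyIso$. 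This reduces the SVD of $\mT_\star$ to computing the SVD of each isotypic block independently, since the SVD of a block-diagonal matrix is the direct sum of the SVDs of its blocks.

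Second, for a fixed isotypic index $\isoIdx$, I would apply \Cref{eq:low_rank_op_kronecker_prod_structure_iso_subspace_structure_xy} (equivalently the Schur-lemma argument of \Cref{sec:structure_of_operators_between_isotypic_subspaces}): with the basis of $\LpxIso$ and $\LpyIso$ chosen so that the group representations are $\Oplus_{\isoIrrepIdx=1}^{\irrepMultiplicity_\isoIdx}\irrep[\isoIdx]$, the block has the Kronecker form $\mT_\star^{\isoIdxBrace} = \Theta^{\isoIdxBrace}\otimes\Identity_{\irrepDim[\isoIdx]}$, where $\Theta^{\isoIdxBrace}\in\R^{\irrepMultiplicity_\isoIdx\times\irrepMultiplicity_\isoIdx}$ collects the $\irrepMultiplicity_\isoIdx^2$ degrees of freedom. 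Third, I would use the standard fact that the SVD respects Kronecker products with the identity: if $\Theta^{\isoIdxBrace} = \mU_\star^{\isoIdxBrace}\mSigma_\star^{\isoIdxBrace}\mV_\star^{\isoIdxBrace\top}$ is the (reduced) SVD of the free-parameter matrix, then
\begin{equation*}
  \Theta^{\isoIdxBrace}\otimes\Identity_{\irrepDim[\isoIdx]}
  = \bigl(\mU_\star^{\isoIdxBrace}\otimes\Identity_{\irrepDim[\isoIdx]}\bigr)
    \bigl(\mSigma_\star^{\isoIdxBrace}\otimes\Identity_{\irrepDim[\isoIdx]}\bigr)
    \bigl(\mV_\star^{\isoIdxBrace}\otimes\Identity_{\irrepDim[\isoIdx]}\bigr)^{\top},
\end{equation*}
and the outer factors $\mU_\star^{\isoIdxBrace}\otimes\Identity_{\irrepDim[\isoIdx]}$, $\mV_\star^{\isoIdxBrace}\otimes\Identity_{\irrepDim[\isoIdx]}$ are orthogonal because the Kronecker product of orthogonal matrices is orthogonal, while $\mSigma_\star^{\isoIdxBrace}\otimes\Identity_{\irrepDim[\isoIdx]}$ is nonnegative diagonal. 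Hence this is a bona fide SVD of the block, with each singular value of $\Theta^{\isoIdxBrace}$ appearing with multiplicity $\irrepDim[\isoIdx]$ (consistent with \Cref{prop:op_isospaces_singular_space_dimensionality}). Setting $\mW_\star^{\isoIdxBrace}\mS_\star^{\isoIdxBrace}\mM_\star^{\isoIdxBrace\top} := (\mU_\star^{\isoIdxBrace}\mSigma_\star^{\isoIdxBrace}\mV_\star^{\isoIdxBrace\top})\otimes\Identity_{\irrepDim[\isoIdx]}$ and reassembling the direct sum over $\isoIdx$ gives exactly the displayed identity, and \Cref{cor:representation_on_singular_basis} confirms that the group action on the resulting spectral basis coincides with the isotypic representation, so the isotypic-spectral basis is well-defined.

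The main obstacle is not analytic but organizational: one must be careful that the "optimal" finite-rank restriction $\oT_\star$ genuinely lands in $\G$-stable subspaces so that the Kronecker structure is available. This requires justifying that truncating each $\oTiso$ to its top $\irrepMultiplicity_\isoIdx$ singular values (counted without the $\irrepDim[\isoIdx]$ multiplicity) — rather than to an arbitrary rank — is what produces a $\G$-stable truncation; concretely, because singular values come in blocks of size $\irrepDim[\isoIdx]$ by \Cref{prop:op_isospaces_singular_space_dimensionality}, any rank-$r_\isoIdx$ truncation with $r_\isoIdx = \irrepDim[\isoIdx]\irrepMultiplicity_\isoIdx$ automatically keeps whole singular spaces and is therefore $\G$-stable. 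I would state this observation explicitly before invoking \Cref{eq:low_rank_op_kronecker_prod_structure_iso_subspace_structure_xy}. Everything else is a direct citation of the Kronecker-SVD identity and the block-diagonal decomposition, so the proof should be short.
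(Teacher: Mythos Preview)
Your proposal is correct and follows essentially the same route as the paper: the theorem is stated as a summary of the structural facts developed in \Cref{sec:op_block_diagonal_structure,sec:structure_of_operators_between_isotypic_subspaces,sec:finite_rank_approximation_isotypic_subspaces}, namely the block-diagonal decomposition into isotypic components, the Kronecker form $\Theta^{\isoIdxBrace}\otimes\Identity_{\irrepDim[\isoIdx]}$ of each block via Schur's lemma, and the inheritance of that Kronecker structure by the SVD (exactly as in the proof of \Cref{prop:op_isospaces_singular_space_dimensionality} and \cref{eq:low_rank_op_kronecker_prod_structure_iso_subspace_structure_xy}). Your added remark that $r_\isoIdx=\irrepDim[\isoIdx]\irrepMultiplicity_\isoIdx$ guarantees the truncation keeps whole singular spaces and hence is $\G$-stable is a useful clarification that the paper leaves implicit.
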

\Cref{thm:iso_svd} shows that symmetries force each isotypic subspace’s singular space to have dimension at least $\irrepDim[\isoIdx]$, which is the minimum required for a faithful representation of $\G^{\isoIdxBrace}$ (see \cref{def:irreducible_representation}). Because in practice our goal is to approximate the top $r$ singular spaces of $\oT$, this result precisely characterizes the constraints imposed by $\G$-equivariance on the optimal rank-$r$ truncation’s spectral basis and corresponding kernel function in \cref{eq:pmd_model}, as summarized in the following corollary:
\begin{corollary}[Symmetry constraints on the spectral basis]
	\label{cor:iso_svd}
	Let $\oT$ be a $\G$-equivariant operator and let $\oT_\star \colon \Lpy \to \Lpx$ be its $\G$-equivariant restriction in $r$-dimensions. Then, the spectral basis of $\oT_\star$ is given by:
	\begin{equation}
		\label{eq:disentangled_spectral_basis}
		\small
		\pmd_\star(\vx,\vy) = \sum_{\isoIdx=1}^{\isoNum} \pmd^{\isoIdxBrace}_\star(\vx,\vy) = \sum_{\isoIdx=1}^{\isoNum} \sum_{s=1}^{r_\isoIdx} \sval^\isoIdxBrace_s \sum_{i=1}^{\irrepDim[\isoIdx]} \lsfn_{s,i}^\isoIdxBrace(\vx) \rsfn_{s,i}^\isoIdxBrace(\vy),
	\end{equation}
	where $\{\lsfn_{s,i}^\isoIdxBrace\}_{i\in[\irrepDim[\isoIdx]]}$ and $\{\rsfn_{s,i}^\isoIdxBrace\}_{i\in[\irrepDim[\isoIdx]]}$ are the left and right singular basis sets of the $s^{\text{th}}$ singular space of $\oTiso$. Note that the truncated dimension is restricted by the dimensionality and multiplicities of the individual irreducible representations $r = \sum_{\isoIdx=1}^{\isoDim} r_\isoIdx = \sum_{\isoIdx=1}^{\isoDim} \irrepDim[\isoIdx] \irrepMultiplicity_\isoIdx$.
\end{corollary}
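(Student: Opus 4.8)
The plan is to deduce the statement directly from \cref{thm:iso_svd} together with the operator--kernel correspondence developed in \cref{sec:equiv_op_infinite_matrix_representation}. First I would use that, since $\oT$ is $\G$-equivariant and the approximating subspaces $\Lpx,\Lpy$ are $\G$-stable, \cref{eq:op_equivariant_matrix_representation_block_structure} puts the restricted operator in block-diagonal form $\mT_\star=\Oplus_{\isoIdx=1}^{\isoNum}\mT^{\isoIdxBrace}_\star$ in the isotypic bases $\bSet{\Lpx}^{\iso}=\cup_{\isoIdx}\bSet{\LpxIso}$ and $\bSet{\Lpy}^{\iso}=\cup_{\isoIdx}\bSet{\LpyIso}$. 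Because these isotypic subspaces are mutually orthogonal, the kernel expansion $\pmd_\star(\vx,\vy)=\sum_{a,b}[\mT_\star]_{a,b}\,\bfy_a(\vy)\,\bfx_b(\vx)$ (cf. \cref{eq:matrix_representation_operator}) splits additively into $\pmd_\star(\vx,\vy)=\sum_{\isoIdx=1}^{\isoNum}\pmd^{\isoIdxBrace}_\star(\vx,\vy)$, where each $\pmd^{\isoIdxBrace}_\star$ is the kernel of the block $\oTiso\colon\LpyIso\to\LpxIso$. This reduces the claim to a single isotypic block.

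Next I would invoke \cref{thm:iso_svd} within block $\isoIdx$ (equivalently \cref{eq:low_rank_op_kronecker_prod_structure_iso_subspace_structure_xy}), which gives the Kronecker factorization $\mT^{\isoIdxBrace}_\star=\mO^{\isoIdxBrace}\otimes\Identity_{\irrepDim[\isoIdx]}$ with $\mO^{\isoIdxBrace}=\mU^{\isoIdxBrace}_\star\mSigma^{\isoIdxBrace}_\star\mV^{\isoIdxBrace\top}_\star\in\R^{\irrepMultiplicity_\isoIdx\times\irrepMultiplicity_\isoIdx}$. By the mixed-product rule the SVD tensors through, so the singular values of $\mT^{\isoIdxBrace}_\star$ are exactly $\{\sval^{\isoIdxBrace}_s\}_{s=1}^{r_\isoIdx}$ with $\sval^{\isoIdxBrace}_s:=[\mSigma^{\isoIdxBrace}_\star]_{s,s}$, each occurring with multiplicity $\irrepDim[\isoIdx]$, and the corresponding left/right singular vectors are $\{\vu^{\isoIdxBrace}_s\otimes\ve_i\}_{i\in[\irrepDim[\isoIdx]]}$ and $\{\vv^{\isoIdxBrace}_s\otimes\ve_i\}_{i\in[\irrepDim[\isoIdx]]}$, where $\vu^{\isoIdxBrace}_s,\vv^{\isoIdxBrace}_s$ are the $s$-th columns of $\mU^{\isoIdxBrace}_\star,\mV^{\isoIdxBrace}_\star$ and $\{\ve_i\}$ is the standard basis of $\R^{\irrepDim[\isoIdx]}$. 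Reading each such vector as an expansion-coefficient vector in the isotypic basis (via \cref{eq:fn_basis_expansion}) identifies, for every $s$, an $\irrepDim[\isoIdx]$-dimensional left singular space $\operatorname{span}\{\lsfn^{\isoIdxBrace}_{s,i}\}_{i\in[\irrepDim[\isoIdx]]}\subset\LpxIso$ and a matched right singular space $\operatorname{span}\{\rsfn^{\isoIdxBrace}_{s,i}\}_{i\in[\irrepDim[\isoIdx]]}\subset\LpyIso$. Substituting the truncated SVD of $\mT^{\isoIdxBrace}_\star$ back into the matrix-to-kernel map and grouping terms by singular space gives $\pmd^{\isoIdxBrace}_\star(\vx,\vy)=\sum_{s=1}^{r_\isoIdx}\sval^{\isoIdxBrace}_s\sum_{i=1}^{\irrepDim[\isoIdx]}\lsfn^{\isoIdxBrace}_{s,i}(\vx)\,\rsfn^{\isoIdxBrace}_{s,i}(\vy)$, and summing over $\isoIdx$ is precisely \cref{eq:disentangled_spectral_basis}. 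The dimension identity follows since $\G$-stability forces $r_\isoIdx=\irrepDim[\isoIdx]\,\irrepMultiplicity_\isoIdx$ per block (\cref{sec:finite_rank_approximation_isotypic_subspaces}), whence $r=\sum_{\isoIdx}r_\isoIdx=\sum_{\isoIdx}\irrepDim[\isoIdx]\,\irrepMultiplicity_\isoIdx$.

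The only subtle point — and the place where care is needed — is the multiplicity bookkeeping: one must check that the inner index $i$ in \cref{eq:disentangled_spectral_basis} ranges over the $\irrepDim[\isoIdx]$-fold degeneracy forced on each singular value (the dimension of $\irrep[\isoIdx]$), not over the irrep multiplicity $\irrepMultiplicity_\isoIdx$, and that within each such $\irrepDim[\isoIdx]$-dimensional singular space the orthonormal basis is determined only up to the $\G$-equivariant ambiguity permitted by Schur's lemma (\cref{lemma:schursLemma}); this is exactly what makes the group action on the singular basis coincide with the isotypic one (\cref{cor:representation_on_singular_basis}). Since \cref{thm:iso_svd} already delivers the Kronecker form of each block, no further functional-analytic input is required and the argument is a routine translation between the operator, matrix, and kernel descriptions.
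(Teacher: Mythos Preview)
Your argument is correct and matches the paper's approach exactly: the corollary is stated in the paper without a separate proof, as an immediate consequence of \cref{thm:iso_svd} via the block-diagonal decomposition \eqref{eq:op_equivariant_matrix_representation_block_structure} and the matrix-to-kernel correspondence of \cref{sec:equiv_op_infinite_matrix_representation}. Your identification of the indexing subtlety (that the inner index $i$ runs over the $\irrepDim[\isoIdx]$-fold degeneracy while $s$ indexes distinct singular spaces) is well taken and clarifies what the paper leaves implicit in \eqref{eq:disentangled_spectral_basis}.
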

\section[Relevant G-Equivariant Operators in Probability Theory]{Relevant $\G$-Equivariant Operators in Probability Theory}
\label{sec:statistical_properties_symmetric_function_spaces}

In this section we study the properties of expectations and covariances of functions of symmetric random variables in the presence our assumed symmetry priors \cref{eq:main_assumtions}. In a nutshell, we characterize how expectations of observables of symmetric random variables are invariant to the group action, and that the covariance and cross-covariance matrices in these spaces are $\G$-equivariant and hence inherit rich structural constraints that can aid in empirical estimation.

Let $(\rvx, \rvy)$ be two vector-valued random variables over the probability spaces $(\vsX, \Sigma_{\vsX}, \mux)$ and $(\vsY, \Sigma_{\vsY}, \muy)$, with $\LpxFull$ and $\LpyFull$ being the corresponding square-integrable function spaces and $\one_{\mux} \in \LpxFull$, $\one_{\muy} \in \LpyFull$ the characteristic functions of sets with nonzero probability.

When $\LpxFull$ and $\LpyFull$ are symmetric function spaces (see \cref{sec:representation_theory_symmetric_function_spaces}), denote their orthogonal isotypic decompositions by $\LpxFull := \Oplus_{\isoIdx=1}^{\isoNum} \LpxFullIso$ and $\LpyFull := \Oplus_{\isoIdx=1}^{\isoNum} \LpyFullIso$ (cf. \cref{thm:isotypic_decomposition}). Any function $f \in \LpxFull$ or $h \in \LpyFull$ decomposes as $f = \sum_{\isoIdx=1}^{\isoNum} f^{\isoIdxBrace}$ and $h = \sum_{\isoIdx=1}^{\isoNum} h^{\isoIdxBrace}$ (see \cref{eq:fn_harmonic_analysis_xy}). By convention, the first isotypic subspace corresponds to the trivial group action. Thus, we write $\LpxFullIso[\inv] := \LpxFullIso[1] \subset \LpxFull$ and denote the $\G$-invariant component of $f$ by $f^{\inv} := f^{(1)}$ (and similarly for $\LpyFull$).

\subsection{The Expectation Operator}
\label{sec:expectation_operator}

The expected value of a function $f \in \Lpz:= \LpxFull$ can be interpreted as the result of applying a linear integral operator that projects each $f \in \Lpz$ to a constant function evaluating to the function's expected value $\E_{\mux}f$.

\begin{definition}[Expectation operator]
	\label{def:expectation_operator}
	Let $\Lpz \subseteq \LpxFull$ be a function space. The expectation operator $\oE_{\rvx}: \Lpz \mapsto \Lpz$ is a linear integral operator defined by a constant kernel function $k_{\E}(\vx, \vx') = \one_{P_{\rvx}}(\vx)\one_{P_{\rvx}}(\vx')$ for all $\vx,\vx' \in \vsX$, such that this operator maps any function $f$ to a constant function that evaluates to the function's expected value $\one_{P_{\rvx}}(\cdot) \E_{\mux} f$, that is:
	\begin{equation}
		\label{eq:expectation_op}
		\small
		[\oE_{\rvx} f](\vx') = \int_{\vsX} k_{\E}(\vx, \vx') f(\vx) \mu(d\vx)  =  \one_{P_{\rvx}}(\vx') \int_{\vsX} f(\vx) \mu(d\vx) \equiv \one_{P_{\rvx}}(\vx') \E_{\mux} f.
	\end{equation}
\end{definition}

Whenever $\Lpz$ is a symmetric function space, the operator $\oE_{\rvx}$ commutes with the group action and is $\G$-invariant (\cref{def:equivariantMaps}):

\begin{proposition}[$\G$-invariant expectation operator]
	\label{prop:G_inv_expectation_operator}
	Let $\Lpz$ be a symmetric function space with the action $\Glact[\Lpz]$ of a compact symmetry group $\G$. Then, the expectation operator commutes with the group action and is a $\G$-invariant operator $\oE_{\rvx}: \Lpz \mapsto \Lpz^\inv \subseteq \Lpz$:
	\begin{equation}
		\label{eq:G_inv_expectation_operator}
		\small
		\oE_{\rvx} [\g \Glact[\Lpz] f] = \g \Glact[\Lpz] [\oE_{\rvx} f]
		\qquad \text{and} \qquad
		\oE_{\rvx} f = \oE_{\rvx} [\g \Glact[\Lpz] f] \in \Lpz^\inv,
		\qquad \forall\; f \in \Lpz, \g \in \G.
	\end{equation}
	\begin{proof}
		The operator $\oE_{\rvx}$ commutes with the group action as its kernel function $k_{\E}$ is constant and therefore $\G$-invariant (\cref{def:equiv_operator}). Furthermore since the image of the expectation operator are constant functions, these functions belong to the subspace of $\G$-invariant functions, $\Lpz^\inv$.
	\end{proof}
\end{proposition}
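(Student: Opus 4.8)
The plan is to establish the two displayed identities in turn, both of which reduce to the $\G$-invariance of the constant kernel $k_{\E}(\vx,\vx')=\one_{P_{\rvx}}(\vx)\one_{P_{\rvx}}(\vx')$ together with the $\G$-invariance of the reference measure $\mux$ (which is part of the hypothesis that $\Lpz$ is a symmetric function space — it is precisely what makes the action $\Glact[\Lpz]$ unitary; see \cref{subsec:unitary_group_representation_on_f_space}).

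First I would show $\oE_{\rvx}$ commutes with the group action. The kernel $k_{\E}$ is constant on the support of $P_{\rvx}$, hence trivially $\G$-invariant: $k_{\E}(\g\Glact[\vsX]\vx,\g\Glact[\vsX]\vx')=k_{\E}(\vx,\vx')$ for all $\g\in\G$. By \cref{def:equiv_operator} (applied with $\vsY=\vsX$ and $\muy=\mux$) this is exactly the condition under which an integral operator with $\G$-invariant kernel commutes with the action, giving $\oE_{\rvx}[\g\Glact[\Lpz]f]=\g\Glact[\Lpz][\oE_{\rvx}f]$. Equivalently one can substitute directly: in $[\oE_{\rvx}(\g\Glact[\Lpz]f)](\vx')=\int_{\vsX}k_{\E}(\vx,\vx')\,f(\g^{-1}\Glact[\vsX]\vx)\,\mux(d\vx)$ change variables $\vu=\g^{-1}\Glact[\vsX]\vx$, use $\mux(\g\Glact[\vsX]d\vu)=\mux(d\vu)$ and constancy of $k_{\E}$, and recover $[\oE_{\rvx}f](\vx')$; since $\oE_{\rvx}f$ is a constant function, this substitution in fact yields the stronger identity $\oE_{\rvx}[\g\Glact[\Lpz]f]=\oE_{\rvx}f$ immediately.

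Next I would argue $\operatorname{ran}(\oE_{\rvx})\subseteq\Lpz^{\inv}$. By \cref{def:expectation_operator}, $[\oE_{\rvx}f](\cdot)=\one_{P_{\rvx}}(\cdot)\,\E_{\mux}f$ is a scalar multiple of $\one_{P_{\rvx}}$; since $\mux$ is $\G$-invariant, the set $\{\one_{P_{\rvx}}=1\}$ is $\G$-invariant up to $\mux$-null sets, so $\g\Glact[\Lpz]\one_{P_{\rvx}}=\one_{P_{\rvx}}(\g^{-1}\Glact[\vsX]\cdot)=\one_{P_{\rvx}}$ and every constant function is fixed by the action, i.e. belongs to $\Lpz^{\inv}$ (the isotypic component carrying the trivial representation, cf. \cref{thm:isotypic_decomposition} and \cref{ex:finite_dimensional_symmetric_function_space}). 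Combining with the equivariance above gives $\oE_{\rvx}[\g\Glact[\Lpz]f]=\g\Glact[\Lpz][\oE_{\rvx}f]=\oE_{\rvx}f\in\Lpz^{\inv}$, which is the second displayed identity. I do not expect a genuine obstacle here: the result is a direct consequence of the constant-kernel structure. The only subtlety worth spelling out is that $\G$-invariance of $\mux$ (hence applicability of \cref{def:equiv_operator}) and the membership $\one_{P_{\rvx}}\in\Lpz^{\inv}$ are both already encoded in the standing assumption that $\Lpz$ is a symmetric function space.
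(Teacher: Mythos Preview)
Your proposal is correct and follows essentially the same approach as the paper: the paper's proof likewise argues that the kernel $k_{\E}$ is constant hence $\G$-invariant, invokes \cref{def:equiv_operator} to obtain commutation with the group action, and then observes that the image consists of constant functions, which lie in $\Lpz^{\inv}$. Your version simply spells out these steps in more detail (the explicit change of variables and the verification that $\one_{P_{\rvx}}\in\Lpz^{\inv}$), but the underlying argument is identical.
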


As an operator that commutes with the group action, the expectation operator decomposes into $\oE_{\rvx} := \Oplus_{\isoIdx=1}^{\isoNum} \oE_{\rvx}^{\isoIdxBrace}$, where $\oE_{\rvx}^{\isoIdxBrace}: \Lpz^{\isoIdxBrace} \mapsto \Lpz^{\isoIdxBrace}$ denotes the restriction of $\oE_{\rvx}$ to the isotypic subspace $\Lpz^{\isoIdxBrace}$ (\cref{sec:op_block_diagonal_structure}). However, since the image of the operator lies in the subspace of $\G$-invariant functions, $\Im(\oE_{\rvx}) \subset \Lpz^\inv$, it follows that $\oE_{\rvx}^{\isoIdxBrace} = \bm{0}$ for every $\isoIdx \ne \inv$. Consequently, we obtain the following:

\begin{corollary}[Expectation of a function depends only on its $\G$-invariant component]
	\label{cor:expectation_invariant_component}
	For any function $f \in \Lpz$, the expectation depends only on its $\G$-invariant component:
	\begin{equation}
		\label{eq:expectation_invariant_component}
		[\oE_{\rvx} f](\cdot)
		= \sum_{\isoIdx=1}^{\isoNum} [\oE_{\rvx}^{\isoIdxBrace} f^{\isoIdxBrace}](\cdot)
		= [\oE_{\rvx}^{\inv} f^{\inv}](\cdot) := \one_\mu(\cdot) \E_\mu f^{\inv}.
	\end{equation}
\end{corollary}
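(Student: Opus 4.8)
The plan is to deduce the corollary directly from \cref{prop:G_inv_expectation_operator} together with the block‑diagonal structure of $\G$‑equivariant operators developed in \cref{sec:op_block_diagonal_structure}. Since \cref{prop:G_inv_expectation_operator} already establishes that $\oE_{\rvx}$ commutes with the group action and has image in $\Lpz^{\inv}$, essentially all the work is done; the corollary is a bookkeeping consequence, so I expect no genuine obstacle—the only point requiring a moment's care is the trivial‑intersection observation below.

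First I would invoke \cref{prop:G_inv_expectation_operator} to record that $\oE_{\rvx}$ is a $\G$‑equivariant operator. Applying the isotypic decomposition of $\G$‑equivariant operators (\cref{eq:op_equivariant_matrix_representation_block_structure}), it splits as $\oE_{\rvx} = \Oplus_{\isoIdx=1}^{\isoNum} \oE_{\rvx}^{\isoIdxBrace}$ with each block a map $\oE_{\rvx}^{\isoIdxBrace}\colon \Lpz^{\isoIdxBrace}\to\Lpz^{\isoIdxBrace}$ of the $\isoIdx$‑th isotypic subspace to itself. Decomposing an arbitrary $f \in \Lpz$ as $f = \sum_{\isoIdx=1}^{\isoNum} f^{\isoIdxBrace}$ with $f^{\isoIdxBrace}\in\Lpz^{\isoIdxBrace}$ (cf. \cref{eq:fn_harmonic_analysis_xy}), this yields $[\oE_{\rvx} f](\cdot) = \sum_{\isoIdx=1}^{\isoNum}[\oE_{\rvx}^{\isoIdxBrace} f^{\isoIdxBrace}](\cdot)$, which is the first equality in the statement.

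Next I would use the second half of \cref{prop:G_inv_expectation_operator}, namely $\Im(\oE_{\rvx})\subseteq\Lpz^{\inv}=\Lpz^{(1)}$. For any $\isoIdx\neq 1$ the block $\oE_{\rvx}^{\isoIdxBrace}$ therefore has image contained in $\Lpz^{\isoIdxBrace}\cap\Lpz^{(1)}$, and by the mutual orthogonality of distinct isotypic subspaces (\cref{thm:isotypic_decomposition}) this intersection is $\{\bm 0\}$; hence $\oE_{\rvx}^{\isoIdxBrace}=\bm 0$ for all $\isoIdx\neq 1$. Substituting into the sum collapses it to $[\oE_{\rvx} f](\cdot) = [\oE_{\rvx}^{\inv} f^{\inv}](\cdot)$.

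Finally I would evaluate the surviving term from the definition of the expectation operator (\cref{def:expectation_operator}): its kernel is the constant $k_{\E}(\vx,\vx')=\one_{\mux}(\vx)\,\one_{\mux}(\vx')$, so $[\oE_{\rvx}^{\inv} f^{\inv}](\vx') = \one_{\mux}(\vx')\int_{\vsX} f^{\inv}(\vx)\,\mux(d\vx) = \one_{\mux}(\vx')\,\E_{\mux} f^{\inv}$, giving the claimed identity. As an alternative, purely measure‑theoretic route that sidesteps the block‑structure machinery, one can average the identity $\E_{\mux} f = \E_{\mux}[\g\Glact[\Lpz] f]$ over $\g\in\G$ and recognise $|\G|^{-1}\sum_{\g\in\G}\g\Glact[\Lpz] f$ as the orthogonal projection of $f$ onto $\Lpz^{\inv}$, i.e.\ $f^{\inv}$; but the operator‑theoretic argument above is the most self‑contained given the preceding development.
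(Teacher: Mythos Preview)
Your proposal is correct and follows essentially the same approach as the paper: the paper's argument (stated in the text immediately preceding the corollary) also invokes \cref{prop:G_inv_expectation_operator} to obtain the block decomposition $\oE_{\rvx}=\Oplus_{\isoIdx}\oE_{\rvx}^{\isoIdxBrace}$ via \cref{sec:op_block_diagonal_structure}, and then uses $\Im(\oE_{\rvx})\subset\Lpz^{\inv}$ to conclude $\oE_{\rvx}^{\isoIdxBrace}=\bm 0$ for $\isoIdx\neq\inv$. Your added observation that the vanishing follows from $\Lpz^{\isoIdxBrace}\cap\Lpz^{(1)}=\{\bm 0\}$ (orthogonality of isotypic subspaces) makes explicit a step the paper leaves implicit, and your alternative group-averaging route is a nice bonus not present in the paper.
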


\begin{corollary}[Functions without a $\G$-invariant component are centered]
	\label{cor:functions_without_invariant_component_centered}
	Any function $f = \sum_{\isoIdx=1}^{\isoNum} f^{\isoIdxBrace} \in \LpxFull$ without a $\G$-invariant compoment, i.e., $f^{\inv} = 0$, is centered:
	\begin{equation}
		\label{eq:non_invariant_components_centered}
		[\oE_{\rvx} f](\cdot) = \sum_{\isoIdx=2}^{\isoNum} [\oE_{\rvx}^{\isoIdxBrace} f^{\isoIdxBrace}](\cdot)
		=  \one_\mu(\cdot) 0,
		\qquad \iff \qquad
		\E_\mu f = 0, \quad \forall\; f \in {\LpxFullIso[\inv]}^\perp.
	\end{equation}
\end{corollary}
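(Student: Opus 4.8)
The plan is to obtain the statement as an immediate specialization of \cref{cor:expectation_invariant_component}. First I would observe that, by the isotypic decomposition $\LpxFull=\LpxFullIso[\inv]\,\Oplus^{\perp}\,\big(\Oplus_{\isoIdx=2}^{\isoNum}\LpxFullIso\big)$ from \cref{thm:isotypic_decomposition}, the hypothesis ``$f$ has no $\G$-invariant component'' is exactly the statement $f^{\inv}=0$, which in turn is exactly $f\in{\LpxFullIso[\inv]}^{\perp}=\Oplus_{\isoIdx=2}^{\isoNum}\LpxFullIso$; hence the two formulations appearing in \cref{eq:non_invariant_components_centered} describe the same set of functions. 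Next I would substitute $f^{\inv}=0$ into the conclusion $[\oE_{\rvx}f](\cdot)=\one_\mu(\cdot)\,\E_\mu f^{\inv}$ of \cref{cor:expectation_invariant_component}, which collapses to the constant function $\one_\mu(\cdot)\,0=\bm{0}$; since $\oE_{\rvx}$ is linear and $f=\sum_{\isoIdx=1}^{\isoNum}f^{\isoIdxBrace}=\sum_{\isoIdx=2}^{\isoNum}f^{\isoIdxBrace}$, this also reads $[\oE_{\rvx}f](\cdot)=\sum_{\isoIdx=2}^{\isoNum}[\oE_{\rvx}^{\isoIdxBrace}f^{\isoIdxBrace}](\cdot)=\bm{0}$. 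Finally, reading off the scalar value of this constant function yields $\E_\mu f=0$, which is precisely the stated conclusion.

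For a self-contained argument not relying on \cref{cor:expectation_invariant_component}, I would instead use the $\G$-invariance of $\mux$ directly: by \cref{prop:G_inv_expectation_operator} (equivalently, by the $\G$-invariance of $\mux$ encoded in \cref{eq:symmetry_action_on_f_space_unitarity}) one has $\E_\mu[\g\Glact[\Lpz]f]=\E_\mu f$ for every $\g\in\G$, so averaging over the finite group and using linearity of the expectation gives $\E_\mu f=\E_\mu\!\big[\tfrac{1}{|\G|}\sum_{\g\in\G}\g\Glact[\Lpz]f\big]$; the bracketed quantity is the standard Reynolds projection of $f$ onto $\LpxFullIso[\inv]$, i.e. $f^{\inv}$, which is $0$ by hypothesis, forcing $\E_\mu f=\E_\mu\bm{0}=0$. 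A third, operator-level route mirrors the proof of \cref{cor:expectation_invariant_component}: \cref{prop:G_inv_expectation_operator} makes $\oE_{\rvx}$ commute with the group action, hence block-diagonal in the isotypic basis (\cref{sec:op_block_diagonal_structure}), $\oE_{\rvx}=\Oplus_{\isoIdx=1}^{\isoNum}\oE_{\rvx}^{\isoIdxBrace}$ with $\oE_{\rvx}^{\isoIdxBrace}\colon\Lpz^{\isoIdxBrace}\to\Lpz^{\isoIdxBrace}$; since $\Im(\oE_{\rvx})\subseteq\LpxFullIso[\inv]$ while each block preserves its isotypic subspace, the blocks for $\isoIdx\neq\inv$ must vanish, so $\oE_{\rvx}f=\sum_{\isoIdx=2}^{\isoNum}\oE_{\rvx}^{\isoIdxBrace}f^{\isoIdxBrace}=\bm{0}$ whenever $f^{\inv}=0$.

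I do not expect a substantive obstacle: the statement is essentially a one-line corollary of \cref{cor:expectation_invariant_component}. The only points deserving a moment of care are the bookkeeping equivalence ``$f$ has no $\G$-invariant component'' $\Longleftrightarrow f\in{\LpxFullIso[\inv]}^{\perp}$, which is just the orthogonality of the isotypic subspaces in \cref{thm:isotypic_decomposition}, and the observation that the right-hand side $\one_\mu(\cdot)\,0$ of \cref{eq:non_invariant_components_centered} is literally the zero element of $\LpxFull$, so that ``$\oE_{\rvx}f=\bm{0}$'' and ``$\E_\mu f=0$'' are two readings of the same conclusion.
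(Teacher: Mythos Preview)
Your proposal is correct and matches the paper's approach: the paper states this corollary without a separate proof, presenting it alongside \cref{cor:expectation_invariant_component} as an immediate consequence of the block-diagonal decomposition $\oE_{\rvx}=\Oplus_{\isoIdx=1}^{\isoNum}\oE_{\rvx}^{\isoIdxBrace}$ together with the observation that $\Im(\oE_{\rvx})\subset\Lpz^{\inv}$ forces $\oE_{\rvx}^{\isoIdxBrace}=\bm{0}$ for $\isoIdx\neq\inv$. Your first route (specializing \cref{cor:expectation_invariant_component}) and your third route (operator-level block argument) are exactly the paper's reasoning; the Reynolds-projection variant you offer is a valid alternative but not used in the paper.
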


To better comprehend these concepts we refer the reader to \cref{ex:finite_dimensional_symmetric_function_space}.

\subsection{The Cross-Covariance Operator}
\label{sec:cross_cov_operator}
Given two vector-valued random variables $(\rvx = [\rx_1,\dots,\rx_n], \rvy = [\ry_1,\dots,\ry_m])$ defined on the measure spaces $(\vsX, \Sigma_{\vsX}, \mux)$ and $(\vsY, \Sigma_{\vsY}, \muy)$, a key statistic assessing the linear relationship between scalar components is the covariance:
\begin{equation*}
	\small
	\Cov(\rx_i, \ry_j)
	= \E_{\muxy}[(\rx_i - \E_{\rvx}[\rx_i])(\ry_j - \E_{\rvy}[\ry_j])]
	= \E_{\muxy}[\rx_i \ry_j] - \E_{\rvx}[\rx_i]\E_{\rvy}[\ry_j].
\end{equation*}
For vector-valued random variables, the cross-covariance matrix $\Cov(\rvx, \rvy) \in \R^{n \times m}$ is defined entrywise by $\Cov(\rvx, \rvy)_{i,j} := \Cov(\rx_i, \ry_j)$. The cross-covariance operator is the extension of this concept to the Hilbert spaces of functions $\LpxFull$ and $\LpyFull$.

\begin{definition}[Cross-covariance operator \citep{fukumizu2004dimensionality}]
	\label{def:cross_covariance_operator}
	Let $\Lpx \subseteq \LpxFull$ and $\LpyFull \subseteq \LpyFull$ be two Hilbert spaces of functions defined on the random variables $\rvx$ and $\rvy$, which take values in the measure spaces $(\vsX, \Sigma_{\vsX}, \mux)$ and $(\vsY, \Sigma_{\vsY}, \muy)$, respectively. The cross-covariance operator $\oC_{\rvx\rvy}: \LpyFull \mapsto \LpxFull$ is a linear integral operator defined by
	\begin{equation}
		\label{eq:cross_covariance_operator_def}
		\small
		\innerprod[\mux]{f,\oC_{\rvx\rvy} h} := \Cov(f, h) = \E_{\muxy}[f(\rvx) h(\rvy)] - \E_{\rvx}[f(\rvx)] \E_{\rvy}[h(\rvy)], \qquad \forall\; f \in \LpxFull,\; h \in \LpyFull.
	\end{equation}
	Choosing separable basis sets for the two spaces, $\bSet{\LpxFull} = \{\bfx_i\}_{i\in\N}$ and $\bSet{\LpyFull} = \{\bfy_i\}_{i\in\N}$, the matrix representation of the cross-covariance operator has entries
	$
		[\mC_{\rvx,\rvy}]_{i,j} := \innerprod[\mux]{\bfx_i, \oC_{\rvx\rvy}\bfy_j} = \Cov(\bfx_i, \bfy_j),
	$
	where the covariance is computed with respect to the joint measure $\muxy$ and the marginals $\mux$ and $\muy$. Given a dataset of $N$ samples from the joint distribution $(\vx,\vy) \sim \muxy$, the empirical estimate of the matrix form of the cross-covariance operator is
	\begin{equation}
		\label{eq:empirical_cross_covariance_operator}
		\small
		\widehat{\mC}_{\rvx\rvy} = \frac{1}{N} \sum_{n=1}^{N} \vbfx(\vx_n) \vbfy(\vy_n)^\top -  \EE_\rvx[\vbfx(\vx_n)]\,  \EE_\rvy[\vbfy(\vy_n)]^\top, \quad \vbfx(\cdot) = [\bfx(\cdot)]_{i\in\N},\; \vbfy(\cdot) = [\bfy(\cdot)]_{i\in\N}.
	\end{equation}
\end{definition}
Note that the adjoint of the operator is defined by $\oC_{\rvx\rvy}^* = \oC_{\rvy\rvx} : \LpxFull \mapsto \LpyFull$. In the case $\LpxFull = \LpyFull$, the cross‐covariance operator reduces to the covariance operator, and has an analog definition to \cref{def:cross_covariance_operator}.

\paragraph{Covariance and Cross-Covariance Operators of Symmetric Hilbert Spaces of Functions}
Whenever $\LpxFull$ and $\LpyFull$ are symmetric function spaces, and the joint probability measure is $\G$-invaraint, the cross-covariance operator $\oC_{\rvx\rvy}$ commute with the group action and is $\G$-equivariant (\cref{sec:group_equivariant_linear_maps}):

\begin{proposition}[$\G$-equivariant cross-covariance operator]
	\label{prop:G_equivariant_cross_covariance_operator}
	Let $\LpxFull \subseteq \LpxFull$ and $\LpyFull \subseteq \LpyFull$  be symmetric Hilbert spaces of functions endowed with the group actions $\Glact[\LpxFull]$ and $\Glact[\LpyFull]$ of a compact symmetry group $\G$. Then, whenever the joint probability measure is $\G$-invariant, i.e., $\muxy(\sB, \sA) = \muxy(\g \Glact[\vsX] \sB, \g \Glact[\vsY] \sA)$ for all $\g \in \G, \sB \in \Sigma_\vsX, \sA \in \Sigma_Y$, the cross-covariance operator $\oC_{\rvx\rvy}: \LpyFull \mapsto \LpxFull$ (\cref{def:cross_covariance_operator}) commutes with the group actions and is a $\G$-equivariant operator (\cref{def:equiv_operator}):
	\begin{equation}
		\label{eq:G_equivariant_cross_covariance_operator}
		\small
		\g \Glact[\LpxFull] [\oC_{\rvx\rvy} h] = \oC_{\rvx\rvy} [\g \Glact[\LpyFull] h], \qquad \forall\; h \in \LpyFull, \g \in \G.
	\end{equation}
	\begin{proof}
		To proof that the operator is $\G$-equivariant we must show its kernel function is $\G$-invariant (see \cref{def:equiv_operator}). The proof follows naturally in any regular basis of the input and output functions spaces $\bSet{\LpxFull} = \{\bfx_i\}_{i\in\N}$ and $\bSet{\LpyFull} = \{\bfy_i \}_{i\in \N}$, in which the group action on basis functions acts by permutations of basis functions, such that, $\g \Glact[\LpxFull] \bfx_i \equiv \bfx_{\g \Glact i} \in \bSet{\LpxFull}$ and $\g \Glact[\LpyFull] \bfy_j \equiv \bfy_{\g \Glact j} \in \bSet{\LpyFull}$, where $\g \Glact i, \g \Glact j \in \N$. Then we must show that that:
		\begin{equation}
			\small
			\begin{aligned}
				k(\vx,\vy) & = \pmd(\g^{-1} \Glact[{\vsX}] \vx, \g^{-1} \Glact[{\vsY}] \vy)
				\qquad     &                                                                                                                           & \forall\; \g \in \G, \vx \in \vsX, \vy \in \vsY
				\\
				\sum_{i\in\N} \sum_{j\in\N} [\mC_{\rvx,\rvy}]_{i,j} \bfx_i(\vx) \bfy_j(\vy)
				           & = \sum_{i\in\N} \sum_{j\in\N} [\mC_{\rvx,\rvy}]_{i,j} [\g \Glact[{\LpxFull}] \bfx_i](\vx) [\g \Glact[{\vsY}] \bfy_j](\vy)
				\quad      &                                                                                                                           & \text{s.t. \cref{def:action_funct_space,def:cross_covariance_operator}}
				\\
				\sum_{i\in\N} \sum_{j\in\N} \Cov(\bfx_i, \bfy_j) \bfx_i(\vx) \bfy_j(\vy)
				           & = \sum_{i\in\N} \sum_{j\in\N} \Cov(\bfx_i, \bfy_j) \bfx_{\g \Glact i}(\vx) \bfy_{\g \Glact j}(\vy).
				\quad      &                                                                                                                           &
				\\
			\end{aligned}
		\end{equation}
		Hence, the cross-covariance operator's kernel function is $\G$-invariant only if the covariance is $\G$-invariant:
		\begin{equation}
			\small
			\begin{aligned}
				\Cov(\bfx_i, \bfy_j)                                                     & = \Cov(\g \Glact[{\LpxFull}] \bfx_i, \g \Glact[{\vsY}] \bfy_j)
				                                                                         &                                                                                                                          & \forall\; \g \in \G, i,j \in \N
				\\
				\E_{\muxy}[ \bfx_i(\rvx) \bfy_j(\rvy)]
				                                                                         &
				= \E_{\muxy}[\bfx_i(\g^{-1} \Glact[\vsX] \rvx) \bfy_{j}(\g^{-1} \Glact[\vsY] \rvy)]
				\quad                                                                    &                                                                                                                          & \E_\mu f = \E_\mu \g \Glact f
				\\
				\int_{\vsX \times \vsY} \bfx_i(\rvx) \bfy_j(\rvy) \, \muxy(d\rvx, d\rvy) & = \int_{\vsX \times \vsY} \bfx_{i}(\g^{-1} \Glact[\vsX] \rvx) \bfy_{j}(\g^{-1} \Glact[\vsY] \rvy) \, \muxy(d\rvx, d\rvy)
				                                                                         &                                                                                                                          &
				\\
				                                                                         & = \int_{\vsX \times \vsY} \bfx_i(\rvx) \bfy_j(\rvy) \, \muxy(\g \Glact d\rvx, \g \Glact d\rvy)                                                             \\
				                                                                         & = \Cov(\bfx_i, \bfy_j).
			\end{aligned}
		\end{equation}
	\end{proof}
\end{proposition}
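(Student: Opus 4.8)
The plan is to reduce the operator identity \eqref{eq:G_equivariant_cross_covariance_operator} to a single symmetry statement about the covariance bilinear form, and then verify that statement from three ingredients already established: the unitarity of the group actions on $\LpxFull$ and $\LpyFull$ (\cref{sec:representation_theory_symmetric_function_spaces}), the $\G$-invariance of marginal expectations (\cref{prop:G_inv_expectation_operator}), and the hypothesis that the joint measure $\muxy$ is $\G$-invariant. First I would use the unitarity of $\Glact[\LpxFull]$ to observe that $\g \Glact[\LpxFull] [\oC_{\rvx\rvy} h] = \oC_{\rvx\rvy}[\g \Glact[\LpyFull] h]$ for all $h \in \LpyFull$, $\g \in \G$, holds if and only if both sides have the same inner product against an arbitrary $f \in \LpxFull$; by \cref{def:cross_covariance_operator} these pairings are $\innerprod[\mux]{\g^{-1} \Glact[\LpxFull] f, \oC_{\rvx\rvy} h} = \Cov(\g^{-1} \Glact[\LpxFull] f, h)$ and $\Cov(f, \g \Glact[\LpyFull] h)$. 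Substituting $f \mapsto \g \Glact[\LpxFull] f$, the whole proposition collapses to the bilinear identity $\Cov(\g \Glact[\LpxFull] f, \g \Glact[\LpyFull] h) = \Cov(f, h)$ for all $f, h, \g$.

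Next I would expand this covariance as $\E_{\muxy}[(\g \Glact[\LpxFull] f)(\rvx)\,(\g \Glact[\LpyFull] h)(\rvy)] - \E_{\rvx}[(\g \Glact[\LpxFull] f)(\rvx)]\,\E_{\rvy}[(\g \Glact[\LpyFull] h)(\rvy)]$ and handle the two terms separately. The product-of-means term reduces to $\E_{\rvx}[f(\rvx)]\,\E_{\rvy}[h(\rvy)]$ immediately, since $\E_{\rvx}[\g \Glact[\LpxFull] f] = \E_{\rvx}[f]$ and $\E_{\rvy}[\g \Glact[\LpyFull] h] = \E_{\rvy}[h]$ by \cref{prop:G_inv_expectation_operator}, which is itself a consequence of the $\G$-invariance of $\mux$ and $\muy$. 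For the joint term I would unfold the contragredient action, $(\g \Glact[\LpxFull] f)(\vx) = f(\g^{-1} \Glact[\vsX] \vx)$ and likewise for $h$, obtaining $\int_{\vsX \times \vsY} f(\g^{-1} \Glact[\vsX] \vx)\, h(\g^{-1} \Glact[\vsY] \vy)\, \muxy(d\vx, d\vy)$, and then push the group elements onto the measure to rewrite it as $\int_{\vsX \times \vsY} f(\vx)\, h(\vy)\, \muxy(\g \Glact[\vsX] d\vx, \g \Glact[\vsY] d\vy)$; by the assumed $\G$-invariance of $\muxy$ this equals $\int_{\vsX \times \vsY} f(\vx)\, h(\vy)\, \muxy(d\vx, d\vy) = \E_{\muxy}[f(\rvx) h(\rvy)]$. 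Combining the two terms gives $\Cov(\g \Glact[\LpxFull] f, \g \Glact[\LpyFull] h) = \Cov(f, h)$, hence \eqref{eq:G_equivariant_cross_covariance_operator}; the adjoint case $\oC_{\rvy\rvx} = \oC_{\rvx\rvy}^*$ and the covariance-operator special case $\LpxFull = \LpyFull$ go through verbatim.

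I would close by recording that this is exactly the kernel-level statement $k(\g \Glact[\vsX] \vx, \g \Glact[\vsY] \vy) = k(\vx, \vy)$: in a regular basis where $\G$ permutes $\bSet{\LpxFull}$ and $\bSet{\LpyFull}$ one has $k(\vx, \vy) = \sum_{i, j} \Cov(\bfx_i, \bfy_j)\, \bfx_i(\vx)\, \bfy_j(\vy)$, so specializing the bilinear identity to $f = \bfx_i$, $h = \bfy_j$ says precisely that the coefficients $\Cov(\bfx_i, \bfy_j)$ are $\G$-invariant, which by \cref{def:equiv_operator} is equivalent to $\G$-equivariance of $\oC_{\rvx\rvy}$. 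The hard part here is essentially nonexistent: the only points needing care are well-definedness — that $\g \Glact[\LpxFull] f$ again lies in $\LpxFull$, which holds because the action is an isometry for the $\G$-invariant measure $\mux$, and that Fubini together with the change of variables is legitimate for the integrand on $\vsX \times \vsY$ — and both follow routinely from the standing assumptions ($\G$ finite/compact, square-integrability of $f$ and $h$, and $\muxy \ll \mux \times \muy$).
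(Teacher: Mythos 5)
Your proposal is correct, and the core of your argument coincides with the paper's: both proofs ultimately rest on the single identity $\Cov(\g \Glact[\LpxFull] f, \g \Glact[\LpyFull] h) = \Cov(f,h)$, established by pushing the group element onto the measure and invoking the $\G$-invariance of $\muxy$ (for the joint term) and of the marginals (for the product-of-means term). Where you differ is in how the operator identity \eqref{eq:G_equivariant_cross_covariance_operator} is reduced to that bilinear statement. The paper expands the kernel of $\oC_{\rvx\rvy}$ in a \emph{regular} basis in which $\G$ acts by permuting basis functions, and reads off $\G$-invariance of the kernel from $\G$-invariance of the coefficients $\Cov(\bfx_i,\bfy_j)$; you instead use the weak formulation $\innerprod[\mux]{f,\oC_{\rvx\rvy}h}=\Cov(f,h)$ together with unitarity of the action to test both sides of \eqref{eq:G_equivariant_cross_covariance_operator} against an arbitrary $f$. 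Your route buys two things: it avoids the (unstated in the proposition, though available for finite $\G$) assumption that a permutation basis exists, and it treats the centering term $\E_{\rvx}[f]\,\E_{\rvy}[h]$ explicitly via \cref{prop:G_inv_expectation_operator}, whereas the paper's displayed computation silently identifies $\Cov(\bfx_i,\bfy_j)$ with $\E_{\muxy}[\bfx_i(\rvx)\bfy_j(\rvy)]$ and relegates the means to a marginal remark. The paper's kernel-level route, in exchange, makes the connection to \cref{def:equiv_operator} (equivariance $\iff$ invariant kernel) immediate, which is the form used downstream; your closing paragraph recovers exactly that by specializing to $f=\bfx_i$, $h=\bfy_j$, so nothing is lost.
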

An equivalent result follows for covariance operators of symmetric Hilbert spaces.

\section{Statistical Learning Theory}
\label{sec:statistical_learning_theory}

This section provides the development and proofs of the statistical learning guarantees in \cref{thm:main} for regression and conditional probability estimation using our proposed model.

Recall that regression and conditional probabilities can be expressed in terms of the conditional expectation operator $\CE \colon \LpyFull \to \LpxFull$ (see \cref{eq:op_form_cond_mean_reg,eq:uses_of_conditional_expectation_operator}). Given that the operator is compact \citep{kostic2024neural}, it admits a singular value decomposition. Hence, the kernel function defining the operator \cref{eq:op_form_cond_mean_reg} can be expanded in terms of the operator spectral basis:
\begin{equation}
	\label{eq:svd_CD}
	\small
	\pmd(\vx,\vy)
	:= \frac{ dP_{\rvx\rvy}(\vx,\vy)}{d(\mux(\vx) \times \muy(\vy))}
	=\sum_{i=0}^\infty \sval_i \lsfn_i(\vx) \rsfn_i(\vy).
\end{equation}
Where $(\sval_i)_{i\in\N}$ denotes the operator's singular values, and $(\lsfn_i)_{i\in\N}$ and $(\rsfn_i)_{i\in\N}$ denote the left and right singular functions, which form complete orthonormal basis sets for $\LpxFull$ and $\LpyFull$, respectively.
Given that the operator's first singular value is $\sval_0=1$, associated with the constant functions $\lsfn_0=\one_{\vsX}$, $\rsfn_0=\one_{\vsY}$, the conditional expectation operator can be defined as:
\begin{equation}
	\label{eq:def_CE-DCE}
	\CE
	=
	\sum_{i=1}^\infty \sval_i \lsfn_i \innerprod[\muy]{\rsfn_i, \cdot}
	=
	\one_{\vsX}\innerprod[\muy]{\one_{\vsY},\cdot} + \underbrace{\sum_{i=1}^\infty \sval_i \lsfn_i \innerprod[\muy]{\rsfn_i,\cdot}}_{\DCE}.
\end{equation}
Where $\DCE$ denotes the \textit{deflated} operator, excluding the first eigen triplet $(\sval_0,\lsfn_0,\rsfn_0)$. Leveraging the SVD of $\CE$, we approximate the operator's action for any $h \in \LpyFull$ using a rank-$r$ ($1 < r < \infty$) operator given by:
\begin{equation}
	\label{eq:pointwisecondExp}
	\begin{aligned}
		\E[h(\rvy) \vert \rvx\!=\!\vx]
		 & =
		[\CE h](\vx)
		\approx
		\E[h(\rvy)]
		+
		\sum_{i=1}^r \sval_i \lsfnp_i(\vx) \E[\rsfnp_i(\rvy)h(\rvy)],
		\\
		 & \text{s.t.} \; \E[\lsfnp_i(\rvx)]=\E[\rsfnp_i(\rvy)]=0, \forall\;i \geq 1.
	\end{aligned}
\end{equation}
Where $(\lsfnp_i)_{i=1}^{r}$ and $(\rsfnp_i)_{i=1}^{r}$ denote parametrizations of the top-$r$ left and right singular functions. Given that the operator's kernel \cref{eq:svd_CD} preserves the probability mass, that is $\int_{\vsX\times\vsY} \pmd(\vx,\vy) d\mux(\vx) d\muy(\vy) = 1$, every non-constant singular function is constrained to be centered, as described in the r.h.s of \cref{eq:pointwisecondExp}.

In the context of symmetries, we note that $\DCE$ admits a block-diagonal structure w.r.t. to isotypic basis of associated $\Lp[2]{}{}$ spaces. Indeed we have the following from \cref{thm:iso_svd}.
\begin{equation}
	\label{eq:iso_svd}
	\oQ_{\rvx}^*\DCE\oQ_{\rvy} = \Oplus_{\isoIdx=1}^{\isoNum} \oQ_{\rvx}^{\isoIdxBrace*}\isoDCE\oQ_{\rvy}^{\isoIdxBrace}
	=
	\Oplus_{\isoIdx=1}^{\isoNum} \big[(\oU^\isoIdxBrace \oS^\isoIdxBrace \oV^{\isoIdxBrace*}) \otimes \Identity_{\irrepDim[\isoIdx]}\big].
\end{equation}
Where the unitary operators $\oQ_{\rvx} \colon \LpxFull \to \LpxFull$ and $\oQ_{\rvy} \colon \LpyFull \to \LpyFull$ change the basis to the isotypic decompositions $\bSet{\LpxFull}=\{\bfx^{\isoIdxBrace}_{i,j}\}_{\isoIdx\in[\isoNum],\, i\in[\irrepMultiplicity_\isoIdx],\, j\in[\irrepDim[\isoIdx]]}$ and $\bSet{\LpyFull}=\{\bfy^{\isoIdxBrace}_{i,j}\}_{\isoIdx\in[\isoNum],\, i\in[\irrepMultiplicity_\isoIdx],\, j\in[\irrepDim[\isoIdx]]}$, with $i$ indexing each irreducible $\G$-stable subspace and $j$ indexing the dimensions within that subspace (see \cref{sec:low_rank_approximation_full_operator}).

Further, by \cref{thm:iso_svd}, the \gls{svd} of $\DCE$ forces each isotypic subspace to have dimension at least $\irrepDim[\isoIdx]=\irrep[\isoIdx]$ for every $\isoIdx\in[\isoNum]$.
\begin{equation}
	\label{eq:iso_svd_k}
	\oQ_{\rvx}^{\isoIdxBrace*}\isoDCE\oQ_{\rvy}^{\isoIdxBrace}
	= \big[\oU^\isoIdxBrace \otimes \Identity_{\irrepDim[\isoIdx]}\big] \big[ \oS^\isoIdxBrace \otimes \Identity_{\irrepDim[\isoIdx]}\big] \big[ \oV^{\isoIdxBrace} \otimes \Identity_{\irrepDim[\isoIdx]}\big]^*,\,\isoIdx\in[\isoNum],
\end{equation}
where $\oQ_{\rvx}^{\isoIdxBrace}\oQ_{\rvx}^{\isoIdxBrace*}$ and $\oQ_{\rvy}^{\isoIdxBrace}\oQ_{\rvy}^{\isoIdxBrace*}$ are orthogonal projectors on $\isoIdx$-th isotypic subspace, and
\begin{equation}
	\label{eq:iso_svd_bis}
	\oQ_{\rvx}^*\DCE\oQ_{\rvy}
	= \big[\Identity_{\isoNum}\otimes\oU^\isoIdxBrace \otimes \Identity_{\irrepDim[\isoIdx]}\big] \big[ I_{\isoNum}\otimes \oS^\isoIdxBrace \otimes \Identity_{\irrepDim[\isoIdx]}\big] \big[ I_{\isoNum}\otimes \oV^{\isoIdxBrace} \otimes \Identity_{\irrepDim[\isoIdx]}\big]^*.
\end{equation}
Further, observe that the singular values of $\DCE$ are elements of positive diagonal operators $\oS^\isoIdxBrace$, denoted as $(\oS^\isoIdxBrace)_i = \sval_{i}^{\isoIdxBrace}$,
while the left and right singular functions are $\lsfnIso_{i} \otimes \ve^{\irrepDim[\isoIdx]}_{j}$ and $\rsfnIso_{i} \otimes \ve^{\irrepDim[\isoIdx]}_{j}$, respectively, for $i\in\N$, $j\in[\irrepDim[\isoIdx]]$ and $\isoIdx\in[\isoNum]$, where $\ve^{d}_{j}$ is $j$-th vector of standard basis of $\R^{d}$.

Given the constraints on the spectral basis of $\G$-equivariant operators (see \cref{cor:iso_svd}), our representation learning procedure approach results in feature maps:
\begin{equation}
	\label{eq:equi_fmaps}
	\begin{aligned}
		\small
		\vlsfnp (\cdot)
		 & =
		\sum_{\isoIdx\in[\isoNum], i\in[\irrepMultiplicity], j\in[\irrepDim[\isoIdx]]}[\ve^{\isoNum}_{\isoIdx}\otimes \ve^{\irrepMultiplicity}_{i} \otimes \ve^{\irrepDim[\isoIdx]}_{j}]\, \lsfnpIso_{i,j}(\cdot)\colon\vsX\to\R^{r_\irrepMultiplicity}
		\\
		\vrsfnp (\cdot)
		 & =
		\sum_{\isoIdx\in[\isoNum], i\in[\irrepMultiplicity], j\in[\irrepDim[\isoIdx]]} [\ve^{\isoNum}_{\isoIdx}\otimes \ve^{\irrepMultiplicity}_{i} \otimes \ve^{\irrepDim[\isoIdx]}_{j}]\,
		\rsfnpIso_{i,j}(\cdot) \colon\vsX\to\R^{r_\irrepMultiplicity},
	\end{aligned}
\end{equation}
%
which can further be separated into $\isoNum$ orthogonal blocks $\vlsfnpIso=\sum_{i\in[\irrepMultiplicity],j\in[\irrepDim[\isoIdx]]}\bfx^{\nnParams\isoIdxBrace}_{i,j}$ and $\vbfy_{\nnParams}^{\isoIdxBrace}= \sum_{i\in[\irrepMultiplicity],j\in[\irrepDim[\isoIdx]]}\bfy^{\nnParams\isoIdxBrace}_{i,j}$ as
\begin{equation}\label{eq:equi_fmaps_iso}
	\vlsfnpIso
	=
	\sum_{i\in[\irrepMultiplicity],j\in[\irrepDim[\isoIdx]]}[
		\ve^{\irrepMultiplicity}_{i} \otimes \ve^{\irrepDim[\isoIdx]}_{j}
	]\,\lsfnpIso_{i,j}(\cdot)
	\quad\text{ and }\quad
	\vrsfnpIso
	=
	\sum_{i\in[\irrepMultiplicity],j\in[\irrepDim[\isoIdx]]}[\ve^{\irrepMultiplicity}_{i} \otimes \ve^{\irrepDim[\isoIdx]}_{j}]\,\rsfnpIso_{i,j}(\cdot).
\end{equation}
In addition, the singular value matrices have a tensor form
$\Sigmap=diag(\Sigmap^{(1)},\ldots,\Sigmap^{(\isoNum)})$,
where $\SigmapIso=diag(\svalpIso_1,\ldots,\svalpIso_{\irrepMultiplicity})\otimes\Identity_{\irrepDim[\isoIdx]}$
and
$\svalpIso_{i}\in[0,1]\;, i\in[\irrepMultiplicity],\isoIdx\in[\isoNum]$.
Thus, we obtain the operator $\PDCE = \CEp - \one_{\mux} \otimes \one_{\muy}$ in block form,
$\PDCE = \Oplus_{\isoIdx\in[\isoNum]}\PDCEIso$, where each $\PDCEIso$ acts on the $\isoIdx$-th isotypic subspace as
\begin{equation}\label{eq:dnn_model_iso}
	[\PDCEIso f](\vx):= \vlsfnpIso(\vx)^\top \SigmapIso\, \E_{\rvy}[\vrsfnpIso(\rvy) f^{\isoIdxBrace}(\rvy)], \quad f\in\LpyFull,
\end{equation}
and hence
\begin{equation}\label{eq:dnn_model}
	[\PDCE f](\vx):= \vlsfnp(\vx)^\top \Sigmap\, \E_{\rvy}[\vrsfnp(\rvy) f(\rvy)],\quad f\in\LpyFull.
\end{equation}
Finally, we extend the definition of $\PDCE$ to vector-valued observables $\vh \colon \vsY \to \vsZ$ via basis expansions.
\begin{equation}\label{eq:dnn_model_vec_valued}
	[\PDCE \vh](\vx):= \sum_{\ell}
	\vlsfnp(\vx)^\top \Sigmap\, \E_{\rvy}[\vrsfnp(\rvy) (\scalarp{\vh(\rvy),\vz_\ell}_{\vsZ} \vz_\ell)]
	,
	\quad
	\vh	\in \LpyFull(\vsY, \vsZ)
\end{equation}
where $(\vz_i)_{i\in[n_{\vsZ}]}$ is the orthonormal basis of $\vsZ$.

By doing so, we ensure that $\PDCE$ and, consequently, $\PCE$ are $\G$-equivariant operators for both the scalar map $\LpyFull\to\LpxFull$ and the vector-valued map $\LpyFull(\vsY,\vsZ)\to\LpxFull(\vsX,\vsZ)$. Moreover, a direct consequence of \eqref{eq:dnn_model_vec_valued} is as follows.

\begin{proposition}\label{prop:equivaraint_model}
	Let with $\vsZ$ being a real Euclidean space endowed with symmetry group $\G$, and let $
		\CEp \colon \Lp[2]{\muy}(\vsY,\vsZ) \mapsto \Lp[2]{\mux}(\vsX,\vsZ)
	$ be given by $\CEp \vf = \E_{\rvy}[\vf(\rvy)] + \PDCE \vf$.  Then for every $\G$-equivariant $\vf\in \Lp[2]{\muy}(\vsY,\vsZ)$
	and every $\vx\in\vsX$
	\begin{equation}\label{eq:model_equivariance}
		\small
		[\PCE \vf](g \Glact[\vsX] \vx) = \E_{\rvy}[\vf(\rvy)] + [\PDCE \vf](g \Glact[\vsX] \vx) = \E_{\rvy}[\vf(\rvy)] +  g \Glact[\vsZ] [\PDCE \vf](\vx) = g \Glact[\vsZ] [\PCE \vf](\vx).
	\end{equation}
\end{proposition}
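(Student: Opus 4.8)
I would split the chain of equalities in \eqref{eq:model_equivariance}, applied to $\vf$, into two independent facts: (A) the deflated operator is equivariant on $\G$-equivariant vector fields, $[\PDCE\vf](g\Glact[\vsX]\vx)=g\Glact[\vsZ][\PDCE\vf](\vx)$; and (B) the mean functional is $\G$-invariant on such $\vf$, i.e. $g\Glact[\vsZ]\E_{\rvy}[\vf(\rvy)]=\E_{\rvy}[\vf(\rvy)]$. Granting these, the first equality in \eqref{eq:model_equivariance} is the definition $\PCE\vf=\E_{\rvy}[\vf(\rvy)]+\PDCE\vf$, the second is exactly (A), and the third follows from (B) and linearity of $g\Glact[\vsZ]$, since $g\Glact[\vsZ][\PCE\vf](\vx)=g\Glact[\vsZ]\E_{\rvy}[\vf(\rvy)]+g\Glact[\vsZ][\PDCE\vf](\vx)=\E_{\rvy}[\vf(\rvy)]+g\Glact[\vsZ][\PDCE\vf](\vx)$.

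Fact (B) is the vector-valued analogue of \cref{prop:G_inv_expectation_operator}/\cref{cor:expectation_invariant_component}. Since $\muy$ is $\G$-invariant (\cref{eq:main_assumtions}) and $\vf(g^{-1}\Glact[\vsY]\vy)=g^{-1}\Glact[\vsZ]\vf(\vy)$ by equivariance of $\vf$, the change of variables $\vy\mapsto g^{-1}\Glact[\vsY]\vy$ gives $\E_{\rvy}[\vf(\rvy)]=\int_{\vsY}\vf(g^{-1}\Glact[\vsY]\vy)\,\muy(d\vy)=g^{-1}\Glact[\vsZ]\int_{\vsY}\vf(\vy)\,\muy(d\vy)$, which rearranges to (B).

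For Fact (A) I would first record the scalar equivariance of $\PDCE\colon\LpyFull\to\LpxFull$ and then lift it. By construction the feature maps $\vlsfnp,\vrsfnp$ in \cref{eq:equi_fmaps} are $\G$-equivariant for the isotypic representations on $\Lpxp,\Lpyp$, while $\Sigmap$ is block-diagonal of block-Kronecker-scalar form (\cref{thm:iso_svd}, \cref{cor:iso_svd}) and hence commutes with those representations; therefore the kernel $\pmd_\nnParams-\one_{\mux}\otimes\one_{\muy}$ of $\PDCE$ is $\G$-invariant, and \cref{remark:invariant_kernel_implies_stable_basis} (cf. \cref{def:equiv_operator}) yields $\PDCE[g\Glact[\LpyFull]h]=g\Glact[\LpxFull][\PDCE h]$ for scalar $h$, equivalently $[\PDCE h](g\Glact[\vsX]\vx)=[\PDCE(g^{-1}\Glact[\LpyFull]h)](\vx)$. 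Fixing an orthonormal basis $(\vz_\ell)_\ell$ of $\vsZ$ and $h_\ell:=\scalarp{\vh(\cdot),\vz_\ell}_{\vsZ}$, the definition \cref{eq:dnn_model_vec_valued} gives $[\PDCE\vh](\vx)=\sum_\ell\big([\PDCE h_\ell](\vx)\big)\vz_\ell$; for $\G$-equivariant $\vh$, using $g\Glact[\vsZ]\vz_m=\sum_\ell[\rep[\vsZ](g)]_{\ell m}\vz_\ell$, one gets $(g^{-1}\Glact[\LpyFull]h_\ell)(\vy)=h_\ell(g\Glact[\vsY]\vy)=\scalarp{g\Glact[\vsZ]\vh(\vy),\vz_\ell}_{\vsZ}=\sum_m[\rep[\vsZ](g)]_{\ell m}h_m(\vy)$, so by the scalar identity and linearity of $\PDCE$,
\[
[\PDCE\vh](g\Glact[\vsX]\vx)=\sum_\ell\Big(\sum_m[\rep[\vsZ](g)]_{\ell m}[\PDCE h_m](\vx)\Big)\vz_\ell=\sum_m[\PDCE h_m](\vx)\,(g\Glact[\vsZ]\vz_m)=g\Glact[\vsZ][\PDCE\vh](\vx),
\]
which is (A) for $\vh=\vf$, closing the argument.

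I expect the only delicate point to be the vector-valued lift: verifying that \cref{eq:dnn_model_vec_valued} decouples as $[\PDCE\vh]=\sum_\ell([\PDCE h_\ell])\vz_\ell$ and carefully tracking how $\rep[\vsZ](g)$ mixes the scalar components $h_\ell$ under $g^{-1}\Glact[\LpyFull]$. The scalar equivariance itself requires no new estimate—it is inherited from the equivariant design of $\vlsfnp,\vrsfnp$ together with the commuting block-scalar structure of $\Sigmap$ already encoded in \cref{thm:iso_svd}—and the mean-invariance step (B) is immediate from $\G$-invariance of $\muy$.
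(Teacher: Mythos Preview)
Your proposal is correct and follows essentially the same approach as the paper: both reduce to the scalar equivariance of $\PDCE$ and then lift to the vector-valued case via the basis expansion \cref{eq:dnn_model_vec_valued}, using the $\G$-equivariance of $\vf$ to convert the $\vsY$-action on the scalar components $h_\ell$ into the $\vsZ$-action mixing them. The paper phrases the final step as ``changing the basis to $(g^{-1}\Glact[\vsZ]\vz_i)_i$'', which is exactly your matrix identity $\sum_\ell[\rep[\vsZ](g)]_{\ell m}\vz_\ell=g\Glact[\vsZ]\vz_m$. Your treatment of the mean term (B) is in fact more explicit than the paper's, which handles the constant part only by noting $\one_{\vsX}(g\Glact[\vsX]\vx)=1$ and does not spell out $g\Glact[\vsZ]\E_{\rvy}[\vf(\rvy)]=\E_{\rvy}[\vf(\rvy)]$ directly.
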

\begin{proof}
	Since $\PDCE$ is $\G$-equivaraint, for every $g\in\G$ we have that
	\[
		\small
		[\PDCE \vh](g^{{-}1}\Glact_{\vsX} \vx) = [\PDCE [\vh(g^{{-}1}\Glact_{\vsY}\cdot)]](\vx) = \sum_{i}\vlsfnp(\vx)^\top \Sigmap\, \E_{\rvy}[\vrsfnp(\rvy) \scalarp{\vh(g^{{-}1}\Glact_{\vsY} \rvy), \vz_i}_{\vsZ} \vz_i],
	\]
	which, using $\g$ instead of $g^{{-}1}$ and the assumption that $f$ is $\G$-equivariant, implies
	\[
		\small
		\begin{aligned}
			[\PDCE \vh](g\Glact_{\vsX} \vx)
			 & \!=\!
			\sum_{i}(\vlsfnp(\vx)^\top \Sigmap \E_{\rvy}[\vrsfnp(\rvy) \scalarp{g \Glact_{\vsZ} \vh(\rvy), \vz_i}_{\vsZ} \vz_i]
			\\
			 &
			\!=\!
			\sum_{i}(\vlsfnp(\rvx)^\top \Sigmap\, \E_{\rvy}[\vrsfnp(\rvy)
				\innerprod[\vsZ]{\vh(\rvy), g^{{-}1}\Glact_{\vsZ} \vz_i}) \vz_i.
		\end{aligned}
	\]
	Thus, changing the basis to $(g^{{-}1}\Glact_{\vsZ} \vz_i)_{i\in[n_\vsZ]}$ we obtain the result when $\E_{\rvy}[\vh(\rvy)]=0$. But since $\one_{\vsX}(g\Glact_{\vsX} \vx) = 1$ for every $\vx\in\vsX$ and $g\in\G$, the same holds for $\PCE$.
\end{proof}
Recall that for the effective latent dimension $\irrepMultiplicity$ the true latent dimension is constrained by the dimensionality of the singular spaces, i.e., $\dimtot = \sum_{\isoIdx\in[\isoNum]}r_k =\sum_{\isoIdx\in[\isoNum]}\irrepMultiplicity[\isoIdx] \irrepDim[\isoIdx]$.
Further, given a measurable set $\sA\subseteq\vsX$ and collection of group elements $\G'\subseteq\G$, let us define the following symmetry index of a set $\sA$ w.r.t. probability distribution of random variable $\rvx$
\begin{equation}\label{eq:setsym}
	\small
	\setsym{\G'}{\sA}=\frac{1}{|\G'|(|\G'|-1)}\sum_{\substack{g_1,g_2\in\G'\\g_1\neq g_2}}\frac{\sP[\rvx\in g_1\Glact \sA \cap g_2\Glact \sA]}{\sP[\rvx \in \sA]},
\end{equation}
which in the case when $\G'$ is a subgroup of $\G$ simplifies as
\begin{equation}\label{eq:setsym_bis}
	\small
	\setsym{\G'}{\sA}=\frac{1}{|\G'|-1}\sum_{\substack{g\in\G'\\g\neq e}}\frac{\sP[\rvx \in \sA \cap g\Glact \sA]}{\sP[\rvx \in \sA]}.
\end{equation}
Observe that always $\setsym{\G'}{\sA}\in[0,1]$, where extremes correspond to the cases $\setsym{\G'}{\sA}=1$ when set $\sA$ is $\G'$ invariant, and $\setsym{\G'}{\sA}=0$ when $\sA$ equals its coset w.r.t. $\G'$, that is $g\Glact \sA \cap \sA =\emptyset$ for all $g\in\G'$, meaning that the set is fully asymmetric w.r.t transformations $g\in\G'$.

We first generalize the approximation error bound in Lemma 1 from \cite{kostic2024neural} to the case of vector valued functions in the presence of symmetries.

\begin{theorem}[Approximation error]\label{thm:approx_error}
	Given a group of symmetries $\G$, let $\vsX$, $\vsY$ and $\vsZ$ be Hilbert spaces endowed with symmetry group $\G$, and let $\mux$, $\muy$ and $\muxy$ be $\G$-invariant probability distributions on $\vsX$, $\vsY$ and $\vsX\times\vsY$. Then, for every $\vh\in\LpyFull(\vsY,\vsZ)$ it holds that
	\begin{equation}\label{eq:approx_error}
		\norm{\E_{\rvy}[\vh(\rvy)\,\vert\, \rvx=\cdot] - \CEp \vh  }_{\Lp[2]{\mux}(\vsX,\vsZ)}\leq
		\left(
		\sval_{\dimtot+1}^{ \star} + \norm{\SVDdim{\DCE} - \PDCE} \right) \norm{\vh}_{\LpyFull(\vsY,\vsZ)}.
	\end{equation}
	Moreover, denoting
	\begin{equation}\label{eq:regression_set}
		\CEp[f(\rvy)\,\vert\, \rvx\in\sA] =\E_{\rvy}[f] + \frac{\E_{\rvx}[\one_{\sA}(\rvx)[\PDCE f](\rvx)] }{\sP[\rvx\in\sA]},
	\end{equation}
	if $\vh$ is either $\G'$-invariant or $\G'$-equivariant for some $\G'\subseteq\G$, then for every measurable set $\sA$
	\begin{equation}\label{eq:approx_error_set}
		\small
		\norm{\E[\vh(\rvy)\,\vert\, \rvx\in\sA]
			{-}
			\CEp[\vh(\rvy)\,\vert\, \rvx\in\sA] }_{\vsZ}
		{\leq}
		\left(\sval_{\dimtot+1}^{\star} + \norm{\SVDdim{\DCE}  - \PDCE} \right)
		\tfrac{\norm{h}_{\LpyFull(\vsY,\vsZ)}}{\sqrt{\sP[\rvx\in\sA]}}
		\sqrt{\tfrac{1+(|\G'|-1)\setsym{\G'}{\sA}}{|\G'|}}.
	\end{equation}
\end{theorem}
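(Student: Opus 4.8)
The overall strategy is to reduce the vector-valued, set-conditioned bound \eqref{eq:approx_error_set} to the pointwise bound \eqref{eq:approx_error} by exploiting: (i) the linearity of $\CEp$ and the definition \eqref{eq:regression_set}; (ii) a Cauchy--Schwarz step that pulls $\one_\sA$ out of the inner product against $[\PDCE \vh - \DCE \vh]$; and (iii) an averaging argument over the elements of $\G'$ that, thanks to the $\G$-invariance of $\mux$ and the (in/equi)variance of $\vh$, produces exactly the factor $\tfrac{1+(|\G'|-1)\setsym{\G'}{\sA}}{|\G'|}$ instead of a crude $1/\sP[\rvx\in\sA]$. I will first establish \eqref{eq:approx_error}: write $\E_{\rvy}[\vh(\rvy)\vert\rvx=\cdot] = \E_{\rvy}[\vh] + [\DCE\vh](\cdot)$ and $\CEp\vh = \E_{\rvy}[\vh] + [\PDCE\vh](\cdot)$, so the error equals $\|\DCE\vh - \PDCE\vh\|_{\Lp[2]{\mux}(\vsX,\vsZ)}$. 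Then insert $\SVDdim{\DCE}$ (the best rank-$\dimtot$ truncation respecting the isotypic/Kronecker structure from \cref{thm:iso_svd}) and use the triangle inequality: $\|\DCE\vh - \PDCE\vh\| \leq \|\DCE\vh - \SVDdim{\DCE}\vh\| + \|\SVDdim{\DCE}\vh - \PDCE\vh\|$. The first term is bounded by $\sval^\star_{\dimtot+1}\|\vh\|$ by Eckart--Young applied blockwise (the $(\dimtot{+}1)$-th singular value of $\DCE$ is $\sval^\star_{\dimtot+1}$ given the dimensionality constraint $\dimtot = \sum_\isoIdx \irrepMultiplicity\irrepDim[\isoIdx]$), and the second by $\|\SVDdim{\DCE} - \PDCE\|_{\mathrm{op}}\|\vh\|$; this gives \eqref{eq:approx_error}.

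Next I would handle the set-conditioned case. By \eqref{eq:regression_set}, the error is
$\big\| \tfrac{1}{\sP[\rvx\in\sA]}\big(\E_{\rvx}[\one_\sA(\rvx)[\DCE\vh](\rvx)] - \E_{\rvx}[\one_\sA(\rvx)[\PDCE\vh](\rvx)]\big)\big\|_{\vsZ}$, since $\E_{\rvy}[\vh(\rvy)\vert\rvx\in\sA] = \E_{\rvy}[\vh] + \tfrac{\E_{\rvx}[\one_\sA[\DCE\vh]]}{\sP[\rvx\in\sA]}$. Writing $\vg := \DCE\vh - \PDCE\vh \in \Lp[2]{\mux}(\vsX,\vsZ)$, the numerator is $\E_{\rvx}[\one_\sA(\rvx)\vg(\rvx)]$, and by Cauchy--Schwarz componentwise (or via the vector-valued Cauchy--Schwarz $\|\E[\one_\sA \vg]\|_\vsZ \leq \sqrt{\E[\one_\sA]}\,\sqrt{\E[\one_\sA\|\vg\|_\vsZ^2]}$) we get $\|\E_{\rvx}[\one_\sA\vg]\|_\vsZ \leq \sqrt{\sP[\rvx\in\sA]}\cdot\sqrt{\E_{\rvx}[\one_\sA(\rvx)\|\vg(\rvx)\|_\vsZ^2]}$. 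Now the crucial symmetrization: since $\vg = \DCE\vh - \PDCE\vh$ is $\G'$-equivariant whenever $\vh$ is (by \cref{prop:equivaraint_model} applied to the deflated operators, both $\DCE$ and $\PDCE$ being $\G$-equivariant, and similarly for the invariant case where $\vg$ is $\G'$-invariant), and $\mux$ is $\G$-invariant, we have $\|\vg(g\Glact\vx)\|_\vsZ = \|\vg(\vx)\|_\vsZ$ for all $g\in\G'$ (equivariance preserves norm since group actions on $\vsZ$ are unitary; invariance trivially). Therefore
$\E_{\rvx}[\one_\sA(\rvx)\|\vg(\rvx)\|_\vsZ^2] = \tfrac{1}{|\G'|}\sum_{g\in\G'}\E_{\rvx}[\one_\sA(g\Glact\vx)\|\vg(\vx)\|_\vsZ^2] = \E_{\rvx}\big[\big(\tfrac{1}{|\G'|}\sum_{g\in\G'}\one_{g^{-1}\Glact\sA}(\vx)\big)\|\vg(\vx)\|_\vsZ^2\big]$.

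The final step estimates the averaged indicator. Let $\bar\chi(\vx) := \tfrac{1}{|\G'|}\sum_{g\in\G'}\one_{g^{-1}\Glact\sA}(\vx)$; this is bounded by $1$ but, more tightly, $\E_{\rvx}[\bar\chi(\rvx)\|\vg(\rvx)\|^2] \leq \|\bar\chi\|_\infty \cdot \E_{\rvx}[\|\vg\|^2]$ would be too lossy, so instead I bound $\E_{\rvx}[\bar\chi(\rvx)\,\cdot\,]$ by first noting $\bar\chi \leq 1$ and that on the support we only need the $L^1$-type control: actually the clean route is to use $\E_{\rvx}[\one_\sA(\rvx)\|\vg(\rvx)\|^2] \leq \|\vg\|^2_{\Lp[2]{\mux}}$ directly when $\setsym{}{}$ does not help, but to get the stated factor one bounds $\E_{\rvx}[\bar\chi(\rvx)\|\vg\|^2_\vsZ]$ via the observation that $\|\vg\|_{\Lp[2]{\mux}}^2 \geq \E[\one_\sA\|\vg\|^2]$ is the wrong direction --- the right move is: since $\vg$ equivariant forces $\|\vg(\vx)\|^2$ to be $\G'$-invariant, and we already symmetrized, we use $\E_{\rvx}[\one_\sA\|\vg\|^2] = \E_{\rvx}[\bar\chi\|\vg\|^2]$ and then bound the average of the \emph{measure} $\E_{\rvx}[\bar\chi(\rvx)]=\sP[\rvx\in\sA]$ together with a pairwise-overlap computation: expanding $|\G'|^2\bar\chi^2$ gives diagonal terms $\sum_g \one_{g^{-1}\sA} = |\G'|\bar\chi$ and off-diagonal terms $\sum_{g_1\neq g_2}\one_{g_1^{-1}\sA \cap g_2^{-1}\sA}$, whose expectation is $|\G'|(|\G'|-1)\setsym{\G'}{\sA}\sP[\rvx\in\sA]$ by \eqref{eq:setsym}. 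Combining $\|\vg\|_{\Lp[2]{\mux}}^2 \leq (\sval^\star_{\dimtot+1}+\|\SVDdim{\DCE}-\PDCE\|)^2\|\vh\|^2$ from the proof of \eqref{eq:approx_error} with $\E_{\rvx}[\one_\sA\|\vg\|^2_\vsZ] \leq \tfrac{1+(|\G'|-1)\setsym{\G'}{\sA}}{|\G'|}\|\vg\|^2_{\Lp[2]{\mux}}$ --- this last inequality being the key lemma, proved by the overlap expansion above --- and dividing by $\sP[\rvx\in\sA]$ yields \eqref{eq:approx_error_set}.

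\textbf{Main obstacle.} The delicate point is the key lemma $\E_{\rvx}[\one_\sA(\rvx)\|\vg(\rvx)\|_\vsZ^2] \leq \tfrac{1+(|\G'|-1)\setsym{\G'}{\sA}}{|\G'|}\|\vg\|^2_{\Lp[2]{\mux}}$: one must carefully use that $\|\vg\|_\vsZ^2$ is $\G'$-invariant (which requires $\vg$ to be $\G'$-equivariant with a unitary action, or $\G'$-invariant) to replace $\one_\sA$ by its orbit-average $\bar\chi$, and then control $\E_{\rvx}[\bar\chi \|\vg\|^2]$ by $\|\bar\chi\|_\infty\|\vg\|^2$ --- but $\|\bar\chi\|_\infty$ can be $1$, which is too weak. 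The correct argument instead bounds $\E_{\rvx}[\bar\chi\|\vg\|^2]$ using that $\bar\chi$ is supported on $\bigcup_{g}g^{-1}\Glact\sA$ and applies Cauchy--Schwarz on $\bar\chi = \bar\chi^{1/2}\cdot\bar\chi^{1/2}$ together with $\E[\bar\chi^2]=\tfrac{1+(|\G'|-1)\setsym{\G'}{\sA}}{|\G'|}\sP[\rvx\in\sA]$ from the overlap expansion --- no wait, that still has an extra $\sP[\rvx\in\sA]$. The genuinely correct path, and where care is essential, is to bound $\E_\rvx[\bar\chi(\rvx)\|\vg(\rvx)\|^2] \le \E_\rvx[\bar\chi(\rvx)^2]^{1/2}\E_\rvx[\|\vg(\rvx)\|^4]^{1/2}$ only if $\vg$ were bounded; the paper's route must be the cleaner $\|\bar\chi\|_{L^\infty}$-free estimate obtained by noting $\bar\chi\le 1$ pointwise \emph{and} $\E[\bar\chi]=\sP[\rvx\in\sA]$ gives $\E[\bar\chi\|\vg\|^2]\le\min(1,\ldots)$ --- getting the exact constant $\tfrac{1+(|\G'|-1)\setsym{\G'}{\sA}}{|\G'|}$ to drop out of this interplay is the technical heart, and I expect it follows from applying Jensen/Cauchy--Schwarz to the normalized average $\bar\chi$ viewed as a convex combination of indicators with the overlap structure \eqref{eq:setsym} bounding the resulting cross terms.
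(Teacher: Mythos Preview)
Your argument for \eqref{eq:approx_error} is essentially the paper's: write the error as $(\DCE-\PDCE)\vh$, insert the rank-$\dimtot$ truncation, bound by operator norm times $\norm{\vh}$.

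For \eqref{eq:approx_error_set}, however, there is a genuine gap. You apply Cauchy--Schwarz \emph{before} symmetrizing, obtaining $\norm{\E_{\rvx}[\one_\sA\vg]}_\vsZ \le \sqrt{\sP[\rvx\in\sA]}\sqrt{\E_{\rvx}[\one_\sA\norm{\vg}_\vsZ^2]}$, and then use the $\G'$-invariance of $\norm{\vg}_\vsZ^2$ to replace $\one_\sA$ by $\bar\chi$. This forces you into the ``key lemma'' $\E_{\rvx}[\bar\chi\,\norm{\vg}_\vsZ^2]\le \tfrac{1+(|\G'|-1)\setsym{\G'}{\sA}}{|\G'|}\,\norm{\vg}_{\Lp[2]{\mux}}^2$, which is \emph{false} in general: take $\G'=\{e,s\}$ with $s^2=e$, let $B=\sA\cap s\Glact\sA$ have positive measure strictly smaller than $\sP[\rvx\in\sA]$, and choose any $\G'$-invariant $\norm{\vg}_\vsZ^2$ supported on the $\G'$-invariant set $B$. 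Then $\bar\chi=1$ on $B$, so the left side equals $\norm{\vg}^2$, while the right side has constant $\tfrac12(1+\sP[B]/\sP[\sA])<1$. Your various attempts in the ``Main obstacle'' paragraph (splitting $\bar\chi=\bar\chi^{1/2}\bar\chi^{1/2}$, using $\E[\bar\chi^2]$, etc.) cannot rescue this because the obstruction is structural: once you have taken norms inside the expectation, the vector structure that carries the group action on $\vsZ$ is gone.

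The paper reverses the order: symmetrize \emph{first}, keeping the group action on $\vsZ$. Since $\mux$ is $\G$-invariant and both $\DCE,\PDCE$ are $\G$-equivariant, one has for each $g\in\G'$
\[
\E_{\rvx}[\one_\sA(\rvx)\vg(\rvx)]=\E_{\rvx}\big[\one_{g^{-1}\Glact\sA}(\rvx)\,\rep[\vsZ](g)\,\vg(\rvx)\big],
\]
so averaging over $\G'$ yields $\E_{\rvx}[\oH(\rvx)\vg(\rvx)]$ with the \emph{operator} $\oH(\vx)=\tfrac{1}{|\G'|\sP[\rvx\in\sA]}\sum_{g}\one_{g^{-1}\Glact\sA}(\vx)\rep[\vsZ](g)$. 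Now Cauchy--Schwarz in the form $\norm{\E[\oH\vg]}_\vsZ^2\le \E[\norm{\oH}_{\vsZ\to\vsZ}^2]\cdot\E[\norm{\vg}_\vsZ^2]$ (valid for operator-valued $\oH$) produces $\E_{\rvx}[\bar\chi(\rvx)^2]$ rather than $\E_{\rvx}[\bar\chi\,\norm{\vg}^2]$, because $\norm{\rep[\vsZ](g)}=1$ gives $\norm{\oH(\vx)}\le \bar\chi(\vx)/\sP[\rvx\in\sA]$. The overlap expansion you already wrote down then computes $\E_{\rvx}[\bar\chi^2]=\tfrac{\sP[\rvx\in\sA](1+(|\G'|-1)\setsym{\G'}{\sA})}{|\G'|}$ exactly, and the bound follows. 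The whole point is that the square must land on $\bar\chi$, not on $\norm{\vg}$; this is achieved by delaying Cauchy--Schwarz until after the group average and retaining $\rep[\vsZ](g)$ as an operator rather than collapsing to scalar norms.
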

\begin{proof}
	Start by observing that
	\begin{align*}
		\small
		\norm{\E[\vh(\rvy)\,\vert\, \rvx=\cdot] - \CEp \vh  }_{\Lp[2]{\mux}(\vsX,\vsZ)} & \leq \norm{\DCE - \PDCE}_{\LpyFull(\vsY,\vsZ)\to \Lp[2]{\mux}(\vsX,\vsZ)} \norm{\vh}_{\LpyFull(\vsY,\vsZ)} \\
		                                                                                & = \norm{\DCE - \PDCE}_{\Lp[2]{\muy}(\vsY)\to \Lp[2]{\mux}(\vsX)} \norm{\vh}_{\LpyFull(\vsY,\vsZ)},
	\end{align*}
	where the equality holds since we extended operators $\DCE$ and $\PDCE$ to vector valued setting as integral operators with the same scalar kernel. Hence, \eqref{eq:approx_error} readily follows.

	To prove \eqref{eq:approx_error_set}, start with noting
	\[
		\small
		\E[\vh(\rvy)\,\vert\, \rvx\in\sA] -\CEp[\vh(\rvy)\,\vert\, \rvx\in\sA] = \frac{\E_{\rvx}[\one_{\sA}(\rvx)[(\DCE-\PDCE) \vh](\rvx)] }{\sP[\rvx\in\sA]}.
	\]
	Then, if $\vh$ is $\G$-equivariant, then, using that invariance of the probability distribution $\mux$, $\G$-equivariance of $\DCE$ and, due to Proposition \ref{prop:equivaraint_model}, of $\PDCE$, we have that for every $g\in\G'\subseteq\G$
	\begin{align*}
		\small
		\E_{\rvx}[\one_{\sA}(\rvx)[(\DCE-\PDCE) \vh](\rvx)] & = \E_{\rvx}[\one_{\sA}(g\Glact[\vsX] \rvx)[(\DCE-\PDCE) \vh](g\Glact[\vsX] \rvx)]              \\
		                                                    & = \E_{\rvx}[\one_{g^{{-}1}\Glact[\vsX] \sA}(\rvx)\, g\Glact[\vsZ] [(\DCE-\PDCE) \vh](\rvx)]
		\\
		                                                    & = \E_{\rvx}[\one_{g^{{-}1}\Glact[\vsX] \sA}(\rvx) \irrep[\vsZ](g)\, [(\DCE-\PDCE) \vh](\rvx)].
	\end{align*}
	Hence, averaging over $\G'$ we obtain
	\[
		\small
		\E[\vh(\rvy)\,\vert\, \rvx\in\sA] -\CEp[\vh(\rvy)\,\vert\, \rvx\in\sA] = \E_{\rvx} [\oH(\rvx) \overline{\vz}(\rvx)],
	\]
	where
	\[
		\small
		\oH(\vx) = \frac{1}{|\G'|\sP[\rvx\in\sA]}\sum_{g\in\G'} \one_{g^{{-}1}\Glact[\vsX] \sA}(\vx)\irrep[\vsZ](g)\quad\text{ and }\quad \vz(\vx)=[(\DCE-\PDCE) \vh](\vx).
	\]
	Since due to Cauchy-Schwartz inequality we have
	\[
		\small
		\norm{\E_{\rvx} [\oH(\rvx) \vz(\rvx)]}^2_{\vsZ} \leq [\E_{\rvx} \norm{\oH(\rvx)}^2_{\vsZ\to\vsZ}][\E_{\rvx}\norm{\vz(\rvx)}^2_{\vsZ}] = \norm{\vz}_{\Lp[2]{\mux}(\vsX,\vsZ)}^2\,[\E_{\rvx} \norm{\oH(\rvx)}^2_{\vsZ\to\vsZ}]
	\]
	and $\norm{\vz}_{\Lp[2]{\mux}(\vsX,\vsZ)}\leq \norm{\DCE-\PDCE}_{\LpyFull(\vsY,\vsZ)\to \Lp[2]{\mux}(\vsX,\vsZ)} \norm{\vh}_{\LpyFull(\vsY,\vsZ)}^2$, it remains to bound $\E_{\rvx} \norm{\oH(\rvx)}^2_{\vsZ\to\vsZ}$. But, the group actions in the vector spaces are unitary, so using the $\G$-invariance of the distribution of $\rvx$ we obtain
	\begin{align*}
		\small
		\E_{\rvx} \norm{\oH(\rvx)}^2_{\vsZ\to\vsZ} & \leq \E_{\rvx} \Big[\frac{1}{|\G'|\sP[\rvx\in\sA]}\sum_{g\in\G'} \one_{g^{{-}1}\Glact[\vsX] \sA}(\vx)\Big]^2
		\\
		                                           & =
		\frac{1}{|\G'|^2\sP[\rvx\in\sA]^2}\sum_{g,g'\in\G'} \E_{\rvx}[\one_{g^{{-}1}\Glact[\vsX] \sA}(\vx) \one_{g'^{-1}\Glact[\vsX] \sA}(\vx)]
		\\
		                                           & = \frac{1}{|\G'|^2\sP[\rvx\in\sA]^2}\sum_{g,g'\in\G'} \E_{\rvx}[\one_{g\Glact[\vsX]\sA \cap g'\Glact[\vsX] \sA}(\vx)]
		\\
		                                           & = \frac{1}{|\G'|^2\sP[\rvx\in\sA]}\sum_{g,g'\in\G'} \frac{\sP[\rvx \in g\Glact[\vsX]\sA \cap g'\Glact[\vsX] \sA]}{\sP[\rvx\in\sA]} \\
		                                           & = \frac{1}{\sP[\rvx\in\sA]} \frac{1+(|\G'|-1)\setsym{\G'}{\sA}}{|\G'|},
	\end{align*}
	which completes the proof of \eqref{eq:approx_error_set} for $\G'$-equivariant functions.  Finally, if $f$ is $\G'$-invariant, the proof follows the same lines by replacing group actions $(\Glact[\vsZ])$ by their respective group representation $\rep[\vsZ]$ (see \cref{def:group_representation}) with identity.
\end{proof}

Next we analyze the errors when, instead of applying learned operators $\PCE$, we apply their empirical counterparts in inference tasks. To that end, we define now estimators of $\E [\vh(\rvx)]$ and $\E [\vz(\rvy)]$  exploiting the $\G$-invariance of the distributions of $\rvx$ and $\rvy$. First, define the empirical $\G$-invariant distributions
$$
	\widehat{\PP}_{\rvx} := \frac{1}{|\G|\samplesize} \sum_{i=1}^{\samplesize}\sum_{g\in \g} \delta_{g\Glact \rvx_{i}}(\cdot),
	\quad
	\widehat{\PP}_{\rvy} := \frac{1}{|\G|\samplesize} \sum_{i=1}^{\samplesize}\sum_{g\in \g} \delta_{g\Glact \rvy_{i}}(\cdot).
$$
Hence we can define the equivariant empirical mean of any function $f\in \LpxFull$, $h\in \LpyFull$ as
\begin{equation}
	\label{eq:equiv_empirical_mean}
	\EEX[f] = \frac{1}{|\G|\samplesize} \sum_{i=1}^{\samplesize}\sum_{g\in \G} f(g\Glact[\vsX] \rvx_{i}),\quad \EEY[h] = \frac{1}{|\G|\samplesize} \sum_{i=1}^{\samplesize}\sum_{g\in \G} h(g\Glact[\vsY] \rvy_{i}).
\end{equation}

This extends naturally to operator on a function space $\LpyFull(\vsY,\vsZ)$ where $\vsZ$ is endowed with an inner product $\langle \cdot,\cdot \rangle = \langle \cdot,\cdot \rangle_{\vsZ}$. If the distribution of $\rvy$ is $\G'$-invariant, then for any $\vh\in \LpyFull(\vsY,\vsZ)$, we use the estimator $\EEY[\vh(\rvy)]$ in \eqref{eq:equiv_empirical_mean} as an estimator of $\E[\vh(\rvy)]$:
\begin{equation}
	\EEY[\vh] = \frac{1}{|\G|\samplesize} \sum_{i=1}^{\samplesize}\sum_{g\in \G} \vh(g\Glact[\vsY] \rvy_{i}).
\end{equation}

In this notation, we define our empirical estimators
\[
	\small
	[\PCE \vh](\vx)\approx[\ECE \vh](\vx)
	=
	\EEY[\vh(\rvy)] +
	\sum_{\isoIdx\in[\isoNum]}\sum_{i\in[\irrepMultiplicity]}\sum_{j\in[\irrepDim[\isoIdx]]}\svalpIso_{i}
	\lsfnpIso_{i,j}(\vx) \EEY[\rsfnpIso_{i,j} \vh]
\]
and
\[
	\small
	\PCE[\vh(\rvy)\,\vert\,\rvx\in\sA]\!\approx\! \ECE[\vh(\rvy)\,\vert\,\rvx\in\sA]
	\!=\!
	\EEY[\vh] + \sum_{\isoIdx\in[\isoNum]}\sum_{i\in[\irrepMultiplicity]}\sum_{j\in[\irrepDim[\isoIdx]]}\svalpIso_{i}
	\frac{
		\EEX[\lsfnpIso_{i,j} \one_\sA]}{\EEX[\one_\sA]} \EEY[\rsfnpIso_{i,j} \vh].
\]
and, by choosing $\vh=\one_\sB$,
\[
	\small
	P[\rvy \!\in\! \sB \mid \rvx \!\in\! \sA]\approx\widehat{P}_{\nnParams}[\rvy \!\in\! \sB \mid \rvx \!\in\! \sA]\!=\!\EEY[\one_\sB] + \sum_{\isoIdx\in[\isoNum]}\sum_{i\in[\irrepMultiplicity]}\sum_{j\in[\irrepDim[\isoIdx]]}\svalpIso_{i} \frac{\EEX[\lsfnpIso_{i,j} \one_\sA]}{\EEX[\one_\sA]} \EEY[\rsfnpIso_{i,j} \one_\sB].
\]
Direct consequence of the above construction which ensures that $\widehat{P}_{\rvx}$ and $\widehat{P}_{\rvy}$ are $\G$-invariant is the following result.

\begin{proposition}\label{prop:empirical_inner_product}
	Let $\mux$ and $\muy$ are $\G$-invariant, and $\PDCE$ from \eqref{eq:dnn_model} is $\G$-equivariant model, and let $z \in\LpxFull(\vsX,\R)$ and $\vh \in \Lp[2]{\mux}(\vsY,\vsZ)$ be arbitrary. If for every $\isoIdx\in[\isoNum]$
	\[
		\small
		\Big\{\norm{\isoDCE-\PDCEIso},
		\norm{\E_\rvx[\vlsfnpIso_{(1)}(\rvx)\vlsfnpIso_{(1)}(\rvx)^\top] -
			\Identity_{\irrepMultiplicity}},
		\norm{\E_\rvy[
				\vrsfnpIso_{(1)}(\rvy)
				\vrsfnpIso_{(1)}(\rvy)^\top]-
			\Identity_{\irrepMultiplicity}} \Big\} \leq \errorIso
	\]
	holds with $
		\vlsfnpIso_{(1)} = [\lsfnpIso_{1,1}\vert \ldots\vert \lsfnpIso_{\irrepMultiplicity,1}]^\top \in \R^{\irrepMultiplicity}
	$
	and
	$
		\vrsfnpIso_{(1)} = [
		\rsfnpIso_{1,1}\vert \ldots\vert \rsfnpIso_{\irrepMultiplicity,1}]^\top\in\R^{\irrepMultiplicity}
	$, and if
	\begin{equation}\label{eq:concentration_error}
		\begin{aligned}
			\small
			 & \frac{\norm{\EEX[\vlsfnpIso_{(1)} z_1^{\isoIdxBrace}] - \E_{\rvx}[\vlsfnpIso_{(1)}(\rvx) z_1^{\isoIdxBrace}(\rvx)]}}{\norm{z_1^{\isoIdxBrace}}_{\LpxFull}}\leq A(\vlsfnp,z)
			,
			\\
			 & \frac{\norm{\EEY[\vrsfnpIso_{(1)}\otimes \vh_{1}^{\isoIdxBrace}] - \E_{\rvy}[\vrsfnpIso_{(1)}(\rvy) \otimes \vh_{1}^{\isoIdxBrace}(\rvy)]}}{\norm{\vh_1^{\isoIdxBrace}}_{\LpyFull}}\leq A(\vrsfnp,\vh),
		\end{aligned}
	\end{equation}
	where $z=\sum_{\isoIdx\in[\isoNum]}\sum_{j\in[\irrepDim[\isoIdx]]}z^{\isoIdxBrace}_j$ and $\vh=\sum_{\isoIdx\in[\isoNum]}\sum_{j\in[\irrepDim[\isoIdx]]}\vh^{\isoIdxBrace}_j$ are isospectral decompositions, then
	\begin{equation}\label{eq:empirical_regression}
		\norm{\PCE \vh \!-\! \ECE \vh}_{\Lp[2]{\mux}(\vsX,\vsZ)}^2 \leq \norm{\E_{\rvy}[\vh(\rvy)-\EEY[\vh]]}_{\vsZ}^2+\Big[1+\max_{k\in[\isoNum]}\errorIso\Big]^{3} \norm{\vh}_{\LpyFull(\vsY,\vsZ)}^2\,[A(\vrsfnp,\vh)]^2.
	\end{equation}
	Moreover, the empirical estimation error
	is upper bounded by
	\begin{equation}\label{eq:empirical_inner_product}
		\begin{aligned}
			\small
			 & \norm{\E_{\rvx}[z(\rvx) [\PDCE \vh](\rvx)]\!-\!\EEX[z[\EDCEIso \vh]]}_{\vsZ}^2
			\leq
			\\
			 & (1+\error)^{3}
			\Big[
				A(\vlsfnp,z)
				+
				A(\vrsfnp,\vh) + A(\vlsfnp,z) A(\vrsfnp,\vh)
				\Big]^2 \norm{z}_{\Lp[2]{\mux}(\vsX)}^2 \norm{\vh}_{\LpyFull(\vsY,\vsZ)}^2.
		\end{aligned}
	\end{equation}
\end{proposition}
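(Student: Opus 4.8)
The plan is to push both estimates through the isotypic block structure of the learned model (\cref{thm:iso_svd,cor:representation_on_singular_basis}) and then combine Cauchy--Schwarz with the near-orthonormality and concentration hypotheses. Two structural facts drive everything. First, by $\G$-invariance of $\mux$ (resp. $\muy$) together with Schur's lemma (the covariance operators are $\G$-equivariant, cf. \cref{prop:G_equivariant_cross_covariance_operator}), the full-block Gram operator has the tensor form $\E_{\rvx}[\vlsfnpIso(\rvx)\vlsfnpIso(\rvx)^\top] = M_\rvx^{\isoIdxBrace}\otimes\Identity_{\irrepDim[\isoIdx]}$ with $M_\rvx^{\isoIdxBrace} = \E_\rvx[\vlsfnpIso_{(1)}(\rvx)\vlsfnpIso_{(1)}(\rvx)^\top]$; hence the hypothesis $\norm{M_\rvx^{\isoIdxBrace}-\Identity_\irrepMultiplicity}\le\errorIso$ upgrades to $\norm{\E_\rvx[\vlsfnpIso\vlsfnpIso^\top]-\Identity_{r_\isoIdx}}\le\errorIso$, and symmetrically on the $\rvy$ side. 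Second, by the same orthogonality, $\E_\rvy[\rsfnpIso_{i,j}\vh]$ depends only on the matching isospectral component $\vh^{\isoIdxBrace}_j$ and is independent of $j$ up to the $\irrep[\isoIdx]$-action on $\vsZ$, so the $\irrepMultiplicity$-dimensional reduced objects bounded in \eqref{eq:concentration_error} transfer to the full $r_\isoIdx$-dimensional blocks with matching norms (using $\norm{\vh^{\isoIdxBrace}}^2 = \irrepDim[\isoIdx]\,\norm{\vh^{\isoIdxBrace}_1}^2$). Throughout I use $\norm{\SigmapIso}_{\text{op}} = \max_i\svalpIso_i\le 1$ and that each $\lsfnpIso_{i,j}$ is $\mux$-centred.

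For \eqref{eq:empirical_regression} I would write $\PCE\vh - \ECE\vh = (\E_\rvy[\vh]-\EEY[\vh]) + (\PDCE\vh - \EDCE\vh)$: the first summand is a constant element of $\vsZ$, the second is $\mux$-centred (each $\lsfnpIso_{i,j}$ is centred), so Pythagoras in $\Lp[2]{\mux}(\vsX,\vsZ)$ peels off the term $\norm{\E_\rvy[\vh(\rvy)-\EEY[\vh]]}_\vsZ^2$. Since $\PDCE = \Oplus_\isoIdx \PDCEIso$ acts blockwise on orthogonal isotypic subspaces, $\norm{\PDCE\vh-\EDCE\vh}^2_{\Lp[2]{\mux}} = \sum_\isoIdx \int_\vsX \norm{\vlsfnpIso(\vx)^\top\SigmapIso\vw^{\isoIdxBrace}}^2_\vsZ\,\mux(d\vx)$ with $\vw^{\isoIdxBrace} := \E_\rvy[\vrsfnpIso\otimes\vh^{\isoIdxBrace}] - \EEY[\vrsfnpIso\otimes\vh^{\isoIdxBrace}]$. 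The first structural fact bounds each block by $(1+\errorIso)\,\norm{\SigmapIso}^2_{\text{op}}\,\norm{\vw^{\isoIdxBrace}}^2$, and the second together with \eqref{eq:concentration_error} gives $\norm{\vw^{\isoIdxBrace}}\le A(\vrsfnp,\vh)\,\norm{\vh^{\isoIdxBrace}}_\LpyFull$ up to further near-orthonormality factors of $\vrsfnp$; summing over $\isoIdx$ with $\sum_\isoIdx\norm{\vh^{\isoIdxBrace}}^2_\LpyFull = \norm{\vh}^2_\LpyFull$ yields \eqref{eq:empirical_regression}.

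For \eqref{eq:empirical_inner_product} I would expand $\E_\rvx[z(\rvx)[\PDCE\vh](\rvx)] = \sum_{\isoIdx,i,j}\svalpIso_i\,\E_\rvx[z\,\lsfnpIso_{i,j}]\,\E_\rvy[\rsfnpIso_{i,j}\vh]$ and the empirical quantity with $\EEX,\EEY$ in place of $\E_\rvx,\E_\rvy$. Writing $a = \E_\rvx[z\,\lsfnpIso_{i,j}]$, $\hat a = \EEX[z\,\lsfnpIso_{i,j}]$, $\vb = \E_\rvy[\rsfnpIso_{i,j}\vh]$, $\hat{\vb} = \EEY[\rsfnpIso_{i,j}\vh]$, the identity $a\vb - \hat a\hat{\vb} = (a-\hat a)\vb + a(\vb-\hat{\vb}) - (a-\hat a)(\vb-\hat{\vb})$ splits the total error into an $\rvx$-part, a $\rvy$-part and a cross part, which produce the three contributions $A(\vlsfnp,z)$, $A(\vrsfnp,\vh)$, $A(\vlsfnp,z)A(\vrsfnp,\vh)$ on the right-hand side. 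Cauchy--Schwarz over the index $(\isoIdx,i,j)$ (using $\svalpIso_i\le 1$) reduces each part to a product of two factors, bounded via the first structural fact by $\sum_{\isoIdx,i,j}|a|^2\le (1+\max_k\errorIso)\,\norm{z}^2_{\Lp[2]{\mux}(\vsX)}$ and $\sum_{\isoIdx,i,j}\norm{\vb}^2_\vsZ\le (1+\max_k\errorIso)\,\norm{\vh}^2_{\LpyFull(\vsY,\vsZ)}$, and via the second fact plus \eqref{eq:concentration_error} by $\sum_{\isoIdx,i,j}|a-\hat a|^2\le [A(\vlsfnp,z)]^2\norm{z}^2_{\Lp[2]{\mux}(\vsX)}$ and $\sum_{\isoIdx,i,j}\norm{\vb-\hat{\vb}}^2_\vsZ\le [A(\vrsfnp,\vh)]^2\norm{\vh}^2_{\LpyFull(\vsY,\vsZ)}$; collecting the three products and squaring gives \eqref{eq:empirical_inner_product}.

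The main obstacle I expect is the bookkeeping underlying the second structural fact: one must verify carefully that $\G$-equivariance of $\PDCE$ (via \cref{prop:equivaraint_model} and \cref{thm:iso_svd}) genuinely decouples the isospectral dimensions, so that the concentration bounds \eqref{eq:concentration_error}, stated only for the $\irrepMultiplicity$-dimensional first-coordinate maps $\vlsfnpIso_{(1)},\vrsfnpIso_{(1)}$, legitimately control the full $r_\isoIdx$-dimensional objects, and then to track every factor $(1+\errorIso)$ incurred along the way --- this accounting is what yields the stated powers $(1+\max_k\errorIso)^3$ and $(1+\error)^3$ rather than smaller ones.
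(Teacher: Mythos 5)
Your proposal is correct and follows essentially the same route as the paper's proof: the blockwise isotypic decomposition of both population and empirical quantities, orthogonality to peel off the mean term, the quadratic-form bound $\E_\rvx[\vlsfnpIso\vlsfnpIso^\top]\preceq(1+\errorIso)\Identity$ combined with the concentration hypotheses, the bound $\svalpIso_1\le 1+\errorIso$, the rescaling $\norm{\vh^{\isoIdxBrace}}^2=\irrepDim[\isoIdx]\norm{\vh_1^{\isoIdxBrace}}^2$ to transfer the first-coordinate bounds to full blocks, and the three-term add-and-subtract identity for \eqref{eq:empirical_inner_product}. The bookkeeping concern you flag at the end is exactly the part the paper resolves via the Kronecker structure of \cref{thm:iso_svd}, so no gap remains.
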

\begin{proof}
	First, observe that due to $\G$-invariance of distribution $\muxy$ and $\G$-equivaraince of $\PCE$ and $\PDCE$ we have that
	\begin{equation}\label{eq:prop_empirical_1}
		\PCE \vh = \E_{\rvy}[\vh^{(1)}(\rvy)] + \sum_{\isoIdx\in[\isoNum]} \PDCEIso \vh^{\isoIdxBrace},
	\end{equation}
	and
	\begin{equation}\label{eq:prop_empirical_2}
		\E_{\rvx}[z(\rvx)[\PCE \vh](\rvx)] = \E_{\rvx}[z^{(1)}(\rvx)]\E_{\rvy}[\vh^{(1)}(\rvy)] + \sum_{\isoIdx\in[\isoNum]} \E_{\rvx}[z^{\isoIdxBrace}(\rvx) [\PDCEIso \vh^{\isoIdxBrace}](\rvx)].
	\end{equation}
	In the same way, since the empirical distributions $\widehat{P}_{\rvx}$ and $\widehat{P}_{\rvy}$ are $\G$-invariant, we have that
	\begin{equation}\label{eq:prop_empirical_3}
		\ECE\vh = \EEY[\vh^{(1)}] + \sum_{\isoIdx\in[\isoNum]} \EDCEIso \vh^{\isoIdxBrace},
	\end{equation}
	and
	\begin{equation}\label{eq:prop_empirical_4}
		\EEX[z[\ECE \vh]] = \EEX[z^{(1)}]\EEY[\vh^{(1)}] + \sum_{\isoIdx\in[\isoNum]} \EEX[z^{\isoIdxBrace}[\EDCEIso \vh^{\isoIdxBrace}]],
	\end{equation}
	where
	\begin{equation}\label{eq:EDCEIso}
		[\EDCEIso \vh^{\isoIdxBrace}](\vx)
		=
		\vlsfnpIso(\vx)^\top\SigmapIso \EEY[\vrsfnpIso\otimes \vh^{\isoIdxBrace}].
	\end{equation}
	Therefore, combining \eqref{eq:prop_empirical_1} and \eqref{eq:prop_empirical_3}, we obtain that
	\[
		\small
		\begin{split}
			[\PCE \vh](\vx) - [\ECE \vh](\vx)
			&= \Big(\E_{\rvy}[\vh^{(1)}(\rvy)] - \EEY[\vh^{(1)}]\Big)\one_{\vsX}(\vx) \\
			&\quad + \sum_{\isoIdx\in[\isoNum]} \Big([\PDCEIso \vh^{\isoIdxBrace}](\vx) - [\EDCEIso \vh^{\isoIdxBrace}](\vx)\Big),
		\end{split}
	\]
	which after taking the norm in $\Lp[2]{\mux}(\vsX,\vsZ)$, due to orthonormality of isotypic subspaces gives
	\[
		\small
		\begin{split}
			\norm{\PCE \vh - \ECE[\vh]}_{\Lp[2]{\mux}(\vsX,\vsZ)}^2
			&= \norm{\E_{\rvy}[\vh^{(1)}(\rvy)] - \EEY[\vh^{(1)}]}_{\vsZ}^2 \\
			&\quad + \sum_{\isoIdx\in[\isoNum]} \norm{[\PDCEIso \vh^{\isoIdxBrace}] - [\EDCEIso \vh^{\isoIdxBrace}]}_{\Lp[2]{\mux}(\vsX,\vsZ)}^2.
		\end{split}
	\]
	Now, observe that, since
	\[
		\small
		[\PDCEIso \vh^{\isoIdxBrace}](\vx) - [\EDCEIso \vh^{\isoIdxBrace}](\vx) = \vlsfnpIso(\vx)^\top\SigmapIso\Big(\E_{\rvy}[\vrsfnpIso \otimes \vh^{\isoIdxBrace}] - \EEY[\vrsfnpIso \otimes \vh^{\isoIdxBrace}]\Big)
	\]
	applying the norm we have that $\norm{[\PDCEIso \vh^{\isoIdxBrace}] - [\EDCEIso \vh^{\isoIdxBrace}]}_{\Lp[2]{\mux}(\vsX,\vsZ)}^2$ equals
	\[
		\small
		\Big(\E_{\rvy}[\vrsfnpIso \otimes \vh^{\isoIdxBrace}] - \EEY[\vrsfnpIso \otimes \vh^{\isoIdxBrace}]\Big)^\top\SigmapIso\Big(\E_{\rvx}[\vlsfnpIso(\vx)\vlsfnpIso(\vx)^\top]\Big)\SigmapIso\Big(\E_{\rvy}[\vrsfnpIso \otimes \vh^{\isoIdxBrace}] - \EEY[\vrsfnpIso \otimes \vh^{\isoIdxBrace}]\Big)
	\]
	which using constraints within each isotypic block and
	\[
		\small
		\E_{\rvx}[\vlsfnpIso_{(\irrepMultiplicity)}(\vx)\vlsfnpIso_{(\irrepMultiplicity)}(\vx)^\top]\preceq \norm{\E_{\rvx}[\vlsfnpIso_{(\irrepMultiplicity)}(\vx)\vlsfnpIso_{(\irrepMultiplicity)}(\vx)^\top]}\Identity_{\irrepMultiplicity}\leq (1+\errorIso)\Identity_{\irrepMultiplicity},
	\]
	implies, due to \eqref{eq:concentration_error}, that
	\begin{align*}
		\small
		\norm{\PDCEIso \vh^{\isoIdxBrace} - \EDCEIso \vh^{\isoIdxBrace}}_{\Lp[2]{\mux}(\vsX,\vsZ)}^2
		 & \leq \irrepDim[\isoIdx] \,(1+\errorIso)\, (\svalpIso_1)^2                                 \\
		 & \quad
		\cdot
		\norm{\E_{\rvy}[\vrsfnpIso_{(1)}(\rvy)\otimes \vh^{\isoIdxBrace}_1(\rvy)]
		- \EEY[\vrsfnpIso_{(1)} \otimes \vh^{\isoIdxBrace}_1]}_{\R^{\irrepMultiplicity}\times\vsZ}^2 \\
		 & \leq \irrepDim[\isoIdx] \,(1+\errorIso)\, (\svalpIso_1)^2 \,[A(\vrsfnp,\vh)]^2 \,
		\norm{\vh^{\isoIdxBrace}_1}_{\Lp[2]{\mux}(\vsY,\vsZ)}^2.
	\end{align*}
	Therefore, bounding $
		\sigma_1^{\nnParams\isoIdxBrace} \leq \sigma_1^{\isoIdxBrace} + |\sigma_1^{\isoIdxBrace}-\sigma_1^{\nnParams\isoIdxBrace}|\leq 1+ \norm{\isoDCE-\PDCEIso}
	$ and summing over isotypic components, since $
		\norm{\vh}_{\Lp[2]{\mux}(\vsY,\vsZ)}^2 = \sum_{\isoIdx\in[\isoNum],j\in[\irrepDim[\isoIdx]]}\norm{\vh^{\isoIdxBrace}_j}_{\Lp[2]{\mux}(\vsY,\vsZ)}^2=\sum_{\isoIdx\in[\isoNum]}\irrepDim[\isoIdx]\norm{\vh^{\isoIdxBrace}_1}_{\Lp[2]{\mux}(\vsY,\vsZ)}^2$, we complete the proof of \eqref{eq:empirical_regression}.

	To show \eqref{eq:empirical_inner_product}, we combine \eqref{eq:prop_empirical_2} and  \eqref{eq:prop_empirical_4}, and obtain that $
		\E_{\rvx}[z(\rvx) [\PDCE \vh](\rvx)]
		\!-\!
		\EEX[z[\EDCE \vh]]
	$ can be written as
	\[
		\small
		\sum_{\isoIdx\in[\isoNum]}\irrepDim[\isoIdx]
		\Bigg[
		\E_{\rvx}[\vlsfnpIso_{(1)}(\rvx) z_1^{\isoIdxBrace}(\rvx)]^\top\SigmapIso\E_{\rvy}[\vrsfnpIso_{(1)}(\rvy)\!\otimes\! \vh_1^{\isoIdxBrace}(\rvy)]
		\!-\!
		\EEX[\vlsfnpIso z_1^{\isoIdxBrace}]^\top\SigmapIso \EEY[\vrsfnpIso_{(1)}\!\otimes\!\vh_{1}^{\isoIdxBrace}]
		\Bigg].
	\]
	Adding and subtracting mixed terms we then obtain for each isotypic component, $\frac{1}{\irrepDim[\isoIdx]}\E_{\rvx}[z(\rvx) [\PDCEIso \vh](\rvx)] - \EEX[z[\EDCEIso \vh]]$ can be expressed as
	\begin{align*}
		\small
		 & \E_{\rvx}[\vlsfnpIso_{(1)}(\rvx) z_1^{\isoIdxBrace}(\rvx)]^\top \mS^{\isoIdxBrace}\Bigg(\E_{\rvy}[\vrsfnpIso_{(1)}(\rvy) \!\otimes\! \vh_1^{\isoIdxBrace}(\rvy)] \!-\!\EEY[\vrsfnpIso_{(1)}\!\otimes\! \vh_{1}^{\isoIdxBrace}] \Bigg)                                                                \\
		 & \!+\! \Bigg(\E_{\rvx}[\vlsfnpIso_{(1)}(\rvx) z_1^{\isoIdxBrace}(\rvx)]\!-\! \EEX[\vlsfnpIso_1 z_1^{\isoIdxBrace}] \Bigg)^\top \mS^{\isoIdxBrace} \E_{\rvy}[\vrsfnpIso_{(1)}(\rvx)\!\otimes\! \vh_1^{\isoIdxBrace}(\rvy)]                                                                             \\
		 & \!+\! \Bigg(\E_{\rvx}[\vlsfnpIso_{(1)}(\rvx) z_1^{\isoIdxBrace}(\rvx)]\!-\! \EEX[\vlsfnpIso_1 z_1^{\isoIdxBrace}] \Bigg)^\top \mS^{\isoIdxBrace} \Bigg(\E_{\rvy}[\vrsfnpIso_{(1)}(\rvy)\!\otimes\! \vh_1^{\isoIdxBrace}(\rvy)] \!-\!\EEY[\vrsfnpIso_{(1)}\!\otimes\! \vh_{1}^{\isoIdxBrace}] \Bigg),
	\end{align*}
	and consequently bounded using \eqref{eq:concentration_error} as
	\[
		\small
		\begin{split}
			\small
			\norm{\E_{\rvx}[z(\rvx) [\PDCEIso \vh](\rvx)] - \EEX[z[\EDCEIso \vh]]}_{\vsZ}
			&\leq \irrepDim[\isoIdx]\svalpIso_1\, \Big[ A(\vlsfnp, z) + A(\vrsfnp, \vh)
				\\
				&\quad + A(\vlsfnp, z) A(\vrsfnp, \vh) \Big] \norm{z^{\isoIdxBrace}_1}_{\Lp[2]{\mux}(\vsX)} \norm{\vh^{\isoIdxBrace}_1}_{\LpyFull(\vsY,\vsZ)}.
		\end{split}
	\]
	Summing across isotypic components and bounding $\svalpIso_1$ as before, we complete the proof.
\end{proof}

First note that coupling \eqref{eq:empirical_regression} with \eqref{eq:approx_error} ensures that we can prove regression bound via concentration result ensuring \eqref{eq:concentration_error}. To obtain similar result for set-wise regression, we set $z=\one_\sA$ and use \eqref{eq:empirical_inner_product} to obtain the following.

\begin{proposition}\label{prop:empirical_setwise_regression}
	Under the assumptions of Proposition \ref{prop:empirical_inner_product}, let $A(\vlsfnp,\one_\sA)A(\vrsfnp,\vh) \leq A(\vlsfnp,\one_\sA) + A(\vrsfnp,\vh)$. If
	\begin{equation}\label{eq:indicators_concentration_error}
		|\E_{\rvx}[\one_\sA(\rvx)] - \EEX[\one_\sA]| / \E_{\rvx}[\one_\sA(\rvx)] \leq \eta_\sA
	\end{equation}
	and $\eta_\sA<1/2$, then
	\begin{equation}\label{eq:empirical_setwise_regression}
		\small
		\begin{split}
			\small
			\norm{\PCE[\vh(\rvy)\vert\rvx\in\sA] - \ECE[\vh(\rvy)\vert\rvx\in\sA]}_{\vsZ}
			&\leq \norm{\E_{\rvy}[\vh] - \EEY[\vh]}_{\vsZ}
			+ \frac{2\norm{\vh}_{\Lp[2]{\mux}(\vsY,\vsZ)}}{\sqrt{P[\rvx\in\sA]}} \\
			&\quad\quad \times \Big[ 2(1+\error)\Big(A(\vlsfnp,\one_\sA) + A(\vrsfnp,\vh)\Big) + \eta_\sA \Big],
		\end{split}
	\end{equation}
	and for $\vh=\one_\sB$

	\begin{equation}\label{eq:empirical_probability_error}
		\small
		\begin{split}
			\small
			|P[\rvy \in \sB \mid \rvx \in \sA] {-} \widehat{P}_{\nnParams}[\rvy \in \sB \mid \rvx \in \sA]|
			&\leq \norm{\E_{\rvy}[\vh] {-} \EEY[\vh]}_{\vsZ} \\
			&\quad + \frac{2}{\EEX[\vh]}\sqrt{\frac{P[\rvy\in\sB]}{P[\rvx\in\sA]}}
			\Big[ 2(1+\error)[A(\vlsfnp,\one_\sA) {+} A(\vrsfnp,\one_\sB)] {+} \eta_\sA \Big].
		\end{split}
	\end{equation}
\end{proposition}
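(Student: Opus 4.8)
The plan is to expand both conditional estimators via the set-wise formula \eqref{eq:regression_set} and its empirical analogue, then decompose the error into a mean term, a numerator term handled by Proposition~\ref{prop:empirical_inner_product}, and a denominator-perturbation term handled by \eqref{eq:indicators_concentration_error}. Concretely, I would first write
\[
\PCE[\vh(\rvy)\,\vert\,\rvx\in\sA] = \E_\rvy[\vh] + \frac{\E_\rvx[\one_\sA(\rvx)[\PDCE\vh](\rvx)]}{\PP[\rvx\in\sA]},
\qquad
\ECE[\vh(\rvy)\,\vert\,\rvx\in\sA] = \EEY[\vh] + \frac{\EEX[\one_\sA[\EDCE\vh]]}{\EEX[\one_\sA]},
\]
and apply the triangle inequality to split $\norm{\PCE[\vh\vert\rvx\in\sA]-\ECE[\vh\vert\rvx\in\sA]}_\vsZ$ into $\norm{\E_\rvy[\vh]-\EEY[\vh]}_\vsZ$ plus the difference of the two ratios. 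For the ratio difference I would add and subtract $\EEX[\one_\sA[\EDCE\vh]]/\PP[\rvx\in\sA]$, producing a \emph{numerator error} $\tfrac{1}{\PP[\rvx\in\sA]}\norm{\E_\rvx[\one_\sA[\PDCE\vh]]-\EEX[\one_\sA[\EDCE\vh]]}_\vsZ$ and a \emph{denominator error} $\norm{\EEX[\one_\sA[\EDCE\vh]]}_\vsZ\,\big|\tfrac{1}{\PP[\rvx\in\sA]}-\tfrac{1}{\EEX[\one_\sA]}\big|$.

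The numerator error is exactly the quantity bounded in \eqref{eq:empirical_inner_product} with the choice $z=\one_\sA$: using $\norm{\one_\sA}^2_{\Lp[2]{\mux}(\vsX)}=\PP[\rvx\in\sA]$, together with the hypothesis $A(\vlsfnp,\one_\sA)A(\vrsfnp,\vh)\le A(\vlsfnp,\one_\sA)+A(\vrsfnp,\vh)$ to collapse the three $A$-terms into $2\big(A(\vlsfnp,\one_\sA)+A(\vrsfnp,\vh)\big)$, this term is at most $\tfrac{2(1+\error)^{3/2}}{\sqrt{\PP[\rvx\in\sA]}}\,\norm{\vh}\,\big(A(\vlsfnp,\one_\sA)+A(\vrsfnp,\vh)\big)$. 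For the denominator error I would use \eqref{eq:indicators_concentration_error} to write $\big|\tfrac{1}{\PP[\rvx\in\sA]}-\tfrac{1}{\EEX[\one_\sA]}\big|=\tfrac{|\PP[\rvx\in\sA]-\EEX[\one_\sA]|}{\PP[\rvx\in\sA]\,\EEX[\one_\sA]}\le\tfrac{\eta_\sA}{\EEX[\one_\sA]}$, and exploit $\eta_\sA<1/2$ to get $\EEX[\one_\sA]\ge(1-\eta_\sA)\PP[\rvx\in\sA]>\tfrac12\PP[\rvx\in\sA]$, so this factor is $<2\eta_\sA/\PP[\rvx\in\sA]$. It then remains to bound $\norm{\EEX[\one_\sA[\EDCE\vh]]}_\vsZ$: I would again invoke \eqref{eq:empirical_inner_product} to compare it with $\E_\rvx[\one_\sA[\PDCE\vh]]$, and bound the latter by Cauchy--Schwarz as $\norm{\one_\sA}_{\Lp[2]{\mux}}\,\norm{\PDCE\vh}_{\Lp[2]{\mux}(\vsX,\vsZ)}\le\sqrt{\PP[\rvx\in\sA]}\,(1+\error)\norm{\vh}$, using that $\DCE$ is a strict contraction (its singular values lie in $(0,1)$) and $\norm{\PDCE-\SVDdim{\DCE}}\le\error$. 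Summing the mean, numerator and denominator contributions gives \eqref{eq:empirical_setwise_regression}; specializing $\vh=\one_\sB$ — so $\vsZ=\R$, $\norm{\one_\sB}_{\Lp[2]{\muy}}=\sqrt{\PP[\rvy\in\sB]}$, and $\norm{\E_\rvy[\one_\sB]-\EEY[\one_\sB]}_\vsZ=|\PP[\rvy\in\sB]-\widehat{\PP}_\rvy[\rvy\in\sB]|$ — yields \eqref{eq:empirical_probability_error}.

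The main obstacle I anticipate is controlling the empirical numerator $\norm{\EEX[\one_\sA[\EDCE\vh]]}_\vsZ$ without circularity: it involves the learned operator acting under the empirical measure, so it must be routed back through the population operator $\PDCE$ (at the concentration cost already captured by $A(\vlsfnp,\cdot)$ and $A(\vrsfnp,\cdot)$) and then bounded by the a priori operator-norm estimate $\norm{\PDCE}\lesssim 1+\error$ inherited from $\norm{\DCE}<1$. A secondary, bookkeeping-level point is keeping $\EEX[\one_\sA]$ bounded away from zero, which is precisely the role of $\eta_\sA<1/2$, and making sure the several $\sqrt{\PP[\rvx\in\sA]}$ factors combine to the stated $1/\sqrt{\PP[\rvx\in\sA]}$ scaling (rather than $1/\PP[\rvx\in\sA]$), so that the rare-event improvement from symmetry is preserved.
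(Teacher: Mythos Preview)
Your plan is essentially the paper's proof: write out the two set-wise estimators, split off the mean term, and decompose the ratio difference into a numerator error handled by Proposition~\ref{prop:empirical_inner_product} with $z=\one_\sA$ and a denominator error handled by \eqref{eq:indicators_concentration_error} plus Cauchy--Schwarz. The one minor deviation is \emph{which} cross-term you add and subtract. You add and subtract $\EEX[\one_\sA[\EDCE\vh]]/\PP[\rvx\in\sA]$, which leaves the \emph{empirical} numerator $\EEX[\one_\sA[\EDCE\vh]]$ in the denominator-error term; this is exactly the ``obstacle'' you flag. The paper instead adds and subtracts $\E_\rvx[\one_\sA[\PDCE\vh]]/\EEX[\one_\sA]$, so the denominator-error term carries the \emph{population} numerator $\E_\rvx[\one_\sA[\PDCE\vh]]$, which is bounded directly by Cauchy--Schwarz and $\norm{\PDCE}\le1$ (the learned singular values are constrained to $[0,1]$), with no need to route back through Proposition~\ref{prop:empirical_inner_product}. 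Your version still goes through (the detour you describe works and only produces a higher-order cross term $\eta_\sA\cdot A(\cdot)$), but the paper's choice of decomposition avoids the obstacle altogether.
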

\begin{proof}
	Leveraging the representations in \eqref{eq:prop_empirical_2} and \eqref{eq:prop_empirical_4} with $z=\one_\sA$, we get
	\begin{align*}
		\small
		 & \PCE[\vh(\rvy)\vert\rvx\!\in\!\sA]  -\ECE[\vh(\rvy)\vert\rvx\!\in\!\sA]
		=                                                                                                                                                                                 \E_{\rvy}[\vh]-\EEY[\vh]\!+\! \tfrac{\E_{\rvx}[\one_\sA(\rvx)[\PDCE \vh](\rvx)] }{\E [\one_\sA]} - \tfrac{\EEX[\one_\sA[\EDCE \vh]] }{\EEX[\one_\sA]}
		=
		\\
		 & \E_{\rvy}[\vh]-\EEY[\vh]\!+\!\E_{\rvx}[\one_\sA(\rvx)[\DCE \vh](\rvx)] \left(\tfrac{1}{\E [\one_\sA(\rvx)]} -\tfrac{1}{\EEX[\one_\sA]} \right)
		+ \tfrac{\E_{\rvx}[\one_\sA(\rvx)[\PDCE \vh](\rvx)]- \EEX[\one_\sA[\EDCE \vh]] }{\EEX[\one_\sA]}.
	\end{align*}
	By triangular inequality applied to the norm in $\vsZ$, we get
	\begin{align*}
		\small
		 & \norm{\PCE[\vh(\rvy)\vert\rvx \in \sA] - \ECE[\vh(\rvy)\vert\rvx\!\in\!\sA]}_{\vsZ}
		\\
		 & \leq \norm{\E_{\rvy}[\vh]-\EEY[\vh]}_{\vsZ}\!+\! \norm{\E [\one_\sA(\rvx)[\PDCE f(\rvx)]]}_{\vsZ}
		\left\vert
		\tfrac{1}{\E [\one_\sA(\rvx)]}  {-} \tfrac{1}{\EEX[\one_\sA]}
		\right\vert
		{+}
		\tfrac{\norm{\E_{\rvx}[\one_\sA(\rvx)[\PDCE \vh](\rvx)]- \EEX[\one_\sA[\EDCE \vh]] }_{\vsZ}  }{\EEX[\one_\sA]}
		\\
		 & \leq\!\norm{\E_{\rvy}[\vh]-\EEY[\vh]}_{\vsZ}\!+\! \norm{\E [\one_\sA(\rvx)[\PDCE \vh](\rvx)]}_{\vsZ}  \tfrac{2\eta_\sA}{\PP[\rvx\in \sA]}  + \tfrac{\norm{\E_{\rvx}[\one_\sA(\rvx)[\PDCE \vh](\rvx)]- \EEX[\one_\sA[\EDCE \vh]] }_{\vsZ}  }{\EEX[\one_\sA]},
	\end{align*}
	where we have used Condition \eqref{eq:indicators_concentration_error} in the last line to get that
	$$
		\small
		\left\vert \frac{1}{\sP[\rvx\in\sA]} - \frac{1}{\EEX[\one_{\sA}]}\right\vert \leq \frac{\eta_\sA}{(1-\eta_\sA)\PP[\rvx\in \sA]} \leq \frac{2\eta_\sA}{\PP[\rvx\in \sA]}.
	$$
	From Proposition \ref{prop:empirical_inner_product} and Condition \eqref{eq:indicators_concentration_error} we get that
	\begin{align*}
		\small
		\tfrac{1}{\EEX[\one_{\sA}]} \norm{ \E_{\rvx}[\one_{\sA}(\rvx)[\PDCE \vh](\rvx)] \!-\!\EEX[\one_{\sA}(\rvx)[\EDCE \vh]] }_{\vsZ}
		\!\leq\!
		\tfrac{2(1\!+\!\error)\Big[ A(\vlsfnp,\one_{\sA}) A(\vrsfnp,\vh) \Big]}{\PP(\rvx\in \sA)}\norm{\one_{\sA}}_{\Lp[2]{\mux}{}}\norm{\vh}_{\Lp[2]{\muy}{(\vsY,\vsZ)}}
	\end{align*}
	Cauchy's Schwarz's inequality again and $\norm{\PDCE}\leq 1$ give
	$$
		\small
		\norm{\E [\one_\sA(\rvx)[\PDCE \vh](\rvx)]}_{\vsZ} \leq \norm{\one_\sA}_{\Lp[2]{\mux}{}} \norm{\PDCE} \norm{\vh}_{\Lp[2]{\muy}{}}\leq \norm{\one_\sA}_{\Lp[2]{\mux}{}}  \norm{\vh}_{\Lp[2]{\muy}{}} = \sqrt{\PP[\rvx\in \sA]} \norm{\vh}_{\Lp[2]{\muy}{(\vsY,\vsZ)}}.
	$$
	Combining the last four displays give the first result. The second result follows immediately for $\vh=\one_{\sB}$.
\end{proof}

Consequence of this result is that we can bound the error in probability as we can derive concentration inequalities on the terms in \eqref{eq:concentration_error} and \eqref{eq:indicators_concentration_error}. Then an union bound gives the estimation result for regression conditional on sets.

Next, we recall that $\CE$ being $(1/\rpar)$-Schatten class operator, implies:
\begin{assumption}
	\label{ass:smoothness-DCE}
	Let there exist some constant $c>0$ such that for $\rpar>0$, any $i\geq 1$ and any $k\in [\isoNum]$, we have $\sval_{i}^{\isoIdxBrace} \leq c\, i^{-\rpar}$.
\end{assumption}
Further, for any $\vh\in\LpyFull(\vsY,\vsZ)$, we define $\overline{\vh}(\rvy) = \vh(\rvy) - \E [\vh(\rvy)]$ and
\begin{equation}
	\label{eq:setsym_function}
	\setsym{\G'}{\vh}:=\frac{1}{|\G'|-1}\sum_{\substack{g\in\G'\\ \g\neq e}} \E [ \langle \overline{\vh}(\rvy),  \overline{\vh}(g\Glact[\vsY] \rvy) \rangle  ].
\end{equation}
In the following, we consider observables $h$ satisfying the following condition (that is clearly satisfied for an indicator of a set of positive measure)
\begin{assumption}
	\label{ass:gammaGh-C0}
	Let there exists an absolute constant $C_0\geq 1$ such that $(|\G'|-1)\setsym{\G'}{h} \leq C_0 \,\E [\norm{h(\rvy)}_{\vsZ}^2]$.
\end{assumption}
Define
\begin{align*}
	\small
	\eta_\sA = \eta_\sA(\delta):= \left(\frac{ 1 - \PP[\rvx\in \G \Glact \sA]}{\PP[\rvx\in \G \Glact \sA]}\right)
	{ \frac{  \log 2\delta^{-1}}{\samplesize} }  +   \sqrt{2  \frac{\log  2\delta^{-1}}{ \samplesize} } \sqrt{\frac{ 1 - \PP[\rvx\in \G \Glact \sA]}{\PP[\rvx\in \G \Glact \sA]}}.
\end{align*}
\begin{theorem}
	Let Assumptions \ref{ass:gammaGh-C0} and \ref{ass:smoothness-DCE} be satisfied. Let $\mux$ and $\muy$ are $\G$-invariant, and $\PDCE$ from \eqref{eq:dnn_model} is $\G$-equivariant model, and let $\vh \in \LpyFull(\vsY,\vsZ)$ and $\vf \in \LpxFull(\vsX,\vsZ)$ (with values in $\vsZ$) be subGaussian random variables. Assume in addition that the event $\sA$ is anti-symmetric for $\G$ and that $m_k = m$ for all $k\in [\isoNum]$. Assume that $N\geq |\G|$.
	Then for any $\delta\in (0,1)$, it holds w.p.a.l $1-\delta$
	\begin{equation*}
		\small
		\norm{\E[\vh(\rvy)\,\vert\, \rvx\in\sA] -\ECE[\vh(\rvy)\vert\rvx\!\in\!\sA]}_{\vsZ}
		\lesssim_{C_0}
		\tfrac{\norm{\vh}_{\LpyFull(\vsY,\vsZ)}}{\sqrt{\sP[\rvx\in \G\Glact[\vsX]\sA]}} \left(\error+
		\tfrac{\log (2\isoNum\delta^{-1})}{(\isoDim \samplesize)^{\frac{\rpar}{1+2\rpar}}}
		\right),
	\end{equation*}
	and
	\begin{equation*}
		\small
		|\PP(\rvy \!\in\! \sB \mid \rvx \!\in\! \sA) - \widehat{\PP}_{\nnParams}(\rvy \!\in\! \sB \mid \rvx \!\in\! \sA)|
		\lesssim_{C_0}
		\sqrt{\tfrac{\sP[\rvy\in \sB]}{\sP[\rvx\in \G\Glact[\vsX]\sA]}} \left(\error+
		\tfrac{\log (2\isoNum\delta^{-1})}{(\isoDim \samplesize)^{\frac{\rpar}{1+2\rpar}}}
		+ \sqrt{|\G|}\eta_\sA  \right).
	\end{equation*}
\end{theorem}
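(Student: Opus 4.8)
The statement is obtained by marrying the deterministic bias control of \cref{thm:approx_error} with the stochastic estimation control of \cref{prop:empirical_inner_product,prop:empirical_setwise_regression}, replacing the abstract error quantities that appear there by explicit high-probability concentration bounds, and finally optimizing over the common irrep multiplicity $m$. The first move is to split, via the triangle inequality,
\[
\norm{\E[\vh(\rvy)\,\vert\,\rvx\in\sA]-\ECE[\vh(\rvy)\vert\rvx\in\sA]}_{\vsZ}\le\norm{\E[\vh\vert\rvx\in\sA]-\PCE[\vh\vert\rvx\in\sA]}_{\vsZ}+\norm{\PCE[\vh\vert\rvx\in\sA]-\ECE[\vh\vert\rvx\in\sA]}_{\vsZ},
\]
bounding the first (bias) term by \eqref{eq:approx_error_set} with $\G'=\G$ and the second (variance) term by \eqref{eq:empirical_setwise_regression}. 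Since $\sA$ is $\G$-antisymmetric, $\setsym{\G}{\sA}=0$, so the geometric prefactor in \eqref{eq:approx_error_set} collapses to $\sqrt{1/|\G|}$, and $\PP[\rvx\in\G\Glact[\vsX]\sA]=\sum_{g\in\G}\PP[\rvx\in g\Glact[\vsX]\sA]=|\G|\,\PP[\rvx\in\sA]$; hence $1/\sqrt{\PP[\rvx\in\sA]}=\sqrt{|\G|/\PP[\rvx\in\G\Glact[\vsX]\sA]}$ everywhere, which is exactly what turns the $|\G|$-rescaled quantities into the advertised $\PP[\rvx\in\G\Glact[\vsX]\sA]^{-1/2}$. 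The approximation constant is $\sval_{\dimtot+1}^{\star}+\norm{\SVDdim{\DCE}-\PDCE}\le 2\,\sval_{\dimtot+1}^{\star}+\error$, using $\PDCE=\CEp-\one_{\vsX}\otimes\one_{\vsY}$ and $\DCE=\CE-\one_{\vsX}\otimes\one_{\vsY}$ so that $\norm{\DCE-\PDCE}=\norm{\CE-\CEp}=\error$.

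Next I would establish the concentration bounds controlling the quantities $A(\vrsfnp,\vh)$, $A(\vlsfnp,\one_{\sA})$, $\eta_{\sA}$ and $\norm{\E_\rvy[\vh]-\EEY[\vh]}_{\vsZ}$ entering \eqref{eq:empirical_regression}, \eqref{eq:empirical_inner_product} and \eqref{eq:empirical_setwise_regression}. The crucial structural facts are: (i) $\mux,\muy$ are $\G$-invariant, so the symmetrized empirical means $\EEX,\EEY$ are unbiased; (ii) by Schur orthogonality the symmetrization annihilates every non-$\G$-invariant isotypic component (\cref{cor:functions_without_invariant_component_centered}), so $\norm{\E_\rvy[\vh]-\EEY[\vh]}_{\vsZ}$ reduces to the deviation of the ordinary empirical mean of $N$ i.i.d.\ draws of the invariant part $\vh^{\inv}$, hence $\lesssim\norm{\vh}_{\LpyFull}\sqrt{\log(2/\delta)/N}$ by subGaussianity; (iii) on the products $\vrsfnpIso\otimes\vh^{\isoIdxBrace}$ the symmetrization acts as the projection onto the trivial component, so each of the $\isoNum$ isotypic blocks of the empirical feature–feature and feature–observable matrices concentrates like an i.i.d.\ average of $N$ subGaussian matrices of intrinsic dimension $\asymp m$; a matrix-Bernstein/subGaussian bound per block together with a union bound over the $\isoNum$ blocks then gives $A(\vrsfnp,\vh)\lesssim\sqrt{m\,\log(\isoNum/\delta)/N}$, and analogous bounds for $A(\vlsfnp,\one_\sA)$ and for the block errors $\errorIso$; and (iv) $\EEX[\one_\sA]$ equals $1/|\G|$ times the empirical frequency of $\G\Glact[\vsX]\sA$ (by antisymmetry), so applying Bernstein to this $\mathrm{Binomial}(N,\PP[\rvx\in\G\Glact[\vsX]\sA])$ yields precisely the displayed $\eta_{\sA}(\delta)$, and in particular $\eta_\sA<1/2$ on the good event once $N$ is large, so the hypotheses of \cref{prop:empirical_setwise_regression} hold. \cref{ass:gammaGh-C0} guarantees that the relevant variance proxies are $\lesssim C_0\,\E\norm{\vh(\rvy)}_{\vsZ}^2$, so all constants are absorbed into the $\lesssim_{C_0}$.

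Finally I would substitute these bounds into the bias–variance decomposition. The key point is that the estimation variance is governed by the $m$ free parameters of each block, whereas the bias is truncated at the full rank $\dimtot=m\,\isoDim$: by \cref{ass:smoothness-DCE} $\sval_{\dimtot+1}^{\star}\lesssim (m\,\isoDim)^{-\rpar}$, and choosing $m\asymp[N\,\isoDim^{-2\rpar}]^{1/(1+2\rpar)}$ equalizes $(m\,\isoDim)^{-\rpar}$ with $\sqrt{m/N}$, producing the rate $(\isoDim\,N)^{-\rpar/(1+2\rpar)}$, with the logarithm riding along as $\log(2\isoNum/\delta)$. Since $\rpar/(1+2\rpar)<1/2$, the $O(N^{-1/2})$ contribution of $\eta_\sA$ is of smaller order and is absorbed into this rate for the regression bound; for the conditional-probability bound one sets $\vh=\one_\sB$ (so $\norm{\vh}_{\LpyFull}=\sqrt{\PP[\rvy\in\sB]}$), uses $\EEX[\one_\sA]\ge\tfrac12\PP[\rvx\in\sA]$ on the good event, and the $\eta_\sA$ term survives explicitly as $\sqrt{|\G|}\,\eta_\sA$ because it then multiplies $\PP[\rvx\in\sA]^{-1/2}=\sqrt{|\G|/\PP[\rvx\in\G\Glact[\vsX]\sA]}$. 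The main obstacle is step (iii): proving the matrix-concentration bounds with the correct effective dimension $m$ per block (rather than the full $\dimtot$), which requires simultaneously exploiting the Kronecker/block-diagonal structure of \cref{thm:iso_svd}, the orthogonality of isotypic subspaces, the $\G$-invariance of the data, and uniform control of the subGaussian norms of the (bounded) feature maps $\vlsfnp,\vrsfnp$ and of $\vh$; the remaining work is bookkeeping — tracking how \cref{ass:gammaGh-C0} enters the variance proxy and checking that the symmetrized indicator concentration reproduces exactly the stated $\eta_{\sA}(\delta)$.
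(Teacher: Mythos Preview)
Your proposal is correct and follows essentially the same route as the paper: split into bias via \cref{thm:approx_error} (eq.~\eqref{eq:approx_error_set} with $\G'=\G$, using antisymmetry of $\sA$ to collapse the prefactor) plus estimation error via \cref{prop:empirical_setwise_regression}, then plug in high-probability concentration bounds for $A(\vlsfnp,\cdot)$, $A(\vrsfnp,\cdot)$, $\eta_\sA$ (the paper names these \cref{lem:bernsteiniidMean-subgaussian-1} and \cref{lem:bernsteiniidProbX}), take a union bound over the $\isoNum$ isotypic blocks, and balance $m\asymp(\isoDim^{-2\rpar}N)^{1/(1+2\rpar)}$. The only point to tighten is that the symmetrized empirical means $\EEX,\EEY$ give a $1/|\G|$ variance reduction (this is what \cref{lem:bernsteiniidMean-subgaussian-1} records under \cref{ass:gammaGh-C0}), so your $A(\vrsfnp,\vh)$ should carry a factor $\sqrt{m/(|\G|N)}$ rather than $\sqrt{m/N}$; this extra $\sqrt{|\G|}$ in the denominator then cancels against the $\sqrt{|\G|}$ in $1/\sqrt{\PP[\rvx\in\sA]}=\sqrt{|\G|/\PP[\rvx\in\G\Glact[\vsX]\sA]}$, recovering the stated rate exactly.
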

\begin{proof}
	This result follows immediately from Propositions \ref{prop:empirical_inner_product} and \ref{prop:empirical_setwise_regression} combined with Lemmas \ref{lem:bernsteiniidMean-subgaussian-1} and Lemma \ref{lem:bernsteiniidProbX}. Set
	\begin{align*}
		\small
		A(\vlsfnp,\vf)
		 & :=
		C\, \sqrt{\tfrac{1}{|\G'|\,\samplesize}}  \sqrt{ C_0  \vee \tfrac{|\G'|}{\samplesize} }\, \log (2\isoNum\delta^{-1}),
		\\
		A(\vrsfnp, \vh)
		 & :=
		C\, \sqrt{\tfrac{\max_{k\in [\isoNum]}\{\irrepMultiplicity_{k}\}}{|\G'|\,\samplesize}}  \sqrt{ C_0  \vee \tfrac{|\G'|}{\samplesize} }\, \log (2\isoNum\delta^{-1}),
	\end{align*}
	for some large enough absolute constant $C>0$.

	Then an union bound based on Lemmas \ref{lem:bernsteiniidMean-subgaussian-1} and \ref{lem:bernsteiniidProbX} guarantees that \eqref{eq:empirical_setwise_regression} is satisfied w.p.a.l. $1-\delta$ (up to a rescaling of the constant $C$):
	\[
		\small
		\begin{split}
			&\norm{\PCE[\vh(\rvy)\vert\rvx\in\sA] - \ECE[\vh(\rvy)\vert\rvx\in\sA]}_{\vsZ}
			\leq
			\\
			&C \frac{\norm{\vh}_{\LpxFull(\vsX,\vsZ)}}{\sqrt{P[\rvx\in\sA]}}
			\Bigg[ 2(1+\error)\Bigg( \sqrt{\tfrac{\max_{k\in [\isoNum]}\{\irrepMultiplicity_{k}\}}{|\G'|\,\samplesize}}
				\sqrt{\, C_0 \vee \tfrac{|\G'|}{\samplesize} }\,\log (2\isoNum\delta^{-1})\Bigg) \Bigg].
		\end{split}
	\]
	Next we use our bound on the representation bias in \eqref{eq:approx_error_set}
	\begin{equation}
		\norm{\E[\vy(\rvy)\,\vert\, \rvx\in\sA] -\CEp[\vy(\rvy)\,\vert\, \rvx\in\sA] }_{\vsZ}\leq \left(\sval_{\dimtot+1}^{\star} + \error  \right) \frac{\norm{\vh}_{\LpyFull(\vsY,\vsZ)}}{\sqrt{\sP[\rvx\in\sA]}}\sqrt{\tfrac{1+(|\G'|-1)\setsym{\G}{\sA}}{|\G'|}}.
	\end{equation}
	Recall that $\error=\max_{k\in [\isoNum]}\{\errorIso\} $. Under Assumption \ref{ass:smoothness-DCE}, we have $\norm{\SVDdim{\DCE}  - \PDCE}\leq \frac{1}{(\isoDim m )^{\rpar}}$. In addition, $(|\G'|-1)\setsym{\G}{\sA}\leq C_0$ under Assumption \ref{ass:gammaGh-C0}.

	Combining the last two display gives w.p.a.l $1-\delta$
	\begin{equation*}
		\small
		\norm{\E[\vh(\rvy)\,\vert\, \rvx\in\sA] -\ECE[\vh(\rvy)\vert\rvx\!\in\!\sA]}_{\vsZ}
		\lesssim_{C_0}
		\tfrac{\norm{\vh}_{\LpyFull(\vsY,\vsZ)}}{\sqrt{\sP[\rvx\in \G\Glact[\vsX]\sA]}} \left(\error+ \tfrac{1}{(\isoDim m )^{\rpar}}   +  \sqrt{\tfrac{m}{\samplesize}} \, \log (2\isoNum\delta^{-1})
		\right) .
	\end{equation*}
	Balancing the previous display w.r.t. dimension $m$, we get that $m\asymp (\isoDim^{-2\rpar} \samplesize)^{\frac{1}{1+2\rpar}}$ and the first result follows.

	The bound for the conditional probability follows by picking $\vy=\one_{\sB}$.
\end{proof}
\subsection{Quadratic Error Regression Bound}
Our goal is to estimate the conditional expectation function
$$
	\vz(\vx) = \E[\vh(\rvy) \vert \rvx {=} \vx] =  \E[\vh(\rvy)] + [\DCE \vh](\vx).
$$
Our estimator is
$$
	\widehat{\vz}_{\nnParams}(\cdot)  =   \EEY[\rvy] + [\EDCE \vh](\cdot)
	.
$$
\begin{theorem}
	Assume that $Y$ is a sub-Gaussian random vector. Let Assumption 
	\ref{ass:smoothness-DCE} be satisfied. Assume in addition that $\error\leq 1$, $m_k = m$ for all $k\in [\isoNum]$.
	Then for any $\delta\in (0,1)$ such that $\samplesize \geq \, (c_u\vee c_v)^2  m \log (e\delta^{-1} \isoNum)\vee |\G|$, it holds w.p.a.l. $1-\delta$
	\begin{align}
		\norm{\vz - \widehat{\vz}_{\nnParams}}^2_{\LpxFull(\vsX,\vsZ)} \lesssim  \mathrm{Tr}(\mathrm{Cov}(Y))\left(\error^2  +  (\isoDim |\G|\samplesize)^{\frac{-2\rpar}{1+2\rpar}} \log^2(\delta^{-1}\isoNum) \right).
	\end{align}
\end{theorem}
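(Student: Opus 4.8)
The plan is a bias--variance split mirroring the set-wise argument, but for the unconditional regression function and a quadratic loss. First I would write $\vz=\E_{\rvy}[\vh(\rvy)]+[\DCE\vh]$ and $\widehat{\vz}_{\nnParams}=\EEY[\vh]+[\EDCE\vh]=\ECE\vh$, so that by the triangle inequality and $(a{+}b)^2\le2a^2{+}2b^2$,
\begin{equation*}
\norm{\vz-\widehat{\vz}_{\nnParams}}_{\LpxFull(\vsX,\vsZ)}^2\le 2\norm{\vz-\PCE\vh}_{\LpxFull(\vsX,\vsZ)}^2+2\norm{\PCE\vh-\ECE\vh}_{\LpxFull(\vsX,\vsZ)}^2.
\end{equation*}
Since the deflated operators $\DCE$, $\PDCE$ annihilate constants and the learned features $\vrsfnp$ are centred, both terms see $\vh$ only through the centred observable, and every $\norm{\vh}_{\LpyFull(\vsY,\vsZ)}$ in the lemmas below may be replaced by $\norm{\vh-\E[\vh(\rvy)]}_{\LpyFull(\vsY,\vsZ)}=\sqrt{\mathrm{Tr}(\mathrm{Cov}(Y))}$ (taking $\vh(\vy)=\vy$); this produces the covariance-trace prefactor. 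For the bias I would invoke the unconditional case \eqref{eq:approx_error} of \cref{thm:approx_error}, giving $\norm{\vz-\PCE\vh}_{\LpxFull(\vsX,\vsZ)}\le\big(\sval_{\dimtot+1}^{\star}+\norm{\SVDdim{\DCE}-\PDCE}\big)\sqrt{\mathrm{Tr}(\mathrm{Cov}(Y))}$; with $\dimtot=m\isoDim$ (multiplicity $m$ per isotypic block, each singular space of dimension $\irrepDim[\isoIdx]$), \cref{ass:smoothness-DCE} bounds the spectral-truncation term by $\lesssim(m\isoDim)^{-\rpar}$ and $\norm{\SVDdim{\DCE}-\PDCE}\le\error$, so with $\error\le1$ the bias term is $\lesssim\big(\error^2+(m\isoDim)^{-2\rpar}\big)\mathrm{Tr}(\mathrm{Cov}(Y))$.

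For the variance term I would apply the regression bound \eqref{eq:empirical_regression} of \cref{prop:empirical_inner_product} (whose per-block hypotheses on the operator gaps $\norm{\isoDCE-\PDCEIso}$ and on the feature near-isometries hold with $\errorIso\le\error$), which splits $\norm{\PCE\vh-\ECE\vh}^2$ into (i) the mean-estimation error $\norm{\E[\vh(\rvy)]-\EEY[\vh]}_{\vsZ}^2$ and (ii) $[1+\max_k\errorIso]^3\norm{\vh}^2[A(\vrsfnp,\vh)]^2$, where $A(\vrsfnp,\vh)$ quantifies the concentration \eqref{eq:concentration_error} of the $\G$-symmetrised empirical means $\EEY[\vrsfnpIso_{(1)}\otimes\vh_1^{\isoIdxBrace}]$ of the first singular blocks. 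To control (ii) on an event of probability $\ge1-\delta$ I would take a union bound over the $\isoNum$ isotypic blocks of a sub-Gaussian Bernstein inequality (\cref{lem:bernsteiniidMean-subgaussian-1}) applied to these block-independent symmetrised sums: the Kronecker structure $\Theta^{\isoIdxBrace}\otimes\Identity_{\irrepDim[\isoIdx]}$ of $\PDCEIso$ (\cref{prop:op_isospaces_singular_space_dimensionality}) makes the effective dimension inside the $k$-th block the multiplicity $m$, while the $\G$-symmetrisation supplies an effective sample size $|\G|\samplesize$, so that \eqref{eq:concentration_error} holds with $A(\vrsfnp,\vh)\lesssim\sqrt{C_0\,m/(|\G|\samplesize)}\,\log(\isoNum/\delta)$ under \cref{ass:gammaGh-C0}, provided $\samplesize\gtrsim m\log(e\isoNum/\delta)\vee|\G|$ as assumed (this hypothesis also validates the $(1{+}\error)$-type prefactors). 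Using $\max_k\errorIso\le\error\le1$, term (ii) is then $\lesssim\mathrm{Tr}(\mathrm{Cov}(Y))\tfrac{m}{|\G|\samplesize}\log^2(\isoNum/\delta)$, while term (i) is $\lesssim\mathrm{Tr}(\mathrm{Cov}(Y))\log(1/\delta)/\samplesize$ by the same concentration and is of lower order.

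Collecting the bounds yields that, with probability $\ge1-\delta$,
\begin{equation*}
\norm{\vz-\widehat{\vz}_{\nnParams}}_{\LpxFull(\vsX,\vsZ)}^2\lesssim\mathrm{Tr}(\mathrm{Cov}(Y))\Big(\error^2+(m\isoDim)^{-2\rpar}+\tfrac{m}{|\G|\samplesize}\log^2(\isoNum/\delta)\Big).
\end{equation*}
The final step is to balance the two $m$-dependent terms, $(m\isoDim)^{-2\rpar}\asymp m/(|\G|\samplesize)$, i.e.\ $m\asymp(|\G|\samplesize\,\isoDim^{-2\rpar})^{1/(1+2\rpar)}$, at which point both equal $(\isoDim|\G|\samplesize)^{-2\rpar/(1+2\rpar)}$ up to logarithmic factors and the residual $\samplesize^{-1}$ mean-estimation term is absorbed. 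I expect the crux to be step (ii): establishing \eqref{eq:concentration_error} with the advertised $m/(|\G|\samplesize)$ scaling for an \emph{unbounded} (sub-Gaussian) observable and cross-covariance-type statistics, while correctly tracking that the effective dimension per isotypic block is the multiplicity $m$ rather than $r_\isoIdx=m\,\irrepDim[\isoIdx]$ — it is precisely this cancellation of the $\irrepDim[\isoIdx]$ factors, together with the $\isoDim$-fold inflation of the retained spectral content, that promotes the naive $\isoNum$-fold disentanglement gain to the full $\isoDim=\sum_{\isoIdx}\irrepDim[\isoIdx]$ boost — and in handling the within-block dependence among the $|\G|$ transformed copies of each sample.
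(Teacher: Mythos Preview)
Your proposal is correct and follows essentially the same route as the paper: a bias--variance split, with the bias handled by \eqref{eq:approx_error} of \cref{thm:approx_error} (using $\sval_{\dimtot+1}^{\star}\lesssim(m\isoDim)^{-\rpar}$ from \cref{ass:smoothness-DCE} and the centred observable to get the $\mathrm{Tr}(\mathrm{Cov}(Y))$ factor), the variance by \eqref{eq:empirical_regression} of \cref{prop:empirical_inner_product} combined with the sub-Gaussian concentration of \cref{lem:bernsteiniidMean-subgaussian-1} over the $\isoNum$ blocks, and then balancing in $m$. Your balancing $m\asymp(|\G|\samplesize\,\isoDim^{-2\rpar})^{1/(1+2\rpar)}$ is in fact the algebraically correct one (the paper's displayed choice appears to contain a typo, though its final rate is the same).
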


\customParagraph{Discussion} When the training of the \gls{nn} is successful, we expect the statistical rate to dominate the optimization error $\max_{k\in [\isoNum]}\{\errorIso\}$ for large enough sample size $N$. For distribution containing symmetry invariants with large isotopic components ($m$ is large), we observe that exploiting this information in the construction of the \gls{ncp} operator yields a substantial improvement in the statistical error rate as we go from a rate
$\samplesize^{-\frac{\rpar}{1+2\rpar}}$ for standard \gls{ncp} to $(\samplesize\, m)^{-\frac{\rpar}{1+2\rpar}}$ for \gls{encp}.
\begin{proof}
	Combining \eqref{eq:empirical_regression} with Lemma \ref{lem:bernsteiniidMean-subgaussian-1} gives w.p.a.l. $1-\delta$
	\begin{equation*}
		\small
		\begin{aligned}
			\norm{\PCE \; \rvy - \ECE\; \rvy}_{\Lp[2]{\mux}(\vsX)}^2 & \lesssim
			(1+\error)^3 \mathrm{Tr}(\mathrm{Cov}(\rvy)) \frac{
				m}{|\G|\samplesize} \log^2(2\isoNum \delta^{-1})
			\\
			                                                         &
			\lesssim
			\mathrm{Tr}(\mathrm{Cov}(\rvy)) \frac{
				m}{|\G|\samplesize} \log^2(\isoNum \delta^{-1}),
		\end{aligned}
	\end{equation*}
	provide that $\error\leq 1$.
	We derived in \eqref{eq:approx_error} an upper bound on the bias term
	\begin{equation}
		\norm{\E_{\rvy\vert\rvx}[\rvy\,\vert\, \rvx=\cdot] - \CEp \rvy  }^2_{\Lp[2]{\mux}(\vsX,\vsZ)}\leq \mathrm{Tr}(\mathrm{Cov}(\rvy)) \left(\frac{1}{(\isoDim m)^{2\rpar}} + \error^2 \right).
	\end{equation}
	Balancing the two bounds in the last two displays w.r.t. $m\asymp (|\G|\isoDim \samplesize)^{\frac{1}{1+2\rpar}}$, we get the result.
\end{proof}
\subsection{Auxiliary Results.}
Consider the function space $\LpyFull(\vsY,\vsZ)$ where $\vsZ$ is endowed with an inner product $\langle \cdot,\cdot \rangle = \langle \cdot,\cdot \rangle_{\vsZ}$. If the distribution of $\rvy$ is $\G'$-invariant, then for any $h\in \LpyFull(\vsY,\vsZ)$, we use the estimator $\EEY[h]$ in \eqref{eq:equiv_empirical_mean} as an estimator of $\E [h(\rvy)]$.
\begin{lemma}\label{lem:bernsteiniidMean-subgaussian-1}
	Assume that the distribution $\muy$ of $\rvy$ is $\G$-invariant and let $\G'\leq\G$. Let there exists a function $\vh\in \LpyFull(\vsY,\vsZ)$ such that $\vh(\rvy)$ is subGaussian. Then there exists an absolute constant $C>0$ such that for any $\delta \in (0,1)$, it holds w.p.a.l. $1-\delta$
	$$
		\small
		\norm{\EEY[\vh] - \E [\vh(\rvy)]}_{\vsZ} \leq C\,  \sqrt{\frac{ \log^2 2\delta^{-1}}{|\G'|\,\samplesize}}  \sqrt{ \E [\norm{\overline{\vh}(\rvy)}^2_{\vsZ}] +(|\G'|-1)\setsym{\G'}{\vh}  + \frac{|\G'| \E [\norm{\overline{\vh}(\rvy)}^2_{\vsZ}] }{\samplesize} }.
	$$
	Assume in addition that there exists an absolute constant $C_0\geq 1$ such that $(|\G'|-1)\setsym{\G'}{\overline{\vh}} \leq C_0 \,\E [\norm{\vh(\rvy)}_{\vsZ}^2]$. Then for any $\delta \in (0,1)$, it holds w.p.a.l. $1-\delta$
	$$
		\small
		\norm{\EEY[\vh] - \E [\vh(\rvy)]}_{\vsZ} \leq C\, \sqrt{\frac{\E [\norm{\overline{\vh}(\rvy)}^2_{\vsZ}] }{|\G'|\,\samplesize}}  \sqrt{ (1+C_0)   + \frac{|\G'|}{\samplesize} }\, \log 2\delta^{-1}.
	$$
	Note that similar bounds hold valid for the $\G$-invariant distribution $\mux$ and any function $\vf \in \Lp[2]{\mux}(\vsX,\vsZ)$ such that $\vf(\rvx)$ is subGaussian.
\end{lemma}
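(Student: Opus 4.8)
The plan is to recognize $\EEY[\vh] - \E[\vh(\rvy)]$ as a normalized sum of i.i.d.\ mean-zero $\vsZ$-valued random vectors and apply a Hilbert-space Bernstein inequality, the only symmetry-specific ingredient being the exact value of the variance in terms of $\setsym{\G'}{\vh}$. Concretely, I would set, for $i\in[\samplesize]$, $\xi_i := \tfrac{1}{|\G'|}\sum_{g\in\G'}\overline{\vh}(g\Glact[\vsY]\rvy_i)\in\vsZ$ with $\overline{\vh} := \vh - \E[\vh(\rvy)]$. Since $\muy$ is $\G$-invariant and $\G'\leq\G$, for each $g\in\G'$ we have $g\Glact[\vsY]\rvy_i\stackrel{d}{=}\rvy_i\stackrel{d}{=}\rvy$, hence $\E[\xi_i]=0$; the $\xi_i$ are i.i.d., and $\EEY[\vh]-\E[\vh(\rvy)] = \tfrac{1}{\samplesize}\sum_{i=1}^{\samplesize}\xi_i$, so it suffices to bound $\norm{\tfrac{1}{\samplesize}\sum_i\xi_i}_{\vsZ}$.

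Next, and this is where the symmetry enters, by independence and $\E[\xi_i]=0$ we have $\E\norm{\tfrac{1}{\samplesize}\sum_i\xi_i}_{\vsZ}^2=\tfrac{1}{\samplesize}\E\norm{\xi_1}_{\vsZ}^2$, and
\begin{equation*}
\E\norm{\xi_1}_{\vsZ}^2 = \frac{1}{|\G'|^2}\sum_{g,g'\in\G'}\E\big[\langle\overline{\vh}(g\Glact[\vsY]\rvy),\,\overline{\vh}(g'\Glact[\vsY]\rvy)\rangle_{\vsZ}\big].
\end{equation*}
By $\G'$-invariance of $\muy$ each summand equals $\E[\langle\overline{\vh}(\rvy),\overline{\vh}((g'g^{-1})\Glact[\vsY]\rvy)\rangle_{\vsZ}]$, which depends only on $g'g^{-1}$: the $|\G'|$ diagonal pairs $g=g'$ each contribute $\E\norm{\overline{\vh}(\rvy)}_{\vsZ}^2$, and for every non-identity value of $g'g^{-1}$ there are exactly $|\G'|$ off-diagonal pairs $(g,g')$ with that value. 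Recalling $\setsym{\G'}{\vh}=\tfrac{1}{|\G'|-1}\sum_{g\in\G'\setminus\{e\}}\E[\langle\overline{\vh}(\rvy),\overline{\vh}(g\Glact[\vsY]\rvy)\rangle_{\vsZ}]$ from \eqref{eq:setsym_function}, this yields
\begin{equation*}
\sigma^2 := \E\norm{\xi_1}_{\vsZ}^2 = \frac{1}{|\G'|}\Big(\E\norm{\overline{\vh}(\rvy)}_{\vsZ}^2 + (|\G'|-1)\,\setsym{\G'}{\vh}\Big).
\end{equation*}
Moreover $\xi_1$, a finite average of identically distributed subGaussian vectors $\overline{\vh}(g\Glact[\vsY]\rvy)$, is itself subGaussian with Orlicz/range parameter bounded, up to an absolute constant, by $\sqrt{\E\norm{\overline{\vh}(\rvy)}_{\vsZ}^2}$.

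Then I would invoke a Hilbert-space Bernstein inequality (e.g.\ Pinelis), preceded by the usual truncation of the unbounded summands at a level of order $\sqrt{\E\norm{\overline{\vh}(\rvy)}_{\vsZ}^2}$ times a logarithmic factor, to obtain that with probability at least $1-\delta$
\begin{equation*}
\norm{\tfrac{1}{\samplesize}\textstyle\sum_i\xi_i}_{\vsZ}\;\lesssim\;\sqrt{\frac{\sigma^2\log(2\delta^{-1})}{\samplesize}}\;+\;\frac{\sqrt{\E\norm{\overline{\vh}(\rvy)}_{\vsZ}^2}\,\log(2\delta^{-1})}{\samplesize},
\end{equation*}
where $\sqrt{\log(2\delta^{-1})}\leq\log(2\delta^{-1})$ has been used on the first term so both carry a single factor $\log(2\delta^{-1})$ (equivalently $\sqrt{\log^2(2\delta^{-1})}$), and the $\log\samplesize$ from truncation is absorbed into the constant. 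Substituting $\sigma^2$, writing $\sqrt{\E\norm{\overline{\vh}(\rvy)}_{\vsZ}^2}/\samplesize = \sqrt{\tfrac{1}{|\G'|\samplesize}\cdot\tfrac{|\G'|\,\E\norm{\overline{\vh}(\rvy)}_{\vsZ}^2}{\samplesize}}$, and collapsing the two terms via $\sqrt a+\sqrt b\leq\sqrt2\sqrt{a+b}$ gives the first claimed bound. The second bound follows by inserting the extra hypothesis $(|\G'|-1)\setsym{\G'}{\overline{\vh}}\leq C_0\,\E\norm{\vh(\rvy)}_{\vsZ}^2$ (with $\setsym{\G'}{\overline{\vh}}=\setsym{\G'}{\vh}$ since $\overline{\vh}$ is already centered) into $\sigma^2$, which gives $\sigma^2\leq(1+C_0)\,\E\norm{\overline{\vh}(\rvy)}_{\vsZ}^2/|\G'|$, and then factoring $\sqrt{\E\norm{\overline{\vh}(\rvy)}_{\vsZ}^2/(|\G'|\samplesize)}$ out of the remaining two terms; the statement for $\mux$ and $\vf\in\Lp[2]{\mux}(\vsX,\vsZ)$ is identical after relabeling $\rvy\leftrightarrow\rvx$, $\vh\leftrightarrow\vf$.

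The main obstacle is this last step: securing a vector-valued Bernstein inequality on an (infinite-dimensional) Hilbert space for \emph{unbounded} subGaussian summands in which only $\E\norm{\overline{\vh}(\rvy)}_{\vsZ}^2$ and the subGaussian parameter appear — with no ambient-dimension dependence — and with the sharp shape $\sqrt{\sigma^2\log/\samplesize}$ plus a genuine $1/\samplesize$ ``range'' term rather than a crude $1/\sqrt{\samplesize}$ correction. This forces a careful choice of truncation level and control of the truncation bias. The variance identity of the previous step, by contrast, is routine once the $(g,g')$-sum is reorganized according to the group element $g'g^{-1}$.
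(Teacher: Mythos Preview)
Your approach is essentially identical to the paper's: the same i.i.d.\ decomposition $\EEY[\vh]-\E[\vh(\rvy)]=\tfrac{1}{\samplesize}\sum_i Z_i$, the same variance identity $\E\norm{Z}_{\vsZ}^2=\tfrac{1}{|\G'|}\big(\E\norm{\overline{\vh}(\rvy)}_{\vsZ}^2+(|\G'|-1)\setsym{\G'}{\vh}\big)$ obtained by reorganizing the $(g,g')$-sum via $g'g^{-1}$, and then a Hilbert-space Bernstein inequality. The only difference is that what you flag as the ``main obstacle'' --- a dimension-free Bernstein bound for unbounded subGaussian summands --- the paper handles directly via its Proposition~\ref{prop:con_ineq_ps}, which requires only the Bernstein moment condition $\E\norm{A}^m\leq\tfrac{1}{2}m!L^{m-2}\sigma^2$ (automatic for subGaussian $A$, with $L$ of order the $\psi_2$-norm, itself $\lesssim\sqrt{\E\norm{\overline{\vh}(\rvy)}_{\vsZ}^2}$), so no truncation is needed and no stray $\log\samplesize$ factor appears. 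Your truncation route could be made to work, but the claim that the resulting $\log\samplesize$ can be ``absorbed into the constant'' is not correct; it is cleaner to bypass truncation entirely via the moment-based inequality.
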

\begin{proof}
	We note that
	\begin{align*}
		\small
		\EEY[\vh] - \E [\vh(\rvy)] & = \frac{1}{\samplesize}\sum_{i=1}^{\samplesize} Z_i\quad \text{with }\quad Z_i=\frac{1}{|\G'|}\sum_{g\in\G'} \vh(g\Glact[\vsY]  \ry_{i}) - \E_{\ry_{i}}[\vh(g\Glact[\vsY] \ry_{i})],\; \forall i \in [\samplesize
		].
	\end{align*}
	Define
	\begin{equation}
		Z:=\frac{1}{|\G'|}\sum_{g\in\G'} \vh(g\Glact[\vsY]  \rvy) - \E_{\rvy}[\vh(g\Glact[\vsY] \rvy)],
	\end{equation}
	and, for brevity, set $\norm{\vz} = \norm{\vz}_{\vsZ} = \sqrt{\langle \vz,\vz\rangle_{\vsZ}}$ for any $\vz\in\vsZ$.
	We apply Proposition \ref{prop:con_ineq_ps}, to get w.p.a.l. $1-\delta$
	$$
		\norm{\EEY[\vh] - \E_{\rvy}[\vh(\rvy)]} \leq \frac{4\sqrt{2}}{\sqrt{\samplesize}}  \sqrt{ \mathrm{Var}_{\rvy}(\norm{Z}) + \frac{\norm{Z}_{\psi_2}^2}{\samplesize} } \log\frac{2}{\delta}.
	$$

	Using the triangular inequality successively on $\norm{\cdot}$ and $\norm{\cdot}_{\psi_2}$ and the $\G'$-invariance of $\muy$, $\norm{\overline{\vh}(g\Glact[\vsY]\rvy)}_{\psi_2} = \norm{\overline{\vh}(\rvy)}_{\psi_2} $ for any $g\in \G'$, we get that
	$$
		\norm{ \norm{Z}}_{\psi_2} \lesssim \norm{\norm{\overline{\vh}(\rvy)}}_{\psi_2}.
	$$
	We note next that $\norm{\overline{\vh}(\rvy)}$ is subGaussian. Consequently the well-known property of equivalence of moments for subGaussian distributions gives  $\norm{Z}_{\psi_2}\lesssim \norm{\norm{\overline{\vh}(\rvy)}}_{\psi_2} \lesssim \E [\norm{\overline{\vh}(\rvy)}^2]$.
	We derive now a control on $\mathrm{Var}_{\rvy}(\norm{Z})\leq \E [\norm{Z}^2]$.
	Using the $\G'$-invariance of $\muy$, we get

		{
			\small
			\begin{align} \label{eq:varianceL2YZ-bound}
				\small
				\mathrm{Var}(\norm{Z}) & \leq \frac{\E [\norm{\overline{\vh}(\rvy)}^2]}{|\G'|} + \frac{1}{|\G'|}\sum_{\substack{\g\in\G'
				\\
				\g\neq e}} \E [ \langle  \vh(\rvy) - \E [ \vh(\rvy)]   ,  \vh(\g\Glact[\vsY] \rvy) - \E [\vh(\rvy )] \rangle \notag                                                                    \\
				                       & = \frac{\E [\norm{\overline{\vh}(\rvy)}^2]}{|\G'|} + \frac{(|\G'|-1)\setsym{\G'}{\vh} }{|\G'|}\leq (1+C_0)  \frac{\E [\norm{\overline{\vh}(\rvy)}^2]}{|\G'|}.
			\end{align}

		}

	Hence we get the result.

\end{proof}
We focus now on a concentration bound for indicator functions $z=\one_\sA$ for any event $\sA\in \Sigma_{\vsX}$. We define
\begin{align}
	\label{eq:ZA_equiv}
	\small
	Z_A & :=   \EEX[\one_{\sA}] -  \PP[\rvx \in \sA] =\frac{1}{|\G'|}\sum_{g\in \G'} (\one_{g^{{-}1}\Glact[\vsX] \sA}(\rvx) -  \E [\one_{g^{{-}1}\Glact[\vsX] \sA}(\rvx)]) \notag \\
	    & = \frac{1}{|\G'|}\sum_{g\in \G'} \big(\one_{g^{{-}1}\Glact[\vsX] \sA}(\rvx) -  \PP[
			\rvx \in \sA]\big)= \bigg(\frac{1}{|\G'|}\sum_{g\in \G'}\one_{g^{{-}1}\Glact[\vsX] \sA}(\rvx)\bigg) -  \PP[\rvx \in \sA].
\end{align}
Note that we always have $|Z_\sA|\leq 1$ but this bound can be quite conservative as we could get a much sharper bound for some events $\sA$. We denote by $\setsym{\G',\infty}{\sA}$ the smallest deterministic upper-bound on
$
	\frac{1}{|\G'|}\sum_{g\in \G'} \one_{g^{{-}1}\Glact[\vsX] \sA}(\rvx)
$
(For instance when $\sA$ is an antisymmetric event, then we have $\setsym{\G',\infty}{\sA} = 1/|\G'|$). Then we have
\begin{equation}
	\label{eq:hoeffding_bound_condition}
	- \PP[\rvx \in \sA]\leq Z_\sA \leq \setsym{\G',\infty}{\sA} - \PP[\rvx \in \sA].
\end{equation}
Define also
\begin{align}
	\small
	\Upsilon_{\G',X}(\sA)
	 & := \PP(\rvx\in \sA)(1-\PP(\rvx\in \sA)) + (|\G'|-1)\, \big(\setsym{\G'}{\sA}  - \PP[\rvx\in \sA]\big)\PP[\rvx\in \sA].
\end{align}
\begin{lemma}\label{lem:bernsteiniidProbX}
	Let the distribution of $\rvx$ be $\G'$-invariant. Then for any $\sA\in \Sigma_{\vsX}$ and any $\delta \in (0,1)$, it holds w.p.a.l. $1-\delta$
	\begin{equation*}
		\small
		\abs{\EEX[\one_\sA]-\E [\one_\sA(\rvx)]} \leq \big\vert\setsym{\G',\infty}{\sA} - \PP[\rvx \in \sA]\big\vert \frac{\log 2\delta^{-1}}{\samplesize}+ \sqrt{\frac{\Upsilon_{\G',\rvx}(\sA)}{|\G'|}}\sqrt{2  \frac{\log 2\delta^{-1}}{\samplesize} }.
	\end{equation*}
	Assume in addition that $g\Glact \sA \cap \sA =\emptyset$ for all $g\in\G'\setminus \{e\}$. Then it holds w.p.a.l. $1-\delta$
	\begin{equation*}
		\small
		\frac{\abs{\EEX[\one_\sA]-\E [\one_\sA(\rvx)]}}{\E [\one_\sA(\rvx)]}\leq
		\left(\frac{ 1 - \PP[\rvx\in \G' \Glact \sA]}{\PP[\rvx\in \G' \Glact \sA]}\right)
		{ \frac{  \log 2\delta^{-1}}{\samplesize} }  +   \sqrt{2  \frac{\log  2\delta^{-1}}{ \samplesize} } \sqrt{\frac{ 1 - \PP[\rvx\in \G' \Glact \sA]}{\PP[\rvx\in \G' \Glact \sA]}}  .
	\end{equation*}
	If the distribution of $\rvy$ is $\G'$-invariant, then an identical result is immediately available for $\rvy$ by the same proof argument.
\end{lemma}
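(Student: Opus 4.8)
The plan is to reduce the statement to a standard one‑dimensional Bernstein bound for an i.i.d.\ sum, so that the only genuine work is computing the relevant variance in closed form. First I would rewrite the estimation error as an i.i.d.\ average: by construction of the $\G'$-invariant empirical expectation and \eqref{eq:ZA_equiv} one has $\EEX[\one_\sA]-\E[\one_\sA(\rvx)]=\tfrac1\samplesize\sum_{i=1}^{\samplesize}Z_{\sA,i}$, where the $Z_{\sA,i}$ are i.i.d.\ copies of the random variable $Z_\sA$ of \eqref{eq:ZA_equiv}. The point of grouping by the sample index $i$, rather than treating the $|\G'|\samplesize$ orbit terms as one flat sum, is that the $|\G'|$ orbit‑averaged indicators attached to a single sample are dependent, whereas the aggregates $Z_{\sA,i}$ are genuinely independent across $i$ --- this is exactly the device used in the proof of \cref{lem:bernsteiniidMean-subgaussian-1}.

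Next I would assemble the three ingredients a Bernstein bound needs. The mean vanishes, $\E[Z_\sA]=0$, because $\G'$-invariance of the law of $\rvx$ forces $\E[\one_{g^{-1}\Glact[\vsX]\sA}(\rvx)]=\PP[\rvx\in\sA]$ for every $g\in\G'$. For the almost‑sure bound, \eqref{eq:hoeffding_bound_condition} confines $Z_\sA$ to $[-\PP[\rvx\in\sA],\ \setsym{\G',\infty}{\sA}-\PP[\rvx\in\sA]]$; since $\setsym{\G',\infty}{\sA}$ is a deterministic upper bound on a random variable of mean $\PP[\rvx\in\sA]$ it dominates $\PP[\rvx\in\sA]$, which legitimizes the constant $|\setsym{\G',\infty}{\sA}-\PP[\rvx\in\sA]|$ in the bound (one could also just use the crude bound $|Z_\sA|\le 1$). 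The core computation is the variance: expanding $\Var(Z_\sA)=|\G'|^{-2}\sum_{g,g'\in\G'}\Cov(\one_{g^{-1}\Glact[\vsX]\sA}(\rvx),\one_{g'^{-1}\Glact[\vsX]\sA}(\rvx))$, the $|\G'|$ diagonal terms each equal $\Var(\one_\sA(\rvx))=\PP[\rvx\in\sA](1-\PP[\rvx\in\sA])$ by $\G'$-invariance, and for a pair $g\neq g'$ the measure‑preserving change of variable $g\Glact[\vsX]$ turns the covariance into $\PP[\rvx\in\sA\cap (gg'^{-1})\Glact[\vsX]\sA]-\PP[\rvx\in\sA]^2$; as $(g,g')$ runs over ordered distinct pairs the product $h:=gg'^{-1}$ hits each $h\in\G'\setminus\{e\}$ exactly $|\G'|$ times. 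By the definition \eqref{eq:setsym_bis} of the symmetry index, $\sum_{h\neq e}\PP[\rvx\in\sA\cap h\Glact[\vsX]\sA]=(|\G'|-1)\setsym{\G'}{\sA}\PP[\rvx\in\sA]$, so the off‑diagonal part collapses to $\tfrac{|\G'|-1}{|\G'|}(\setsym{\G'}{\sA}-\PP[\rvx\in\sA])\PP[\rvx\in\sA]$ and altogether $\Var(Z_\sA)=\Upsilon_{\G',\rvx}(\sA)/|\G'|$.

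With these in hand, feeding the mean, the variance $\Upsilon_{\G',\rvx}(\sA)/|\G'|$ and the boundedness constant into a Bernstein inequality for i.i.d.\ bounded real random variables (the scalar case of \cref{prop:con_ineq_ps}) and inverting in the deviation level gives the first displayed inequality. For the second claim, the hypothesis $g\Glact[\vsX]\sA\cap\sA=\emptyset$ for all $g\in\G'\setminus\{e\}$ makes the translates $\{g\Glact[\vsX]\sA\}_{g\in\G'}$ pairwise disjoint, whence $\setsym{\G'}{\sA}=0$, $\PP[\rvx\in\G'\Glact[\vsX]\sA]=|\G'|\PP[\rvx\in\sA]$ and $\setsym{\G',\infty}{\sA}=1/|\G'|$; substituting these into $\Upsilon_{\G',\rvx}(\sA)$ and into the first bound, then dividing through by $\E[\one_\sA(\rvx)]=\PP[\rvx\in\sA]=\PP[\rvx\in\G'\Glact[\vsX]\sA]/|\G'|$, all powers of $|\G'|$ cancel and the bound rewrites purely in terms of $\PP[\rvx\in\G'\Glact[\vsX]\sA]$, yielding the second display. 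The $\rvy$-statement follows by the identical argument with $\muy,\rvy,\Glact[\vsY]$ replacing $\mux,\rvx,\Glact[\vsX]$.

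I expect the main obstacle to be the variance computation: keeping the double sum over $\G'\times\G'$ organized and recognizing that the off‑diagonal (``collision'') terms assemble precisely into $(|\G'|-1)\setsym{\G'}{\sA}$, so that $\Upsilon_{\G',\rvx}(\sA)$ emerges without slack. A secondary subtlety is pinning down the asymmetric almost‑sure range of $Z_\sA$ from \eqref{eq:hoeffding_bound_condition} and verifying $\setsym{\G',\infty}{\sA}\ge\PP[\rvx\in\sA]$ so that the stated boundedness constant is valid; once the variance is settled, the Bernstein step and the disjoint‑translate specialization are routine bookkeeping.
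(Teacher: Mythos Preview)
Your proposal is correct and matches the paper's proof essentially step for step: the same i.i.d.\ decomposition into copies of $Z_\sA$, the same variance computation collapsing the off-diagonal covariances into $(|\G'|-1)\setsym{\G'}{\sA}$ to obtain $\Var(Z_\sA)=\Upsilon_{\G',\rvx}(\sA)/|\G'|$, and the same substitutions in the antisymmetric case. The only minor deviation is that the paper does not use \cref{prop:con_ineq_ps} but instead invokes a one-sided Bernstein inequality \citep[Theorem~2.1]{BercuRiu} via the moment bound $\E[(\max(0,Z_i))^p]\le\tfrac{p!}{2}\max(0,\setsym{\G',\infty}{\sA}-\PP[\rvx\in\sA])^{p-2}\samplesize\Var(Z)$, applied to $\pm Z_i$ with a union bound, which is what delivers the exact $\sqrt{\log(2/\delta)}$ dependence in the variance term.
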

\begin{remark}
	Using the standard empirical mean estimator that does not take advantage of $\G$-invariance, we obtain a concentration bound with a slower rate. For example, for an antisymmetric event $A$, we would achieve, w.p.a.l. $1-\delta$, the following result:
	\begin{equation*}
		\small
		\frac{\abs{\EEX[\one_\sA]-\E [\one_\sA(\rvx)]}}{\E [\one_\sA(\rvx)]}\leq
		\left(\frac{ 1 - \PP[\rvx \in \sA]}{\PP[\rvx \in \sA]}\right)
		{ \frac{  \log 2\delta^{-1}}{\samplesize} }  +   \sqrt{2  \frac{\log  2\delta^{-1}}{ \samplesize} } \sqrt{\frac{ 1 - \PP[\rvx \in \sA]}{\PP[\rvx \in \sA]}}  .
	\end{equation*}
	Specifically, leveraging $\G'$-invariance allows us to replace $\PP[\rvx \in \sA]$ with $\PP[\rvx \in \G' \Glact[\vsX] \sA]$, which represents the probability of the entire orbit of $A$ under the action of $\G'$. This becomes particularly interesting when $\PP[\rvx \in \sA] \ll \PP[\rvx \in \G' \Glact[\vsX] \sA]$, especially in the case of rare events where $\PP[\rvx \in \sA] \approx 0$.
\end{remark}
\begin{proof}
	Since $\mux$ is $\G'$-invariant, we have $\E [\one_{g^{{-}1} \Glact \sA}(\rvx)] = \PP[\rvx \in \sA]$ and $\mathrm{Var}(\one_{g^{{-}1} \Glact \sA}(\rvx)) = \mathrm{Var}(\one_{\sA}(\rvx)) = \PP[\rvx \in \sA](1-\PP[\rvx \in \sA])$, for any $g\in \G'$.
	Hence
	\begin{align*}
		\small
		\EEX[\one_\sA]- \E [\one_\sA(\rvx)] & = \frac{1}{\samplesize}\sum_{i=1}^{\samplesize} Z_i
		\;\; \text{with}\;\;
		Z_i=\frac{1}{|\G'|}\sum_{g\in\G'}\one_{g^{{-}1} \Glact \sA}(\rvx_i) - \E [\one_{g^{{-}1} \Glact \sA}(\rvx_i)],\; \forall i \in [\samplesize
		].
	\end{align*}
	The $Z_i$'s are i.i.d. copies of $Z= Z_\sA$. In view of \eqref{eq:hoeffding_bound_condition}, we can apply Hoeffding's inequality \citet[Theorem 2.16]{BercuRiu}. We get for any $\delta\in (0,1)$ w.p.a.l $1-\delta$
	\begin{equation}
		\label{eq:equiv_empmean_Hoeffding}
		\small
		\abs{\EEX[\one_\sA]-\E [\one_\sA(\rvx)]} \leq \setsym{\G',\infty}{\sA}\sqrt{\frac{\log 2\delta^{-1}}{2\samplesize}}.
	\end{equation}
	We propose to prove another bound based on application of Bernstein's inequality.
	We first prove an improved bound on $\mathrm{Var}(Z)$ as compared to the standard empirical mean estimator which does not exploit $\G$-invariance.  Indeed we have
	\begin{equation}
		\small
		\begin{aligned}
			\small
			\mathrm{Var}(Z) & = \frac{1}{|\G'|^2}\left(\sum_{g\in\G'} \mathrm{Var}(\one_{g^{{-}1}\Glact \sA}(\rvx)) + \sum_{g\neq g'}  \mathrm{Cov}\bigg(\one_{g^{{-}1}\Glact \sA}(\rvx),\one_{(g')^{-1}\Glact \sA}(\rvx)\bigg)   \right)\notag \\
			                & = \frac{\PP(\rvx \in \sA)(1-\PP(\rvx \in \sA))}{|\G'|} + \frac{1}{|\G'|^2} \sum_{g\neq g'}  \mathrm{Cov}\bigg(\one_{g^{{-}1}\Glact \sA}(\rvx),\one_{(g')^{-1}\Glact \sA}(\rvx)\bigg).
		\end{aligned}
	\end{equation}

	Next, using again that $\PP_X$ is $\G$-invariant, we get for any $g,g'\in \G'$
	\begin{align}
		\small
		\mathrm{Cov}\bigg(\one_{g^{{-}1}\Glact \sA}(\rvx), & \one_{(g')^{-1}\Glact \sA}(\rvx)\bigg)  =                                           \\ &\PP[\rvx \in g^{{-}1}\Glact \sA \cap (g')^{-1}\Glact \sA ] - \PP[\rvx \in g^{{-}1}\Glact \sA ] \, \PP[\rvx \in (g')^{-1}\Glact \sA ]\notag \\
		                                                   & = \PP[\rvx \in g^{{-}1}\Glact \sA \cap (g')^{-1}\Glact \sA ]- \PP[\rvx \in \sA ]^2.
	\end{align}
	Using again the invariance assumption, we note that
	$$
		\small
		\begin{aligned}
			\small
			\sum_{g\neq g'}  \mathrm{Cov}\big(\one_{g^{{-}1}\Glact \sA}(\rvx),\one_{(g')^{-1}\Glact \sA}(\rvx)\big) =
			|\G'|
			\big(
			\sum_{g\in\G',g\neq e} \PP[\rvx \in \sA \cap g\Glact \sA ]
			\big)
			-  |\G'| (|\G'|-1)\PP[\rvx \in \sA ]^2
		\end{aligned}
	$$
	Consequently by definition of $\setsym{\G'}{A}$ in \eqref{eq:setsym} and \eqref{eq:setsym_bis}, we get
	$$
		\small
		\sum_{g\in\G',g\neq e} \PP[\rvx \in \sA \cap g\Glact A ] = (|\G'| - 1)\, \setsym{\G'}{A}\, \PP(\rvx \in \sA).
	$$
	Combining the last four displays, we get
	\begin{align}
		\label{eq:VarZ_event_invariant}
		\small
		\mathrm{Var}(Z)
		 & = \tfrac{\PP(\rvx \in \sA)(1-\PP(\rvx \in \sA)) + (|\G'|-1)\, (\setsym{\G'}{A}  - \PP[\rvx \in \sA])\PP[\rvx \in \sA] }{|\G'|} = \frac{\Upsilon_{\G',\rvx}(A)}{|\G'|}.
	\end{align}
	We note that for any $p\geq 3$
	$$
		\sum_{i=1}^{\samplesize} \E [\big(\max(0,Z_i)\big)^p] \leq \frac{p!}{2} \max\big(0,  \setsym{\G',\infty}{\sA} - \PP[\rvx \in \sA]\big)^{p-2} \, N \, \mathrm{Var}(Z).
	$$
	Then \citet[Theorem 2.1]{BercuRiu} gives w.p.a.l. $1-\delta$
	\begin{equation*}
		\small
		\EEX[\one_\sA]-\E [\one_\sA(\rvx)] \leq \max\big(0,  \setsym{\G',\infty}{\sA} - \PP[\rvx \in \sA]\big) \frac{\log \delta^{-1}}{\samplesize}+ \sqrt{\mathrm{Var}(Z)}\sqrt{2  \frac{\log \delta^{-1}}{\samplesize} }.
	\end{equation*}
	Applying the same reasoning to variables $-Z_1,\ldots,-Z_{\samplesize}$ and an union bound gives gives w.p.a.l. $1-2\delta$
	\begin{equation}
		\small
		\abs{\EEX[\one_\sA]-\E [\one_\sA(\rvx)]} \leq \big\vert\setsym{\G',\infty}{\sA} - \PP[\rvx \in \sA]\big\vert \frac{\log \delta^{-1}}{\samplesize}+ \sqrt{\mathrm{Var}(Z)}\sqrt{2  \frac{\log \delta^{-1}}{\samplesize} }.
	\end{equation}
	Next, we note that when $g\Glact \sA \cap \sA =\emptyset$ for all $g\in\G'\setminus \{e\}$, then $\setsym{\G'}{\sA}=0$ and $\PP[\rvx \in \sA] = \PP[\rvx\in \G' \Glact \sA]/|\G'|$. Consequently we get
	$$
		\small
		\Upsilon_{\G',\rvx}(\sA) = \tfrac{\PP[\rvx\in \G' \Glact \sA] (1- \PP[\rvx\in \G' \Glact \sA])}{{\color{black} |\G'|}} \quad \text{and} \quad
		\tfrac{\setsym{\G',\infty}{\sA} - \PP[\rvx \in \sA]}{\PP[\rvx \in \sA]} = \frac{1}{\PP[\rvx \in \G'\Glact \sA ]} - 1.
	$$
	Hence under the additional assumptions, dividing by $\E [\one_\sA(\rvx)] = \PP[\rvx \in \sA]$ gives w.p.a.l. $1-2\delta$
	$$
		\small
		\frac{\abs{\EEX[\one_\sA]-\E [\one_\sA(\rvx)]}}{\E [\one_\sA(\rvx)]}\leq \left(\frac{1}{\PP[\rvx \in \G'\Glact \sA ]} - 1\right) {\color{black} \frac{\log \delta^{-1}}{\samplesize} }  +   \sqrt{2  \frac{\log  \delta^{-1}}{ \samplesize} } \sqrt{\frac{ 1 - \PP[\rvx\in \G' \Glact \sA]}{\PP[\rvx\in \G' \Glact \sA]}}  .
	$$
	Replacing $\delta$ by $\delta/2$ gives w.p.a.l. $1-\delta$
	\begin{equation}
		\small
		\frac{\abs{\EEX[\one_\sA]-\E [\one_\sA(\rvx)]}}{\E [\one_\sA(\rvx)]}\leq \left(\frac{1}{\PP[\rvx \in \G'\Glact \sA ]} - 1\right) {\color{black} \frac{  \log 2\delta^{-1}}{\samplesize} }  +   \sqrt{2  \frac{\log  2\delta^{-1}}{ \samplesize} } \sqrt{\frac{ 1 - \PP[\rvx\in \G' \Glact \sA]}{\PP[\rvx\in \G'\Glact \sA]}}  .
	\end{equation}
\end{proof}

\begin{proposition}\label{prop:con_ineq_ps}
	Let $A_i$, $i\in[\samplesize]$ be i.i.d copies of a random variable $A$ in a separable Hilbert space with norm $\norm{\cdot}$. If there exist constants $L>0$ and $\sigma>0$ such that for every $m\geq2$, $\E \norm{A}^m\leq \frac{1}{2} m! L^{m-2} \sigma^2$,
	then with probability at least $1-\delta$
	\begin{equation}\label{eq:con_ineq_ps}
		\left\|\frac{1}{\samplesize}\sum_{i\in[\samplesize]}A_i - \E A \right\|\leq \frac{4\sqrt{2}}{\sqrt{\samplesize}}  \sqrt{ \sigma^2 + \frac{L^2}{\samplesize} } \log\frac{2}{\delta}.
	\end{equation}
\end{proposition}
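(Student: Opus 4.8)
The plan is to reduce the claim to a Bernstein-type concentration inequality for sums of i.i.d.\ \emph{centered} random vectors in a separable Hilbert space, and then to simplify the resulting sub-gamma tail into the stated closed form. First I would pass to the centered variables $X_i := A_i - \E A$, which are i.i.d., zero-mean and Hilbert-space valued. Since the hypothesis is phrased for the uncentered $A$, I transfer the Bernstein moment bound to $X$ using $\|u+v\|^m \le 2^{m-1}(\|u\|^m+\|v\|^m)$ together with the Jensen chain $\norm{\E A}^m \le (\E\norm{A})^m \le \E\norm{A}^m$:
\[
  \E\norm{X}^m \;\le\; 2^m\,\E\norm{A}^m \;\le\; \tfrac12\,m!\,(2L)^{m-2}(2\sigma)^2 \qquad (m\ge 2),
\]
so that $\sum_{i\in[\samplesize]}\E\norm{X_i}^m \le \tfrac{m!}{2}\,(2L)^{m-2}(4\samplesize\sigma^2)$.

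Next I would invoke the vector-valued Bernstein inequality for Hilbert spaces (Pinelis/Yurinskii type; equivalently the Caponnetto--De Vito / Smale--Zhou formulation): for i.i.d.\ zero-mean Hilbert-space random vectors with $\sum_i \E\norm{X_i}^m \le \tfrac{m!}{2}c^{m-2}v$ for all $m\ge2$, one has $\PP\big(\|\textstyle\sum_i X_i\|\ge \sqrt{2vt}+ct\big)\le 2e^{-t}$ for every $t>0$. Applying this with $c=2L$ and $v=4\samplesize\sigma^2$, dividing through by $\samplesize$, and setting $t=\log(2/\delta)$ gives, with probability at least $1-\delta$,
\[
  \Big\|\tfrac{1}{\samplesize}\sum_{i\in[\samplesize]}A_i-\E A\Big\|
  \;\le\; \frac{2\sqrt{2\sigma^2\log(2/\delta)}}{\sqrt{\samplesize}} + \frac{2L\log(2/\delta)}{\samplesize}.
\]

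It then remains to simplify the right-hand side. Since $\delta<1$ forces $t=\log(2/\delta)>\log 2>\tfrac12$, we have $\sqrt t\le\sqrt2\,t$, so the first term is bounded by $4\sigma t/\sqrt{\samplesize}$; writing the second term as $\tfrac{t}{\sqrt{\samplesize}}\cdot\tfrac{2L}{\sqrt{\samplesize}}$ and using $a+b\le\sqrt2\,\sqrt{a^2+b^2}$ gives $4\sigma+\tfrac{2L}{\sqrt{\samplesize}}\le 4\sqrt2\,\sqrt{\sigma^2+L^2/\samplesize}$, so the whole bound is at most $\tfrac{4\sqrt2}{\sqrt{\samplesize}}\sqrt{\sigma^2+L^2/\samplesize}\,\log(2/\delta)$, which is precisely \eqref{eq:con_ineq_ps}.

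The only non-routine ingredient is the vector-valued Bernstein inequality itself: because $\norm{\cdot}$ is not linear, there is no direct Cram\'er--Chernoff argument, and one must instead rely on Pinelis's submartingale approach --- controlling $\norm{S_n}$ (with $S_n$ the partial sums) via the conditional second-moment bound $\E[\norm{S_n}^2\mid\mathcal F_{n-1}]\le\norm{S_{n-1}}^2+\E\norm{X_n}^2$ and a Bennett-type estimate --- which I would cite rather than reprove. Everything else is routine moment bookkeeping and the two elementary scalar inequalities used above; the constant $4\sqrt2$ is generous enough to absorb the precise constants of whichever formulation of the Hilbert-space Bernstein bound one picks.
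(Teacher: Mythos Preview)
Your proof is correct. The paper does not actually supply a proof of this proposition --- it states it as a standard Hilbert-space Bernstein (Pinelis--Sakhanenko type) concentration inequality and uses it as a black box in the subsequent lemmas. Your argument (center, transfer the moment condition via $\|u+v\|^m\le 2^{m-1}(\|u\|^m+\|v\|^m)$ and Jensen, invoke the vector-valued Bernstein bound, then simplify the sub-gamma tail) is exactly how one derives the stated form from the standard formulations, and the constant bookkeeping checks out. As a minor remark, several references state the bound directly for the uncentered moments with the same constant $4\sqrt2$, so the centering step is not strictly necessary --- but doing it your way is harmless since the final constant comfortably absorbs the extra factor of~$2$.
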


\begin{lemma}[(Sub-Gaussian random variable) Lemma 5.5. in \cite{vershynin2011introduction}]
	\label{lem:sub-gaussian-def}
	Let $Z$ be a random variable. Then, the following assertions are equivalent with parameters $K_i>0$ differing from each other by at most an absolute constant factor.
	\begin{enumerate}
		\item Tails:
		      $\sP \{ |Z| > t \} \le \exp(1-t^2/K_1^2)$ for all $t \ge 0$;
		\item Moments:
		      $(\E |Z|^p)^{1/p} \le K_2 \sqrt{p}$ for all $p \ge 1$;
		\item Super-exponential moment:
		      $\E \exp(Z^2/K_3^2) \le 2$.
	\end{enumerate}
	A random variable $Z$ satisfying any of the above assertions is called a sub-Gaussian random variable. We will denote by $K_3$ the sub-Gaussian norm.
\end{lemma}
Consequently, a sub-Gaussian random variable satisfies the following equivalence of moments property. There exists an absolute constant $c>0$ such that for any $m\geq 2$,
$$
	\big(\E |Z|^m \big)^{1/m} \leq c K_3\sqrt{m} \big(\E |Z|^2 \big)^{1/2}.
$$

\begin{lemma}\label{lem:bernsteiniidMean-subgaussian}
	Assume that $Y$ is sub-Gaussian with sub-Gaussian norm $K$.
	We set $\sigma^2_{\theta }(Y):= \mathrm{Var}(\norm{Y - \E[\rvy]})$. Then there exists an absolute constant $C>0$ such that for any $\delta \in (0,1)$, it holds w.p.a.l. $1-\delta$
	$$
		\norm{\EEY[\rvy]- \E[\rvy]} \leq  \frac{C}{\sqrt{\samplesize}} \sqrt{\sigma^2(\rvy) + \frac{K^2}{\samplesize}}  \log(2\delta^{-1}).
	$$
\end{lemma}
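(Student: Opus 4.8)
The plan is to express $\EEY[\rvy]-\E[\rvy]$ as an empirical average of i.i.d., mean-zero, Hilbert-space-valued summands and then invoke the Bernstein-type inequality of \cref{prop:con_ineq_ps}, mirroring the proof of \cref{lem:bernsteiniidMean-subgaussian-1}. Write $Z_i:=\tfrac{1}{|\G|}\sum_{g\in\G}\rep[\vsY](g)\,\rvy_i$, so that $\EEY[\rvy]=\tfrac1{\samplesize}\sum_{i\in[\samplesize]}Z_i$ and the $Z_i$ are i.i.d.\ copies of $Z:=\tfrac1{|\G|}\sum_{g\in\G}\rep[\vsY](g)\rvy$. By the standing $\G$-invariance of $\muy$, $\E[\rep[\vsY](g)\rvy]=\E[\rvy]$ for every $g$, hence $\E[\rvy]$ is fixed by the (unitary) group action and $Z-\E[\rvy]=\tfrac1{|\G|}\sum_{g\in\G}\rep[\vsY](g)\bar\rvy$ with $\bar\rvy:=\rvy-\E[\rvy]$. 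I would then apply \cref{prop:con_ineq_ps} to $A_i:=Z_i-\E[\rvy]$, which are i.i.d.\ with common mean $\E A=0$, so that $\tfrac1{\samplesize}\sum_{i}A_i-\E A=\EEY[\rvy]-\E[\rvy]$.

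The key reduction is the pointwise bound $\|A\|=\bigl\|\tfrac1{|\G|}\sum_{g\in\G}\rep[\vsY](g)\bar\rvy\bigr\|\le\tfrac1{|\G|}\sum_{g\in\G}\|\rep[\vsY](g)\bar\rvy\|=\|\bar\rvy\|$, which follows from the triangle inequality and unitarity of each $\rep[\vsY](g)$; therefore $\E\|A\|^m\le\E\|\bar\rvy\|^m$ for every $m\ge1$, and every moment of the summand is controlled by the corresponding moment of the scalar $\|\bar\rvy\|$. Since $\rvy$ is sub-Gaussian with sub-Gaussian norm $K$, $\|\bar\rvy\|$ is sub-Gaussian with $\psi_2$-norm $\lesssim K$, and the moment-equivalence property recorded in \cref{lem:sub-gaussian-def} then yields a Bernstein-type moment bound $\E\|A\|^m\le\tfrac{m!}{2}\,L^{m-2}\sigma^2$ for all $m\ge2$, with $L\lesssim K$ (controlling the higher moments through the sub-Gaussian tail) and $\sigma^2$ of the order of $\mathrm{Var}(\|\bar\rvy\|)=\sigma^2_\theta(Y)$ (the leading coefficient at $m=2$, using $\mathrm{Var}(\|A\|)\le\E\|A\|^2\le\E\|\bar\rvy\|^2$ together with the equivalence of moments). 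Plugging this into \cref{prop:con_ineq_ps} gives, with probability at least $1-\delta$,
\[
\|\EEY[\rvy]-\E[\rvy]\|\;\le\;\frac{4\sqrt{2}}{\sqrt{\samplesize}}\sqrt{\sigma^2+\frac{L^2}{\samplesize}}\,\log(2\delta^{-1})\;\lesssim\;\frac{1}{\sqrt{\samplesize}}\sqrt{\sigma^2_\theta(Y)+\frac{K^2}{\samplesize}}\,\log(2\delta^{-1}),
\]
and absorbing all absolute constants into $C$ finishes the argument; the analogous statement for $\rvx$ follows verbatim, with $\mux$ in place of $\muy$ and any sub-Gaussian $\vf(\rvx)$ in place of $\rvy$.

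The one genuinely delicate step is the third one: verifying the Bernstein moment condition required by \cref{prop:con_ineq_ps} \emph{with leading coefficient $\sigma^2$ equal, up to an absolute constant, to $\sigma^2_\theta(Y)=\mathrm{Var}(\|Y-\E Y\|)$} rather than the coarser $\E\|Y-\E Y\|^2=\mathrm{tr}(\mathrm{Cov}(Y))$, and checking that the $\G$-averaging cannot inflate this variance proxy -- which it cannot, since $\|A\|\le\|\bar\rvy\|$ pointwise and the action is unitary. This is done exactly as in the proof of \cref{lem:bernsteiniidMean-subgaussian-1} (stated there for a general $\G'$-averaged observable $\vh$). A secondary, purely notational point is to fix the convention under which ``$Y$ is sub-Gaussian in a separable Hilbert space'' forces the scalar $\|Y-\E Y\|$ to obey the moment growth of \cref{lem:sub-gaussian-def}; with that convention in place, everything else is routine.
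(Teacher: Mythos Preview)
Your proposal is correct and follows essentially the same route as the paper: both reduce to verifying the Bernstein moment condition $\E\|A\|^m\le\tfrac{m!}{2}L^{m-2}\sigma^2$ with $L\asymp K$ and then invoke \cref{prop:con_ineq_ps}. The paper's proof differs only in that it does not spell out the group-averaging step (your pointwise bound $\|A\|\le\|\bar\rvy\|$ via unitarity) and, for the moment verification, uses an explicit Cauchy--Schwarz split $\E Z^m\le(\E Z^{2(m-2)})^{1/2}(\E Z^4)^{1/2}$ together with the tail-integration estimate $\E Z^p\le\tfrac{ep}{2}\Gamma(p/2)$ rather than appealing to moment equivalence abstractly.
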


\begin{proof}
	Set $Z:=\norm{\rvy- \E \rvy }$ and we recall that $\sigma^2(\rvy):= \mathrm{Var}(\norm{\rvy - \E[\rvy]})$. We check that the moment condition,
	$$
		\E Z^m\leq \frac{1}{2} m! L^{m-2} \sigma^2(\rvy)^2,\quad \forall m\geq 2,
	$$
	for some constant $L>0$ to be specified.

	The condition is obviously satisfied for $m=2$. Next for any $m\geq 3$, the Cauchy-Schwarz inequality and the equivalence of moment property give
	$$
		\E Z^m\leq \left(\E Z^{2(m-2)} \right)^{1/2} \left(\E Z^{4} \right)^{1/2}\leq 4 K_3^2  \sigma^2_{\theta }(Y)^2  \left(\E Z^{2(m-2)} \right)^{1/2}.
	$$
	Next, by homogeneity, rescaling $Z$ to $Z/K_1$ we can assume that $K_1=1$ in Lemma \ref{lem:sub-gaussian-def}. We recall that if $Z$  is in addition non-negative random variable, then for every integer $p\geq 1$, we have
	$$
		\E Z^p
		= \int_0^\infty \sP \{ Z \ge t \} \, p t^{p-1} \, dt
		\le \int_0^\infty e^{1-t^2} p t^{p-1} \, dt
		= \big( \frac{ep}{2} \big) \Gamma \big( \frac{p}{2} \big).
	$$
	With $p=2(m-2)$, we get that $ \E Z^p \leq e(m-2) \Gamma \big( m-2 \big) = e (m-2)! = e m!/2.
	$
	Using again Lemma \ref{lem:sub-gaussian-def}, we can take $L=c K$ for some large enough absolute constant $c>0$. Then Proposition \ref{prop:con_ineq_ps} gives the result.

\end{proof}


\end{document}